\newcommand{\N}{\mathbb{N}}
\newcommand{\E}{\mathbb{E}}
\newcommand{\PR}{\mathbb{P}}
\newcommand{\R}{\mathbb{R}}
\newcommand{\TV}{\mathrm{TV}}
\newcommand{\sS}{\mathcal{S}}
\newcommand{\sA}{\mathcal{A}}
\newcommand{\sZ}{\mathcal{Z}}
\newcommand{\sN}{\mathcal{N}}
\newcommand{\sP}{\mathcal{P}}
\newcommand{\sM}{\mathcal{M}}
\newcommand{\sI}{\mathcal{I}}
\newcommand{\diag}{\textup{diag}}
\newcommand{\khop}{{\color{black}\kappa}}
\newcommand{\nik}{{\color{black}N_i^{\khop}}}
\newcommand{\nikc}{{\color{black}N_i^{\khop_c}}}
\newcommand{\nikg}{{\color{black}N_i^{\khop_G}}}
\newcommand{\agentk}{i_0}
\newcommand{\nkkr}{{\color{black}N_{\agentk}^{\khop_r}}}
\newcommand{\nikr}{{\color{black}N_i^{\khop_r}}}
\newcommand{\nkkc}{{\color{black}N_{\agentk}^{\khop_c}}}
\newcommand{\Mkc}{{\color{black}\sM_{\nkkc}}}
\newcommand{\ukkr}{{\color{black}U_{\agentk}^{\khop_r}}}
\newcommand{\uikr}{{\color{black}U_i^{\khop_r}}}
\newcommand{\supphi}{\psi}
\newcommand{\supepoch}{K}
\newcommand{\supr}{\tilde r}
\newcommand{\regret}{{\color{black}\text{Avg-Nash-Regret}}}
\newcommand{\norm}[1]{\left\lVert#1\right\rVert}
\newcommand{\abs}[1]{\left\lvert#1\right\rvert}
\newcommand{\Sp}[1]{\left(#1\right)}
\newcommand{\Mp}[1]{\left[#1\right]}
\newcommand{\Bp}[1]{\left\{#1\right\}}
\DeclareMathOperator*{\argmax}{arg\,max}
\newcommand{\ri}{(\romannumeral1)}
\newcommand{\rii}{(\romannumeral2)}
\newcommand{\riii}{(\romannumeral3)}
\newcommand{\osi}[1]{\overset{\ri}{#1}}
\newcommand{\osii}[1]{\overset{\rii}{#1}}
\newcommand{\osiii}[1]{\overset{\riii}{#1}}
\title{\LARGE\bfseries Convergence Rates for Localized Actor-Critic in\\ Networked Markov Potential Games}
\author{
	Zhaoyi Zhou\textsuperscript{1}, Zaiwei Chen\textsuperscript{2,$*$}, Yiheng Lin\textsuperscript{2,$\dagger$}, and Adam Wierman\textsuperscript{2,$\ddagger$}\\
	{\textsuperscript{1}Tsinghua University, \href{mailto:zhouzhao20@mails.tsinghua.edu.cn}{\textit{zhouzhao20@mails.tsinghua.edu.cn}}}\\
	{
		\textsuperscript{2}Caltech, \href{mailto:zchen458@caltech.edu}{$^*$\textit{zchen458@caltech.edu}},
		\href{mailto:yihengl@caltech.edu}{$^\dagger$\textit{yihengl@caltech.edu}},
		\href{mailto:adamw@caltech.edu}{$^\ddagger$\textit{adamw@caltech.edu}}
	}\\
}
\date{\vspace{-0.4 in}}
\begin{document}
\maketitle

\begin{abstract}
	We introduce a class of networked Markov potential games in which agents are associated with nodes in a network. Each agent has its own local potential function, and the reward of each agent depends only on the states and actions of the agents within a neighborhood.  In this context, we propose a localized actor-critic algorithm.  The algorithm is scalable since each agent uses only local information and does not need access to the global state.  Further, the algorithm overcomes the curse of dimensionality through the use of function approximation.  Our main results provide finite-sample guarantees up to a localization error and a function approximation error. Specifically, we achieve an $\tilde{\mathcal{O}}(\tilde{\epsilon}^{-4})$ sample complexity measured by the averaged Nash regret. This is the first finite-sample bound for multi-agent competitive games that does not depend on the number of agents.
\end{abstract}

\section{Introduction}\label{sec:intro}
Large-scale systems where agents interact competitively with each other have received significant attention recently, motivated by applications in power systems \citep{shi2022stability}, EV charging \citep{lee2022systems}, and board games \citep{silver2017mastering}, etc. Controlling such systems can be challenging due to the scale of the system, uncertainty about the model, communication constraints, and the interaction between agents. Inspired by the recent success of reinforcement learning (RL), there is an increasing interest in applying RL methods to environments with multi-agent interactions.  However, in multi-agent RL (MARL), the analysis of the system behavior becomes challenging due to the time-varying nature of the environment faced by each agent, which results from the (time-varying) competitive decisions of other agents. As a result, the theoretical analysis of MARL, especially in the competitive setting, is still limited, especially when it comes to large-scale systems. 

The results of MARL in competitive settings to this point have tended to focus on games with a small number of players, e.g., $2$-player zero-sum stochastic games \citep{littman1994markov}, or games with special structure, e.g., Markov potential games (MPGs) \citep{fox2022independent}.  MPGs in particular provide a setting in which the challenges of large-scale systems can be studied. The intuition behind an MPG parallels that of classical (one-shot) potential games. Specifically, the existence of a potential function guarantees that agents can converge to a global equilibrium even when using greedy localized updates. MPGs have wide-ranging applications including variants of congestion games \citep{leo2021convergeMPG,fox2022independent}, medium access control \citep{macua2018learning}, and the stochastic lake game \citep{dechert2006stochastic}. However, the existing theoretical results for MPGs rely on the assumption that a centralized global state exists and can be observed by each individual agent. Such an assumption rules out applications in many large-scale systems including transportation networks \citep{zhang2016control} and social networks \citep{application_chakrabarti2008epidemic}, where the global state space can be exponentially large in the number of agents and/or each agent can only observe its own local state.

A promising approach for the design of local and scalable MARL algorithms in competitive settings is to exploit the network structure of practical applications to design algorithms with sample complexity that only depends on the \emph{ local} properties of the network instead of the \emph{global} state. This approach has recently been successful in the case of cooperative MARL.  For example, \citet{qu2019scalableMARL,lin2021multi,zhang2022global} provides a scalable localized algorithm with a sample complexity that does not depend on the number of agents. However, to this point, local algorithms that exploit network structure do not exist in the competitive MARL setting. Thus, we ask: \emph{Can we design a scalable and local algorithm with finite-time bounds for networked MARL with competitive agents?}

\subsection{Main Contributions}
We address the question above by introducing a class of networked Markov potential games (NMPGs) as the networked counterpart of classical MPGs. Importantly, NMPGs represent a broader class of games than MPGs and draw focus to algorithm design that uses only local information. 

We design a localized actor-critic algorithm that is a combination of independent policy gradient and localized TD$(\lambda)$ with linear function approximation. Notably, our algorithm is \textit{model-free}, uses only \textit{local information}, and  successfully incorporates \textit{function approximation}. This avoids both the need for communication of the global state and the so-called ``curse of dimensionality'' in MARL. 

Our main results provide a finite-sample bound on the averaged Nash regret for our proposed algorithm, which implies an $\tilde{\mathcal{O}}(\tilde{\epsilon}^{-4})$ sample complexity (where $\tilde{\epsilon}$ is the accuracy) up to an approximation error of using local information and a function approximation error. To our knowledge, we are the first to develop a localized algorithm in competitive MARL settings with provable performance guarantees that do not depend on the number of agents. 

Our results are enabled by a novel analysis of the critic in our localized actor-critic framework.  In particular, we propose a localized cost evaluation problem, a new MARL setting to investigate the performance of a local algorithm under a fixed policy. As a critical part of the proof, we propose a novel concept called a ``sub-chain'' that connects local algorithms to their global counterparts, enabling performance bounds via bounds on the gap between the two.

\subsection{Related Work}
\paragraph{Markov Potential Games} Our work adds to the literature on MPGs in MARL. Analytic results for non-cooperative MARL are challenging to obtain because agents learn in a non-stationary environment as other agents update their policies.  As a result, existing analysis has focused on special cases like $2$-player stochastic games \citep{littman1994markov}, adversarial team Markov games \citep{kalogiannis2022efficiently}, and MPGs \citep{fox2022independent}.  The case of MPGs has received considerable attention recently because the potential games are broadly applicable \citep{leo2021convergeMPG} and the existence of potential functions enables provable guarantees \citep{zhang2021gradientStochasticGame,pmlr-v162-ding22b,fox2022independent,zhang2022logBarrierSoftmax}.  While these papers provide algorithms with provable convergence guarantees, they assume that all agents share a common global state and can observe the global state to decide local actions.  An important open question is understanding how to learn in settings where global information is not available.  Our work studies the MARL setting where each agent has its own local state and can only decide local actions based on the local states.

\paragraph{MARL in Networked Systems} The Markov decision process (MDP) model we study is inspired by a series of works on Networked MARL \citep{qu2019scalableMARL,lin2021multi,zhang2022global}, where RL agents are located on a network. In such models, the local state transition of an agent is affected by its own local state/action and its direct neighbors' local states. Networked MARL is applicable to a wide range of applications, including communication networks \citep{application_communication}, social networks \citep{application_chakrabarti2008epidemic}, and traffic networks \citep{zhang2016control}. Compared with general MARL, the additional structure of networked MARL enables us to establish a critical exponential decay property on the local $Q$-functions, which leads to the design of localized  actor-critic algorithms \citep{qu2019scalableMARL,lin2021multi}. All prior works on networked MARL study the case when agents cooperatively maximize the sum of all local rewards. In contrast, our work studies a non-cooperative NMPG in which each agent has its own objective. 

Another approach to study MARL problems is to use mean-field control (MFC) \citep{gu2021mean, mondal2022on, mondal2022can}. The major difference between the mean-field setting and our setting is that mean-field MARL focuses on homogeneous agents, while we allow each agent to have different transition probabilities and local policies.

\paragraph{Finite-Sample Analysis of TD-Learning Variants} TD-learning and its variants are widely used for policy evaluation in RL, which plays a critical role in most policy-space algorithms. The asymptotic analysis of TD-learning dates back to \cite{tsitsiklis1994asynchronous,jaakkola1994convergence}, while finite-sample convergence bounds have received attention in the last decade. In TD-learning, function approximation is a useful technique to reduce the dimension of learning parameters at the cost of incurring an approximation error that depends on the function class. Recently, many breakthroughs are made on finite-sample error bounds for TD-learning with function approximation \citep{bhandari2018finite,Srikant2019FiniteTimeEB,dalal2018finite,yu2009convergence}. Meanwhile, in multi-agent settings, localized TD-learning is crucial for limiting communication and the need for global information \citep{lin2021multi}. Our work provides a novel finite-sample error bound for localized TD-learning with function approximation.

\section{Problem Description}
\label{sec:setting}

\paragraph{Network Structure} We study MARL in the context of networked multi-agent Markov games. Specifically, we consider a setting with $n$ agents that are associated with an undirected graph $\mathcal{G} = (\mathcal{N},\mathcal{E})$, where  $\mathcal{N}=\{1,2,\ldots,n\}$ is the set of nodes and $\mathcal{E}\subseteq \mathcal{N}\times\mathcal{N}$ is the set of edges. We denote by $\text{dist}(i,j)$ the graph distance between agents $i$ and $j$. The local state space and local action space of agent $i$ are denoted by $\mathcal{S}_i$ and $\mathcal{A}_i$, respectively, which are both finite sets. The global state is denoted as $s = (s_1,\ldots,s_n)\in  \mathcal{S}:= \prod_{i=1}^n\mathcal{S}_i$ and the global action is defined similarly.
For any subset $I\subseteq \sN$, we use $s_I$ to denote the joint state of the agents in $I$ and use $\sS_I:=\prod_{i\in I} \sS_i$ to denote the joint state space of agents in $I$. Similarly, we define $a_I$ and $\sA_I$ as the joint action and joint action space of the agents in $I$. Denote $\mu\in \Delta(\sS)$ as the initial state distribution, where $\Delta(\mathcal{S})$ denotes the $|\mathcal{S}|$-dimensional probability simplex.

\paragraph{Transition Probabilities} At time $t\geq 0$, given current state $s(t)$ and action $a(t)$, for each agent $i\in\mathcal{N}$, its successor state $s_i(t+1)$ is independently generated according to the following transition probability, which is only dependent on its neighbors' states and its own action: 
\begin{align*}
	\sP(s(t+1)\,|\, s(t),a(t)) = \prod_{i=1}^n \sP_i(s_i(t+1)\,|\, s_{\mathcal{N}_i}(t),a_i(t)),
\end{align*}
where $\mathcal{N}_i=\{i\}\cup \{j\in\mathcal{N}\mid (i,j)\in\mathcal{E}\}$ denotes the neighborhood of $i$, including $i$ itself. In addition, given an arbitrary integer $\khop\geq 0$, we use $\nik$ to denote the $\khop$-hop neighborhood of $i$, i.e., $\nik=\{i\}\cup \{j\in\mathcal{N}\mid \text{dist}(i,j)\leq \kappa\}$,
and use $-\nik=\sN/\nik$ to denote the set of agents that are not in $\nik$. We use $U_i^{\kappa}=\nik/\{i\}$ to denote the agents in the $\kappa$-hop neighborhood of $i$, excluding $i$ itself.

\begin{remark}
	We require that each agent's transition probability depends only on the states of its neighbors and its own action, which is common in networked MARL literature \citep{qu2019scalableMARL, zhang2022global}. Intuitively, it implies that the impact from far-away agents on the network is ``negligible'', which eventually leads to the exponential decay property (cf. Lemma \ref{le:truncated_Q}).
\end{remark}

\paragraph{Reward Function} Each agent $i\in \sN$ is associated with a deterministic reward function $r_i:\mathcal{S}\times\mathcal{A}\mapsto [0,1]$. The interval $[0, 1]$ is chosen without loss of generality over the set of bounded reward functions. In general, agent $i$'s reward depends on the global state and the global action. Due to the network structure, we assume that there exists a non-negative integer $\kappa_r$ such that the reward function of each agent depends only on the states and the actions of other agents within its $\kappa_r$-hop neighborhood, i.e., $r_i(s,a)=r_i(s_{\mathcal{N}_i^{\kappa_r}},a_{\mathcal{N}_i^{\kappa_r}})$ for all $i$. This makes intuitive sense as we expect the dependence between two agents to weaken as their graph distance grows.

\paragraph{Policy}
In this work, we consider stationary policies \citep{zhang2021multi}. Specifically, each agent $i\in \sN$ is associated with a localized policy $\xi_i:\mathcal{S}_i\mapsto\Delta(\mathcal{A}_i) $. Given a subset $I\subseteq\mathcal{N}$, we define $\xi_I:\mathcal{S}_I\mapsto\Delta(\mathcal{A}_I)$ as the joint policy of agents in $I$. Note that $\xi_I(a_I\mid s_I)=\prod_{i\in I}\xi_i(a_i\mid s_i)$. 
We use $\Xi_i$ to denote agent $i$'s local policy space, and $\Xi_I$ to denote the joint policy space of agents in $I$. When $I=\mathcal{N}$, we omit the subscript and just write $\xi$ for $\xi_\mathcal{N}$ (and $\Xi$ for $\Xi_\mathcal{N}$).
Throughout, we also use $\xi=(\xi_1,\xi_2,\cdots,\xi_n)$ to highlight the local policy components. In this work, we will frequently work with softmax policies, which are defined as
\begin{equation}
	\xi_i^{\theta_i}(a_i|s_i)=\frac{\exp(\theta_{i,s_i,a_i})}{\sum_{a_i'\in\sA_i}\exp(\theta_{i,s_i,a_i'})},\;\forall\;i,s_i,a_i,
\end{equation}
where $\xi_i^{\theta_i}$ stands for agent $i$'s local policy parametrized by the weight vector $\theta_i\in \R^{\abs{\sS_i} \abs{\sA_i}}$.
We denote $\theta=(\theta_1,\theta_2,\cdots,\theta_n)$ as the parameter of a global policy $\xi^\theta$.

\paragraph{Value Function}
Given a global policy $\xi$ and an agent $i$, we define agent $i$'s $Q$-function $Q_i^\xi\in\mathbb{R}^{|\mathcal{S}||\mathcal{A}|}$ as
\begin{align*}
	Q_i^{\xi}(s,a) 
	= \sum_{t=0}^{\infty} \gamma^t\E_{\xi} \left[ r_i(s(t),a(t)) \,\middle|\,s(0)=s,a(0)=a\right]
\end{align*}
for all $(s,a)$, where $\gamma\in (0,1) $ is the discount factor, and $\E_{\xi}[\,\cdot\,]$ is taken w.r.t. the randomness in the (stochastic) policy $\xi$ and the  transition probabilities.  With $Q_i^\xi$ defined above, the averaged $Q$-function $\overline Q_i^{\xi}\in\mathbb{R}^{|\mathcal{S}||\mathcal{A}_i|}$ and the value function $V_i^{\xi}\in\mathbb{R}^{|\mathcal{S}|}$ of agent $i$ are defined as $\overline Q_i^{\xi}(s,a_i)=\E_{a_{-i}\sim \xi_{-i}(\cdot|s_{-i})}[Q_i^{\xi}(s,a_i,a_{-i})]$ for all $(s,a_i)$ and $V_i^{\xi}(s)=\E_{a_i\sim \xi_i(\cdot|s_i)}[\overline Q_i^{\xi}(s,a_i)]$ for all $s$,
where we use $s_{-i}$, $a_{-i}$, and $\xi_{-i}$ to denote the joint state, the joint action, and the joint policy of the agents in $\sN/\{i\}$, respectively.  With the initial state distribution $\mu$, we define  $J_i(\xi)=\E_{s\sim\mu}[V_i^{\xi}(s)]$. Finally, we define the advantage function of agent $i$ as $A_i^{\xi}(s,a)=Q_i^{\xi}(s,a)-V_i^{\xi}(s)$ for all $(s,a)$, and the averaged advantage function of agent $i$ as $\overline A_i^{\xi}(s,a_i)=\overline Q_i^{\xi}(s,a_i)-V_i^{\xi}(s)$ for all $(s,a_i)$. When the policy uses softmax parameterization with parameter $\theta$, we may abuse the policy parameter $\theta$ to represent the policy $\xi$ for simplicity. For example, we may write $J_i(\theta)$ for $J_i(\xi^{\theta})$.

\paragraph{Discounted State Visitation Distribution}
Given a policy $\xi$ and an initial state $s'$, we define the \textit{discounted state visitation distribution} as $d^{\xi}_{s'}(s)=(1-\gamma)\sum_{t=0}^{\infty}\gamma^t {\Pr}^{\xi}[s(t)=s\ |\ s(0)=s']$
for all $s\in\mathcal{S}$,
where $\Pr^{\xi}[s(t)=s\ |\ s(0)=s']$ denotes the probability that $s(t)=s$ given that the initial state is $s'$ and the global policy is $\xi$. We use $d^{\xi}(s):=\E_{s'\sim \mu}[d^{\xi}_{s'}(s)]$ to represent the discounted state visitation distribution when the initial state distribution is $\mu$. 

\section{Networked MPGs}\label{sec:MLPG}
Our focus is a class of networked multi-agent Markov games that we named NMPGs, which is defined in the following.

\begin{definition}
	\label{def:local_MPG} A multi-agent Markov game is called a $\kappa_G$-NMPG (where $\kappa_G$ is a non-negative integer) if
	there exists a set of local potential functions $\{\Phi_i \}_{i\in \sN}$, where $\Phi_i:\Xi\rightarrow \R$ for all $i\in \sN$, such that the following equality holds for any $i\in \sN$, $j\in \mathcal{N}_i^{\kappa_G}$, $\xi_j,\xi_j'\in\Xi_j$, and $\xi_{-j}\in \Xi_{-j}$:
	\begin{equation}
		\label{eq:approx_MPG}
		J_j(\xi_j',\xi_{-j})-J_j({\xi_j,\xi_{-j}}) = \Phi_i(\xi_j',\xi_{-j})-\Phi_i(\xi_j,\xi_{-j}).  
	\end{equation}
\end{definition}

Definition \ref{def:local_MPG} states that when agent $j$ changes its local policy, the change in its objective function $J_j(\cdot,\xi_{-j})$ can be measured by the change of local potential functions from any agent in its $\kappa_G$-hop neighborhood. The non-negative integer $\kappa_G$ is determined by the networked MPG setting and reflects the extent to which the networked MPG is relaxed from an MPG. Recall that in the definition of a standard MPG \citep{leo2021convergeMPG}, there exists a (global) potential function $\Phi$ such that Eq. (\ref{eq:approx_MPG}) holds with $\Phi_i$ being replaced by $\Phi$ for all $i$. Therefore, an MPG is always an NMPG (by choosing $\Phi_i=\Phi$ for all $i$), and hence NMPG represents a strictly broader class of games. More discussions are given in Appendix F.1, and a concrete example of an NMPG is presented in Section \ref{subsec:examples}.  

Due to the boundedness of the reward function and Eq. (\ref{eq:approx_MPG}), the local potential functions are uniformly bounded from above and below, i.e., there exist $\Phi_{\min},\Phi_{\max}>0$ such that
$\Phi_i(\xi)\in [\Phi_{\min},\Phi_{\max}]$ for all $i\in\mathcal{N}$ and $\xi\in\Xi$. See Appendix F.6 for more details.

Unlike in single-agent RL or cooperative MARL, the optimal policy is not well-defined in the competitive setting, and thus our goal is to design algorithms that learn Nash equilibria of NMPGs. We next introduce the concepts of Nash equilibrium, Nash gap, and averaged Nash regret.
\begin{definition}
	A global policy $\xi$ is a Nash equilibrium if $J_i(\xi_i,\xi_{-i})\geq J_i(\xi_i',\xi_{-i})$ for all $\xi_i'\in \Xi_i$ and $i\in\mathcal{N}$.
\end{definition}
To measure the performance of a policy by its ``distance'' to a Nash equilibrium, we use the Nash gap.

\begin{definition}
	\label{def:nash_regret}
	Given a global policy $\xi$, agent $i$'s Nash gap and the global Nash gap are defined as
	\begin{align*}
		\text{NE-Gap}_i(\xi):=\;&\max_{\xi_i'} J_i(\xi_i',\xi_{-i})-J_i(\xi_i,\xi_{-i}),\\
		\text{NE-Gap}(\xi):=\;&\max_{i\in \sN} \text{NE-Gap}_i(\xi).
	\end{align*}
\end{definition}
With $\text{NE-Gap}(\cdot)$ defined above, given $\hat{\epsilon}>0$, we say that a policy $\xi$ is an $\hat{\epsilon}$-approximate Nash equilibrium if $\text{NE-Gap}(\xi)\leq \hat{\epsilon}$. When using a softmax policy with parameter $\theta$, we may abuse the notation to denote $\text{NE-Gap}_i(\theta)$ for $\text{NE-Gap}_i(\xi^\theta)$ and also $\text{NE-Gap}(\theta)$ for $\text{NE-Gap}(\xi^\theta)$.

While Definition \ref{def:nash_regret} enables us to measure the performance of a single policy, in MARL, most algorithms iterate over a sequence of policies. To measure the performance of a sequence of policies, we use the averaged Nash regret, which is defined in the following. 
\begin{definition}\label{def:avg_NR}
	Given a sequence of $M$ policies $\{\xi(0),\xi(1),\ldots,\xi(M-1)\}$, the averaged Nash regret of agent $i$ and the global averaged Nash regret are defined as
	\begin{align*}
		\regret_i(M)&=\frac{1}{M} \sum_{m=0}^{M-1} \text{NE-Gap}_i(\xi(m)),\\
		\regret(M)&=\max_{i\in\sN} \regret_i(M).
	\end{align*}
\end{definition}

Note that a similar concept  called ``Nash Regret'' was previously introduced in \cite{pmlr-v162-ding22b}, and is defined as 
\begin{align}\label{eq:NR}
	\text{Nash-Regret}(M)=\frac{1}{M} \sum_{m=0}^{M-1} \max_{i\in \sN}\text{NE-Gap}_i(\xi(m)).
\end{align}
By using Jensen's inequality and the fact that the maximum of a set of positive real numbers is less than the summation, we easily have $\regret(M)=\Theta (\text{Nash-Regret}(M))$. See Appendix F.4 for the proof. As a result, $\regret(M)$ and $\text{Nash-Regret}(M)$ have the same rate of convergence (up to a multiplicative constant that depends on the number of agents).

\subsection{An Example of NMPGs}\label{subsec:examples}
To illustrate the model, we present an extension of classical congestion games \citep{roughgarden2004bounding} and distributed welfare games \citep{marden2013distributed}. In this example, $n$ agents are located on a traffic network $\mathcal{T} = (\mathcal{V}, \zeta)$, where $\mathcal{V}$ denotes the set of nodes and $\zeta$ denotes the set of directed edges with self-loops\footnote{Note that the traffic network and the communication network $\mathcal{G}$ may be different.}. The objective of each agent $i$ is to commute from its start node $h_i$ to its destination $d_i$. In this example, the local state $s_i(t)$ of agent $i$ at time $t$ is its current location (a node $v \in \mathcal{V}$). By choosing a directed edge $(v, u) \in \zeta$ as its local action $a_i(t)$ at time $t$, agent $i$ will transit to state $s_i(t+1)=u$ at time $t+1$.  Without the loss of generality, we assume an agent will stay at the same node after it arrives at its destination. 

The reward of agent $i$ is defined as $r_i(t)=0$ if $s_i(t) = d_i$, $r_i(t)=-\Bar{\epsilon}$ if $s_i(t+1) = s_i(t)$, and $r_i(t)=-\Bar{\epsilon} - N(a_i(t), t)$ otherwise, where $\Bar{\epsilon} > 0$ is a constant and $N(e, t)$ denote the number of agents that chooses edge $e$ at time $t$. The reward is designed so that the agent incurs a time cost of $\Bar{\epsilon}$ for every step spent on its trip and a congestion cost of $N(a_i(t), t)$ depending on the traffic on the edge it travels through. The congestion cost is avoided if the agent chooses to wait at its current location (i.e., $s_i(t+1) = s_i(t)$). Each agent's goal is to maximize its expected discounted cumulative reward $\mathbb{E}\left[\sum_{t=0}^\infty \gamma^t r_i(t)\right]$.

To see that this congestion game fits in our NMPG framework, consider the following communication network $\mathcal{G}$: agents $i$ and $j$ are neighbors if and only if there exists a global policy $\xi$ such that $\sum_{t = 0}^\infty \Pr(s_i(t) = s_j(t), s_i(t)\not=d_i, s_j(t)\not=d_j) > 0$. Under this communication network, the transition kernel is completely local because the next state of any agent $i$ is decided completely locally and the local reward of agent $i$ is a function that depends on the $1$-hop local states and actions $(s_{\mathcal{N}_i^1}, a_{\mathcal{N}_i^1})$. 
We provide more discussion of this example and numerical simulations using it in Appendix A.

\section{Algorithm Design}\label{sec:alg}

We now present a novel algorithm for solving NMPGs.  Our approach uses a combination of independent policy gradient (IPG) with localized TD-learning to form a localized actor-critic framework.

\subsection{Actor: Independent Policy Gradient}
Suppose that the agents have complete knowledge about the underlying model (e.g., reward function and transition dynamics). Then a popular approach for solving MPGs is to use IPG, which is presented in Algorithm \ref{alg:exact_IPG} \citep{leo2021convergeMPG,zhang2021gradientStochasticGame,pmlr-v162-ding22b,fox2022independent,zhang2022logBarrierSoftmax}. 
\begin{algorithm}
	\caption{Independent Policy Gradient  }
	\label{alg:exact_IPG}
	\begin{algorithmic}[1]
		\STATE \textbf{Input:} Initialization $\theta_i(0)=0$, $\forall\; i\in \sN$.
		\FOR{$m=0,1,2,\cdots,M-1$}
		\STATE  $\theta_i(m+1)=\theta_i(m)+\beta\nabla_{\theta_i} J_i(\theta(m))$ for all $i\in\mathcal{N}$
		\ENDFOR
	\end{algorithmic}
\end{algorithm}

In each round of Algorithm \ref{alg:exact_IPG}, each agent simultaneously updates its policy by implementing gradient ascent (in the policy space) w.r.t. their own objective function (cf. Algorithm \ref{alg:exact_IPG} Line 3). Notably, to carry out Algorthm \ref{alg:exact_IPG}, each agent only needs to know its own policy. While Algorithm \ref{alg:exact_IPG} is promising, it is not a model-free algorithm as computing the gradient requires knowledge of the underlying MDP model.  This motivates the design of a critic to help estimate the gradient.

\subsection{Critic: Localized TD$(\lambda)$ with Linear Function Approximation}\label{subsec:critic_algorithm}
To motivate the design of the critic, we first present an explicit expression of the policy gradient of agent $i$ \citep{sutton1999PGT}:
\begin{align}
	\nabla_{\theta_i}J_i(\theta) =\sum_{t=0}^{\infty}\gamma^t \E_{\xi^{\theta}}\big[\nabla_{\theta_i}\log \xi_i^{\theta_i}(a_i(t)|s_i(t))\overline Q_{i}^{\theta}(s(t),a_i(t))\big]. \label{eq:PGT}
\end{align}
Similar versions of policy gradient theorems under different multi-agent settings were previously developed in \cite{zhang2022logBarrierSoftmax,mao2022Decent-MARL}. For completeness, we present a proof of Eq. (\ref{eq:PGT}) in Appendix F.2.

In view of Eq. (\ref{eq:PGT}), to estimate $\nabla_{\theta_i}J_i(\theta)$, the key is to construct an estimate of the averaged $Q$-function $\overline Q_{i}^{\theta}$. However, directly estimating the averaged $Q$-function of agent $i$ requires information about the global state, incurring long-distance communication. 
To localize the algorithm, we introduce a hyper-parameter $\kappa_c\in \N$, and for each agent, we learn an approximation of the averaged $Q$-function (which we refer to as the $\kappa_c$-truncated averaged $Q$-function) using only information in its $\kappa_c$-hop neighborhood. 

\paragraph{Truncated Averaged $Q$-functions}
Given the non-negative integer $\kappa_c$, agent $i\in \sN$, and a global policy parameter $\theta$, we define $\mathcal Q_i^{\theta,\kappa_c}$ as the class of $\kappa_c$-truncated averaged $Q$-functions w.r.t. $\overline Q_i^{\theta}$. Specifically,
\begin{align*}
	&\mathcal Q_i^{\theta,\kappa_c}=\left\{\overline Q_i^{\theta,\kappa_c} \in\mathbb{R}^{|\mathcal{S}_{\nikc}||\mathcal{A}_i|}\;\middle|\;\exists\, u_i\in \Delta(\mathcal{S}_{-\nikc})\text{ s.t. }\right.\\
	&\left.\overline Q_i^{\theta,\kappa_c}(s_{\nikc},a_{i}) =\mathbb{E}_{s_{-\nikc}\sim u_i}\left[\overline Q_i^{\theta}(s_{\nikc},s_{-\nikc},a_i)\right],\forall\;(s_{\nikc},a_{i})\in \mathcal{S}_{\nikc}\times \mathcal{A}_i\right\}.
\end{align*}
Note that when $\kappa_c \geq \max_{i,j}\text{dist}(i,j)$, there is essentially no truncation, i.e., any element in $\mathcal Q_i^{\theta,\kappa_c}$ is equal to $\overline Q_i^{\theta}$.
When $\kappa_c<\max_{i,j}\text{dist}(i,j)$, we have the following \emph{exponential-decay property}. See Appendix F.3 for the proof.

\begin{lemma}\label{le:truncated_Q}
	For any $\kappa_c\in \N$, agent $i$, and global policy parameter $\theta$, it holds  that 
	{\begin{align}
			\sup_{\overline Q_{i}^{\theta,\kappa_c}\in \mathcal Q_i^{\theta,\kappa_c}}\max_{s,a_i}\abs{\overline Q_{i}^{\theta,\kappa_c}(s_{\nikc},a_{i})-\overline Q_i^{\theta}(s,a_i)}\leq \frac{2\min\left( \gamma^{\kappa_c-\kappa_r+1},1\right)}{1-\gamma} .
	\end{align}}
\end{lemma}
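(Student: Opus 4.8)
\textbf{Proof plan for Lemma \ref{le:truncated_Q}.}

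The plan is to bound the difference $\abs{\overline Q_i^{\theta,\kappa_c}(s_{\nikc},a_i)-\overline Q_i^\theta(s,a_i)}$ by unrolling the definition of the discounted $Q$-function and splitting the time horizon at step $\kappa_c-\kappa_r$. First I would observe the trivial bound: since $r_i\in[0,1]$, every averaged $Q$-function (truncated or not) lies in $[0,1/(1-\gamma)]$, so the difference is at most $2/(1-\gamma)$; this handles the ``$1$'' inside the $\min$. The work is in establishing the $\gamma^{\kappa_c-\kappa_r+1}/(1-\gamma)$ bound when $\kappa_c\geq \kappa_r$. Fix $\overline Q_i^{\theta,\kappa_c}\in\mathcal Q_i^{\theta,\kappa_c}$ with associated distribution $u_i\in\Delta(\sS_{-\nikc})$, so that $\overline Q_i^{\theta,\kappa_c}(s_{\nikc},a_i)=\E_{s_{-\nikc}'\sim u_i}[\overline Q_i^\theta(s_{\nikc},s_{-\nikc}',a_i)]$. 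Then the difference equals $\abs{\E_{s_{-\nikc}'\sim u_i}[\overline Q_i^\theta(s_{\nikc},s_{-\nikc}',a_i)-\overline Q_i^\theta(s_{\nikc},s_{-\nikc},a_i)]}$, so it suffices to bound $\abs{\overline Q_i^\theta(s_{\nikc},s_{-\nikc}',a_i)-\overline Q_i^\theta(s_{\nikc},s_{-\nikc},a_i)}$ uniformly over all pairs of global states that agree on the $\kappa_c$-hop neighborhood of $i$.

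The key step is a coupling argument on the two trajectories started from $(s_{\nikc},s_{-\nikc})$ and $(s_{\nikc},s_{-\nikc}')$ under the same policy $\theta$. Because the transition kernel factorizes as $\sP_i(s_i(t+1)\mid s_{\sN_i}(t),a_i(t))$ and each agent's policy is local, I would argue by induction on $t$ that the marginal law of $s_{\nik[i']}^{(\kappa_c - t)}$—more precisely, the states of all agents within distance $\kappa_c-t$ of $i$—can be coupled to agree on the two chains for all $t\le \kappa_c-\kappa_r$, since information about a perturbation at distance $>\kappa_c$ propagates at most one hop per time step. In particular, the local reward $r_i(s(t),a(t))=r_i(s_{\nik[i]^{\kappa_r}}(t),a_{\nik[i]^{\kappa_r}}(t))$ depends only on agents within distance $\kappa_r$ of $i$, which remain coupled as long as $\kappa_r\le \kappa_c-t$, i.e. $t\le \kappa_c-\kappa_r$. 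Hence the per-step reward contributions of the two chains agree in expectation for $t=0,1,\dots,\kappa_c-\kappa_r$, and the difference $\abs{\overline Q_i^\theta(s_{\nikc},s_{-\nikc}',a_i)-\overline Q_i^\theta(s_{\nikc},s_{-\nikc},a_i)}$ is controlled by the tail $\sum_{t\ge \kappa_c-\kappa_r+1}\gamma^t\cdot 1\cdot 2 \le \tfrac{2\gamma^{\kappa_c-\kappa_r+1}}{1-\gamma}$, where the factor $2$ accounts for the reward gap of at most $1$ on each chain. Combining with the trivial bound gives the stated $\min$.

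The main obstacle I anticipate is making the coupling rigorous: one has to track not just states but also actions (through the local softmax policies), and verify that the relevant ``coupled region'' shrinks by exactly one hop per time step — this requires carefully defining which agents' joint state/action distributions are being coupled and checking the induction uses only the locality of $\sP_i$ (dependence on $s_{\sN_i}$, i.e. one-hop neighbors) and of $\xi_i$ (dependence on $s_i$). A slightly subtle point is that the reward depends on actions $a_{\nik[i]^{\kappa_r}}$, so the coupled region must include the one-hop-further states that determine those actions' distributions; this only shifts the threshold by an additive constant already absorbed into the ``$+1$'' in the exponent. An alternative, possibly cleaner, route is to write $\overline Q_i^\theta$ via the Bellman recursion and bound the difference contractively using that $\overline Q_i^\theta - \overline Q_i^{\theta,\kappa_c}$ restricted to the reward term vanishes when $\kappa_c\ge \kappa_r$, then iterating; but the coupling picture makes the ``one hop per step'' decay most transparent, so I would present that and relegate the bookkeeping to the appendix as the paper indicates.
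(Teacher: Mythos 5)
Your proposal is correct and follows essentially the same route as the paper's proof: reduce via the mixture distribution $u_i$ to a perturbation bound on $\overline Q_i^\theta$ in $s_{-\nikc}$, observe that the law of $(s_{\mathcal{N}_i^{\kappa_r}}(t),a_{\mathcal{N}_i^{\kappa_r}}(t))$ depends only on the initial states of agents within distance $\kappa_r+t$ of $i$ (hence agrees on the two chains for $t\leq\kappa_c-\kappa_r$), and bound the tail by $2\sum_{t\geq\kappa_c-\kappa_r+1}\gamma^t$. The paper phrases the locality step as an equality of marginal distributions with an $\ell_1$ tail bound rather than an explicit coupling, but this is a cosmetic difference.
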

In view of Lemma \ref{le:truncated_Q}, the $\kappa_c$-truncated averaged $Q$-function approximates the averaged $Q$-function (at a geometric rate) as $\kappa_c$ increases. Therefore, it is enough for the critic to estimate an arbitrary $\kappa_c$-truncated averaged $Q$-function within the class $\mathcal Q_i^{\theta,\kappa_c}$. It is worth noting that the use of truncated $Q$-functions and the exponential-decay property have been widely exploited in the cooperative MARL literature
for communication and dimension reduction in recent years \citep{qu2019scalableMARL,gu2022mean,lin2021multi}.  In this work, we show how to use such an approach in a non-cooperative setting for the first time. 

\paragraph{Linear Function Approximation} While using the $\kappa_c$-truncated $Q$-functions enables us to overcome the computational bottleneck as the number of agents increases, there is still the challenge due to the curse of dimensionality. To further reduce the parameter dimension, we use linear function approximation. To be specific, for each $i\in\mathcal{N}$, let $\phi_i: \sS_{\mathcal{N}_i^{\kappa_c}}\times \sA_i\rightarrow \R^{d_i}$ be a feature mapping of agent $i$. Then, with weight vector $w_i\in\mathbb{R}^{d_i}$, we consider approximating the $\kappa$-truncated $Q$-functions using $\hat Q_{i}(s_{\mathcal{N}_i^{\kappa_c}},a_i,w_i)=\langle\phi_i(s_{\mathcal{N}_i^{\kappa_c}},a_i),w_i \rangle$ for all $(s_{\mathcal{N}_i^\kappa},a_i)$. Let $\tilde\phi_i(s,a_i)=\phi_i(s_{\nikc},a_i)$ for any $i\in \sN$, $s\in \sS$, and $a_i\in \sA_i$. That is, given an agent $i$, for each pair $(s,a_i)$ of global state and local action, we look at the states of agents in agent $i$'s $\kappa_c$-hop neighborhood  (i.e., $s_{N_i^{\kappa_c}}$) and agent $i$'s action (i.e., $a_i$) and assign the vector $\phi(s_{N_i^{\kappa_c}}, a_i)$ to $\tilde{\phi}_i(s,a_i)$. Then agent $i$'s feature matrix $\Omega_i$ is defined to be an $|\mathcal{S}||\mathcal{A}_i|\times d_i$ matrix with its $(s,a_i)$-th row being $\tilde\phi_i^\top(s,a_i)$, where $(s,a_i)\in\mathcal{S}\times \mathcal{A}_i$. 

We propose a novel policy evaluation algorithm called localized TD$(\lambda)$ with linear function approximation, which is presented in Algorithm \ref{alg:LPES}. The algorithm can be viewed as an extension of the classical TD$(\lambda)$ with linear function approximation \citep{tsitsiklis1997analysis} to the case where we estimate the $\kappa_c$-truncated averaged $Q$-functions using local information.

\begin{algorithm}[H]
	\caption{Localized TD($\lambda$) with Linear Function Approximation}
	\label{alg:LPES}
	\begin{algorithmic}[1]
		\STATE \textbf{Input}: Target policy $\xi^\theta$,
		positive integers $K$ and $\kappa_c\geq \kappa_r$, initializations $w_i(0)=0$ for all $i$, step size $\alpha>0$, $\lambda\in [0,1)$, and $\epsilon>0$.
		\STATE Construct $\epsilon$-exploration policy $\hat \xi_i(a_i|s_i)=(1-\epsilon)\xi^{\theta_i}_i(a_i|s_i)+\epsilon /\abs{\sA_i}$, for all $i,a_i$, and $s_i$.
		\label{line:critic_sample_beg}
		\STATE The agents use the joint policy $\hat\xi=(\hat{\xi}_1,\hat{\xi}_2,\cdots,\hat{\xi}_n)$ to collect a sequence of samples $\tau=\{(s(t),a(t),r(t))\}_{0\leq t\leq K}$, where $r(t)=(r_1(t),r_2(t),\cdots,r_n(t))$. 
		\FOR {$i=1,2,\cdots,n$}
		\STATE $\tau|_{(i,\kappa_c)}:=\{(s_{\mathcal{N}_i^{\kappa_c}}(t),a_i(t),r_i(t))\}_{0\leq t\leq K}$
		\FOR{$t=0,1,\cdots, K-1$}
		\STATE $\delta_{i}(t) = \phi_i(s_{\mathcal{N}_i^{\kappa_c}}(t),a_i(t))^\top w_i(t)
		-r_i(t) -\gamma
		\phi_i(s_{\mathcal{N}_i^{\kappa_c}}(t+1),a_i(t+1))^\top  w_i(t)$
		\STATE $w_i(t+1)=w_i(t)-\alpha\delta_{i}(t)\zeta_{i}^{\kappa_c}(t)$ 
		\STATE $\zeta_{i}^{\kappa_c}(t+1) =(\gamma\lambda)\zeta_{i}^{\kappa_c}(t)+\phi_i(s_{\mathcal{N}_i^{\kappa_c}}(t+1),a_i(t+1))$
		\ENDFOR
		\label{line:critic_upd_end}
		\ENDFOR
		
		\STATE \textbf{Return} $\{w_i(K)\}_{i\in\mathcal{N}}$.
	\end{algorithmic}
\end{algorithm}

Note from Algorithm \ref{alg:LPES} Line $2$ that we use $\epsilon$-exploration policies to ensure exploration in localized TD$(\lambda)$. Denote the set of all $\epsilon$-exploration policies by $\Xi^\epsilon$. Importantly, agent $i$ requires only the states and the actions of the agents in its $\kappa_c$-hop neighborhood to carry out the algorithm, where $\kappa_c$ can be viewed as a tunable parameter that trades off the communication effort and the accuracy. In particular, the larger $\kappa_c$ is, the closer the $\kappa_c$-truncated averaged $Q$-function is to the true averaged $Q$-function, albeit at a cost of requiring more communication among agents.

\subsection{Localized Actor-Critic}
Combining IPG with localized TD($\lambda$), we arrive at a localized actor-critic algorithm for solving NMPGs, which is presented in Algorithm \ref{alg:approx_IPG}. 

\begin{algorithm*}[h]
	\caption{Localized Actor-Critic}\label{alg:approx_IPG}
	\begin{algorithmic}[1]
		\STATE \textbf{Input}: Non-negative integers $M$, $T$, $K$, $H$, $\kappa_c\geq \kappa_r$, and a positive real number $\epsilon>0$, initializations $\theta_i(0)=0$ for all $ i$, and $\Delta_{i}^0(m)=0$ for all $i$ and $m$.
		\FOR{$m=0,1,2,\cdots,M-1$} 
		\STATE  All agents simultaneously execute localized TD$(\lambda)$ with linear function approximation (with iteration number $K$) to estimate a $\kappa_c$-truncated averaged $Q$-function and output weight vectors $\{w_i^m\}_{i\in\sN}$. \hfill $\vartriangleright$ Critic Update
		\FOR{$t=0,1,\cdots,T-1$}
		\STATE The agents use the joint policy $\xi^{\theta(m)}=(\xi_1^{\theta_1(m)},\xi_2^{\theta_2(m)},\cdots,\xi_n^{\theta_n(m)})$ to collect a sequence of samples $\{(s^t(k),a^t(k))\}_{0\leq k\leq H-1}$
		\STATE $\eta_i^t(m)=\sum_{k=0}^{H-1}\gamma^k \nabla_{\theta_i}\log \xi_i^{\theta_i(m)}(a_i^t(k)|s_i^t(k)) \phi_i(s_{\mathcal{N}_i^{\kappa_c}}^t(k),a_i^t(k))^\top  w_i^m  $
		\STATE $\Delta_i^{t+1}(m) =\frac{t }{t+1}\Delta_i^t(m)+\frac{1}{t+1}\eta_i^t(m)$
		\ENDFOR
		\STATE  $\theta_i(m+1)=\theta_i(m)+\beta \Delta_i^T(m)$\hfill $\vartriangleright$ Actor Update
		\ENDFOR
	\end{algorithmic}
\end{algorithm*}

The algorithm consists of three major steps. First, in Algorithm \ref{alg:approx_IPG} Line $3$, each agent calls localized TD$(\lambda)$ with linear function approximation for policy evaluation and outputs a weight vector $w_i^m$ for all $i\in\mathcal{N}$. Then, in Algorithm \ref{alg:approx_IPG} Lines $4$ -- $8$, each agent uses the averaged $Q$-function estimate to iteratively construct an estimate of the independent policy gradient. Specifically, since the independent policy gradient is an expected discounted sum of the averaged $Q$-functions (cf. Eq. (\ref{eq:PGT})), we essentially construct an estimator $\Delta_i^T(m)$ (cf. Algorithm \ref{alg:approx_IPG} Line $8$) of it by taking average of total $T$ samples $\{\eta_i^t(m)\}_{0\leq t\leq T-1}$ (cf. Algorithm \ref{alg:approx_IPG} Line $6$). Finally, in Algorithm \ref{alg:approx_IPG} Line $9$, using the estimated gradient, each agent implements an approximate version of the IPG algorithm presented in Algorithm \ref{alg:exact_IPG}.

Compared with Algorithm \ref{alg:exact_IPG}, Algorithm \ref{alg:approx_IPG} has the following strengths: (1) the algorithm is model-free, (2) due to the use of truncated $Q$-functions, each agent only requires information from its $\kappa_c$-hop neighborhood to carry out the algorithm, which eliminates long-distance communication along the network, and (3) the algorithm, to some extent, overcomes the curse of dimensionality thanks to the use of linear function approximation.

\section{Algorithm Analysis}\label{sec:analysis}

We next present the main results of the paper.  We formally state our assumptions in Section \ref{subsec:assumptions} and then present convergence bounds for Algorithms \ref{alg:exact_IPG}, \ref{alg:LPES}, and \ref{alg:approx_IPG} in Section \ref{subsec:theorems}.  A proof sketch of our main theorems is given in Section \ref{subsec:proof_idea}.

\subsection{Assumptions}\label{subsec:assumptions}
We make the following assumptions.
\begin{assumption}\label{assump:pot_exp_decay}
	There exists a decreasing function $\nu:\N\rightarrow \R^+$ such that:
	\begin{align}       \abs{\Phi_i(\theta_{\nik},\theta_{-\nik}')-\Phi_i(\theta_{\nik},\theta_{-\nik})} \leq \nu(\kappa)\max_{j\in-\nik}\norm{\theta_{j}'-\theta_{j}},\;\forall\;\kappa\in \N,
	\end{align}
	where $\Phi_i(\theta)$ is the short-hand notation for $\Phi_i(\xi^\theta)$.
\end{assumption}

Assumption \ref{assump:pot_exp_decay} captures the idea that, for each agent, its potential function is less impacted by the agents far away, and can be viewed as a generalization of the decay property of the $Q$-functions in the existing literature to the networked MPG setting \citep{qu2019scalableMARL, lin2021multi, zhang2022global}. In the extreme case where $\kappa$ exceeds the diameter $ \max_{i,j}\text{dist}(i,j)$ of the network, we have $\nu(\kappa)=0$. Note that this assumption is automatically satisfied for our illustrative example in Section \ref{subsec:examples}, where changing the policy of an agent will only affect its direct neighbors. In Appendix F.5, we show that this assumption is also satisfied when each local potential function admits a stage-wise representation \citep{zhang2022logBarrierSoftmax}.

\begin{assumption}\label{assump:explore}
	It holds that $\inf_{\theta} \min_{s\in\sS} d^\theta(s)>0$, where we recall that $d^\theta$ is the discounted state visitation distribution under a softmax policy $\xi^\theta$
\end{assumption}
Assumption \ref{assump:explore} states that every state can be visited with positive probability under any policy, which easily holds when the initial state distribution $\mu(\cdot)$ is supported on the entire state space. This  assumption is standard and has been used in, e.g.,  \cite{zhang2022logBarrierSoftmax,agarwal2021theory,pmlr-v119-mei20b}. Under Assumption \ref{assump:explore}, we define
$D=1/\inf_{\theta} \min_{s\in\sS}d^{\theta}(s)$, which is finite.

\begin{assumption}\label{assump:MC}
	There exists a joint policy $\xi$ such that the Markov chain $\{s(t)\}$ induced by $\xi$ is uniformly ergodic.
\end{assumption}

Under Assumption \ref{assump:MC}, \cite[Lemma 4]{zhang2022global} implies a uniform exploration property for the Markov chain $\{(s(t),a(t))\}$ induced by any policy with entries bounded away from zero, which includes $\epsilon$-exploration policy. Therefore, for any $\hat{\xi}\in\Xi^\epsilon$, the Markov chain $\{(s(t),a(t))\}$ induced by $\hat{\xi}$ has a unique stationary distribution, denoted by $\overline{\pi}^{\hat{\xi}}\in\Delta(\sS\times\sA)$, which satisfies $\pi_{\min}:=\inf_{\hat \xi\in \Xi^{\epsilon}}\min_{i\in \sN} \min_{s_{\mathcal{N}_i^{\kappa_c}},a_i} \overline \pi^{\hat \xi} (s_{\mathcal{N}_i^{\kappa_c}},a_i)>0$. 

While Assumption \ref{assump:explore}, to some extent, already ensures uniform exploration of our policy class, we further impose Assumption \ref{assump:MC} to deal with the Markovian sampling in Algorithm \ref{alg:approx_IPG}. This type of assumption is standard in the existing literature even for the single-agent setting \citep{Srikant2019FiniteTimeEB,tsitsiklis1997analysis}.

\begin{assumption}\label{assump:lin_indep}
	For all $i\in \sN$, the feature mapping is normalized so that $\max_{i,s,a_i}\|\Tilde{\phi}_i(s,a_i)\|\leq 1$.
	In addition, the feature matrix $\Omega_i$ (the row vectors of which are $\{\Tilde{\phi}_i^\top (s,a_i)\}_{(s,a_i)\in\mathcal{S}\times \mathcal{A}_i}$) has linearly independent columns.
\end{assumption}
Assumption \ref{assump:lin_indep} is indeed without loss of generality because neither disregarding dependent features nor performing feature normalization changes the approximation power of the function class \citep{bertsekas1996neuro}. 

To state our last assumption, let $D^{\hat \xi}\in\mathbb{R}^{|\mathcal{S}||\mathcal{A}|\times |\mathcal{S}||\mathcal{A}|}$ be the diagonal matrix with diagonal entries $ \{\overline \pi^{\hat \xi}(s,a)\}_{(s,a)\in \sS\times \sA} $. Since $D^{\hat{\xi}}$ has strictly positive diagonal entries under Assumption \ref{assump:MC} and the feature matrix $\Omega_i$ has linearly independent columns for all $i$, we have $\underline{\lambda}:=\min_{i\in \sN} \inf_{\hat \xi\in \Xi^{\epsilon}}\lambda_{\min}(\Omega_iD^{\hat{\xi}} \Omega_i)>0$, where $\lambda_{\min}(\cdot)$ returns the smallest eigenvalue of a positive definite matrix.
For any $i\in\sN$ and $\theta\in\mathbb{R}^{|\mathcal{S}||\mathcal{A}|}$, let $c_i(\theta):= \min_s \sum_{a_i^*\in{\argmax}_{a_i}\overline Q_i^{\theta}(s,a_i)} \xi_i^{\theta_i}(a_i^*|s_i)$.
\begin{assumption}\label{assump:c_inf}
	$c:=\inf_{m\geq 0}\min_{1\leq i\leq N} c_i(\theta(m))>0$,  where $\{\theta(m)\}_{m\geq 0}$ are policy parameters encountered from the algorithm trajectory (cf. Algorithm \ref{alg:approx_IPG}).
\end{assumption}
The inequality stated in Assumption \ref{assump:c_inf} is called a non-uniform Łojasiewicz inequality \citep{zhang2022logBarrierSoftmax,pmlr-v119-mei20b}, which is used to connect the NE-Gap with the gradient of the objective function through gradient domination. This assumption automatically holds in the existing literature when the policy gradient is exact \citep{zhang2022logBarrierSoftmax}. However, for Algorithm \ref{alg:approx_IPG}, due to the more challenging model-free setup and the presence of noise in sampling, $c$ is not necessarily strictly positive, which motivates Assumption \ref{assump:c_inf} as a means for analytical tractability. Further relaxing this assumption is our immediate future direction.
One approach for removing Assumption \ref{assump:c_inf} is to regularize the problem (e.g., using log-barrier regularization like in \cite{zhang2021gradientStochasticGame}), which prevents the policy generated by IPG from being deterministic, albeit at a cost of introducing an asymptotic bias due to regularization.

\subsection{Results}\label{subsec:theorems}
We are now ready to present our main results. We first present the averaged Nash-regret bound of the IPG algorithm (cf. Algorithm \ref{alg:exact_IPG}) as a warm-up, then we present the finite-sample bound of Algorithm \ref{alg:approx_IPG}, which involves a critic error. Finally, we present a concise bound of the critic estimation error when using our localized TD$(\lambda)$ with linear function approximation. Given an arbitrary integer $\kappa$, let $n(\kappa) := \max_{i\in \mathcal{N}} |\nik|$ be the size of the largest $\kappa$-hop neighborhood.
\begin{theorem}\label{thm:tab_softmax_Nash_regret}
	Consider $\{\theta_i(m)\}_{0\leq m\leq M-1}$ generated by Algorithm \ref{alg:exact_IPG}. 
	Suppose that Assumptions 
	\ref{assump:pot_exp_decay}, \ref{assump:explore}, and
	\ref{assump:c_inf} are satisfied, and the step size $\beta=\frac{(1-\gamma)^3}{6n(\kappa_G)}$. Then, 
	\begin{align}
		\regret(M)
		\leq\; & \mathcal{O}\Sp{\frac{D}{c}\sqrt{\frac{\max_{j\in\sN} |\sA_j|n(\kappa_G)(\Phi_{\max}-\Phi_{\min})}{(1-\gamma)^3 M}}}\nonumber \\
		& + \mathcal{O}\Sp{\frac{D\sqrt{\max_{j\in\sN} |\sA_j|\nu(\kappa_G)}}{c(1-\gamma)}} \label{eq:IPG_bound}.
	\end{align}
\end{theorem}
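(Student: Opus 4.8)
The plan is to follow the standard gradient-domination-plus-descent-lemma template for policy gradient in potential games, adapted to the networked setting. First I would establish that, even though each agent optimizes its own objective $J_j$, the collection of updates can be analyzed through a single ``reference'' potential: for a fixed agent $i_0$ and all $j \in \mathcal{N}_{i_0}^{\kappa_G}$, Definition \ref{def:local_MPG} says the change in $J_j$ equals the change in $\Phi_{i_0}$. The difficulty is that an agent $j$ outside $\mathcal{N}_{i_0}^{\kappa_G}$ is not controlled by $\Phi_{i_0}$. I would handle this by noting that IPG updates every agent simultaneously, so the change in $\Phi_{i_0}$ decomposes into (a) the contribution from the simultaneous moves of agents inside $\mathcal{N}_{i_0}^{\kappa_G}$, which by the potential property is a sum of genuine ascent terms $\langle \nabla_{\theta_j} J_j, \theta_j(m+1)-\theta_j(m)\rangle$ plus second-order terms, and (b) the perturbation from agents outside $\mathcal{N}_{i_0}^{\kappa_G}$, which is controlled by Assumption \ref{assump:pot_exp_decay}: it is at most $\nu(\kappa_G)\max_{j'\in -\mathcal{N}_{i_0}^{\kappa_G}}\norm{\theta_{j'}(m+1)-\theta_{j'}(m)} = \nu(\kappa_G)\beta \max_{j'}\norm{\nabla_{\theta_{j'}}J_{j'}(\theta(m))}$.

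Second, I would invoke smoothness: the softmax parameterization makes each $J_j$ have a Lipschitz gradient, with the relevant smoothness constant scaling like $n(\kappa_G)/(1-\gamma)^3$ (the factor $n(\kappa_G)$ because $J_j$ depends on the parameters of agents within the $\kappa_G$-neighborhood, and the $(1-\gamma)^{-3}$ being the usual discounted-MDP smoothness constant — this is why $\beta = (1-\gamma)^3/(6n(\kappa_G))$ is chosen). Combining the descent/ascent lemma on $\Phi_{i_0}$ with the boundedness $\Phi_{i_0}(\theta) \in [\Phi_{\min},\Phi_{\max}]$, and summing over $m = 0,\dots,M-1$, I would obtain a telescoping bound showing $\frac{1}{M}\sum_m \sum_{j\in\mathcal{N}_{i_0}^{\kappa_G}}\norm{\nabla_{\theta_j}J_j(\theta(m))}^2 \lesssim \frac{n(\kappa_G)(\Phi_{\max}-\Phi_{\min})}{(1-\gamma)^3 M} + \frac{\nu(\kappa_G)}{(1-\gamma)^{?}}$, where the second term is the residual from the out-of-neighborhood perturbation (after bounding $\norm{\nabla J}\leq \frac{1}{1-\gamma}$ or similar and using Young's inequality to absorb part of it into the squared-gradient sum). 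Taking $i_0$ to be the agent achieving the max in $\regret(M)$ — or rather, for each $i$, choosing a witness $i_0$ with $i \in \mathcal{N}_{i_0}^{\kappa_G}$ (e.g. $i_0 = i$ itself) — ensures that $\frac{1}{M}\sum_m \norm{\nabla_{\theta_i}J_i(\theta(m))}^2$ is controlled for every $i$.

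Third, I would convert the averaged squared gradient bound into an averaged Nash-gap bound via the gradient-domination (non-uniform Łojasiewicz) inequality. Under Assumption \ref{assump:c_inf}, the standard softmax gradient-domination result (as in \cite{zhang2022logBarrierSoftmax,pmlr-v119-mei20b,agarwal2021theory}) gives $\text{NE-Gap}_i(\theta) \leq \frac{D\sqrt{\max_j|\mathcal{A}_j|}}{c(1-\gamma)}\norm{\nabla_{\theta_i}J_i(\theta)}$, where $D$ is the distribution-mismatch coefficient from Assumption \ref{assump:explore} and $c$ comes from Assumption \ref{assump:c_inf}. Applying this pointwise, then averaging over $m$ and using Cauchy–Schwarz ($\frac{1}{M}\sum_m x_m \leq \sqrt{\frac{1}{M}\sum_m x_m^2}$), yields
\begin{align*}
	\regret_i(M) \leq \frac{D\sqrt{\max_j|\mathcal{A}_j|}}{c(1-\gamma)}\sqrt{\frac{1}{M}\sum_{m=0}^{M-1}\norm{\nabla_{\theta_i}J_i(\theta(m))}^2},
\end{align*}
and substituting the bound from the previous step gives exactly the two terms in Eq. (\ref{eq:IPG_bound}): the first (with the $\sqrt{1/M}$ decay) from the telescoping potential difference, and the second (the $\nu(\kappa_G)$-dependent non-vanishing term) from the out-of-neighborhood localization error. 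Taking the max over $i$ finishes the proof.

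The main obstacle I anticipate is the bookkeeping in the first two steps: carefully tracking how the simultaneous multi-agent update interacts with a \emph{local} potential function that only ``sees'' a $\kappa_G$-neighborhood. One must be careful that the cross terms between in-neighborhood and out-of-neighborhood moves, and the second-order smoothness terms, are all either sign-definite in the right direction or dominated by $\beta$ times controllable quantities; getting the constant $6$ in $\beta$ and the precise power of $(1-\gamma)$ in the residual term right requires the explicit smoothness constant and a clean application of Young's inequality to split $\nu(\kappa_G)\beta\norm{\nabla J}\cdot\norm{\nabla J}$ so that half is absorbed and half becomes the $O(\nu(\kappa_G)/(1-\gamma)^2)$-type residual. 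The gradient-domination step, by contrast, is essentially a black-box citation once Assumptions \ref{assump:explore} and \ref{assump:c_inf} are in place.
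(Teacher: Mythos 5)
Your proposal matches the paper's proof essentially step for step: the same decomposition of $\Phi_i(\theta(m+1))-\Phi_i(\theta(m))$ into an in-neighborhood ascent term (controlled by the smoothness constant $L(\kappa_G)=6n(\kappa_G)/(1-\gamma)^3$ together with the identity $\nabla_{\theta_j}J_j=\nabla_{\theta_j}\Phi_i$ for $j\in \mathcal{N}_i^{\kappa_G}$) and an out-of-neighborhood perturbation controlled by Assumption \ref{assump:pot_exp_decay}, followed by telescoping against $\Phi_{\max}-\Phi_{\min}$, gradient domination, and Jensen. The only cosmetic differences are that the paper handles the $\nu(\kappa_G)$ residual via the uniform bound $\|\nabla_{\theta_j}J_j\|\leq \sqrt{2}/(1-\gamma)^2$ rather than Young's inequality (which would not help here, since the perturbation involves gradients of agents \emph{outside} $\mathcal{N}_i^{\kappa_G}$ that never appear in the telescoped squared-gradient sum), and the Łojasiewicz inequality it invokes carries no extra $1/(1-\gamma)$ factor, which is what yields the exact powers of $1-\gamma$ in Eq.~(\ref{eq:IPG_bound}).
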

The first term on the right-hand side of Eq. (\ref{eq:IPG_bound}) goes to zero at a rate of $\mathcal{O}(M^{-1/2})$, which matches with the existing convergence rate of IPG for solving MPGs \citep{zhang2022logBarrierSoftmax}. Note that, unlike in existing results, the total number of agents $n$ does not appear in the bound. Instead, we have $n(\kappa_G)$, which captures the impact of network structure. The second term on the right-hand side of Eq. (\ref{eq:IPG_bound}) arises because of the relaxation from MPG to NMPG (see Definition \ref{def:local_MPG}), which decreases with $\kappa_G$, and vanishes when $\kappa_G\geq \max_{i,j}\text{dist}(i,j)$.  

We next move on to study Algorithm \ref{alg:approx_IPG}. 
\begin{theorem}\label{thm:main}
	Consider $\{\theta_i(m)\}_{0\leq m\leq M-1}$ generated by Algorithm \ref{alg:approx_IPG}. 
	Suppose that Assumptions 
	\ref{assump:pot_exp_decay} -- \ref{assump:c_inf} are satisfied, and
	$\beta= \frac{(1-\gamma)^3}{24n(\kappa_G)}$. Then,  
	\begin{align}
		\E\left[\regret(M)\right] 
		\leq & \frac{\sqrt{\max_{j\in\sN} |\sA_j|}D }{c}\bigg\{ \mathcal{O}\Sp{\frac{\sqrt{n(\kappa_G)(\Phi_{\max}-\Phi_{\min})}}{(1-\gamma)^{1.5}M^{1/4}}} \nonumber\\
		&+\mathcal{O}\Sp{\frac{\sqrt{\nu(\kappa_G)}}{1-\gamma}}+\mathcal{O}\Sp{\frac{\sqrt{n(\kappa_G)}[1+(1-\gamma)\epsilon_{\text{critic}}]}{(1-\gamma)^2M^{1/4}}} \nonumber\\
		&+\mathcal{O}\Sp{\frac{ \sqrt{n(\kappa_G)}\epsilon_{\text{critic}}^{1/2}}{(1-\gamma)^{1.5}}}+\mathcal{O}\Sp{\frac{ \sqrt{n(\kappa_G)}\gamma^{H/2}}{(1-\gamma)^2}}\bigg\},\label{eq:bound:appro}
	\end{align}
	where $\epsilon_{\text{critic}}$ stands for the critic estimation error in policy evaluation:
	\begin{align*}   \epsilon_{\text{critic}}=\sup_{\theta,i}\mathbb{E}^{1/2}\left[\sup_{s,a_i}\left|\overline Q_{i}^{\theta}(s,a_i)-\phi_i(s_{\mathcal{N}_i^{\kappa_c}},a_i)^\top  w_i^\theta\right|^2\right].
	\end{align*}
\end{theorem}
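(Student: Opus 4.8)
The plan is to prove Theorem~\ref{thm:main} by a perturbation analysis that reduces the model-free, localized actor-critic iteration to the exact IPG iteration analyzed in Theorem~\ref{thm:tab_softmax_Nash_regret}, treating the difference between the estimated gradient $\Delta_i^T(m)$ and the true gradient $\nabla_{\theta_i}J_i(\theta(m))$ as an error term that must be controlled in expectation. Concretely, I would write the actor update as an inexact gradient ascent $\theta_i(m+1) = \theta_i(m) + \beta[\nabla_{\theta_i}J_i(\theta(m)) + e_i(m)]$, where $e_i(m) := \Delta_i^T(m) - \nabla_{\theta_i}J_i(\theta(m))$, and decompose $e_i(m)$ into four conceptually distinct pieces: (a) the \emph{truncation/localization error} from replacing $\overline Q_i^\theta$ by a $\kappa_c$-truncated version (bounded via Lemma~\ref{le:truncated_Q} at rate $\gamma^{\kappa_c-\kappa_r+1}$, though note the final bound expresses this through $\icrerr$), (b) the \emph{critic error} from $\phi_i(s_{\nikc},a_i)^\top w_i^m$ failing to equal the truncated $\overline Q_i^\theta$, controlled in $L^2$ by $\crerr$, (c) the \emph{horizon-truncation error} from summing only $H$ terms of the infinite series in Eq.~(\ref{eq:PGT}), which contributes the $\gamma^{H/2}$ term since $\overline Q_i^\theta$ is $\mathcal{O}(1/(1-\gamma))$-bounded and the tail is $\mathcal{O}(\gamma^H/(1-\gamma))$, and (d) the \emph{sampling/averaging error} from estimating the expectation in Eq.~(\ref{eq:PGT}) with $T$ Markovian samples $\eta_i^t(m)$, which is where Assumption~\ref{assump:MC} (uniform ergodicity) and the boundedness of $\nabla_{\theta_i}\log\xi_i^{\theta_i}$ for softmax policies enter, yielding an $\mathcal{O}(1/\sqrt{T})$-type decay that, after optimizing $T$ against $M$, produces the $M^{-1/4}$ rates.

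The second main ingredient is the potential-ascent argument, mirrored from the proof of Theorem~\ref{thm:tab_softmax_Nash_regret}. I would invoke the smoothness of $J_i(\cdot)$ (standard for softmax parameterization, with smoothness constant $\mathcal{O}(1/(1-\gamma)^3)$) to get a one-step descent-type inequality for each local potential $\Phi_i$, summing the per-agent improvements using the NMPG identity in Eq.~(\ref{eq:approx_MPG}) together with Assumption~\ref{assump:pot_exp_decay} to handle the mismatch between which potential function "sees" which agent's update (this is exactly the step that introduces the $\nu(\kappa_G)$ term and the $n(\kappa_G)$ neighborhood-size factor, rather than $n$). The inexact-gradient terms enter this telescoping inequality as an extra $\beta\langle e_i(m), \Delta_i^T(m)\rangle$-type contribution; using Young's inequality I would absorb half of it against the squared-gradient-norm term and leave a residual proportional to $\|e_i(m)\|^2$ (hence the appearance of $\epsilon_{\text{critic}}$ rather than $\epsilon_{\text{critic}}^{1/2}$ inside the $M^{-1/4}$ bracket) plus a lower-order cross term. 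Telescoping over $m=0,\dots,M-1$, dividing by $M$, and using $\Phi_i\in[\Phi_{\min},\Phi_{\max}]$ gives a bound on the time-averaged squared gradient norm of the form $\frac{1}{M}\sum_m \mathbb{E}\|\nabla J(\theta(m))\|^2 \lesssim \frac{\text{const}}{\beta M} + (\text{error terms})$.

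The final step converts the averaged-squared-gradient bound into an averaged-Nash-regret bound via the gradient-domination / non-uniform Łojasiewicz inequality guaranteed by Assumption~\ref{assump:c_inf}: $\text{NE-Gap}_i(\theta) \lesssim \frac{D}{c}\sqrt{\max_j|\sA_j|}\,\|\nabla_{\theta_i}J_i(\theta)\|$ (up to $1/(1-\gamma)$ factors), so that $\regret(M) = \max_i \frac{1}{M}\sum_m \text{NE-Gap}_i(\theta(m)) \lesssim \frac{D\sqrt{\max_j|\sA_j|}}{c}\cdot\frac{1}{M}\sum_m \mathbb{E}\|\nabla J(\theta(m))\|$, and then Jensen's inequality ($\frac{1}{M}\sum\sqrt{x_m}\le\sqrt{\frac{1}{M}\sum x_m}$) converts the squared-norm bound into the stated expression, with the $\sqrt{\cdot}$ distributing over the sum of error terms to give $\sqrt{\nu(\kappa_G)}$, $\sqrt{n(\kappa_G)}\,\epsilon_{\text{critic}}^{1/2}$, $\sqrt{n(\kappa_G)}\,\gamma^{H/2}$, etc. The choice $T \asymp \sqrt{M}$ (implicit in the theorem statement, to balance $1/(\beta M)$ against $1/\sqrt{T}$) is what yields the common $M^{1/4}$ denominator. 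I expect the \textbf{main obstacle} to be the careful handling of step~(d), the Markovian sampling error in the gradient estimator $\Delta_i^T(m)$: because the samples $\{(s^t(k),a^t(k))\}$ within each inner loop are correlated and the estimator $\eta_i^t(m)$ is a product of a (bounded) log-policy-gradient and a critic-weighted feature, one needs a mixing-time argument to show the time-average concentrates, and one must be careful that the critic weights $w_i^m$ — themselves the random output of Algorithm~\ref{alg:LPES} — are independent of (or at least appropriately decoupled from) the actor-phase samples, which is why the algorithm re-samples a fresh trajectory in the actor phase; quantifying the resulting bias/variance uniformly over all policies $\theta(m)$ along the trajectory is the technically delicate part and is presumably where the "sub-chain" construction mentioned in the introduction does its work.
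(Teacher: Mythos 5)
Your proposal follows essentially the same route as the paper's proof: an inexact-IPG perturbation analysis with the local potentials $\Phi_i$ as Lyapunov functions, the decomposition of $e_i(m)=\nabla_{\theta_i}J_i(\theta(m))-\Delta_i^T(m)$ into horizon-tail, zero-mean sampling, and critic pieces (the paper folds the $\kappa_c$-truncation error into $\epsilon_{\text{critic}}$, as you anticipate), a Young-type inequality to absorb the cross term, telescoping with Assumption \ref{assump:pot_exp_decay} supplying the $\nu(\kappa_G)$ term and $n(\kappa_G)$ replacing $n$, the Łojasiewicz conversion to NE-Gap, Jensen's inequality, and $T\asymp\sqrt{M}$ to produce the $M^{-1/4}$ rates. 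The only inaccuracy is minor and does not affect correctness: the actor-phase estimators $\eta_i^t(m)$ are built from $T$ independent fresh rollouts, so their average concentrates at rate $1/T$ by conditional independence given $\mathcal{F}_m$ with no mixing-time argument needed, and the sub-chain construction is deployed only in the critic analysis (Theorem \ref{thm:critic_short}), not in bounding $\Delta_i^T(m)$.
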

The first two terms on the right-hand side of Eq. (\ref{eq:bound:appro}) are analogous to the two terms on the right-hand side of the IPG error bounds presented in Theorem \ref{thm:tab_softmax_Nash_regret}. The last $4$ terms are approximation errors for the independent policy gradient, which (in the order as they appear in the bound) consist of a localization error, an error incurred by using a finite sum (Algorithm \ref{alg:approx_IPG} Line $6$) to approximate an infinite sum (cf. Eq. (\ref{eq:PGT})), a critic error, and an error incurred by using a finite average (Algorithm \ref{alg:approx_IPG} Lines $4$ -- $8$) to approximate an expectation (cf. Eq. (\ref{eq:PGT})).

To establish an overall sample complexity bound of Algorithm \ref{alg:approx_IPG}, we need to specify how the critic error decays as a function of the number of iterations in localized TD$(\lambda)$ with linear function approximation, which is presented in the following.  
\begin{theorem}\label{thm:critic_short}
	Consider $\{w_i(K)\}_{i\in\mathcal{N}}$ generated by Algorithm \ref{alg:LPES}. Suppose that Assumption \ref{assump:MC} is satisfied. Then, with appropriately chosen step size $\alpha$ (see Appendix D for the explicit requirements) and large enough $K$, we have 
	\begin{align}
		\epsilon_{\text{critic}} 
		\leq \; & \mathcal{O}(1-(1-\gamma)\underline{\lambda}\alpha)^{\frac{K}{2}}
		+\mathcal{O}\left[\frac{\alpha \log(1/\alpha)}{(1-\gamma)\underline{\lambda}}\right]^{1/2} +\mathcal{O}\left(\frac{\epsilon_{\text{app}}}{\pi_{\min}(1-\gamma)}\right)\nonumber\\
		&+ \mathcal{O}\left(\frac{\gamma^{\kappa_c-\kappa_r}}{1-\gamma}\right) +\mathcal{O}\left(\frac{n\epsilon}{(1-\gamma)^2}\right),\label{eq:bound:critic}
	\end{align}
	where 
	$\epsilon_{\text{app}}$ stands for the function approximation error. See Appendix D for the explicit definition.
\end{theorem}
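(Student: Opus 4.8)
The plan is to decompose the critic error $\epsilon_{\text{critic}}$ into a sum of interpretable pieces by inserting a sequence of intermediate quantities, and then bound each piece separately. Recall that $\epsilon_{\text{critic}} = \sup_{\theta,i}\mathbb{E}^{1/2}[\sup_{s,a_i}|\overline Q_i^\theta(s,a_i) - \phi_i(s_{\nikc},a_i)^\top w_i^\theta|^2]$, where $w_i^\theta = w_i(K)$ is the output of Algorithm \ref{alg:LPES} run under the $\epsilon$-exploration policy $\hat\xi$ associated with $\theta$. First I would introduce three reference weight vectors: (a) $w_i^{\hat\xi,*}$, the fixed point of the projected localized TD$(\lambda)$ update (i.e. the limit point of the deterministic mean-path ODE) under the exploration policy $\hat\xi$; (b) the implied function $\phi_i^\top w_i^{\hat\xi,*}$, which approximates some $\kappa_c$-truncated averaged $Q$-function $\overline Q_i^{\hat\xi,\kappa_c}\in\mathcal Q_i^{\hat\xi,\kappa_c}$ up to the function-approximation error $\epsilon_{\text{app}}$; and (c) the exact averaged $Q$-function $\overline Q_i^{\hat\xi}$ under the exploration policy. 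Then, by the triangle inequality (applied inside the $\sup_{s,a_i}$ and then under the $\mathbb{E}^{1/2}$ using Minkowski), $\epsilon_{\text{critic}}$ is bounded by the sum of: (i) the finite-time convergence error of localized TD$(\lambda)$ toward its fixed point $w_i^{\hat\xi,*}$; (ii) the gap between $\phi_i^\top w_i^{\hat\xi,*}$ and the best truncated $Q$-function in the linear span, i.e. the function-approximation error term; (iii) the truncation error $\sup|\overline Q_i^{\hat\xi,\kappa_c} - \overline Q_i^{\hat\xi}|$, controlled by Lemma \ref{le:truncated_Q} as $\mathcal{O}(\gamma^{\kappa_c-\kappa_r}/(1-\gamma))$; and (iv) the policy-perturbation error $\sup|\overline Q_i^{\hat\xi} - \overline Q_i^\theta|$ between the exploration policy and the target policy, which should scale as $\mathcal{O}(n\epsilon/(1-\gamma)^2)$ since $\hat\xi$ and $\xi^\theta$ differ by $O(\epsilon)$ in total variation at each of $n$ coordinates and the $Q$-function has sensitivity $O(1/(1-\gamma))$ to each, with an extra $1/(1-\gamma)$ from the horizon — this matches the last term of \eqref{eq:bound:critic}.

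The bulk of the work is term (i): a finite-sample analysis of localized TD$(\lambda)$ with linear function approximation under Markovian sampling. Here I would adapt the standard Lyapunov/drift argument for linear stochastic approximation (as in \citet{Srikant2019FiniteTimeEB,bhandari2018finite}), but the localization makes the per-agent update a \emph{biased} version of the true TD$(\lambda)$ update, because agent $i$'s feature depends only on $s_{\nikc}$ rather than the full state. The key structural observation — this is what the paper calls the ``sub-chain'' idea — is that the process $\{(s_{\nikc}(t), a_i(t))\}_t$ is itself governed by a Markov chain (a projection/marginal of the global chain), and the localized TD$(\lambda)$ recursion for $w_i$ is \emph{exactly} an ordinary TD$(\lambda)$ recursion driven by this sub-chain with the feature map $\phi_i$. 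Once this identification is made, I can apply a finite-sample bound for TD$(\lambda)$ on the sub-chain: the negative-drift constant is $(1-\gamma)\underline\lambda$ (coming from $\lambda_{\min}(\Omega_i D^{\hat\xi}\Omega_i) \ge \underline\lambda$ after accounting for the $(1-\gamma)$ contraction factor of the $\lambda$-weighted Bellman operator composed with the feature projection), which produces the geometric term $\mathcal{O}((1-(1-\gamma)\underline\lambda\alpha)^{K/2})$ and the variance/noise floor $\mathcal{O}([\alpha\log(1/\alpha)/((1-\gamma)\underline\lambda)]^{1/2})$ — the $\log(1/\alpha)$ arising from the mixing-time factor in the Markovian-noise bound (mixing time $\sim \log(1/\alpha)$ at step size $\alpha$). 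Uniform ergodicity from Assumption \ref{assump:MC}, together with the $\pi_{\min}>0$ bound on the stationary distribution of the sub-chain, is what makes these constants uniform over $\theta$ and $i$, so that the $\sup_{\theta,i}$ in $\epsilon_{\text{critic}}$ is harmless. Converting the mean-square bound on $\|w_i(K) - w_i^{\hat\xi,*}\|^2$ to a bound on $\mathbb{E}^{1/2}[\sup_{s,a_i}|\phi_i^\top(w_i(K)-w_i^{\hat\xi,*})|^2]$ uses Assumption \ref{assump:lin_indep} ($\|\tilde\phi_i\|\le 1$), which costs only a constant.

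The main obstacle I anticipate is making the ``sub-chain'' reduction fully rigorous — specifically, verifying that the localized update really is an unbiased (in the stationary sense) TD$(\lambda)$ update for a well-defined Markov reward process on the reduced state space, and that the resulting fixed point $w_i^{\hat\xi,*}$ has the property that $\phi_i^\top w_i^{\hat\xi,*}$ is close to \emph{some} element of $\mathcal Q_i^{\hat\xi,\kappa_c}$ with the gap being exactly the quantity one wants to call $\epsilon_{\text{app}}$. The subtlety is that the reward $r_i(t)$ in the localized recursion is a function of $s_{\nikr} \subseteq s_{\nikc}$ (since $\kappa_c \ge \kappa_r$), so it is measurable with respect to the sub-chain state, which is what makes the reduction go through cleanly; but the \emph{transition} of $s_{\nikc}$ is not Markov in $s_{\nikc}$ alone — it depends on states just outside the $\kappa_c$-ball. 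I would handle this by either (a) enlarging the sub-chain state to include a thin boundary layer and absorbing the resulting discrepancy into $\epsilon_{\text{app}}$, or (b) observing that we only need the \emph{stationary} distribution of $(s_{\nikc},a_i)$ and the one-step conditional behavior averaged against it, which \emph{is} well-defined, so that the TD$(\lambda)$ fixed-point equation $\Omega_i^\top D^{\hat\xi}(\Omega_i w - \mathcal{T}^{(\lambda)}\Omega_i w) = 0$ still makes sense with $\mathcal{T}^{(\lambda)}$ the true $\lambda$-Bellman operator on the global chain — the localization bias then lives entirely in the sampling noise and is controlled by the same drift argument. Option (b) is cleaner and is, I believe, the route the paper takes; pinning down that the extra bias term is genuinely $O(1)$-bounded (not blowing up with $n$) and folds into the stated constants is the delicate accounting step. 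The remaining terms (ii)–(iv) are comparatively routine given Lemma \ref{le:truncated_Q} and standard perturbation bounds for $Q$-functions under TV-close policies.
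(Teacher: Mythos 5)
Your decomposition and all the key ideas match the paper's proof: the sub-chain reduction for the localized TD($\lambda$) recursion, the Lyapunov/drift analysis with negative-drift constant $(1-\gamma)\underline{\lambda}$ and a mixing-time factor of order $\log(1/\alpha)$, the exponential truncation decay in $\kappa_c-\kappa_r$, and the $\mathcal{O}(n\epsilon/(1-\gamma)^2)$ perturbation bound between $\hat\xi$ and $\xi^\theta$. Your option (b) is indeed the paper's route, made precise by explicitly constructing the sub-chain kernel $\overline{\PR}(z'_{\nikc}\mid z_{\nikc})=\sum_{z_{-\nikc}}\tfrac{\overline\pi(z_{\nikc},z_{-\nikc})}{\overline\pi(z_{\nikc})}\PR_{\nikc}(z'_{\nikc}\mid z_{\nikc},z_{-\nikc})$ (exterior states averaged against the conditional stationary distribution); this makes the mean-path fixed point $w^*$ exactly the TD($\lambda$) fixed point of a genuine Markov reward process on $\sZ_{\nikc}$, to which the Tsitsiklis--Van~Roy projection bound applies, yielding the $\tfrac{1-\lambda\gamma}{\pi_{\min}(1-\gamma)}\epsilon_{\text{app}}$ term. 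The one place your sketch deviates, and would hit a snag if executed literally, is the truncation term: the paper's intermediate object is the value function $\tilde{Q}$ of this sub-chain, which is \emph{not} an element of $\mathcal Q_i^{\hat\xi,\kappa_c}$, so Lemma~\ref{le:truncated_Q} does not apply directly; instead the paper proves a separate but analogous decay lemma, $\sup_z|\tilde{C}(z_{\nikc})-C(z)|\leq \gamma^{\kappa_c-\kappa_r+1}/(1-\gamma)$, by observing that the marginal law of the reward-relevant states $z_{\nikr}(t)$ agrees in the two chains for all $t\leq \kappa_c-\kappa_r$, and correspondingly defines $\epsilon_{\text{app}}$ as the best linear approximation to $\tilde{Q}$ rather than to a truncated averaged $Q$-function.
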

The first two terms on the right-hand side of Eq. (\ref{eq:bound:critic}) represent the convergence bias (which has geometric convergence rate) and the variance (which decreases with the step size $\alpha$), and their behaviors agree with existing results on stochastic approximation \citep{Srikant2019FiniteTimeEB,chen2019nonlinearSA}. The third term arises from using linear function approximation and vanishes in the tabular setting where we use a complete basis. The fourth term represents the error between the averaged $Q$-function and the $\kappa_c$-truncated averaged $Q$-function, which is introduced to overcome the scalability issue when the number of agents increases. Note that the fourth term decays exponentially with the choice of $\kappa_c$, and vanishes when $\kappa_c$ is greater than the diameter (i.e., $\max_{i,j}\text{dist}(i,j)$) of the network. The last term arises because of using $\epsilon$-exploration behavior policies to ensure sufficient exploration.

Combining Theorem \ref{thm:main} and Theorem \ref{thm:critic_short} leads to the following sample complexity bound. 

\begin{corollary}\label{co:sample_complexity}
	To achieve $\mathbb{E}[\regret(M)]\leq \tilde{\epsilon}+\mathcal{E}_{\text{EX}}+\mathcal{E}_{\text{FA}}+\mathcal{E}_{\text{LO}}$,
	the sample complexity is $\tilde{\mathcal{O}}(\tilde{\epsilon}^{-4})$, where $\mathcal{E}_{\text{EX}}$ stands for the induced error from exploration (cf. the last term on the right-hand side of Eq. (\ref{eq:bound:critic})), $\mathcal{E}_{\text{FA}}$ stands for the function approximation error (cf. the third term on the right-hand side of Eq. (\ref{eq:bound:critic})), and $\mathcal{E}_{\text{LO}}$ stands for the induced error from localization (cf. the summation of the second last term on the right-hand side of Eq. (\ref{eq:bound:critic}) and the third term on the right-hand side of Eq. (\ref{eq:bound:appro})).
\end{corollary}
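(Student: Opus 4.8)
The plan is to compose the actor-side bound of Theorem~\ref{thm:main} with the critic-side bound of Theorem~\ref{thm:critic_short}: substitute the right-hand side of \eqref{eq:bound:critic} for every $\epsilon_{\text{critic}}$ appearing in \eqref{eq:bound:appro}, and then tune the free parameters $M$, $H$, $T$, the number of critic steps $K$, and the critic step size $\alpha$ against the target accuracy $\tilde{\epsilon}$, while the truncation radius $\kappa_c$, the exploration level $\epsilon$, and the feature class (through $\epsilon_{\text{app}}$) are left alone so that they produce exactly the three irreducible floors $\mathcal{E}_{\text{LO}}$, $\mathcal{E}_{\text{EX}}$, $\mathcal{E}_{\text{FA}}$ named in the statement. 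After the substitution every summand of the combined bound falls into one of four groups: (i) \emph{outer-loop-reducible} terms carrying a factor $M^{-1/4}$ (the first and third terms of \eqref{eq:bound:appro}) or $\gamma^{H/2}$ (the fifth term); (ii) \emph{critic-reducible} terms, namely the contributions of the geometric factor $(1-(1-\gamma)\underline{\lambda}\alpha)^{K/2}$ and of the variance factor $[\alpha\log(1/\alpha)/((1-\gamma)\underline{\lambda})]^{1/2}$ of \eqref{eq:bound:critic}, which enter \eqref{eq:bound:appro} both linearly (via its third term) and through a square root (via its fourth term $\mathcal{O}(\sqrt{n(\kappa_G)}\,\epsilon_{\text{critic}}^{1/2}/(1-\gamma)^{1.5})$); (iii) the NMPG-relaxation term $\mathcal{O}(\sqrt{\nu(\kappa_G)}/(1-\gamma))$ from \eqref{eq:bound:appro} together with the truncation term $\mathcal{O}(\gamma^{\kappa_c-\kappa_r}/(1-\gamma))$ inherited from \eqref{eq:bound:critic}, whose sum is by definition $\mathcal{E}_{\text{LO}}$; and (iv) the $\epsilon_{\text{app}}$ term (defining $\mathcal{E}_{\text{FA}}$) and the $n\epsilon$ term (defining $\mathcal{E}_{\text{EX}}$), each pushed through its prefactor and, where it sits under the square root in the fourth term of \eqref{eq:bound:appro}, through that square root as well, which only rescales universal constants and powers of $(1-\gamma)$, $n(\kappa_G)$, $\pi_{\min}^{-1}$.

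It then remains to drive groups (i) and (ii) below $\tilde{\epsilon}$, up to the $\tilde{\epsilon}$-independent quantities $D/c$, $\sqrt{n(\kappa_G)}$, $(1-\gamma)^{-1}$, $\underline{\lambda}^{-1}$, $\pi_{\min}^{-1}$. For group (i): take $H=\Theta\!\big(\tfrac{1}{1-\gamma}\log\tfrac1{\tilde\epsilon}\big)$ so that $\gamma^{H/2}\le\tilde\epsilon$; since $\epsilon_{\text{critic}}\le\mathcal{O}(1/(1-\gamma))$ unconditionally (the averaged $Q$-function and its linear approximant are both uniformly bounded), the bracket $[1+(1-\gamma)\epsilon_{\text{critic}}]$ in the third term of \eqref{eq:bound:appro} is $\mathcal{O}(1)$, so $M=\tilde{\mathcal{O}}(\tilde\epsilon^{-4})$ makes the first and third terms $\le\tilde\epsilon$; $T$ is chosen to be the $\mathrm{poly}(\tilde\epsilon^{-1},(1-\gamma)^{-1})$ order dictated by the finite-average estimate used inside the proof of Theorem~\ref{thm:main}. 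For group (ii): because the fourth term of \eqref{eq:bound:appro} sees the reducible part of $\epsilon_{\text{critic}}$ only through its square root, it suffices to shrink that reducible part to $\tilde{\mathcal{O}}(\tilde\epsilon^{2})$; choosing $\alpha=\tilde{\mathcal{O}}(\tilde\epsilon^{4})$ makes $[\alpha\log(1/\alpha)/((1-\gamma)\underline{\lambda})]^{1/2}\le\tilde\epsilon^{2}$, and then $K=\tilde{\mathcal{O}}(1/((1-\gamma)\underline{\lambda}\alpha))=\tilde{\mathcal{O}}(\tilde\epsilon^{-4})$, using $-\log(1-x)\ge x$, makes $(1-(1-\gamma)\underline{\lambda}\alpha)^{K/2}\le\tilde\epsilon^{2}$. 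With these choices the linear appearance of $\epsilon_{\text{critic}}$ in the third term of \eqref{eq:bound:appro} is $\le\tilde\epsilon$ too, since it additionally carries the already-controlled $M^{-1/4}$ factor. Summing the pieces yields $\E[\regret(M)]\le\tilde\epsilon+\mathcal{E}_{\text{EX}}+\mathcal{E}_{\text{FA}}+\mathcal{E}_{\text{LO}}$.

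Finally, translate the parameter choices into a complexity statement: Algorithm~\ref{alg:approx_IPG} runs for $M=\tilde{\mathcal{O}}(\tilde\epsilon^{-4})$ outer rounds, each calling localized TD$(\lambda)$ (Algorithm~\ref{alg:LPES}) for $K$ steps and collecting $T$ gradient estimates of horizon $H$, so the per-round sample cost is $K+TH$ and the number of outer rounds needed for the averaged Nash regret to reach $\tilde\epsilon$ (up to the floors) is $\tilde{\mathcal{O}}(\tilde\epsilon^{-4})$, which is the advertised rate (all logarithmic factors and the problem-dependent constants $n(\kappa_G),(1-\gamma)^{-1},\underline{\lambda}^{-1},\pi_{\min}^{-1},D/c$ being suppressed in $\tilde{\mathcal{O}}$). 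The main obstacle is group (ii): the critic error reaches the actor bound through a square root, so one must carefully propagate the bias/variance trade-off of Theorem~\ref{thm:critic_short} — in particular verifying that forcing the variance term down to $\tilde\epsilon^{2}$ pins $\alpha$, hence $K$, to the correct polynomial order without inflating the $\tilde\epsilon$-independent constants — and one must check that, after passing through prefactors and the square root, the ``fixed'' contributions $\gamma^{\kappa_c-\kappa_r}$, $\epsilon_{\text{app}}$, $n\epsilon$ aggregate into precisely $\mathcal{E}_{\text{LO}}$, $\mathcal{E}_{\text{FA}}$, $\mathcal{E}_{\text{EX}}$ as declared in the statement.
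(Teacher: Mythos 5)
Your overall strategy---substitute the critic bound \eqref{eq:bound:critic} into the actor bound \eqref{eq:bound:appro} and tune $M,T,H,K,\alpha$ against $\tilde{\epsilon}$ while leaving $\kappa_c$, $\epsilon$, and $\epsilon_{\text{app}}$ to form the irreducible floors---is the right one, and it is essentially all the paper does (the paper gives no explicit proof of this corollary beyond ``combining Theorem \ref{thm:main} and Theorem \ref{thm:critic_short}''). However, your final accounting step contains a genuine gap: you conclude by equating the \emph{number of outer rounds} $M=\tilde{\mathcal{O}}(\tilde{\epsilon}^{-4})$ with the \emph{sample complexity}, but each outer round of Algorithm \ref{alg:approx_IPG} consumes $K+TH$ samples, so the total sample count is $M(K+TH)$. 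With your choices ($M=\tilde{\mathcal{O}}(\tilde{\epsilon}^{-4})$ forced by the $M^{-1/4}$ terms, and $K=\tilde{\mathcal{O}}(\tilde{\epsilon}^{-4})$ forced by driving the reducible part of $\epsilon_{\text{critic}}$ down to $\tilde{\epsilon}^{2}$ to survive the square root in the fourth term of \eqref{eq:bound:appro}), the total is $\tilde{\mathcal{O}}(\tilde{\epsilon}^{-8})$, not $\tilde{\mathcal{O}}(\tilde{\epsilon}^{-4})$. A second, related problem: you assert that pushing the fixed contributions $\gamma^{\kappa_c-\kappa_r}$, $\epsilon_{\text{app}}$, $n\epsilon$ through the square root in $\mathcal{O}(\sqrt{n(\kappa_G)}\,\epsilon_{\text{critic}}^{1/2}/(1-\gamma)^{1.5})$ ``only rescales universal constants,'' but a square root is not a constant rescaling---it turns the additive floor $\mathcal{E}_{\text{FA}}$ into $\mathcal{E}_{\text{FA}}^{1/2}$ (and likewise for $\mathcal{E}_{\text{EX}}$ and the critic part of $\mathcal{E}_{\text{LO}}$), which is strictly weaker than the decomposition $\tilde{\epsilon}+\mathcal{E}_{\text{EX}}+\mathcal{E}_{\text{FA}}+\mathcal{E}_{\text{LO}}$ claimed in the corollary whenever the floors are small.

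Both defects have the same source: you work from the main-text form of \eqref{eq:bound:appro}, which contains the $\epsilon_{\text{critic}}^{1/2}$ term. The bound actually established in the appendix proof of Theorem \ref{thm:main} is
\begin{align*}
\E[\regret_i(M)] \leq \frac{4 n(\kappa_G)^{1/2}\max_{j}\sqrt{|\sA_j|}}{c(1-\gamma)}\left[\frac{4(\Phi_{\max}-\Phi_{\min})^{1/2}}{M^{1/2}(1-\gamma)^{1/2}}+\frac{\nu(\kappa_G)^{1/2}}{n(\kappa_G)^{1/2}}+\frac{\gamma^{H}}{1-\gamma}+\epsilon_{\text{critic}}+\frac{1}{T^{1/2}}\right],
\end{align*}
in which $\epsilon_{\text{critic}}$ enters \emph{linearly} and the statistical terms decay as $M^{-1/2}$ and $T^{-1/2}$. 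Working from this form, it suffices to take $M=T=\tilde{\mathcal{O}}(\tilde{\epsilon}^{-2})$, $H=\mathcal{O}\left(\frac{\log(1/\tilde{\epsilon})}{1-\gamma}\right)$, $\alpha=\tilde{\mathcal{O}}(\tilde{\epsilon}^{2})$, and $K=\tilde{\mathcal{O}}(\tilde{\epsilon}^{-2})$ so that the reducible part of $\epsilon_{\text{critic}}$ is $\tilde{\mathcal{O}}(\tilde{\epsilon})$; then the total sample count is $M(K+TH)=\tilde{\mathcal{O}}(\tilde{\epsilon}^{-4})$, and the floors $\mathcal{E}_{\text{EX}}$, $\mathcal{E}_{\text{FA}}$, $\mathcal{E}_{\text{LO}}$ pass through linearly and aggregate exactly as the corollary states. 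You should either rebase your argument on that linear-in-$\epsilon_{\text{critic}}$ bound, or explicitly flag that the headline $\tilde{\mathcal{O}}(\tilde{\epsilon}^{-4})$ rate and the additive form of the floors cannot be recovered from the main-text statement of Theorem \ref{thm:main} alone.
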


In Corollary \ref{co:sample_complexity} The presence of $\mathcal{E}_{\text{EX}}+\mathcal{E}_{\text{FA}}+\mathcal{E}_{\text{LO}}$ are due to the fundamental limit of the problem, such as the approximation power of function class, using truncated averaged $Q$-functions to approximate global averaged $Q$-functions, and using ``soft'' policies to ensure exploration.

In single-agent RL, popular algorithms such as $Q$-learning and natural actor-critic are known to achieve $\tilde{\mathcal{O}}(\tilde{\epsilon}^{-2})$ sample complexity \citep{qu2020finite,lan2022policy}. While we study the more challenging setting of using localized algorithms to solve MARL problems, it is an interesting direction to investigate whether there is a fundamental gap.
In addition, while Localized Actor-Critic (cf. Algorithm \ref{alg:approx_IPG}) is an independent learning algorithm, our theoretical results require all agents to follow the same learning dynamics, which suggests some implicit coordination among the agents. Although this is common in the existing literature \citep{leo2021convergeMPG, pmlr-v162-ding22b, zhang2022logBarrierSoftmax}, developing completely independent learning dynamics is an interesting future direction.

\subsection{Proof Sketch}\label{subsec:proof_idea}
\paragraph{Analysis of the Actor}
At a high level, we use a Lyapunov approach to analyze the policy update, where the potential function is a natural choice of the Lyapunov function. The key is to bound $\Phi_i(\theta(m+1))-\Phi_i(\theta(m))$, $i\in\mathcal{N}$, in each iteration using the gradient of objective function $J_i(\cdot)$, which is related to NE-Gap of agent $i$ through the non-uniform Łojasiewicz inequality \citep{zhang2022logBarrierSoftmax,pmlr-v119-mei20b}. To exploit the network structure and to remove the raw dependence on the total number of agents in the NMPG setting, instead of directly bounding $\Phi_i(\theta(m+1))-\Phi_i(\theta(m))$, we perform the following decomposition:
\begin{align*} 
	\Phi_i(\theta(m+1))-\Phi_i(\theta(m) 
	=\;&\underbrace{\left[ \Phi_i(\theta_{\nikg}(m+1),\theta_{-\nikg}(m)) - \Phi_i(\theta(m)\right]}_{(a)} \\ 
	&+ \underbrace{\left[\Phi_i(\theta(m+1))-\Phi_i(\theta_{\nikg}(m+1),\theta_{-\nikg}(m))\right]}_{(b)}.
\end{align*}
The term $(a)$ captures the policy change of the agents inside the $\kappa_G$-hop neighborhood of agent $i$, and the first step of bounding it is to use the smoothness property of the potential function, which is similar to that of \cite{zhang2022logBarrierSoftmax}. However, unlike existing analysis of IPG, we also need to bound the error in approximating the gradient, which can be decomposed into three error terms:
\begin{enumerate}
	\item[$e_1$:] error due to estimating the averaged $Q$-function, which is exactly  the critic error;
	\item[$e_2$:] error due to the  randomness in the trajectory sampling (see Algorithm \ref{alg:approx_IPG} Lines $4$ -- $8$), which has zero mean;
	\item[$e_3$:] error resulted from truncating the sample trajectory at horizon $H$ (see Algorithm \ref{alg:approx_IPG} Lines $6$), which decays exponentially with $H$.
\end{enumerate}

Term $(b)$ results from the policy change of agents outside the $\kappa_G$-hop neighborhood of agent $i$, and is a decreasing function of $\kappa_G$ (cf. Assumption \ref{assump:pot_exp_decay}). 

\paragraph{Analysis of the Critic}
The critic is designed to perform policy evaluation of a softmax policy $\xi^\theta$ using localized TD$(\lambda)$ with linear function approximation. Similar to \cite{chen2019nonlinearSA,Srikant2019FiniteTimeEB}, we formulate localized TD($\lambda$) as a stochastic approximation algorithm and again use a Lyapunov approach to establish the finite-sample bound of the difference between $w_i(K)$ and $w_i^\theta$, where $w_i^\theta$ is the solution to a properly defined projected Bellman equation associated with agent $i$.

The challenge lies in bounding the difference between the $Q$-function associated with the weight vector $w_i^\theta$ (denoted by $Q(w_i^\theta)$) and the true averaged $Q$-function $\overline{Q}_i^\theta$ of policy $\xi^{\theta}$, which we decompose into a function approximation error, an error due to using $\epsilon$-exploration policy, and an error due to truncating the averaged $Q$-function at its $\kappa_c$-hop neighborhood, and bound them separately. 
To achieve that, we develop a novel approach 
involving the construction of a ``sub-chain'', which is an auxiliary Markov chain with state space $\sS_{\mathcal{N}_i^{\kappa_c}}\times \sA_i$. 
See Appendix D for more details.

\section{Conclusion}
We study MARL in the context of MPGs and introduce a networked structure that allows agents to learn equilibria using local information.  In particular, we develop a localized actor-critic framework for minimizing the averaged Nash regret of NMPGs. Importantly, the algorithm is scalable and uses function approximation. We provide finite-sample convergence bounds to theoretically support our proposed algorithm and conduct numerical simulations to demonstrate its empirical effectiveness. 

An immediate future direction is to investigate whether there is a fundamental gap in the convergence rates between localized MARL algorithms and single-agent RL algorithms.  It is also interesting to see if localized algorithms (with provable guarantees) can be designed to solve other classes of games beyond NMPGs.

\bibliographystyle{apalike}
\bibliography{references}

\begin{center}
	{\LARGE\bfseries Appendices}
\end{center}

\appendix

\section{Markov Congestion Game Example}\label{ap:simulations}
In this section, we provide the detailed settings and proofs of the Markov congestion game example in \Cref{subsec:examples}. Since our Markov congestion game is an extension of the classic potential game, it is natural to conjecture that it is a MPG because every single time step is a one-shot potential game. However, this intuition does not hold in general (see counterexamples in \citet{leo2021convergeMPG}). To show that the Markov congestion game is actually a MPG, we need the critical observation that the transition probability of each agent in this congestion game is completely local, i.e., the next local state of agent is determined by its current local state and local action. In \Cref{thm:potential-game-to-MDP}, we show that completely local transition probability is a sufficient condition for a networked Markov game that is potential at every single step to be an NMPG. 

\begin{theorem}\label{thm:potential-game-to-MDP}
	If the networked MDP satisfies that for an arbitrary fixed global state, the one-round game is a potential game, i.e., there exists a global potential function $\phi: \mathcal{S} \times \mathcal{A} \to \mathbb{R}$ such that
	\[r_i(s_i,a_i,s_{-i},a_{-i})-r_i(s_i',a_i',s_{-i},a_{-i})=\phi(s_i,a_i,s_{-i},a_{-i})-\phi(s_i',a_i',s_{-i},a_{-i}),\]
	and the transition probability of each agent is completely local, i.e., $s_i(t+1) \sim P_i(\cdot \mid s_i(t), a_i(t))$, then the networked MDP is an Markov potential game, i.e., there exists a potential function $\Phi: \Pi \times \mathcal{S} \to \mathbb{R}$ such that
	\[V_i^{\xi_i, \xi_{-i}}(s) - V_i^{\xi_i', \xi_{-i}}(s) = \Phi^{\xi_i, \xi_{-i}}(s) - \Phi^{\xi_i', \xi_{-i}}(s).\]
\end{theorem}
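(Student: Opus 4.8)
The plan is to explicitly construct the Markov potential function $\Phi^\xi(s)$ from the one-shot potential $\phi$ by mimicking the definition of the value function, replacing $r_i$ by $\phi$ everywhere. Concretely, I would define
\begin{equation*}
	\Phi^\xi(s) = \sum_{t=0}^\infty \gamma^t \,\mathbb{E}_\xi\!\left[\phi(s(t),a(t)) \mid s(0)=s\right],
\end{equation*}
where the expectation is over the trajectory generated by the joint policy $\xi$ and the (completely local) transition kernel. The key structural fact I would exploit is that when agent $i$ deviates from $\xi_i$ to $\xi_i'$, the marginal law of the trajectory of \emph{every other agent} $j \neq i$ is completely unaffected: because each agent's next state depends only on its own current state and action, and agents act independently given the current global state, the process $(s_j(t), a_j(t))_{t\ge 0}$ for $j\neq i$ has a law that depends on $\xi_j$ and $s_j(0)$ alone. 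Thus when we take the difference $V_i^{\xi_i,\xi_{-i}}(s) - V_i^{\xi_i',\xi_{-i}}(s)$, at each time $t$ we are comparing $\mathbb{E}[r_i(s_i(t),a_i(t), s_{-i}(t), a_{-i}(t))]$ under two couplings that agree on the $-i$ coordinates in distribution.

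The main steps, in order, are: (1) Make precise the product/independence structure of the trajectory law: conditioned on the full history, the coordinates $(s_i(t+1))_{i}$ are independent with $s_i(t+1)\sim P_i(\cdot\mid s_i(t),a_i(t))$, and the actions $a_i(t)\sim \xi_i(\cdot\mid s_i(t))$ are independent across $i$; hence for any function that is constant in the $i$-th coordinate, its expectation is insensitive to the choice of $\xi_i$. (2) Apply the one-shot potential identity pointwise in $t$: for each realization of $(s_{-i}(t), a_{-i}(t))$ and each pair of realizations of agent $i$'s state-action,
\begin{equation*}
	r_i(s_i,a_i,s_{-i},a_{-i}) - r_i(s_i',a_i',s_{-i},a_{-i}) = \phi(s_i,a_i,s_{-i},a_{-i}) - \phi(s_i',a_i',s_{-i},a_{-i}).
\end{equation*}
(3) Couple the two trajectories under $(\xi_i,\xi_{-i})$ and $(\xi_i',\xi_{-i})$ on a common probability space so that the $-i$ coordinates are literally identical sample paths (possible precisely because their law does not involve $\xi_i$), while the $i$-th coordinate is generated independently under the two policies. (4) Under this coupling, write $V_i^{\xi_i,\xi_{-i}}(s) - V_i^{\xi_i',\xi_{-i}}(s) = \sum_t \gamma^t \mathbb{E}[r_i(\text{traj}_1(t)) - r_i(\text{traj}_2(t))]$, apply step (2) termwise to replace $r_i$ by $\phi$, and recognize the result as $\Phi^{\xi_i,\xi_{-i}}(s) - \Phi^{\xi_i',\xi_{-i}}(s)$. (5) Finally note that this is exactly the defining identity of an MPG (Definition~\ref{def:local_MPG} with $\kappa_G$ the network diameter, $\Phi_i = \Phi$ for all $i$, integrated against $\mu$, giving the $J_i$ version), and invoke the remark after Definition~\ref{def:local_MPG} that an MPG is an NMPG.

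The step I expect to be the main obstacle is step (3)–(4): constructing the coupling rigorously and verifying that the $-i$ marginal trajectory is genuinely independent of $\xi_i$. One has to be careful that although $a_{-i}(t)$ depends on $s_{-i}(t)$, and $s_{-i}(t+1)$ depends on $s_{-i}(t)$ and $a_{-i}(t)$, none of these depend on $(s_i, a_i)$ at any time — this is where "completely local transitions" is essential and where the naive intuition "every one-shot game is potential $\Rightarrow$ the Markov game is potential" fails in general (the cited counterexamples have transitions coupling the agents). I would make this precise by an induction on $t$ showing the joint law of $\big((s_j(\tau),a_j(\tau))_{\tau\le t}\big)_{j\neq i}$ factorizes through $\xi_{-i}$ only. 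A secondary technical point is justifying the interchange of summation, expectation, and the termwise potential identity, which is routine given $r_i, \phi$ bounded and $\gamma<1$ (dominated convergence), so I would not dwell on it. Once $\Phi$ is in hand, passing from the value-function identity to the $J_i$ identity in Definition~\ref{def:local_MPG} is immediate by taking expectations over $s\sim\mu$ and setting $\Phi_i(\xi) = \mathbb{E}_{s\sim\mu}[\Phi^\xi(s)]$.
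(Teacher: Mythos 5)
Your proposal is correct and follows essentially the same route as the paper: both define $\Phi^\xi(s)=\sum_t\gamma^t\,\mathbb{E}_\xi[\phi(s(t),a(t))\mid s(0)=s]$, use the completely local transitions to show that each agent's trajectory law factorizes and that the $-i$ marginal does not depend on $\xi_i$, and then apply the one-shot potential identity termwise (the paper phrases this as the difference of agent $i$'s marginals annihilating the part of $\phi-r_i$ that is constant in the $i$-th coordinate, which is just the distributional form of your coupling argument). No gap.
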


We defer the proof of \Cref{thm:potential-game-to-MDP} to \Cref{appendix:example-proof}. Note that the Markov congestion game satisfies the assumptions of \Cref{thm:potential-game-to-MDP} because at each time step $t$, we can set the potential function $\phi$ as
\[\phi(s, a) = - \frac{1}{2} \sum_{e \in \zeta} N(e, t) \left(N(e, t) - 1\right),\]
where $N(e, t)$ (the number of agents that choose edge $e$) is decided by the global state/action pair $(s, a)$. Since MPG is a special case of NMPG, we know the Markov congestion game is an NMPG.

\subsection{Simulation Results}\label{appendix:example-simulation}

In the numerical simulation, we consider a problem instance with 12 agents moving from 4 different start nodes to the same destination node on an acyclic graph (see \Cref{fig:multi-bridge-congestion-game-illustration}). Specifically, for $i \in \{1, 2, \ldots, 12\}$, agent $i$ travels from start node $b_{\lceil i/3\rceil}$ to the destination $d$. The local state space of each agent $i$ is the set of all possible locations $i$ can visit. For example, the local state space of agent $4$ (starts from $b_2$ and goes to $d$) contains 4 locations $\{b_2, c_1, c_2, d\}$. Since the maximum out degree of each node in this example is $2$, the local action space of agent $i$ contains 3 actions $\{0, 1, 2\}$, where $0$ means ``wait for one step at the current node'' and $1$($2$) means ``go through the first(second) out edge''. When a node only has one out edge (e.g., node $c_1$), local action $2$ is treated as action $1$. We set the time elapse cost $\epsilon = 0.5$ in this simulation.

\begin{figure}[h]
	\centering
	\begin{tikzpicture}
		\filldraw[black] (0,-1) circle (2pt) node[anchor=east]{$b_1$};
		\filldraw[black] (0,-2) circle (2pt) node[anchor=east]{$b_2$};
		\filldraw[black] (0,-4) circle (2pt) node[anchor=east]{$b_3$};
		\filldraw[black] (0,-5) circle (2pt) node[anchor=east]{$b_4$};
		
		\filldraw[black] (3,-2) circle (2pt) node[anchor=south]{$c_1$};
		\filldraw[black] (3,-3) circle (2pt) node[anchor=south]{$c_2$};
		\filldraw[black] (3,-4) circle (2pt) node[anchor=south]{$c_3$};
		
		\filldraw[black] (6,-3) circle (2pt) node[anchor=west]{$d$};
		
		\draw[-latex] (0, -1) -- (3, - 2);
		\draw[-latex] (0, -2) -- (3, - 2);
		\draw[-latex] (0, -2) -- (3, - 3);
		\draw[-latex] (0, -4) -- (3, - 3);
		\draw[-latex] (0, -4) -- (3, - 4);
		\draw[-latex] (0, -5) -- (3, - 4);
		
		\draw[-latex] (3, -2) -- (6, -3);
		\draw[-latex] (3, -3) -- (6, -3);
		\draw[-latex] (3, -4) -- (6, -3);
	\end{tikzpicture}
	\caption{Illustration of the simulation setting. On this directed acyclic graph, each agent $i \in \{1, 2, \ldots, 12\}$ travels from start node $b_{\lceil i/3\rceil}$ to the same destination $d$. The policy of each agent is a mapping from its local state (its current location) to the distributions of local actions (pass through an outgoing edge or wait for one time step).}
	\label{fig:multi-bridge-congestion-game-illustration}
\end{figure}
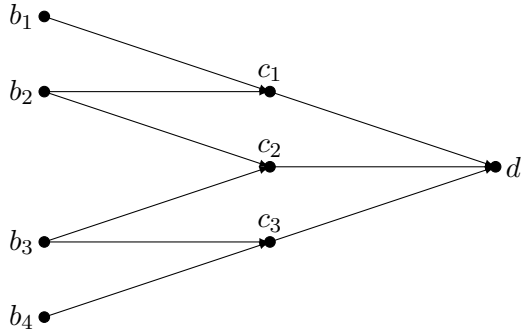

We simulate our Localized Actor-Critic algorithm (\Cref{alg:exact_IPG}). For the policy evaluation subroutine, we adopt a localized TD$(0)$ with linear function approximation and $\kappa_c = 1$. For each agent $i$, we use the hot encoding of the 1-hop state $s_{\mathcal{N}_i^1}$ and the local action $a_i$ as the feature vector $\phi_i(s_{\mathcal{N}_i^1}, a_i)$. During training, we evaluate the Nash gap $\text{NE-Gap}_i(\xi)$ by fixing the local policies of all agents except $i$ and use a policy gradient to learn the best response of agent $i$ under other agents' policy profile.  \textcolor{black}{We list the detailed choice of hyperparameters in Appendix \ref{subsec:supp_hyperparam} and present the code in \url{https://github.com/yihenglin97/Networked_MPG_Traffic_Game}.} 
We plot the Nash regret curve and the average Nash regret curve in \Cref{fig:multi-bridge-congestion-game-illustration}. The results show that both the Nash regret and the averaged Nash regret converge to zero as the learning progresses.

\begin{figure}[h]
	\centering
	\begin{subfigure}{0.5\textwidth}
		\centering
		\includegraphics[width=.9\linewidth]{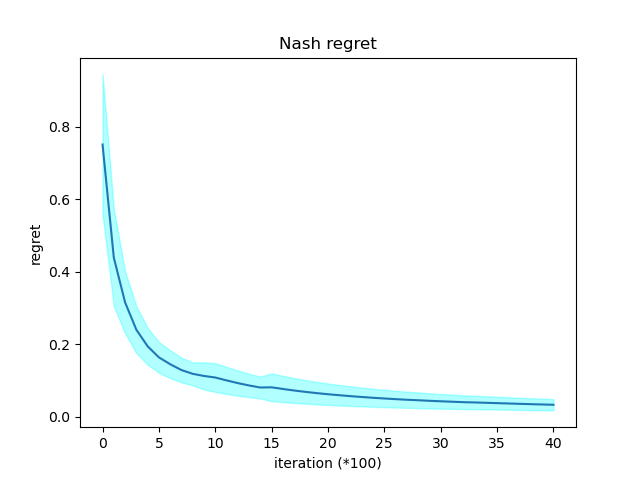}
	\end{subfigure}
	\begin{subfigure}{0.5\textwidth}
		\centering
		\includegraphics[width=.9\linewidth]{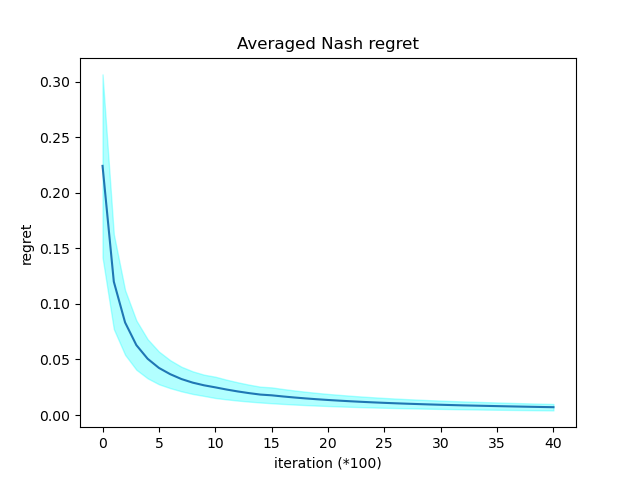}
	\end{subfigure}
	\caption{Nash regret (left) and averaged Nash regret (right) of Localized Actor-Critic (Algorithm \ref{alg:exact_IPG}). The blue curve represents the mean and the cyan area represents the standard deviation over 10 random initial policies.}
	\label{fig:test}
\end{figure}

\subsection{Proof of Theorem \ref{thm:potential-game-to-MDP}}\label{appendix:example-proof}
Under the completely local transition probability, the distribution of the local state of agent $i$ at time $t$ depends only on the initial state $s_i(0)$. We denote this local distribution as $d_i^{\xi_i}[t](\cdot \mid s_i(0))$ and the joint distribution as $d_{\mathcal{N}}^\xi[t](\cdot \mid s(0))$. We define the potential function as
\[\Phi^\xi(s) = \sum_{t = 0}^\infty \gamma^t \sum_{s' \in \mathcal{S}, a' \in \mathcal{A}} d_{\mathcal{N}}^\xi[t]((s', a') \mid s(0) = s) \cdot \phi(s', a').\]
The potential function can be rewritten as
\begin{align*}
	&\Phi^{\xi_i, \xi_{-i}}(s)\\
	={}& \sum_{t = 0}^\infty \gamma^t \sum_{s', a'} d_{-i}^{\xi_{-i}}[t]((s'_{-i}, a'_{-i}) \mid s_{-i}(0) = s_{-i}) \cdot d_{i}^{\xi_i}[t]((s'_{i}, a'_{i}) \mid s_{i}(0) = s_{i}) \cdot \phi(s', a'_i, a'_{-i})\\
	={}& \sum_{t = 0}^\infty \gamma^t \sum_{s'_{-i}, a'_{-i}} d_{-i}^{\xi_{-i}}[t]((s'_{-i}, a'_{-i}) \mid s_{-i}(0) = s_{-i}) \\
	&\times \sum_{s'_i, a'_i} d_{i}^{\xi_i}[t]((s'_{i}, a'_{i}) \mid s_{i}(0) = s_{i}) \cdot \phi(s', a'_i, a'_{-i}).
\end{align*}
Note that
\begin{align*}
	&\sum_{s'_i, a'_i} \left(d_{i}^{\xi_i}[t]((s'_{i}, a'_{i}) \mid s_{i}(0) = s_{i}) - d_{i}^{\xi_i'}[t]((s'_{i}, a'_{i}) \mid s_{i}(0) = s_{i})\right) \cdot \phi(s', a'_i, a'_{-i})\\
	={}& \sum_{s'_i, a'_i} \left(d_{i}^{\xi_i}[t]((s'_{i}, a'_{i}) \mid s_{i}(0) = s_{i}) - d_{i}^{\xi_i'}[t]((s'_{i}, a'_{i}) \mid s_{i}(0) = s_{i})\right) \cdot r_i(s', a'_i, a'_{-i}).
\end{align*}
Therefore, we have
\[\Phi^{\xi_i, \xi_{-i}}(s) - \Phi^{\xi_i', \xi_{-i}}(s) = V_i^{\xi_i, \xi_{-i}}(s) - V_i^{\xi_i', \xi_{-i}}(s).\]

\subsection{Choice of Hyperparameters}\label{subsec:supp_hyperparam}
In numerical experiments, we use the following hyperparameters: Localized TD($\lambda$) with Linear Function Approximation (\Cref{alg:LPES}): $\lambda = 0, K=10, \alpha = 0.001, \epsilon = 0$. Localized Actor-Critic (\Cref{alg:approx_IPG}): $M = 4000, T = 1, H = 15, K = 10, \kappa_c = 1, \beta = 0.001$. We construct the feature $\phi_i(s_{\mathcal{N}_i^{\kappa_c}},a_i)$ by concatenating the one-hot encodings for $s_j, j \in \mathcal{N}_i^{\kappa_c}$ and $a_i$.

\section{Proof of Theorem  \ref{thm:tab_softmax_Nash_regret}}
For any $\kappa\leq \kappa_G$ and $\beta\leq 1/L(\kappa)$, we have by Lemma \ref{lem:opt_step} that
\begin{align*}
	\Phi_i(\theta(M))-\Phi_i(\theta(0)) 
	= \;&\sum_{m=0}^{M-1}\left[\Phi_i(\theta(m+1))-\Phi_i(\theta(m))\right]\\
	\geq \; &\sum_{m=0}^{M-1}\left[\frac{\beta}{2} \norm{\nabla_{\theta_{\nik}}\Phi_i(\theta(m))}^2-\frac{\sqrt 2 \nu(\kappa)\beta}{(1-\gamma)^2}\right].
\end{align*}
It follows that
\begin{align*}
	\frac{1}{M}\sum_{m=0}^{M-1} \norm{\nabla_{\theta_{\nik}}\Phi_i(\theta(m))}^2 
	\leq\;& \frac{2}{M\beta}[\Phi_i(\theta(M))-\Phi_i(\theta(0))] +\frac{2\sqrt 2 \nu(\kappa)}{M(1-\gamma)^2}\\
	\leq\;& \frac{2(\Phi_{\max}-\Phi_{\min})}{M\beta} +\frac{2\sqrt 2 \nu(\kappa)}{(1-\gamma)^2},
\end{align*}
where the last line follows from Lemma \ref{lem:bound_pot}. Note that
\begin{align*}
	\norm{\nabla_{\theta_{\nik}}\Phi_i(\theta(m))}^2 =\;&\norm{\nabla_{\theta_{\nik}}J_i(\theta(m))}^2\\
	\geq \;&\norm{\nabla_{\theta_i}J_i(\theta(m))}^2\\
	\geq\; & \frac{c(\theta(m))^2}{\max_{j\in\sN}|\sA_j|D(\theta(m))^2} \text{NE-Gap}_i(\theta(m))^2 \tag{Lemma \ref{lem:NE_bound_grad_Phi}}\\
	\geq\; &\frac{c^2\text{NE-Gap}_i(\theta(m))^2}{\max_{j\in\sN} |\sA_j|D^2}.\tag{$c=\inf_\theta\min_{i}c_i(\theta)$ and $D=\sup_\theta D(\theta)$}
\end{align*}
Therefore, we have
\begin{align*}
	\frac{1}{M}\sum_{m=0}^{M-1}\text{NE-Gap}_i(\theta(m))^2
	\leq \frac{\max_{j\in\sN} |\sA_j|D^2}{c^2}\left( \frac{2(\Phi_{\max}-\Phi_{\min})}{M\beta} +\frac{2\sqrt 2 \nu(\kappa)}{(1-\gamma)^2}\right).
\end{align*}
It then follows from the definition of $\regret_i(M)$ that
\begin{align*}
	\regret_i(M)=\;&\frac{1}{M}\sum_{m=0}^{M-1}\text{NE-Gap}_i(\theta(m))\\
	\leq \;&\left(\frac{1}{M}\sum_{m=0}^{M-1}\text{NE-Gap}_i(\theta(m))^2\right)^{1/2}\tag{Jensen's inequality}\\
	\leq \;&\frac{\max_{j\in\sN} |\sA_j|^{1/2}D}{c}\left( \frac{2(\Phi_{\max}-\Phi_{\min})}{M\beta} +\frac{2\sqrt 2 \nu(\kappa)}{(1-\gamma)^2}\right)^{1/2}\\
	\leq \;&\frac{2\max_{j\in\sN} |\sA_j|^{1/2}D}{c}\left( \frac{(\Phi_{\max}-\Phi_{\min})^{1/2}}{M^{1/2}\beta^{1/2}} +\frac{ \nu(\kappa)^{1/2}}{(1-\gamma)}\right),
\end{align*}
where the last line follows from $\sqrt{a+b}\leq \sqrt{a}+\sqrt{b}$ for any $a,b\geq 0$. Since the RHS of the previous inequality is not a function of $i$, choosing $\kappa=\kappa_G$ and $\beta=1/L(\kappa_G)=\frac{(1-\gamma)^3}{6n(\kappa_G)}$, and we complete the proof.

\subsection{Supporting Lemmas}
\begin{lemma}\label{lem:l1_smooth_v}
	The following inequality holds for all  $\theta, \theta'$, and $i\in\mathcal{N}$:
	\begin{align*}
		\norm{\nabla_{\theta_i}J_i(\theta)-\nabla_{\theta_i'}J_i(\theta')}_1 \leq \frac{6}{(1-\gamma)^3}\sum_{j\in \sN}\norm{\theta_{j}-\theta'_{j}}.
	\end{align*}
\end{lemma}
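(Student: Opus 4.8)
The statement to prove is Lemma~\ref{lem:l1_smooth_v}: an $\ell_1$-Lipschitz bound on the partial gradient $\nabla_{\theta_i}J_i(\theta)$ with constant $\tfrac{6}{(1-\gamma)^3}$ and a sum over all coordinates $\theta_j$. The plan is to work directly from the policy gradient formula in Eq.~(\ref{eq:PGT}), rewriting it in a form amenable to perturbation analysis, and then to bound the change in each of the constituent pieces (the score function, the discounted visitation weights, and the averaged $Q$-function) as the parameter moves from $\theta$ to $\theta'$. The standard trick here, used e.g.\ in \cite{agarwal2021theory,zhang2022logBarrierSoftmax}, is to first establish smoothness of $J_i$ along a one-dimensional line segment $\theta(\zeta) = \theta + \zeta(\theta'-\theta)$: show $\big\|\tfrac{d^2}{d\zeta^2} J_i(\theta(\zeta))\big\|$ is controlled, and then integrate. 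I expect it is cleaner, however, to bound $\big\|\nabla_{\theta_i}J_i(\theta) - \nabla_{\theta_i}J_i(\theta')\big\|_1$ via a telescoping/triangle-inequality decomposition over the three factors in the PGT expression.

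\textbf{Key steps.} First I would recall the structural facts about softmax policies: $\nabla_{\theta_i}\log\xi_i^{\theta_i}(a_i\mid s_i)$ has entries bounded by $1$ in magnitude (in fact it is $e_{a_i} - \xi_i^{\theta_i}(\cdot\mid s_i)$ restricted to the $s_i$-block), it is itself Lipschitz in $\theta_i$, and $\overline Q_i^\theta$ is uniformly bounded by $\tfrac{1}{1-\gamma}$ since $r_i\in[0,1]$. Second, I would express $\nabla_{\theta_i}J_i(\theta)$ as an expectation under the discounted visitation distribution $d^\theta$ of the product (score)$\times$(averaged advantage or averaged $Q$), pulling out the $\tfrac{1}{1-\gamma}$ normalization. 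Third, the perturbation is split into three terms: (i) the change in the visitation measure $d^\theta$ versus $d^{\theta'}$, which by standard arguments (e.g.\ a simulation-lemma-type bound on $\|d^\theta - d^{\theta'}\|_1$) is $O\!\big(\tfrac{1}{(1-\gamma)^2}\sum_j\|\theta_j-\theta_j'\|\big)$ because the transition kernel under $\xi^\theta$ changes by $O(\sum_j\|\theta_j-\theta_j'\|)$ in total variation; (ii) the change in the score function, which only involves $\theta_i$ and contributes $O\!\big(\tfrac{1}{1-\gamma}\|\theta_i-\theta_i'\|\big)$; and (iii) the change in $\overline Q_i^\theta$, which is $O\!\big(\tfrac{1}{(1-\gamma)^2}\sum_j\|\theta_j-\theta_j'\|\big)$ by the same reasoning as (i) applied to the $Q$-function (the reward $r_i$ itself is fixed, so only the policy-induced dynamics and the averaging over $a_{-i}$ move). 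Fourth, I would collect the three contributions, each of which is at most $\tfrac{2}{(1-\gamma)^3}\sum_j\|\theta_j-\theta_j'\|$ after bookkeeping of the simplex/$\ell_1$ constants, and sum to get the constant $6$.

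\textbf{Main obstacle.} The delicate part is tracking the $\ell_1$ geometry and the coupling between ``which coordinates $\theta_j$ enter which factor.'' The score function depends only on $\theta_i$, but $d^\theta$ and $\overline Q_i^\theta$ depend on the \emph{whole} global policy $\xi^\theta = \prod_j \xi_j^{\theta_j}$; since each local softmax block is $1$-Lipschitz into total variation, the product policy changes by $O(\sum_j\|\theta_j-\theta_j'\|)$ in TV, and this is what produces the sum over all $j$ (rather than just $\|\theta_i-\theta_i'\|$) in the final bound — so one must be careful not to accidentally collapse it to a single-coordinate dependence, and equally careful to not pick up a spurious factor of $n$. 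The second subtlety is getting the \emph{exact} constant $6$ rather than just ``$O((1-\gamma)^{-3})$''; this requires being somewhat sharp in each of the three bounds (using, e.g., that $\|\nabla_{\theta_i}\log\xi_i^{\theta_i}(a_i\mid s_i)\|_1\le 2$ and that advantage functions rather than $Q$-functions can be used to save a factor), and I would handle it by optimizing each sub-bound to contribute at most $2/(1-\gamma)^3$. Everything else — boundedness of rewards, the one-dimensional smoothness integration, the TV bound on discounted occupancy measures — is routine and can be cited or deferred to a short computation.
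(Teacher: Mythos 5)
Your proposal follows essentially the same route as the paper: both start from the policy-gradient expression, split the gradient difference by the triangle inequality into a change in the discounted state-action occupancy and a change in the value/advantage term, bound each via the $O((1-\gamma)^{-2})$ Lipschitz estimates in total variation of the policy, and then use the product-policy $\ell_1$ decomposition plus the $2\|\theta_j-\theta_j'\|$ softmax bound to produce the sum over all agents. The one difference worth flagging is that the paper first collapses the score factor entirely, rewriting each coordinate of $\nabla_{\theta_i}J_i(\theta)$ as $\frac{1}{1-\gamma}\sum_{s_{-i}}d^{\theta}(s)\xi_i^{\theta_i}(a_i'|s_i')\overline A_i^{\theta}(s,a_i')$ (Lemma~\ref{cor:softmax_v_grad}), so only \emph{two} perturbation terms remain ($1/(1-\gamma)^2$ for the occupancy and $2/(1-\gamma)^2$ for the advantage, totaling $3$, then $\times 2$ from the softmax Lipschitz constant gives exactly $6$); your three-way split that tracks the score-function change as a separate factor will, with naive bounds ($\|\nabla\log\xi_i\|_1\le 2$ times the advantage perturbation), overshoot the constant $6$, so to recover it exactly you would in effect need to perform the same advantage-function collapse before bounding.
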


\begin{proof}[Proof of Lemma \ref{lem:l1_smooth_v}]
	Using Lemma \ref{cor:softmax_v_grad}, and we have 
	\begin{align*}
		&\norm{\nabla_{\theta_i}J_i(\theta)-\nabla_{\theta_i'}J_i(\theta')}_1 \\
		=\;&\frac{1}{1-\gamma}\sum_{s_i,a_i}\abs{\sum_{s_{-i}}\left( d^\theta(s_i,s_{-i})\xi_i^{\theta_i}(a_i|s_i)\overline A_{i}^\theta(s,a_i) - d^{\theta'}(s_i,s_{-i})\xi_i^{\theta_i'}(a_i|s_i)\overline A_{i}^{\theta'}(s,a_i) \right)} \\
		=\;& \frac{1}{1-\gamma}\sum_{s_i,a_i}\abs{\sum_{s_{-i},a_{-i}}\left( d^\theta(s)\xi^{\theta}(a|s) A_{i}^\theta(s,a) - d^{\theta'}(s)\xi^{\theta'}(a|s) A_{i}^{\theta'}(s,a) \right)} \\
		\leq \;& \frac{1}{1-\gamma}\sum_{s,a}\abs{ d^\theta(s)\xi^{\theta}(a|s) A_{i}^\theta(s,a) - d^{\theta'}(s)\xi^{\theta'}(a|s) A_{i}^{\theta'}(s,a)} \\
		\leq \;& \frac{1}{1-\gamma}\bigg(\sum_{s,a}\abs{ d^\theta(s)\xi^{\theta}(a|s)-  d^{\theta'}(s)\xi^{\theta'}(a|s)}\abs{A_{i}^\theta(s,a)} \\
		&+\sum_{s,a} d^{\theta'}(s)\xi^{\theta'}(a|s) \abs{A_{i}^\theta(s,a)-A_{i}^{\theta'}(s,a)}\bigg) \\
		\leq  \;& \frac{1}{1-\gamma}\left(\sum_{s,a} \frac{1}{1-\gamma}\abs{ d^\theta(s)\xi^{\theta}(a|s)-  d^{\theta'}(s)\xi^{\theta'}(a|s)} +\max_{s,a} \abs{A_{i}^\theta(s,a)-A_{i}^{\theta'}(s,a)}\right).
	\end{align*}
	Lemma $32$ and Corollary $35$ of \cite{zhang2022logBarrierSoftmax} imply
	\begin{align*}
		\abs{A_{i}^\theta(s,a)-A_{i}^{\theta'}(s,a)} &\leq \frac{2}{(1-\gamma)^2} \max_s\sum_{a\in \sA}\abs{\xi^{\theta}(a|s)-\xi^{\theta'}(a|s)} \\
		\frac{1}{1-\gamma}\sum_{s,a}\abs{ d^\theta(s)\xi^{\theta}(a|s)} &\leq \frac{1}{(1-\gamma)^2} \max_s \sum_{a\in \sA}\abs{\xi^{\theta}(a|s)-\xi^{\theta'}(a|s)}.
	\end{align*}
	Therefore, we have
	\begin{align*}
		\norm{\nabla_{\theta_i}J_i(\theta)-\nabla_{\theta_i}J_i(\theta')}_1 
		\leq \;& \frac{3}{(1-\gamma)^3} \max_s \sum_{a\in \sA}\abs{\xi^{\theta}(a|s)-\xi^{\theta'}(a|s)} \\
		= \;&\frac{3}{(1-\gamma)^3} \max_s \sum_{i\in \sN}\sum_{a_i\in \sA_i}\abs{\xi_i^{\theta_i}(a_i|s_i)-\xi_i^{\theta_i'}(a_i|s_i)} \tag{Lemma \ref{lem:prod_pol_diff_general}}\\
		\leq  \;& \frac{6}{(1-\gamma)^3} \sum_{i\in \sN} \norm{\theta_i-\theta_i'}.
	\end{align*}
	where the last line follows from \cite[Corollary 37]{zhang2022logBarrierSoftmax}.
\end{proof}

\begin{lemma}\label{lem:value_smooth} 
	The following inequality holds for all $\kappa\leq \kappa_G$, $\theta=(\theta_{\nik},\theta_{-\nik})$, and $\theta'=(\theta_{\nik}',\theta_{-\nik})$:
	\begin{align*}
		\norm{\nabla_{\theta_{\nik}}\Phi_i(\theta)-\nabla_{\theta_{\nik}}\Phi_i(\theta')} \leq L(\kappa)\norm{\theta_{\nik}-\theta'_{\nik}},
	\end{align*}
	where $L(\kappa)=\frac{6n(\kappa)}{(1-\gamma)^3}$.
\end{lemma}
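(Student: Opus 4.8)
The plan is to establish Lemma \ref{lem:value_smooth} by reducing it to the $\ell_1$-smoothness bound already proved in Lemma \ref{lem:l1_smooth_v}, together with the defining property of an NMPG (Definition \ref{def:local_MPG}). The first observation is that, by Definition \ref{def:local_MPG}, for any $j \in \nik$ (recall $\kappa \le \kappa_G$), the objective function $J_j(\cdot,\theta_{-j})$ and the potential $\Phi_i(\cdot,\theta_{-j})$ differ by a constant as functions of $\theta_j$. Consequently, their partial gradients with respect to $\theta_j$ agree: $\nabla_{\theta_j}\Phi_i(\theta) = \nabla_{\theta_j}J_j(\theta)$ for every $j \in \nik$. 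Stacking these coordinate blocks over $j \in \nik$ gives $\nabla_{\theta_{\nik}}\Phi_i(\theta) = (\nabla_{\theta_j}J_j(\theta))_{j\in\nik}$, which is the crucial identity that lets us dispense with $\Phi_i$ entirely and work with the individual objectives $J_j$.

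Next I would bound the difference block by block. For each $j \in \nik$, applying Lemma \ref{lem:l1_smooth_v} with agent $j$ in place of $i$, and noting that $\theta$ and $\theta'$ differ only in the coordinates indexed by $\nik$, we get
\begin{align*}
	\norm{\nabla_{\theta_j}J_j(\theta)-\nabla_{\theta_j}J_j(\theta')} \leq \norm{\nabla_{\theta_j}J_j(\theta)-\nabla_{\theta_j}J_j(\theta')}_1 \leq \frac{6}{(1-\gamma)^3}\sum_{\ell\in\nik}\norm{\theta_\ell-\theta'_\ell}.
\end{align*}
Then I would combine the blocks. Writing $\norm{\nabla_{\theta_{\nik}}\Phi_i(\theta)-\nabla_{\theta_{\nik}}\Phi_i(\theta')}^2 = \sum_{j\in\nik}\norm{\nabla_{\theta_j}J_j(\theta)-\nabla_{\theta_j}J_j(\theta')}^2$ and using the per-block bound, each of the $|\nik| \le n(\kappa)$ summands is at most $\big(\frac{6}{(1-\gamma)^3}\big)^2\big(\sum_{\ell\in\nik}\norm{\theta_\ell-\theta'_\ell}\big)^2$. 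Since $\sum_{\ell\in\nik}\norm{\theta_\ell-\theta'_\ell} \le \sqrt{n(\kappa)}\,\norm{\theta_{\nik}-\theta'_{\nik}}$ by Cauchy--Schwarz (viewing the concatenation as a single vector whose squared norm is the sum of block squared norms), we obtain $\norm{\nabla_{\theta_{\nik}}\Phi_i(\theta)-\nabla_{\theta_{\nik}}\Phi_i(\theta')}^2 \le n(\kappa)\cdot \frac{36}{(1-\gamma)^6}\cdot n(\kappa)\norm{\theta_{\nik}-\theta'_{\nik}}^2$, i.e.\ the Lipschitz constant is $\frac{6n(\kappa)}{(1-\gamma)^3} = L(\kappa)$, as claimed.

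The main obstacle, such as it is, is making sure the two reductions are set up correctly rather than any deep estimate: first that the NMPG potential-gradient identity $\nabla_{\theta_j}\Phi_i = \nabla_{\theta_j}J_j$ is legitimately applied only for $j$ in the $\kappa_G$-hop neighborhood (which is guaranteed since $\kappa \le \kappa_G$ and $\nik \subseteq \nikg$), and second that the bookkeeping of $\ell_1$ versus $\ell_2$ norms and the two appearances of $\sqrt{n(\kappa)}$ (one from summing over the $|\nik|$ blocks on the left, one from the Cauchy--Schwarz on the right-hand sum) is done so that the constants collapse to exactly $L(\kappa)$. There is also a minor point that Lemma \ref{lem:l1_smooth_v} already absorbs the sum over \emph{all} agents $\sN$, but since $\theta,\theta'$ agree outside $\nik$ only the $\nik$-indexed terms survive, which is what keeps the bound dimension-free in $n$.
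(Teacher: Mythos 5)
Your proposal is correct and follows essentially the same route as the paper's proof: identify $\nabla_{\theta_j}\Phi_i=\nabla_{\theta_j}J_j$ for $j\in\nik\subseteq\nikg$ via the NMPG property, apply the $\ell_1$-smoothness bound of Lemma \ref{lem:l1_smooth_v} block by block (noting that only the $\nik$-indexed terms survive since $\theta,\theta'$ agree elsewhere), and then combine via Cauchy--Schwarz to pick up the two factors of $\sqrt{n(\kappa)}$. The bookkeeping of constants matches the paper exactly.
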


\begin{proof}[Proof of Lemma \ref{lem:value_smooth}]
	Using the definition of NMPG (cf. Definition \ref{def:local_MPG}) and we have
	\begin{align*}
		&\norm{\nabla_{\theta_{\nik}}\Phi_i({\theta_\nik},\theta_{-\nik})-\nabla_{\theta_{\nik}}\Phi_i(\theta'_\nik,\theta_{-\nik})}^2 \\
		=\; & \sum_{j\in \nik}\norm{\nabla_{\theta_j}J_j({\theta_\nik},\theta_{-\nik})-\nabla_{\theta_j}J_j(\theta'_\nik,\theta_{-\nik})}^2 \\
		\leq\; & \sum_{j\in \nik}\norm{\nabla_{\theta_j}J_j({\theta_\nik},\theta_{-\nik})-\nabla_{\theta_j}J_j(\theta'_\nik,\theta_{-\nik})}_1^2 \\
		\leq \;& \sum_{j\in \nik} \left(\frac{6}{(1-\gamma)^3}\sum_{j'\in \nik}\norm{\theta_{j'}-\theta'_{j'}}\right)^2 \tag{Lemma \ref{lem:l1_smooth_v}}\\
		=\; & \frac{36\abs{\nik}}{(1-\gamma)^6} \left(\sum_{j'\in \nik}\norm{\theta_{j'}-\theta'_{j'}}\right)^2 \\
		\leq  \;& \frac{36\abs{\nik}^2}{(1-\gamma)^6}\sum_{j'\in \nik}\norm{\theta_{j'}-\theta'_{j'}}^2 \tag{Cauchy-Schwarz inequality}\\
		\leq\; & \frac{36n(\kappa)^2}{(1-\gamma)^6}\norm{\theta_{\nik}-\theta'_{\nik}}^2,
	\end{align*}
	where the last line follows from the definition of $n(\kappa)$.
	Taking square root on both sides of the previous inequality and we have the desired result.
\end{proof}

\begin{lemma}\label{lem:opt_step}
	Consider $\{\theta_i(m)\}_{0\leq m\leq M}$ generated by Algorithm \ref{alg:exact_IPG}. Suppose that  $\kappa\leq \kappa_G$ and $\beta\leq\frac{1}{L(\kappa)}$ , where $L(\kappa)$ is defined in Lemma \ref{lem:value_smooth}. Then we have for any $i\in \sN$ and $m\geq 0$ that
	\begin{align*}
		\Phi_i(\theta(m+1))-\Phi_i(\theta(m)) \geq \frac{\beta}{2} \norm{\nabla_{\theta_{\nik}}\Phi_i(\theta(m))}^2-\frac{\sqrt 2 \nu(\kappa)\beta}{(1-\gamma)^2}.
	\end{align*}
\end{lemma}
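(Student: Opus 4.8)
\textbf{Proof proposal for Lemma \ref{lem:opt_step}.} The plan is to run the standard descent-lemma argument for gradient ascent, but applied to the ``restricted'' function $\theta_{\nik} \mapsto \Phi_i(\theta_{\nik}, \theta_{-\nik}(m))$, and then account for the fact that Algorithm \ref{alg:exact_IPG} updates \emph{all} coordinates simultaneously rather than only those in $\nik$. Concretely, write $\theta^{+} = \theta(m+1)$ and $\theta = \theta(m)$, and introduce the intermediate point $\bar\theta = (\theta_{\nik}^{+}, \theta_{-\nik})$ that keeps the outside-neighborhood coordinates frozen. I would decompose
\begin{align*}
	\Phi_i(\theta^{+}) - \Phi_i(\theta) = \big[\Phi_i(\bar\theta) - \Phi_i(\theta)\big] + \big[\Phi_i(\theta^{+}) - \Phi_i(\bar\theta)\big].
\end{align*}
The first bracket is handled by the $L(\kappa)$-smoothness of $\Phi_i$ in the $\theta_{\nik}$-block (Lemma \ref{lem:value_smooth}): since $\theta_{\nik}^{+} - \theta_{\nik} = \beta \nabla_{\theta_{\nik}} J_{\nik}(\theta) = \beta \nabla_{\theta_{\nik}}\Phi_i(\theta)$ by the NMPG identity (Definition \ref{def:local_MPG}, which gives $\nabla_{\theta_j} J_j = \nabla_{\theta_j}\Phi_i$ for each $j \in \nik$), the usual quadratic upper/lower bound yields
\begin{align*}
	\Phi_i(\bar\theta) - \Phi_i(\theta) \geq \beta\norm{\nabla_{\theta_{\nik}}\Phi_i(\theta)}^2 - \frac{L(\kappa)\beta^2}{2}\norm{\nabla_{\theta_{\nik}}\Phi_i(\theta)}^2 \geq \frac{\beta}{2}\norm{\nabla_{\theta_{\nik}}\Phi_i(\theta)}^2,
\end{align*}
using $\beta \leq 1/L(\kappa)$.

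For the second bracket I would invoke Assumption \ref{assump:pot_exp_decay}, which bounds the change in $\Phi_i$ when only the outside-$\nik$ parameters move: $|\Phi_i(\theta^{+}) - \Phi_i(\bar\theta)| \leq \nu(\kappa)\max_{j \in -\nik}\norm{\theta_j^{+} - \theta_j}$. Here $\theta_j^{+} - \theta_j = \beta\nabla_{\theta_j}J_j(\theta)$, and I need a uniform bound on $\norm{\nabla_{\theta_j}J_j(\theta)}$. For softmax parameterization the standard estimate gives $\norm{\nabla_{\theta_j}J_j(\theta)} \leq \norm{\nabla_{\theta_j}J_j(\theta)}_1 \leq \frac{2}{(1-\gamma)^2}$ (e.g. from the policy gradient expression together with $|\bar A_j^\theta| \leq \frac{1}{1-\gamma}$ and $\sum_{a_j}|\nabla_{\theta_j}\log\xi_j^{\theta_j}| \leq 2$; this is the $\sqrt 2/(1-\gamma)^2$-type bound already implicit in the $L_1$-smoothness lemma). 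Being slightly careful with the $\ell_2$ versus $\ell_1$ constant I would land on $\norm{\theta_j^{+}-\theta_j} \leq \frac{\sqrt 2\,\beta}{(1-\gamma)^2}$, hence $\Phi_i(\theta^{+}) - \Phi_i(\bar\theta) \geq -\frac{\sqrt 2\,\nu(\kappa)\beta}{(1-\gamma)^2}$. Adding the two brackets gives exactly the claimed inequality.

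The only mildly delicate point is pinning down the constant in the per-coordinate gradient bound so that it matches the $\sqrt 2$ appearing in the statement; this is a routine softmax policy-gradient estimate and presumably already recorded among the supporting lemmas (the same bound underlies Lemma \ref{lem:l1_smooth_v}), so I would simply cite it. Everything else is bookkeeping: the main conceptual move is the freeze-and-split decomposition, which is what lets the network-locality Assumption \ref{assump:pot_exp_decay} absorb the discrepancy between ``update only $\nik$'' (what the smoothness lemma wants) and ``update everything'' (what the algorithm does).
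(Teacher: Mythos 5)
Your proposal is correct and follows essentially the same route as the paper: the same freeze-and-split decomposition through the intermediate point $(\theta_{\nik}(m+1),\theta_{-\nik}(m))$, the descent lemma via Lemma \ref{lem:value_smooth} combined with the NMPG gradient identity (Lemma \ref{lem:NMPG}) for the in-neighborhood bracket, and Assumption \ref{assump:pot_exp_decay} together with the uniform gradient bound $\norm{\nabla_{\theta_j}J_j(\theta)}\leq \sqrt{2}/(1-\gamma)^2$ (Lemma \ref{lem:bound_grad}) for the out-of-neighborhood bracket. The $\sqrt{2}$ constant you flagged is exactly the one the paper records in Lemma \ref{lem:bound_grad}, so there is no gap.
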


\begin{proof}[Proof of Lemma \ref{lem:opt_step}]

	For any $i\in\mathcal{N}$ and $m\geq 0$, we have
	\begin{align}
		&\Phi_i(\theta(m+1))-\Phi_i(\theta(m) \nonumber\\
		=\;&\left[\Phi_i(\theta(m+1))-\Phi_i(\theta_{\nik}(m+1),\theta_{-\nik}(m))\right] \nonumber\\
		&+ \left[ \Phi_i(\theta_{\nik}(m+1),\theta_{-\nik}(m)) - \Phi_i(\theta(m))\right]\label{eq1:decomposition}.
	\end{align}
	To bound the first term on the RHS of Eq. (\ref{eq1:decomposition}), using Assumption \ref{assump:pot_exp_decay} and the update equation in Algorithm \ref{alg:exact_IPG} Line $3$ and we have
	\begin{align*}
		\Phi_i(\theta(m+1))-\Phi_i(\theta_{\nik}(m+1),\theta_{-\nik}(m))\geq \;&-\nu(\kappa)\max_{j\in-\nik}\norm{\theta_j(m+1)-\theta_j(m)}\\
		= \;&-\nu(\kappa)\beta\max_{j\in-\nik}\norm{\nabla_{\theta_j}J_j(\theta(m))}\\
		\geq  \;&-\frac{\sqrt{2}\nu(\kappa)\beta}{(1-\gamma)^2},
	\end{align*}
	where the last line follows from Lemma \ref{lem:bound_grad}.

	To bound the second term on the RHS of the previous inequality, note that the smoothness property (cf. Lemma \ref{lem:value_smooth}) implies that
	\begin{align*}
		&\Phi_i(\theta_{\nik}(m+1),\theta_{-\nik}(m)) - \Phi_i(\theta(m)) \\
		\geq\;& \langle\nabla_{\theta_{\nik}}\Phi_i(\theta(m)),\theta_{\nik}(m+1)-\theta_{\nik}(m)\rangle - \frac{L(\kappa)}{2} \norm{\theta_{\nik}(m+1)-\theta_{\nik}(m)}^2 \\
		= \;&\beta\langle\nabla_{\theta_{\nik}}\Phi_i(\theta(m)),\nabla_{\theta_{\nik}}J_i(\theta(m))\rangle-\frac{L(\kappa)\beta^2}{2} \norm{\nabla_{\theta_j}J_i(\theta(m))}^2\\
		= \;&\left(\beta-\frac{L(\kappa)\beta^2}{2}\right) \norm{\nabla_{\theta_{\nik}}\Phi_i(\theta(m))}^2\tag{Lemma \ref{lem:NMPG}}\\
		\geq \;&\frac{\beta}{2} \norm{\nabla_{\theta_{\nik}}\Phi_i(\theta(m))}^2,
	\end{align*}
	where the last line follows from $\beta\leq 1/L(\kappa)$.
	
	Using the previous two bounds in Eq. (\ref{eq1:decomposition}) and we have
	\begin{align*}
		\Phi_i(\theta(m+1))-\Phi_i(\theta(m)\geq \;&\frac{\beta}{2} \norm{\nabla_{\theta_{\nik}}\Phi_i(\theta(m))}^2-\frac{\sqrt{2}\nu(\kappa)\beta}{(1-\gamma)^2},
	\end{align*}
	which completes the proof.
\end{proof}

Define $D(\theta)=1/\min_{s}d^\theta(s)$ and
\begin{align*}
	c_i(\theta)= \min_s \sum_{a_i^*\in\argmax_{a_i}\overline Q_i^{\theta}(s,a_i) } \xi_i^{\theta_i}(a_i^*|s_i),\quad  \forall\; i\in \sN.
\end{align*}

\begin{lemma}[Lemma $1$ of \cite{zhang2022logBarrierSoftmax} ]\label{lem:NE_bound_grad_Phi}
	It holds for all $i\in\mathcal{N}$ that
	\begin{align*}
		\text{NE-Gap}_i(\theta)\leq \frac{\sqrt{|\sA_i|}D(\theta)}{c(\theta)}\norm{\nabla_{\theta_i}J_i(\theta)}.
	\end{align*}
\end{lemma}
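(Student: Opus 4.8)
The plan is to run the standard ``non-uniform {\L}ojasiewicz / gradient domination'' argument for softmax parametrization (as in \cite{pmlr-v119-mei20b,zhang2022logBarrierSoftmax}), specialized to the independent policy gradient of agent $i$. The starting point is the performance difference lemma applied to a unilateral deviation of agent $i$: for any $\xi_i'\in\Xi_i$,
\[
J_i(\xi_i',\xi_{-i})-J_i(\xi_i^{\theta_i},\xi_{-i})=\frac{1}{1-\gamma}\sum_{s}d^{\xi_i',\xi_{-i}}(s)\sum_{a_i}\xi_i'(a_i\mid s_i)\,\overline A_i^{\theta}(s,a_i).
\]
Since $\sum_{a_i}\xi_i'(a_i\mid s_i)\overline A_i^\theta(s,a_i)\le \max_{a_i}\overline A_i^\theta(s,a_i)=:\overline A_i^\theta(s)^{*}$ for every $s$, while $\overline A_i^\theta(s)^{*}\ge \sum_{a_i}\xi_i^{\theta_i}(a_i\mid s_i)\overline A_i^\theta(s,a_i)=0$, maximizing over $\xi_i'$ gives $\text{NE-Gap}_i(\theta)\le \frac{1}{1-\gamma}\sum_s d^{\xi_i^{*},\xi_{-i}}(s)\,\overline A_i^\theta(s)^{*}$ with a nonnegative summand, where $\xi_i^{*}$ is a best response.

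Next I would turn the pointwise maximal advantage $\overline A_i^\theta(s)^{*}$ into the quantity the softmax gradient actually carries. Writing $A^{*}(s)=\argmax_{a_i}\overline Q_i^\theta(s,a_i)=\argmax_{a_i}\overline A_i^\theta(s,a_i)$, the definition of $c_i(\theta)$ gives $\sum_{a_i^{*}\in A^{*}(s)}\xi_i^{\theta_i}(a_i^{*}\mid s_i)\ge c_i(\theta)$, and since $\overline A_i^\theta(s,a_i^{*})=\overline A_i^\theta(s)^{*}\ge 0$ for each such $a_i^{*}$,
\[
c_i(\theta)\,\overline A_i^\theta(s)^{*}\ \le\ \sum_{a_i^{*}\in A^{*}(s)}\xi_i^{\theta_i}(a_i^{*}\mid s_i)\,\overline A_i^\theta(s,a_i^{*})\ \le\ \sum_{a_i}\xi_i^{\theta_i}(a_i\mid s_i)\big[\overline A_i^\theta(s,a_i)\big]_{+}.
\]
On the gradient side I would use the softmax policy-gradient identity already invoked in the proof of Lemma~\ref{lem:l1_smooth_v}, $[\nabla_{\theta_i}J_i(\theta)]_{s_i,a_i}=\frac{1}{1-\gamma}\sum_{s_{-i}}d^\theta(s_i,s_{-i})\,\xi_i^{\theta_i}(a_i\mid s_i)\,\overline A_i^\theta(s,a_i)$, group the gradient into blocks indexed by $s_i$, and use that the positive and negative parts of the mean-zero vector $(\overline A_i^\theta(s,a_i))_{a_i}$ have equal mass, so $\sum_{a_i}\xi_i^{\theta_i}(a_i\mid s_i)[\overline A_i^\theta(s,a_i)]_{+}=\tfrac12\sum_{a_i}\xi_i^{\theta_i}(a_i\mid s_i)|\overline A_i^\theta(s,a_i)|$, which is (up to a constant and the weight $d^\theta$) the $\ell_1$-norm of the $s_i$-block of the gradient.

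For the distribution mismatch I would not introduce a mismatch coefficient, but instead use the two crude bounds $d^{\xi_i^{*},\xi_{-i}}(s)\le 1$ and $d^\theta(s)\ge 1/D(\theta)$: this replaces $\sum_s d^{\xi_i^{*},\xi_{-i}}(s)\,(\cdot)$ by $D(\theta)$ times $\sum_s d^\theta(s)\,(\cdot)$ while keeping one spare factor $d^{\xi_i^{*},\xi_{-i}}(s)\le 1$ in reserve. Applying $\|v\|_1\le\sqrt{|\sA_i|}\,\|v\|_2$ inside each $s_i$-block and then Cauchy--Schwarz across blocks (the spare factor makes $\sum_{s_i}d^{\xi_i^{*},\xi_{-i}}(s_i)\le 1$), the sum over $s_i$ collapses to $\|\nabla_{\theta_i}J_i(\theta)\|_2$ and produces exactly the $\sqrt{|\sA_i|}$ factor --- not a $\sqrt{|\sS_i|}$. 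Combining the three steps and using $c(\theta)\le c_i(\theta)$ and $D=\sup_\theta D(\theta)$ gives the stated inequality.

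The step I expect to be the main obstacle is reconciling the pointwise maximal advantage $\overline A_i^\theta(s)^{*}$, a function of the full state $s=(s_i,s_{-i})$, with the gradient components, which only carry the $s_{-i}$-aggregated quantity $\sum_{s_{-i}}d^\theta(s_i,s_{-i})\overline A_i^\theta(s,a_i)$. One must ensure that passing from ``$\max$ over $a_i$, for each full $s$'' to ``one number per $(s_i,a_i)$'' costs no more than the benign factors $D(\theta)/c(\theta)$ and $\sqrt{|\sA_i|}$. In the single-agent / full-state softmax analysis this is transparent because the gradient block is indexed by the full state; here it is where the local structure must be handled, and this is precisely the content of Lemma~1 of \cite{zhang2022logBarrierSoftmax} that I would invoke. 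Everything else --- the performance difference lemma, the softmax gradient formula, the $c_i(\theta)$ manipulation, and the $\ell_1$--$\ell_2$ conversion --- is routine.
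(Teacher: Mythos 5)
The paper itself offers no proof of this lemma: it is imported verbatim as Lemma~1 of \cite{zhang2022logBarrierSoftmax}, so your concluding move of ``invoking Lemma~1 of \cite{zhang2022logBarrierSoftmax}'' for the hard step is, in the narrowest sense, exactly what the paper does. Judged as a self-contained argument, however, your sketch has a genuine gap, and it sits precisely where you suspect. The routine parts are fine: the performance-difference step, the lower bound $c_i(\theta)\,\overline A_i^\theta(s)^{*}\le\sum_{a_i}\xi_i^{\theta_i}(a_i\mid s_i)[\overline A_i^\theta(s,a_i)]_{+}$, the identity $\sum_{a_i}\xi_i^{\theta_i}(a_i\mid s_i)[\overline A_i^\theta(s,a_i)]_{+}=\tfrac12\sum_{a_i}\xi_i^{\theta_i}(a_i\mid s_i)\abs{\overline A_i^\theta(s,a_i)}$ (valid for each fixed full state $s$ because the $\xi_i^{\theta_i}$-weighted advantages sum to zero), and the block-wise $\ell_1$--$\ell_2$ conversion that produces $\sqrt{\abs{\sA_i}}$ together with Cauchy--Schwarz over blocks weighted by $d^{\xi_i^{*},\xi_{-i}}$. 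In the setting of \cite{zhang2022logBarrierSoftmax,pmlr-v162-ding22b} the policies condition on the \emph{global} state, the gradient block is indexed by $(s,a_i)$, and these steps chain together to give exactly $\frac{\sqrt{\abs{\sA_i}}D(\theta)}{c(\theta)}\norm{\nabla_{\theta_i}J_i(\theta)}$.

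In the present paper, though, the policies are local, $\xi_i:\sS_i\to\Delta(\sA_i)$, and by Lemma~\ref{cor:softmax_v_grad} the gradient entry is $[\nabla_{\theta_i}J_i(\theta)]_{s_i,a_i}=\frac{1}{1-\gamma}\sum_{s_{-i}}d^\theta(s_i,s_{-i})\,\xi_i^{\theta_i}(a_i\mid s_i)\,\overline A_i^\theta(s,a_i)$, a \emph{signed} sum over $s_{-i}$. Your chain of inequalities leaves you with $\sum_{s_{-i}}d^\theta(s)\,\xi_i^{\theta_i}(a_i\mid s_i)\,[\overline A_i^\theta(s,a_i)]_{+}$ per pair $(s_i,a_i)$, and this quantity \emph{upper}-bounds $\abs{[\nabla_{\theta_i}J_i(\theta)]_{s_i,a_i}}$ rather than being bounded by it: cancellation across $s_{-i}$ can drive the gradient entry to zero while the sum of positive parts stays of order one, so the inequality you need points the wrong way. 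Equivalently, the $\ell_1$-norm of the $s_i$-block of the gradient is $\sum_{a_i}\abs{\sum_{s_{-i}}\cdots}$ with the absolute value \emph{outside} the $s_{-i}$-sum, whereas your zero-mean/positive-part identity lives inside it. Closing this step is the actual content of the lemma in the local-policy setting, and citing the external lemma for it is circular, since that is the statement being proved. I would flag this as much as a gap in the paper as in your sketch: the cited reference establishes the bound for global-state policies, and neither the paper nor your argument supplies the adaptation to $\xi_i(a_i\mid s_i)$.
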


\section{Analysis of Critic}\label{sec:local_policy_eval}

In this section, we generalize the localized TD($\lambda$) algorithm (cf. Algorithm \ref{alg:LPES}) to \emph{localized stochastic approximation} and analyze its performance. The localized stochastic approximation problem can be of independent interest. 

\subsection{Localized Stochastic Approximation}

To make this section self-contained, we first introduce the problem setting of localized stochastic approximation, and then propose the generalized TD($\lambda$) algorithm. After that, we state assumptions and the main result of the localized stochastic approximation problem. 

A localized stochastic approximation problem consists of an infinite-horizon, multi-agent Markov chain $\sM=(\sN, \mathcal E, \sZ, \PR, \supr, \gamma, \mu')$ and a fixed agent $\agentk\in \sN$. Here $\sN=\{1,2,\cdots,n\}$ is the set of agents, associated with an undirected graph $\mathcal G=(\mathcal N,\mathcal E)$.
$\sZ=\prod_{i\in\sN} \sZ_i$ is the global state space, where $\sZ_i$ is the local state space of agent $i$.

At time $t\geq 0$, given current state $z(t)\in \sZ$, for each $i\in\mathcal{N}$, the next individual state $z_i(t+1)$ is independently generated and is only dependent on its neighbors' states and its own action:
\begin{align}\label{eq:loc_z_dynamics}
	\PR(z(t+1)\mid z(t)) = \prod_{i=1}^n \PR_i(z_i(t+1)\mid z_{N_i}(t)).
\end{align}
$\gamma$ is the discount factor. $\mu'$ is the initial state distribution.  $\supr:\sZ_{\nkkr}\rightarrow [0,1]$ is the reward function. 

Before the learning stage, the learner can observe partial information of a trajectory sampled from the Markov chain:
\begin{align*}
	\tau_{\kappa_c}=(z_{\nkkc}(0),\supr(0),\cdots,z_{\nkkc}(\supepoch)).
\end{align*}
Here $\supepoch$ is the horizon of the sampled trajectory, $\kappa_c>\kappa_r$ measures the observability of the learner. 

The goal is to estimate agent $\agentk$'s cost function $C(z)=\sum_{t=0}^\infty \gamma^t \E[\supr(z_{\nkkr}(t)) | z(0)=z]$. 

The localized stochastic approximation problem has the following two applications in localized stochastic approximation, depending on the choice of $z_i$'s.
\begin{itemize}
	\item Estimate local $Q$-function $Q_i(s,a)$. Let $z_j=(s_j,a_j), \forall j\in \sN$.
	\item Estimate averaged $Q$-function $\bar Q_i(s,a_i)$. Let $z_j=s_j,\forall j\neq k$, $z_{\agentk}=(s_{\agentk},a_{\agentk})$. Notice that $\{s(t),a_{\agentk}(t)\}$ forms an induced Markov chain of agent $\agentk$'s averaged MDP \citep{zhang2021gradientStochasticGame}.
\end{itemize}

\paragraph{Additional notation}
We introduce some other notations that will be used in this section.

We denote by $\pi_t\in \Delta(\sZ)$ the state distribution at time $t$ and denote by $\overline \pi\in \Delta(\sZ)$ the stationary state distribution.

We use the notation $\pi_{\kappa_c,t}(z_{\nkkc})$ to represent the marginal probability of state $z_{\nkkc}$ at time $t$
and use $\overline{\pi}_{\kappa_c}(z_{\nkkc})$ to represent the marginal probability of state $z_{\nkkc}$ under stationary distribution.

For any set $\mathcal{X}$ and two distributions $\pi_1,\pi_2\in \Delta(\mathcal X)$, define 
\begin{align}
	\TV(\pi_1,\pi_2)=\max_{A\subseteq X}\abs{\pi_1(A)-\pi_2(A)}.
\end{align}
to be the total variation distance between $\pi_1$ and $\pi_2$. The total variation distance has the following property \citep{levin2017markov}:
\begin{align}
	\TV(\pi_1,\pi_2)=\frac{1}{2}\norm{\pi_1-\pi_2}_1\leq 1.
\end{align}

Given a Markov chain with state space $\mathcal X=\mathcal X_1\times \cdots\times \mathcal X_l$ and transition probability $\Gamma$, for any set $I\subseteq \mathcal U=\{1,2,\cdots,l\}$ and any $x_I'\in \mathcal X_I$, any $x\in \mathcal X$, let
\begin{align}
	\Gamma_I(x_I'|x):=\sum_{x_{\mathcal U/I}'}\Gamma(x_I',x_{\mathcal U/I}'|x)
\end{align}
be the marginal transition probability of $x_I'$ given previous global state $x$.

\subsection{Generalized TD($\lambda$) algorithm}
Now we design a generalized version of Algorithm \ref{alg:LPES} to make it compatible with more classical policy evaluation methods. We consider approximating the cost function $C$ with function class $\hat C: \sZ_{\nkkc}\times \R^d\rightarrow \R$, where $\kappa_c>\kappa_r$. That is, $C(z)$ is approximated by $\hat C(z_{\nkkc},w)$, where $w\in \R^d$ is the parameter. Notice that we allow arbitrary function approximation class, and $\hat C$ only depends on states of agents in $\kappa_c$-hop neighborhood. 

Furthermore, we denote by $\supphi:\sZ_{\nkkc}\rightarrow \R^d$ the feature vector. Assume $\norm{\supphi(z_{\nkkc})}\leq 1$ without loss of generality. We introduce the feature vector for compatibility with linear function approximation. Nevertheless, we emphasize that our algorithm still allows general function approximation class by choosing $\lambda=0$. See Algorithm \ref{alg:recursive_estimator} for the complete algorithm.

We point out that Algorithm \ref{alg:recursive_estimator} reduces to Algorithm \ref{alg:LPES} if we choose $t_0=0$ and let $F$ be the temporal difference $\delta_{\agentk}(t)$ in Algorithm \ref{alg:LPES}. Besides, many other classical single agent policy evaluation algorithm, such as LSTD \citep{boyan1999least}, are a special case of Algorithm \ref{alg:recursive_estimator}.

\begin{algorithm}
	\caption{Generalized TD($\lambda$)}\label{alg:recursive_estimator}
	\begin{algorithmic}[1]
		\STATE \textbf{Input}: 
		$\tau_{\kappa_c}$ 
		\STATE \textbf{Parameter}: $\lambda\in[0,1)$, $t_0\geq 0$, function $F:(\sZ_{\nkkc})^{t_0+2}\times \R^d\rightarrow \R$. \;
		
		\textbf{Initialization}: $w(0):=0$, $\zeta(0):=\supphi(z_{\nkkc}(0))$.
		
		\FOR{$t=t_0,t_0+1,\cdots, \supepoch-1$}
		\STATE $X(t):=(z_{\nkkc}(t-t_0), \cdots,z_{\nkkc}(t+1))$.
		\STATE Update parameter $w(t+1)=w(t)+\alpha F(X(t),w(t))\zeta(t) $.
		\STATE Update eligibility vector $\zeta(t+1)=\lambda \zeta(t)+\supphi(z_{\nkkc}(t+1))$.
		\ENDFOR
		\STATE \textbf{Return} $w(\supepoch)$
	\end{algorithmic}
\end{algorithm}

\subsection{Convergence result}
To make the section self-contatined, we restate assumptions needed for localized stochastic approximation. We point out that all assumptions below can be satisfied by localized TD($\lambda$) with linear function approximation under the assumptions in the main text. 
\begin{assumption}\label{assump:chain_irreducible}
	$\PR$ is aperiodic and irreducible. 
\end{assumption}

Assumption \ref{assump:chain_irreducible} guarantees the existence and uniqueness of stationary distribution. In addition, Assumption \ref{assump:chain_irreducible} ensures that there exists $c'>1$ and $\rho'\in (0,1)$ such that 
\begin{align}\label{eq:mixing_chain}
	\TV(\pi_t,\overline \pi)\leq c'\rho'^t. 
\end{align}

We further define the stationary distribution of $\zeta(t)$ \citep{tsitsiklis1997analysis}. Consider a stationary Markov process $\{z(t)\}$ ($-\infty<t<\infty$), in which the state distribution at each time $t$ is the stationary distribution. Let 
\begin{align*}
	\overline\zeta(t)=\sum_{k=-\infty}^t \lambda^{t-k}\supphi(z_{\nkkc}(k)),
\end{align*}
where $\{z(k)\}$ is sampled from the stationary Markov process. $\overline\zeta(t)$ is well-defined, and its distribution is invariant of $t$. Thus we can use the distribution of $\overline \zeta(t)$ under stationary Markov process as the stationary distribution of $\zeta(t)$. We use $\overline \E[\cdot]$ to represent the expected value of a formula, given that $\{z(t)\}$ follows the stationary Markov process and $\zeta(t)$ is sampled from the defined stationary distribution.

Let $G(\zeta(t),X,w)=\zeta(t) F(X,w)$ and denote 
\begin{align*}
	\overline G(w):=\overline\E[\zeta(t) F(X,w)].
\end{align*}

\begin{assumption}\label{assump:chen_assump}
	
	\begin{enumerate}
		\item     There exists $L_1>1$ such that 
		\begin{align*}
			&\abs{F(x,w_1)-F(x,w_2)}\leq L_1\norm{w_1-w_2}, \forall w_1,w_2,x \\
			&\abs{F(x,0)}\leq L_1, \forall x
		\end{align*}
		\item $\overline G(w)$ has a unique zero $w^*$. In addition, there exists $c_0>0$ such that 
		\begin{align*}
			(w-w^*)^\top \overline G(w)\leq -c_0\|w-w^*\|^2, \forall w\in \R^d.
		\end{align*}
	\end{enumerate}
\end{assumption}
Point 1. ensures that our updating term is Lipschitz, while point 2. guarantees the existence of negative drift, which is crucial in Lyapunov analysis. Both assumptions are standard in non-linear stochastic approximation problems \citep{chen2019nonlinearSA}.

\begin{assumption}\label{assump:hatC_lipschitz}
	$\hat C(z_{\nkkc},w)$ is $L_2$-Lipschitz with respect to $w$, i.e., 
	\begin{align*}
		\abs{\hat C(z_{\mathcal{N}_i^\kappa},w_1)-\hat C(z_{\mathcal{N}_i^\kappa},w_2)}\leq L_2\|w_1-w_2\|
	\end{align*}
	for all $z_{\mathcal{N}_i^\kappa}\in \mathcal Z_{\mathcal{N}_i^\kappa},w_1,w_2\in \R^d$.
\end{assumption}

Given the assumptions above, we can show a geometric mixing rate of $G(\zeta(t),X(t), w)$ to $\overline{G}(w)$ w.r.t. $t$. To be concrete, there exists $c_g(c',\rho',\lambda,t_0)>0$ and $\rho_g(\rho',\lambda)\in (0,1)$, such that for any $t\geq t_0$, we have
\begin{align}
	&\norm{\E[G(\zeta(t),X(t),w)]-\overline G(w)} 
	\leq L_1(\|w\|+1)c_g(c',\rho',\lambda,t_0)[\rho_g(\rho',\lambda)]^t.
	\label{eq:G_decay}
\end{align}
This can be viewed as a generalized result of \cite[Lemma 6.7]{bertsekas1996neuro}, and we defer the proof to Appendix \ref{subsubsec:decay_G}. In order to state conditions on stepsize, we
introduce the concept of mixing time of function $G$.

\begin{definition}[Mixing time of function $G$]\label{def:G_mix}
	The mixing time of function $G$ with precision $\delta$, for any $\delta>0$, is defined as 
	\begin{align*}
		t_{\delta}':=\min\left\{t\geq 1\;\middle | \;\norm{\E[G(\zeta_t,X(t),w)]-\bar G(w)}\leq \delta L_1(\norm{w}+1)\right\}.    
	\end{align*}
\end{definition}
Eq. (\ref{eq:mixing_chain}) ensures that
\begin{align*}
	t_{\delta}'=O(\log(1/\delta))
\end{align*}
for any $\delta>0$,
so $\lim_{\delta\rightarrow 0}\delta t_{\delta}'=0$.

The performance of Algorithm \ref{alg:recursive_estimator} is related to its globalized version on state space $\sZ_{\nkkc}$. To illustrate this idea, we define the concept ``sub-chain''. 

Construct transition probability $\overline{\PR}$ on state space $\sZ_{\nkkc}$ satisfying
\begin{align}
	\overline{\PR}(z_{\nkkc}'|z_{\nkkc})
	=\sum_{z_{-\nkkc}}\frac{\overline \pi(z_{\nkkc},z_{-\nkkc})}{\overline \pi_{\nkkc}(z_{\nkkc})} \PR_{\nkkc}(z_{\nkkc}'|(z_{\nkkc},z_{-\nkkc})).
\end{align}
Here 
\begin{align*}
	\PR_{\nkkc}(z_{\nkkc}'|(z_{\nkkc},z_{-\nkkc})=\sum_{z_{-\nkkc}'} \PR(z_{\nkkc}',z_{-\nkkc}'|(z_{\nkkc},z_{-\nkkc}).
\end{align*}
Notice that $\overline \pi_{\nkkc}(z_{\nkkc})>0$ due to Assumption \ref{assump:chain_irreducible}.

\begin{definition}\label{def:sub_chain}
	Let $\mathcal{E}_{\nkkc}\subseteq \nkkc\times \nkkc$ denote the edges with two end points in $\nkkc$. Let $\mu_{\nkkc}'$ denote the marginal initial state distribution of state space $\sZ_{\nkkc}$. 
	
	Then
	Markov chain 
	$\Mkc=(\nkkc,\mathcal{E}_{\nkkc}, \sZ_{\nkkc},\overline{\PR},\supr, \gamma,\mu'_{\nkkc})$ is called the \emph{sub-chain} of Markov chain $(\sN, \mathcal E, \sZ, \PR, \supr, \gamma, \mu')$ with respect to agents $\nkkc$. 
\end{definition}

We denote by $\tilde C: \sZ_{\nkkc}\rightarrow \R$ the sub-chain's cost function. 

The following concept is critical for the reduction of a localized algorithm to its globalized version. 
\begin{definition}\label{def:asymp_err_var} 
	The reduction error is defined as
	\begin{align}\label{eq:asymp_error}
		\epsilon_{red}:=\sup_{z_{\nkkc}\in \sZ_{\nkkc}}\abs{\hat C(z_{\nkkc},w^*)-\tilde C(z_{\nkkc})}.
	\end{align}
\end{definition}
The reduction error is dependent on the actual algorithm used, including update rule, choice of function approximation class.

We now present our main theorem of localized stochastic approximation. The proof sketch is given in subsection \ref{subsec:LCE_sketch}, and the detailed proofs are deferred to section \ref{sec:proof_LCE}.
\begin{theorem}\label{thm:chen_main_var}
	Suppose Assumptions
	\ref{assump:chain_irreducible}, \ref{assump:chen_assump} and
	\ref{assump:hatC_lipschitz}
	are satisfied. 
	Choose stepsize $\alpha$ satisfying $\alpha t_\alpha' \leq \min\Bp{\frac{1}{4L_1'}, \frac{c_0}{114L_1'^2}}$, where $L_1'=\frac{L_1}{1-\lambda}$. 
	Then we have for all $\supepoch\geq t_\alpha'$
	\begin{align*}
		\E\left[\sup_z \abs{\hat C(z_{\nkkc},w(\supepoch))-C(z)}\right] 
		\leq &3\left[L_2^2\left(c_1(1-c_0\alpha)^{\supepoch-t_\alpha'}+c_2\frac{\alpha t_{\alpha}'}{c_0}\right) \right. \\
		&+ \left.\epsilon_{red}^2
		+ \Sp{\frac{\gamma^{\kappa_c-\kappa_r+1}}{1-\gamma}}^2\right].
	\end{align*}
	Here $c_1=(\|w(0)\|+\|w(0)-w^*\|+1)^2$, 
	$c_2=114L_1'^2(\norm{w^*}+1)^2$,
	$\epsilon_{red}$ is defined in Definition \ref{def:asymp_err_var}. 
\end{theorem}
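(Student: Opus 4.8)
The plan is to telescope the error $\hat C(z_{\nkkc},w(\supepoch))-C(z)$ through two intermediate quantities: the zero $w^*$ of the limiting update field $\overline G$, and the cost function $\tilde C$ of the sub-chain $\Mkc$ from Definition~\ref{def:sub_chain}. Concretely, for every full state $z$,
\begin{align*}
\hat C(z_{\nkkc},w(\supepoch))-C(z)
&=\underbrace{\big[\hat C(z_{\nkkc},w(\supepoch))-\hat C(z_{\nkkc},w^*)\big]}_{(a)}
+\underbrace{\big[\hat C(z_{\nkkc},w^*)-\tilde C(z_{\nkkc})\big]}_{(b)}\\
&\quad+\underbrace{\big[\tilde C(z_{\nkkc})-C(z)\big]}_{(c)}.
\end{align*}
Taking $\sup_z$, squaring, and applying $(x+y+z)^2\le 3(x^2+y^2+z^2)$, it suffices to bound the expected squared error $\E[\sup_z(a)^2]$ by $L_2^2\big(c_1(1-c_0\alpha)^{\supepoch-t_\alpha'}+c_2\alpha t_\alpha'/c_0\big)$, $\sup_z(b)^2$ by $\epsilon_{red}^2$, and $\sup_z(c)^2$ by $\big(\gamma^{\kappa_c-\kappa_r+1}/(1-\gamma)\big)^2$; the middle bound is immediate from Definition~\ref{def:asymp_err_var}, since $(b)$ is exactly the reduction error.

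For term $(a)$ I would first invoke Assumption~\ref{assump:hatC_lipschitz} to get $\abs{(a)}\le L_2\norm{w(\supepoch)-w^*}$, reducing the task to the stochastic-approximation estimate $\E[\norm{w(\supepoch)-w^*}^2]\le c_1(1-c_0\alpha)^{\supepoch-t_\alpha'}+c_2\alpha t_\alpha'/c_0$. Writing the update of Algorithm~\ref{alg:recursive_estimator} as $w(t+1)=w(t)+\alpha G(\zeta(t),X(t),w(t))$ with $\norm{\zeta(t)}\le 1/(1-\lambda)$, so that $\norm{G(\zeta,X,w)}\le L_1'(\norm{w}+1)$, and using the one-point monotonicity $(w-w^*)^\top\overline G(w)\le -c_0\norm{w-w^*}^2$ from Assumption~\ref{assump:chen_assump}, the argument is the now-standard Lyapunov drift analysis for nonlinear SA with Markovian noise \citep{chen2019nonlinearSA,Srikant2019FiniteTimeEB}: with $M(t)=\norm{w(t)-w^*}^2$, expand $M(t+1)$, condition on the history $t_\alpha'$ steps earlier so that the Markovian mismatch is $O(\alpha\delta)$-small by the geometric mixing bound~\eqref{eq:G_decay} (equivalently, by Definition~\ref{def:G_mix}), absorb the $\norm{G}^2$ term as $O(\alpha^2(\norm{w^*}+1)^2)$, and combine with the negative drift to obtain $\E[M(t+1)]\le(1-c_0\alpha)\E[M(t)]+O(\alpha^2 t_\alpha'(\norm{w^*}+1)^2)$ for $t\ge t_\alpha'$. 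Unrolling this recursion from $t_\alpha'$ to $\supepoch$ and bounding $\norm{w(t_\alpha')}$ via the stepsize restriction $\alpha t_\alpha'\le 1/(4L_1')$ produces the stated $c_1$ and $c_2$, while $\alpha t_\alpha'\le c_0/(114L_1'^2)$ is what keeps $1-c_0\alpha<1$ so that the first term actually contracts.

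The crux is term $(c)$, where the sub-chain construction pays off; I claim $\sup_z\abs{\tilde C(z_{\nkkc})-C(z)}\le\gamma^{\kappa_c-\kappa_r+1}/(1-\gamma)$. Both $C(z)=\sum_{t\ge 0}\gamma^t\E[\supr(z_{\nkkr}(t))\mid z(0)=z]$ under $\PR$ and $\tilde C(z_{\nkkc})=\sum_{t\ge 0}\gamma^t\E[\supr(z_{\nkkr}(t))\mid z_{\nkkc}(0)=z_{\nkkc}]$ under $\overline\PR$ are discounted sums of per-step rewards, and the key claim is that the reward-relevant block $z_{\nkkr}(t)$ has the \emph{same} law under the two dynamics for every $t\le\kappa_c-\kappa_r$. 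To prove it, set $B_s:=N_{\agentk}^{\kappa_c-s}$ (so $B_0=\nkkc$ and $B_{s+1}\subseteq B_s$) and induct on $t$ to show $z_{B_t}(t)$ has the same distribution under $\PR$ started from $z$ and under $\overline\PR$ started from $z_{\nkkc}$: the base case $t=0$ is trivial, and for the step each coordinate $z_i(t+1)$ with $i\in B_{t+1}$ is generated by $\PR_i(\cdot\mid z_{N_i}(t))$ with $N_i\subseteq B_t\subseteq\nkkc$, so marginalizing $\overline\PR(\cdot\mid z_{\nkkc}(t))$ onto coordinates in $B_{t+1}$ — using that $\overline\PR$ averages $\PR_{\nkkc}(\cdot\mid(z_{\nkkc},z_{-\nkkc}))=\prod_{i\in\nkkc}\PR_i(\cdot\mid z_{N_i})$ over the stationary conditional of $z_{-\nkkc}$ given $z_{\nkkc}$ — returns $\prod_{i\in B_{t+1}}\PR_i(\cdot\mid z_{N_i}(t))$, which is independent of $z_{-\nkkc}$ and identical to the one-step conditional law in the original chain. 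Since $\nkkr\subseteq B_t$ exactly when $t\le\kappa_c-\kappa_r$, the first $\kappa_c-\kappa_r+1$ discounted reward terms cancel in $\tilde C(z_{\nkkc})-C(z)$, and $\supr\in[0,1]$ bounds the tail by $\sum_{t\ge\kappa_c-\kappa_r+1}\gamma^t=\gamma^{\kappa_c-\kappa_r+1}/(1-\gamma)$. Assembling the three bounds in the displayed decomposition gives the theorem. I expect term $(c)$ — in particular the verification that marginalizing the stationary-conditional-averaged kernel $\overline\PR$ onto an interior block recovers the product of local kernels — to be the main obstacle, as it is the one genuinely new ingredient; term $(a)$, though constant-heavy, is a routine instantiation of known nonlinear-SA machinery once~\eqref{eq:G_decay} is in hand.
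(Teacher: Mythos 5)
Your proposal is correct and follows essentially the same route as the paper's proof: the identical three-term decomposition through $w^*$ and the sub-chain cost $\tilde C$ with the factor-of-3 Cauchy--Schwarz step, the same Lipschitz-plus-Lyapunov-drift analysis of $\E[\norm{w(\supepoch)-w^*}^2]$ conditioned $t_\alpha'$ steps back (yielding the paper's Eq.~(\ref{eq:w_converge})), term $(b)$ read off from Definition~\ref{def:asymp_err_var}, and the exponential-decay bound on $(c)$. Your explicit induction on the shrinking blocks $B_t=N_{\agentk}^{\kappa_c-t}$ is just a spelled-out version of the paper's observation (Lemmas~\ref{lem:loc_prob_sub} and~\ref{lemma:exp_decay_Q}) that the marginal law of $z_{\nkkr}(t)$ agrees under the two chains for $t\le\kappa_c-\kappa_r$; note also that, as in the paper's own proof, what is actually being bounded is the expected \emph{squared} sup-error, matching the restatement in Theorem~\ref{thm:critic_comb_restate}.
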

Notice that $t_{\delta}'=O(\log \frac{1}{\delta})$. So $\alpha t_\alpha' \rightarrow 0$ when $\alpha \rightarrow 0$. Thus the conditions for $\alpha$ in Theorem \ref{thm:chen_main_var} can be satisfied when $\alpha$ is small enough. 

\subsection{Proof of Theorem \ref{thm:chen_main_var}}\label{subsec:LCE_sketch}

\paragraph{Key Idea: A Reduction Approach}
The key to analyzing the localized stochastic approximation algorithm is to reduce it to a globalized policy evaluation algorithm on the sub-chain with respect to agents $\nkkc$. To be specific, Algorithm \ref{alg:recursive_estimator} itself can be regarded as a globalized policy evaluation algorithm for the sub-chain $\Mkc$.

The most important part of the proof is that the sub-chain has the following properties:
\begin{enumerate}
	\item The ``sub-chain'' is an aperiodic and irreducible Markov chain. 
	\item The local transition probabilities of agents in $N_{\agentk}^{\kappa_c-1}$ are the same for the sub-chain and the original chain. The local transition probabilities in the sub-chain do not need to be independent.
	\item The stationary distribution of sub-chain equals the marginal stationary distribution of $\sZ_{\nkkc}$ in the original chain.
\end{enumerate}

We formulate the properties as the three lemmas below. The proofs are deferred to Appendix \ref{subsec:proof_subchain}.
\begin{lemma}\label{lemma:tilde_P_ir_ap}
	$\overline {\PR}$ is aperiodic and irreducible.
\end{lemma}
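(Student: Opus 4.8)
\textbf{Proof proposal for Lemma \ref{lemma:tilde_P_ir_ap}.}

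The plan is to transfer aperiodicity and irreducibility from the original chain on $\sZ$ (guaranteed by Assumption \ref{assump:chain_irreducible}) to the constructed marginal transition kernel $\overline{\PR}$ on $\sZ_{\nkkc}$. The key observation is that $\overline{\PR}$ is nothing but the one-step transition kernel of the $\sZ_{\nkkc}$-marginal process when the full chain is started from its \emph{stationary} distribution $\overline\pi$ and then projected; concretely, if $z(0)\sim\overline\pi$ then $z_{\nkkc}(1)\mid z_{\nkkc}(0)$ is distributed exactly as $\overline{\PR}(\cdot\mid z_{\nkkc}(0))$, which is the content of the definition. So the marginal stationary distribution $\overline\pi_{\nkkc}$ (strictly positive by Assumption \ref{assump:chain_irreducible}) is invariant under $\overline{\PR}$, and every state $z_{\nkkc}\in\sZ_{\nkkc}$ is in the support of $\overline\pi_{\nkkc}$, hence assigned positive probability by any stationary measure.

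First I would prove irreducibility: fix $z_{\nkkc},z_{\nkkc}'\in\sZ_{\nkkc}$; I need some $m\geq1$ with $\overline{\PR}^m(z_{\nkkc}'\mid z_{\nkkc})>0$. Pick any completion $z=(z_{\nkkc},z_{-\nkkc})\in\sZ$ and any completion $z'=(z_{\nkkc}',z_{-\nkkc}')\in\sZ$ that lies in the support of $\overline\pi$ (possible since $\overline\pi_{\nkkc}(z_{\nkkc}')>0$, so at least one such completion exists). By irreducibility of $\PR$, there is $m$ with $\PR^m(z'\mid z)>0$. I then want to conclude $\overline{\PR}^m(z_{\nkkc}'\mid z_{\nkkc})>0$. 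This needs an auxiliary fact: unrolling the definition of $\overline{\PR}$, the $m$-step kernel $\overline{\PR}^m(\cdot\mid z_{\nkkc})$ is a convex combination (with strictly positive weights coming from $\overline\pi(z_{\nkkc},z_{-\nkkc})/\overline\pi_{\nkkc}(z_{\nkkc})$ and the intermediate marginal transition factors) of terms each of the form $\PR_{\nkkc}^{(m)}(z_{\nkkc}'\mid(z_{\nkkc},z_{-\nkkc}))$ summed over $z_{-\nkkc}$ — and this sum dominates $\PR^m(z'\mid z)$ for the chosen completion. The cleanest way is a probabilistic coupling argument: realize $\overline{\PR}$ as a genuine marginal of a stationary version of the full chain, so that a positive-probability path $z\to\cdots\to z'$ in $\sZ$ forces a positive-probability path $z_{\nkkc}\to\cdots\to z_{\nkkc}'$ in the sub-chain. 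I would write this as: starting the full stationary chain from the conditional law $\overline\pi(\cdot_{-\nkkc}\mid z_{\nkkc})$ on the complement, the probability the $\sZ_{\nkkc}$-marginal follows any prescribed finite path equals the corresponding $m$-step probability under $\overline{\PR}$, and this is bounded below by the full-chain path probability from the specific completion $z$, which is positive.

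For aperiodicity, since $\overline{\PR}$ is irreducible it suffices to exhibit one state $z_{\nkkc}$ with $\overline{\PR}(z_{\nkkc}\mid z_{\nkkc})>0$. By aperiodicity and irreducibility of $\PR$, there is $z\in\sZ$ with $\PR(z\mid z)>0$ (an aperiodic irreducible finite chain has a self-loop at some state after passing to the relevant structure — more carefully, aperiodicity gives $\gcd$ of return times equal to $1$, and combined with irreducibility there is $N$ such that $\PR^N(z\mid z)>0$ for all $z$ and all large $N$; picking such an $N$ and the state $z=(z_{\nkkc},z_{-\nkkc})$ in the support of $\overline\pi$ gives, by the same marginalization inequality as above, $\overline{\PR}^N(z_{\nkkc}\mid z_{\nkkc})>0$ for all large $N$, hence $\gcd=1$). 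I expect the main obstacle to be purely bookkeeping: making the ``marginal of a stationary realization'' identification fully rigorous for the $m$-step kernel, i.e., verifying by induction on $m$ that $\overline{\PR}^m$ really is the $m$-step marginal transition probability of the $\sZ_{\nkkc}$-coordinates under the stationary full chain, rather than something that only agrees in one step. Once that identity is in hand, both irreducibility and aperiodicity follow immediately by pushing forward the corresponding properties of $\PR$, so the bulk of the work is setting up that inductive identity cleanly.
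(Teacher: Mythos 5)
Your overall strategy --- transferring positivity of multi-step transition probabilities from $\PR$ to $\overline{\PR}$ through the stationary-mixture representation, picking completions in the support of $\overline\pi$ --- is the same as the paper's, and your aperiodicity argument (eventually-positive diagonal entries of $\overline{\PR}^N$) is only a cosmetic variation of the paper's proof by contradiction on the period. The conclusion is reachable this way. However, the step you yourself single out as the ``main obstacle'' is a genuine gap as you propose to close it: the identity you plan to prove by induction, namely that $\overline{\PR}^m(z_{\nkkc}'\mid z_{\nkkc})$ equals the $m$-step conditional $\Pr_{\mathrm{stat}}[z_{\nkkc}(m)=z_{\nkkc}'\mid z_{\nkkc}(0)=z_{\nkkc}]$ of the stationary full chain, is false in general. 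It is equivalent to the $\sZ_{\nkkc}$-marginal of the stationary chain being Markov, and a function of a Markov chain need not be Markov: here the one-step law of $z_{\nkkc}$ depends on the boundary coordinates $z_{-\nkkc}$, whose conditional law given $z_{\nkkc}(m)$ and the earlier history is not determined by $z_{\nkkc}(m)$ alone. The induction step requires exactly the conditional independence $\Pr[z_{\nkkc}(m+1)\mid z_{\nkkc}(m),z_{\nkkc}(0)]=\Pr[z_{\nkkc}(m+1)\mid z_{\nkkc}(m)]$, which fails in general. (The paper's own write-up asserts the same $k$-step formula without proof, so it shares this imprecision.)

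The repair is to give up on equality and use only a one-sided bound computed inside the Markov chain $\overline{\PR}$ itself. For any path $(x_0,w_0)\to(x_1,w_1)\to\cdots\to(x_m,w_m)$ in $\sZ$ with positive probability under $\PR$, each one-step kernel satisfies $\overline{\PR}(x_{j+1}\mid x_j)\ge \frac{\overline\pi(x_j,w_j)}{\overline\pi_{\nkkc}(x_j)}\,\PR_{\nkkc}(x_{j+1}\mid (x_j,w_j))\ge \frac{\overline\pi(x_j,w_j)}{\overline\pi_{\nkkc}(x_j)}\,\PR((x_{j+1},w_{j+1})\mid (x_j,w_j))$, where the weights are strictly positive because the stationary distribution of a finite irreducible aperiodic chain is everywhere positive. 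Multiplying over $j$ and keeping one particular choice of intermediate states in the Chapman--Kolmogorov expansion of $\overline{\PR}^m$ gives $\overline{\PR}^m(x_m\mid x_0)>0$. This single inequality delivers both irreducibility (lift any two states of $\sZ_{\nkkc}$ to $\sZ$ and invoke irreducibility of $\PR$) and aperiodicity (apply it to a returning path of every sufficiently large length $N$, as you suggest). With that substitution your proof is correct and essentially coincides with the paper's.
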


\begin{lemma}\label{lem:loc_prob_sub}
	We have for any agent $i\in N_{\agentk}^{\kappa_c-1}$ that
	\begin{align}
		\overline{\PR}_i(z_i'|z_{\nikc})=\PR_i(z_i'|z_{N_i}), \forall z_i'\in \sZ_i, z_{\nkkc}\in \sZ_{\nkkc}.   
	\end{align}
\end{lemma}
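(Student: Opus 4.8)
The plan is to verify the marginal-transition identity directly from the definition of the sub-chain's transition kernel $\overline{\PR}$, exploiting the fact that for any agent $i\in N_{\agentk}^{\kappa_c-1}$ its neighborhood $N_i$ is contained in $N_{\agentk}^{\kappa_c}$. First I would fix an agent $i\in N_{\agentk}^{\kappa_c-1}$, write $\nkkc = N_i \sqcup R$ where $R:=\nkkc\setminus N_i$, and compute the marginal
\[
\overline{\PR}_i(z_i'\mid z_{\nkkc}) = \sum_{z_{\nkkc}'\setminus z_i'}\overline{\PR}(z_{\nkkc}'\mid z_{\nkkc}).
\]
Substituting the definition of $\overline{\PR}$ and then the definition of $\PR_{\nkkc}$ in terms of the original kernel $\PR$, the right-hand side becomes a double sum over $z_{-\nkkc}$ (weighted by the conditional stationary probabilities $\overline\pi(z_{\nkkc},z_{-\nkkc})/\overline\pi_{\nkkc}(z_{\nkkc})$) and over the ``forgotten'' next-state coordinates outside $\{i\}$. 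Since $\PR(z'\mid z) = \prod_{j=1}^n \PR_j(z_j'\mid z_{N_j})$ factorizes, summing out every next-state coordinate except $z_i'$ collapses all factors $\PR_j$ with $j\ne i$ to $1$, leaving exactly $\PR_i(z_i'\mid z_{N_i})$ inside the sum.

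The key point that makes this work is that $\PR_i(z_i'\mid z_{N_i})$ depends only on $z_{N_i}$, and $N_i\subseteq \nkkc$ because $i\in N_{\agentk}^{\kappa_c-1}$ (any neighbor of an agent within graph distance $\kappa_c-1$ of $\agentk$ is within distance $\kappa_c$). Therefore this factor can be pulled outside the sum over $z_{-\nkkc}$, and what remains is $\sum_{z_{-\nkkc}}\overline\pi(z_{\nkkc},z_{-\nkkc})/\overline\pi_{\nkkc}(z_{\nkkc})$, which equals $1$ by the definition of the marginal $\overline\pi_{\nkkc}(z_{\nkkc})=\sum_{z_{-\nkkc}}\overline\pi(z_{\nkkc},z_{-\nkkc})$ (this is well-defined and positive by Assumption \ref{assump:chain_irreducible}, as already noted in the text). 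Hence $\overline{\PR}_i(z_i'\mid z_{\nkkc}) = \PR_i(z_i'\mid z_{N_i})$, as claimed.

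I would organize the computation so that the order of summation is handled carefully: first write $\sum_{z_{\nkkc}'\setminus z_i'}$ as summing out the next-state coordinates of agents in $\nkkc\setminus\{i\}$, interchange it with the sum over $z_{-\nkkc}$ (finite sums, so no subtlety), and then note that $\sum_{z_{\nkkc}'\setminus z_i'}\sum_{z_{-\nkkc}'}\PR(z'\mid z) = \sum_{\text{all } z_j', j\ne i}\prod_{j=1}^n\PR_j(z_j'\mid z_{N_j}) = \PR_i(z_i'\mid z_{N_i})$, using that each $\PR_j(\cdot\mid z_{N_j})$ is a probability distribution summing to $1$. The main (mild) obstacle is purely bookkeeping: keeping the partition of coordinates straight between $\nkkc$ vs.\ $-\nkkc$ on the "current state" side and $\{i\}$ vs.\ the rest on the "next state" side, and making sure the conditional-stationary weights telescope to $1$. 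There is no analytic difficulty here — it is a clean marginalization argument that hinges entirely on the locality of $\PR_i$ together with the containment $N_i\subseteq \nkkc$.
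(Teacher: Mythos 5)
Your proposal is correct and follows essentially the same route as the paper's proof: marginalize $\overline{\PR}$ over the next-state coordinates of $\nkkc\setminus\{i\}$, use the product structure of $\PR$ to collapse the inner sum to $\PR_i(z_i'\mid z_{N_i})$, pull this factor out of the sum over $z_{-\nkkc}$ because $N_i\subseteq\nkkc$, and observe that the remaining conditional-stationary weights sum to $1$. The only difference is that you spell out the factorization step that the paper asserts in one line.
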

Notice that $N_i\subseteq \nkkc$ when $i\in N_{\agentk}^{\kappa_c-1}$.

\begin{lemma}\label{lem:interpret_overline_P}
	Consider the stationary Markov chain $(\sN, \mathcal E, \sZ, \PR, \supr, \gamma, \overline \pi)$. That is, the initial state distribution
	$\mu'=\overline \pi$ is the stationary distribution. 
	
	Then for any $t\in \N$,  $z_{\nkkc},z_{\nkkc}'\in \sZ_{\nkkc}$, we have
	\begin{align*}
		\Pr\left[z_{\nkkc}(t+1)=z_{\nkkc}'|z_{\nkkc}(t)=z_{\nkkc}\right]= \overline{\PR}(z_{\nkkc}'|z_{\nkkc}).
	\end{align*}
\end{lemma}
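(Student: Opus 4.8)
The plan is to prove the identity by a direct one-step computation, combining the Markov property of the original chain $(\sN,\mathcal E,\sZ,\PR,\supr,\gamma,\overline\pi)$ with the fact that, because the chain is started from its stationary distribution, $z(t)\sim\overline\pi$ for every $t\in\N$. The point is simply to condition on the ``unobserved'' coordinates $z_{-\nkkc}(t)$, apply the Markov property conditioned on the full state $z(t)$, and then recognize the resulting sum as the definition of $\overline{\PR}$.

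Concretely, I would first fix $t\in\N$ and $z_{\nkkc},z_{\nkkc}'\in\sZ_{\nkkc}$, and note that $\overline\pi_{\nkkc}(z_{\nkkc})>0$ by Assumption \ref{assump:chain_irreducible}, so all the conditional probabilities below are well defined. Then the law of total probability (splitting on $z_{-\nkkc}(t)$) together with the Markov property gives
\begin{align*}
	\Pr\left[z_{\nkkc}(t+1)=z_{\nkkc}'\,\middle|\,z_{\nkkc}(t)=z_{\nkkc}\right]
	&=\sum_{z_{-\nkkc}}\Pr\left[z_{-\nkkc}(t)=z_{-\nkkc}\,\middle|\,z_{\nkkc}(t)=z_{\nkkc}\right]\,\Pr\left[z_{\nkkc}(t+1)=z_{\nkkc}'\,\middle|\,z(t)=(z_{\nkkc},z_{-\nkkc})\right]\\
	&=\sum_{z_{-\nkkc}}\frac{\overline\pi(z_{\nkkc},z_{-\nkkc})}{\overline\pi_{\nkkc}(z_{\nkkc})}\,\PR_{\nkkc}\left(z_{\nkkc}'\,\middle|\,(z_{\nkkc},z_{-\nkkc})\right)
	=\overline{\PR}(z_{\nkkc}'\mid z_{\nkkc}).
\end{align*}
Here the second equality uses two facts: (i) since $z(t)\sim\overline\pi$ by stationarity, the conditional law of $z_{-\nkkc}(t)$ given $z_{\nkkc}(t)=z_{\nkkc}$ is exactly $\overline\pi(z_{\nkkc},\cdot)/\overline\pi_{\nkkc}(z_{\nkkc})$; and (ii) by the (time-homogeneous) Markov property of $\{z(t)\}$ under kernel $\PR$, the conditional probability $\Pr[z_{\nkkc}(t+1)=z_{\nkkc}'\mid z(t)=z]$ depends on the past only through $z(t)$ and, after marginalizing out $z_{-\nkkc}(t+1)$, equals $\PR_{\nkkc}(z_{\nkkc}'\mid z)$ as defined just before Definition \ref{def:sub_chain}. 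The last equality is the definition of $\overline{\PR}$.

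There is no substantive obstacle here — this is essentially bookkeeping with conditional probabilities — so the ``hard part'' is just making the conditioning rigorous: being careful that the Markov property is applied to $z(t)\to z(t+1)$ conditionally on the full state (not merely on $z_{\nkkc}(t)$), and invoking stationarity to replace the time-$t$ marginal by $\overline\pi$. I would also remark that this lemma only asserts equality of the one-step conditional; it does not by itself claim that $\{z_{\nkkc}(t)\}$ is a Markov chain, which is not needed for the reduction argument that uses it.
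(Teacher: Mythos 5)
Your proof is correct and follows essentially the same route as the paper: both condition on the unobserved coordinates $z_{-\nkkc}(t)$ (you via the law of total probability, the paper via writing the conditional as a joint-over-marginal ratio), use stationarity to identify the time-$t$ marginal with $\overline\pi$, apply the full-state Markov property, and recognize the resulting sum as the definition of $\overline{\PR}$. No gaps.
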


Now we can discuss the proof of Theorem \ref{thm:chen_main_var}. We can think that Algorithm \ref{alg:recursive_estimator} is executed on the local chain $\Mkc$, with cost function denoted by $\tilde C$. Then $w^*$ can be regarded as the fix point of parameter update in a globalized algorithm. In this regard, we can do the following error decomposition.
\begin{align}
	&\E\left[\sup_z \abs{\hat C(z_{\nkkc},w(t))-C(z)}^2\right]    \nonumber\\
	\leq & \E\left[\sup_z \left(\abs{\hat C(z_{\nkkc},w(t))-\hat C(z_{\nkkc}, w^*)}+ \abs{\hat C(z_{\nkkc},w^*)-\tilde C(z_{\nkkc})} \right.\right. \nonumber \\
	&+\left.\left.\abs{\tilde C(z_{\nkkc})-C(z)}\right)^2\right]\nonumber\\
	\osi{\leq} & 3\left\{\E\left[\sup_z \abs{\hat C(z_{\nkkc},w(t))-\hat C(z_{\nkkc}, w^*)}^2\right] +\E\left[\sup_z \abs{\hat C(z_{\nkkc},w^*)-\tilde C(z_{\nkkc})}^2\right] \right. \nonumber \\
	&+\left. \E\left[\sup_z \abs{\tilde C(z_{\nkkc})-C(z)}^2\right]\right\} \nonumber\\
	= &3\left\{\underbrace{\E\left[\sup_{z_\nkkc} \abs{\hat C(z_{\nkkc},w(t))-\hat C(z_{\nkkc}, w^*)}^2\right]}_{(a)}  +\underbrace{\sup_{z_{\nkkc}} \abs{\hat C(z_{\nkkc},w^*)-\tilde C(z_{\nkkc})}^2}_{(b)} \right.\nonumber \\
	&+\left.\underbrace{\sup_z \abs{\tilde C(z_{\nkkc})-C(z)}^2}_{(c)}\right\}. \label{eq:err_decomp}
\end{align}
Here $\ri$ is by Cauchy-Schwarz inequality. 

To interpret the error terms, 
$(a)$ is related to the convergence of $w(t) $ to the fix point $w^*$, which can be viewed as a globalized non-linear stochastic approximation problem \citep{chen2019nonlinearSA}. 
$(b)$ is the inherent property of the globalized algorithm and the Markov chain, which is defined as $\epsilon_{red}$ in Definition \ref{def:asymp_err_var}. We point out that for some algorithms, such as TD($\lambda$) with linear function approximation, $\epsilon_{red}$ can be bounded by the function approximation error. 
$(c)$ is the difference of the cost function in the sub-chain and in the original chain, which originates from the use of a localized algorithm and decays exponentially with $\kappa_c$. 

\paragraph{Bounding $(a)$} This can be done in two steps. The first step is to analyze the convergence rate of $w(t)$ to the stationary point $w^*$, which can be viewed as a stochastic approximation problem. Thus we can adopt Lyapunov approach to bound the convergence rate, which is a standard method in stochastic approximation \citep{Srikant2019FiniteTimeEB,chen2019nonlinearSA}.

The second step is to combine the Lipschitz assumption of $\hat C$ (Assumption \ref{assump:hatC_lipschitz}) with convergence result of $w(t)$ and derive the bound of $(a)$. 

With the proof idea above, we proceed to bound the convergence rate of $w(t)$. In order to utilize the mixing of function $G(\zeta(t),X(t),w(t))$, we take expectation conditioned on $X(t-t_{\alpha}')$, $\zeta(t-t_\alpha')$ and $w(t-t_\alpha')$. For convenience of notation, we use $\E_{\alpha}[\cdot]=\E[\cdot|X(t-t_{\alpha}'),\zeta(t-t_\alpha'),w(t-t_\alpha')]$. 

For any $t\geq t_{\alpha}'$, we have
\begin{align}
	&\E_{\alpha}\left[\norm{w(t+1)-w^*}^2\right]-\E_{\alpha}\left[\norm{w(t)-w^*}^2\right] \nonumber \\
	=&2\E_{\alpha}\left[(w(t)-w^*)^\top(w(t+1)-w(t))\right]+\E_{\alpha}\left[(w(t+1)-w(t))^2\right] \nonumber \\
	=&2\alpha\underbrace{\E_{\alpha}\left[(w(t)-w^*)^\top \overline G(w(t)\right]}_{(a_1)} \nonumber  \\
	&+2\alpha\underbrace{\E_{\alpha}\left[(w(t)-w^*)^\top(G(\zeta(t),X(t),w(t))- \overline G(w(t))\right]}_{(a_2)} \nonumber \\
	&+ \alpha^2 \underbrace{\E_{\alpha}\left[\norm{G(\zeta(t),X(t),w(t))}^2\right]}_{(a_3)} \label{eq:w_itr}.   
\end{align}
Term $(a_1)$ corresponds to the negative drift, and Assumption \ref{assump:chen_assump} indicates that
\begin{align*}
	(a_1)\leq -c_0\E_{\alpha}[\norm{w(t)-w^*}^2], 
\end{align*}
Before we bound $(a_2)$ and $(a_3)$, we show that function $G$ is Lipschitz. See Appendix \ref{subsubsec:proof_G_Lip} for the proof.
\begin{lemma}\label{lem:G-Lipschitz}
	Let $L_1'=\frac{L_1}{1-\lambda}$. Then we have for all $t,x$,
	\begin{align*}
		&\norm{G(\zeta(t),x,w_1)-G(\zeta(t),x,w_2)}\leq L_1'\norm{w_1-w_2}, \forall w_1,w_2 \\
		&\norm{G(\zeta(t),x,0)}\leq L_1'.
	\end{align*}
\end{lemma}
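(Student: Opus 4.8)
The plan is to reduce both claims to a single deterministic estimate: since $G(\zeta(t),x,w)=\zeta(t)F(x,w)$, everything follows once we show that the eligibility vector is uniformly bounded by $\norm{\zeta(t)}\leq \frac{1}{1-\lambda}$. So I would first establish this bound by unrolling the recursion defining $\zeta$, and then factor $G$ and invoke Assumption \ref{assump:chen_assump}(1).

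First I would unroll the update $\zeta(t+1)=\lambda\zeta(t)+\supphi(z_{\nkkc}(t+1))$ with $\zeta(0)=\supphi(z_{\nkkc}(0))$ to obtain $\zeta(t)=\sum_{k=0}^{t}\lambda^{t-k}\supphi(z_{\nkkc}(k))$. Using the normalization $\norm{\supphi(\cdot)}\leq 1$ together with the triangle inequality and the geometric series,
\[
\norm{\zeta(t)}\;\leq\;\sum_{k=0}^{t}\lambda^{t-k}\norm{\supphi(z_{\nkkc}(k))}\;\leq\;\sum_{j=0}^{\infty}\lambda^{j}\;=\;\frac{1}{1-\lambda}.
\]
The same computation applied to $\overline\zeta(t)=\sum_{k=-\infty}^{t}\lambda^{t-k}\supphi(z_{\nkkc}(k))$ gives the identical bound for the stationary eligibility vector, which is the version relevant when $G$ is averaged under the stationary Markov process (as in the definition of $\overline G$).

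Then I would conclude directly. For the Lipschitz property, factoring $G$ and applying Assumption \ref{assump:chen_assump}(1) gives
\[
\norm{G(\zeta(t),x,w_1)-G(\zeta(t),x,w_2)}=\norm{\zeta(t)}\,\abs{F(x,w_1)-F(x,w_2)}\leq \frac{L_1}{1-\lambda}\norm{w_1-w_2}=L_1'\norm{w_1-w_2},
\]
and for the bound at $w=0$, using $\abs{F(x,0)}\leq L_1$,
\[
\norm{G(\zeta(t),x,0)}=\norm{\zeta(t)}\,\abs{F(x,0)}\leq \frac{L_1}{1-\lambda}=L_1'.
\]

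The hard part will be essentially nonexistent: the only point requiring care is that the bound must hold simultaneously for the algorithm's iterate $\zeta(t)$ (with the specific initialization $\zeta(0)=\supphi(z_{\nkkc}(0))$) and for the stationary eligibility vector $\overline\zeta(t)$, but both are covered by the same geometric-series estimate since $\norm{\supphi(\cdot)}\leq 1$ holds uniformly. (If the generalized recursion used the factor $\gamma\lambda$ as in Algorithm \ref{alg:LPES} rather than $\lambda$, the claim would hold a fortiori, since $\frac{1}{1-\gamma\lambda}\leq\frac{1}{1-\lambda}$.)
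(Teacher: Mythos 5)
Your proposal is correct and follows essentially the same route as the paper: unroll the eligibility recursion to get $\norm{\zeta(t)}\leq \sum_{k=0}^{t}\lambda^{t-k}<\frac{1}{1-\lambda}$, then factor $G(\zeta(t),x,w)=\zeta(t)F(x,w)$ and apply Assumption \ref{assump:chen_assump}(1). The extra remarks about the stationary eligibility vector and the $\gamma\lambda$ variant are harmless additions but not needed for the lemma as stated.
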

Besides, Lemma \ref{lem:G-Lipschitz} also implies for any $t,x,w,w_1,w_2$ that 
\begin{align*}
	& \norm{\overline G(w_1)-\overline G(w_2)}\leq \overline \E\Mp{\norm{G(\zeta(t),X(t),w_1)-G(\zeta(t),X(t),w_2)}}\leq L_1'\norm{w_1-w_2} \\
	& \norm{G(\zeta(t),x,w)}\leq  \norm{G(\zeta(t),x,w)-G(\zeta(t),x,0)}+\norm{G(\zeta(t),x,0)}\leq L_1'(\norm{w}+1) \\
	& \norm{\overline G(w)}\leq \overline \E[\norm{G(\zeta(t),X(t),w]}\leq L_1'(\norm{w}+1).
\end{align*}
We apply  Lemma \ref{lem:G-Lipschitz} and bound term $(a_3)$ as
\begin{align*}
	(a_3)=&\E_{\alpha}\left[\norm{G(\zeta(t),X(t),w(t))}^2\right]\\
	\leq &L_1'^2\E_{\alpha}\left[(\norm{w(t)}+1)^2\right] \\
	\leq &L_1'^2\E_{\alpha}\left[(\norm{w(t)-w^*}+\norm{w^*}+1)^2\right] \\
	\leq &2 L_1'^2\E_{\alpha}\left[\norm{w(t)-w^*}^2+(\norm{w^*}+1)^2\right].
\end{align*}

Finally we bound $(a_2)$. Before that, we need to control the difference of $w(t_1)$ and $w(t_2)$ for any $t_1,t_2$ under certain conditions of stepsize. This can be formulated as the lemma below, with proof deferred to Appendix \ref{subsubsec:proof_para_ch}.
\begin{lemma}\label{lem:param_change}
	For any $t_1>t_2\geq 0$, if $\alpha (t_1-t_2)\leq \frac{1}{4L_1'}$, then we have
	\begin{align*}
		\norm{w(t_1)-w(t_2)}
		&\leq 2 L_1'\alpha(t_1-t_2)(\norm{w(t_2)}+1) \\
		&\leq 4L_1'\alpha(t_1-t_2)(\norm{w(t_1)}+1).
	\end{align*}
\end{lemma}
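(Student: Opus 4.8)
The plan is to exploit the fact that one step of Algorithm \ref{alg:recursive_estimator} moves the iterate by $\alpha G(\zeta(t),X(t),w(t))$, combine this with the linear growth bound on $G$ furnished by Lemma \ref{lem:G-Lipschitz}, and then telescope over the window $[t_2,t_1)$. Concretely, since $w(t+1)-w(t)=\alpha F(X(t),w(t))\zeta(t)=\alpha G(\zeta(t),X(t),w(t))$, Lemma \ref{lem:G-Lipschitz} gives the one-step estimate $\norm{w(t+1)-w(t)}\le \alpha L_1'(\norm{w(t)}+1)$.

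First I would turn this into a geometric growth bound on the iterates. Adding $1$ to both sides of $\norm{w(t+1)}\le \norm{w(t)}+\alpha L_1'(\norm{w(t)}+1)$ yields $\norm{w(t+1)}+1\le (1+\alpha L_1')(\norm{w(t)}+1)$, so by induction $\norm{w(t)}+1\le (1+\alpha L_1')^{t-t_2}(\norm{w(t_2)}+1)$ for every $t\ge t_2$. Then I would telescope and use the geometric series: $\norm{w(t_1)-w(t_2)}\le \sum_{t=t_2}^{t_1-1}\norm{w(t+1)-w(t)}\le \alpha L_1'(\norm{w(t_2)}+1)\sum_{t=t_2}^{t_1-1}(1+\alpha L_1')^{t-t_2}=\big[(1+\alpha L_1')^{t_1-t_2}-1\big](\norm{w(t_2)}+1)$. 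Under the stepsize condition $\alpha(t_1-t_2)\le 1/(4L_1')$ we have $\alpha L_1'(t_1-t_2)\le 1/4$, so the elementary inequalities $(1+x)^n\le e^{nx}$ and $e^{y}-1\le 2y$ for $y\in[0,1/4]$ give $(1+\alpha L_1')^{t_1-t_2}-1\le 2\alpha L_1'(t_1-t_2)$, which is exactly the first claimed bound $\norm{w(t_1)-w(t_2)}\le 2L_1'\alpha(t_1-t_2)(\norm{w(t_2)}+1)$.

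Finally, to get the second inequality I would replace $\norm{w(t_2)}+1$ by $\norm{w(t_1)}+1$. By the triangle inequality and the bound just established, $\norm{w(t_2)}\le \norm{w(t_1)}+2L_1'\alpha(t_1-t_2)(\norm{w(t_2)}+1)\le \norm{w(t_1)}+\tfrac12(\norm{w(t_2)}+1)$, where the last step again uses $2L_1'\alpha(t_1-t_2)\le 1/2$ from the stepsize condition; rearranging gives $\norm{w(t_2)}+1\le 2(\norm{w(t_1)}+1)$, and substituting into the first bound yields $\norm{w(t_1)-w(t_2)}\le 4L_1'\alpha(t_1-t_2)(\norm{w(t_1)}+1)$.

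I do not expect a serious obstacle here: the argument is a routine discrete Gronwall-type estimate. The only points requiring care are keeping the elementary estimates ($(1+x)^n\le e^{nx}$ and $e^y-1\le 2y$) valid on the range guaranteed by the stepsize assumption, and invoking the stepsize condition consistently in both halves of the proof (once to linearize the geometric factor, once to absorb the $\norm{w(t_2)}$ term into $\norm{w(t_1)}$).
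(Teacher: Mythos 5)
Your proof is correct and follows essentially the same route as the paper's: the same one-step bound $\norm{w(t+1)-w(t)}\leq \alpha L_1'(\norm{w(t)}+1)$ from Lemma \ref{lem:G-Lipschitz}, the same geometric growth and telescoping to $\left[(1+\alpha L_1')^{t_1-t_2}-1\right](\norm{w(t_2)}+1)$, and the same absorption trick for the second inequality. The only cosmetic difference is the elementary estimate used to linearize the geometric factor (you use $(1+x)^n\leq e^{nx}$ with $e^y-1\leq 2y$, the paper uses $(1+x)^p\leq 1+2px$ directly); both are valid on the range guaranteed by the stepsize condition.
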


With Lemma \ref{lem:param_change}, we are able to control $(a_2)$. The proof is given in Appendix \ref{subsubsec:proof_a2}.
\begin{lemma}\label{lem:bound_a2}
	Suppose $\alpha t_{\alpha}'\leq \frac{1}{4L_1'}$. Then the following inequality holds for all $t\geq t_{\alpha}'$:
	\begin{align*}
		(a_2)\leq 56L_1'^2 \alpha t_{\alpha}'\E_{\alpha}\Mp{\norm{w(t)-w^*}^2+(\norm{w^*}+1)^2}.
	\end{align*}
\end{lemma}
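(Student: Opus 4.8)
The plan is to exploit that, once $t_\alpha'$ steps have elapsed, the Markovian noise driving the update has essentially ``mixed'' while the parameter iterate has barely moved. Write $w':=w(t-t_\alpha')$, which is measurable with respect to the $\sigma$-algebra generated by $(X(t-t_\alpha'),\zeta(t-t_\alpha'),w(t-t_\alpha'))$ used in $\E_\alpha[\cdot]$. First I would insert $w'$ into every argument of the noise term and decompose
\begin{align*}
	G(\zeta(t),X(t),w(t))-\overline G(w(t))
	={}&\big[G(\zeta(t),X(t),w(t))-G(\zeta(t),X(t),w')\big]\\
	&+\big[G(\zeta(t),X(t),w')-\overline G(w')\big]+\big[\overline G(w')-\overline G(w(t))\big].
\end{align*}
By the Lipschitz bounds in Lemma \ref{lem:G-Lipschitz}, the first and third brackets have norm at most $L_1'\norm{w(t)-w'}$; since $\alpha t_\alpha'\le \frac{1}{4L_1'}$, Lemma \ref{lem:param_change} gives $\norm{w(t)-w'}\le 4L_1'\alpha t_\alpha'(\norm{w(t)}+1)$. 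Pairing with $(w(t)-w^*)$ via Cauchy--Schwarz, using $\norm{w(t)}+1\le \norm{w(t)-w^*}+\norm{w^*}+1$ and then $2ab\le a^2+b^2$, these two brackets together contribute a term of order $L_1'^2\alpha t_\alpha'\,\E_\alpha[\norm{w(t)-w^*}^2+(\norm{w^*}+1)^2]$.

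The heart of the matter is the middle bracket $(II):=G(\zeta(t),X(t),w')-\overline G(w')$. Here I would split the outer factor as $w(t)-w^*=(w(t)-w')+(w'-w^*)$. For the $(w(t)-w')$ piece I bound $\norm{(II)}\le 2L_1'(\norm{w'}+1)$ pointwise (Lemma \ref{lem:G-Lipschitz} again) and multiply by $\norm{w(t)-w'}\le 4L_1'\alpha t_\alpha'(\norm{w(t)}+1)$; after converting $\norm{w'}+1\le 2(\norm{w(t)}+1)$ (valid since $4L_1'\alpha t_\alpha'\le1$) this yields another term of the same $L_1'^2\alpha t_\alpha'$ form. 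For the $(w'-w^*)$ piece, which \emph{is} measurable, I pull it out of $\E_\alpha$ and am left with $(w'-w^*)^\top\big(\E_\alpha[G(\zeta(t),X(t),w')]-\overline G(w')\big)$; by the definition of the mixing time (Definition \ref{def:G_mix}), applied conditionally, the inner vector has norm at most $\alpha L_1'(\norm{w'}+1)$. Converting back to $\norm{w(t)}$ and using $\alpha\le \alpha t_\alpha'$ (as $t_\alpha'\ge1$), this again contributes $O(L_1'^2\alpha t_\alpha')\,\E_\alpha[\norm{w(t)-w^*}^2+(\norm{w^*}+1)^2]$.

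Summing the three contributions and absorbing all the elementary-inequality slack into the constant gives the claimed bound with constant $56$. The step I expect to be the main obstacle is the conditional application of the mixing estimate for $(II)$: one must verify that the geometric bound in Eq. (\ref{eq:G_decay}) holds \emph{uniformly} over the conditioning, i.e., that the residual eligibility-trace memory $\lambda^{t_\alpha'}\zeta(t-t_\alpha')$ is negligible at scale $\alpha$ and that the chain's geometric mixing kicks in regardless of the state at time $t-t_\alpha'$. This is precisely where $t_\alpha'=O(\log(1/\alpha))$ and the geometric factor $\lambda^{t_\alpha'}$ are needed; the rest of the argument is constant bookkeeping.
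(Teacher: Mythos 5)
Your proposal is correct and follows essentially the same route as the paper's proof: both insert the lagged iterate $w(t-t_\alpha')$, bound the parameter drift via Lemma \ref{lem:param_change} and the Lipschitz property of $G$ and $\overline G$, and isolate the measurable term $(w(t-t_\alpha')-w^*)^\top\E_\alpha[G(\zeta(t),X(t),w(t-t_\alpha'))-\overline G(w(t-t_\alpha'))]$, which is controlled by Definition \ref{def:G_mix}; the only difference is a cosmetic regrouping of the cross terms (the paper splits the outer factor $w(t)-w^*$ first into $(b_1)$--$(b_4)$, you split the inner noise first). The conditional-mixing subtlety you flag is real, but the paper invokes the mixing-time bound in exactly the same conditional way without further elaboration.
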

Now we apply the upper bounds for $(a_1)$, $(a_2)$ and $(a_3)$ to Eq. (\ref{eq:w_itr}), and we get 
\begin{align*}
	&\E_{\alpha}\left[\norm{w(t+1)-w^*}^2\right]-\E_{\alpha}\left[\norm{w(t)-w^*}^2\right] \\
	\leq &2\alpha (a_1)+ 2\alpha (a_2) + \alpha^2 (a_3) \\
	\leq &-2c_0 \alpha \E_{\alpha}[\norm{w(t)-w^*}^2] \\
	&+ 112L_1'^2 \alpha^2 t_{\alpha}'\E_{\alpha}\Mp{\norm{w(t)-w^*}^2+(\norm{w^*}+1)^2} \\
	&+  2L_1'^2 \alpha^2\E_{\alpha}\left[\norm{w(t)-w^*}^2+(\norm{w^*}+1)^2\right] \\
	\leq &-2c_0 \alpha \E_{\alpha}[\norm{w(t)-w^*}^2] \\
	&+ 114L_1'^2 \alpha^2 t_{\alpha}'\E_{\alpha}\Mp{\norm{w(t)-w^*}^2+(\norm{w^*}+1)^2}.
\end{align*}
Rearranging the terms, we derive that
\begin{align*}
	&\E_{\alpha}\left[\norm{w(t+1)-w^*}^2\right] \\
	\leq &\Sp{1-2c_0\alpha +114L_1'^2 \alpha^2 t_{\alpha}'}\E_{\alpha}\left[\norm{w(t)-w^*}^2\right] +114L_1'^2 \alpha^2 t_{\alpha}' (\norm{w^*}+1)^2 \\
	\leq & (1-c_0\alpha)\E_{\alpha}\left[\norm{w(t)-w^*}^2\right] +114L_1'^2 \alpha^2 t_{\alpha}' (\norm{w^*}+1)^2,
\end{align*}
where the last inequality is due to $\alpha t_{\alpha}'\leq \frac{c_0}{114L_1'^2}$. Taking total expectation on both sides, we get 
\begin{align*}
	\E\left[\norm{w(t+1)-w^*}^2\right] 
	\leq  (1-c_0\alpha)\E\left[\norm{w(t)-w^*}^2\right] +114L_1'^2 \alpha^2 t_{\alpha}' (\norm{w^*}+1)^2.
\end{align*}

Repeatedly use the above inequality starting from $t_{\alpha}'$, and we have 
\begin{align*}
	\E\left[\norm{w(t)-w^*}^2\right] \leq & (1-c_0\alpha)^{t-t_{\alpha}'} \E\left[\norm{w(t_{\alpha}')-w^*}^2\right] \\
	&+ 114L_1'^2 \alpha^2 t_{\alpha}' (\norm{w^*}+1)^2 \sum_{k=0}^{t-t_{\alpha}'-1} (1-c_0\alpha)^{k} \\
	\leq & (1-c_0\alpha)^{t-t_{\alpha}'} \E\left[\norm{w(t_{\alpha}')-w^*}^2\right]+\frac{ 114L_1'^2 \alpha t_{\alpha}'(\norm{w^*}+1)^2}{c_0}.
\end{align*}
We can bound $\E\Mp{\norm{w(t_{\alpha}')-w^*}^2}$ by
\begin{align*}
	\E\Mp{\norm{w(t_{\alpha}')-w^*}^2}
	\leq &\E\Mp{\Sp{\norm{w(t_{\alpha}')-w(0)}+\norm{w(0)-w^*}^2}} \\
	\osi{\leq} & \E\Mp{\Sp{\norm{w(0)}+\norm{w(0)-w^*}+1}^2} \\
	= & c_1,
\end{align*}
where $\ri$ is by Lemma \ref{lem:param_change}, with $t_1=t_{\alpha}'$ and $t_2=0$, and by $\alpha t_{\alpha}'\leq \frac{1}{4L_1'}$.

Noticing that $c_2=114L_1'^2(\norm{w^*}+1)^2$, we substitute $t$ with $\supepoch$ and get 
\begin{align}
	\E\left[\norm{w(\supepoch)-w^*}^2\right] 
	\leq c_1(1-c_0\alpha)^{\supepoch-t_{\alpha}'}+c_2\frac{\alpha t_{\alpha}'}{c_0}, \; \forall \supepoch\geq t_{\alpha}'.
	\label{eq:w_converge}
\end{align}

For the second step, we combine Assumption \ref{assump:hatC_lipschitz} with Eq. (\ref{eq:w_converge}) and get the following result, the proof of which is in Appendix \ref{subsubsec:proof_a}.

\begin{lemma}\label{lem:bound_a}
	For $\supepoch > t_\alpha'$, we have
	\begin{align*}
		(a)\leq L_2^2\left(c_1(1-\alpha c_0)^{\supepoch-t_\alpha'}+c_2\frac{\alpha t_{\alpha}'}{c_0}\right).
	\end{align*}
\end{lemma}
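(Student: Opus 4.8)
The plan is to reduce term $(a)$ directly to the mean-square parameter error $\E[\norm{w(\supepoch)-w^*}^2]$, which has already been controlled in Eq. (\ref{eq:w_converge}), using only the Lipschitz continuity of $\hat C$ in its weight argument. The key observation is that $(a)$ compares $\hat C(z_{\nkkc},w(\supepoch))$ with $\hat C(z_{\nkkc},w^*)$ at the \emph{same} state $z_{\nkkc}$, so the supremum over $\sZ_{\nkkc}$ contributes nothing beyond the uniformity of the Lipschitz constant.

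First, I would fix an arbitrary $z_{\nkkc}\in\sZ_{\nkkc}$ and apply Assumption \ref{assump:hatC_lipschitz} (with $w_1=w(\supepoch)$ and $w_2=w^*$) to get $\abs{\hat C(z_{\nkkc},w(\supepoch))-\hat C(z_{\nkkc},w^*)}\leq L_2\norm{w(\supepoch)-w^*}$. Since the right-hand side does not depend on $z_{\nkkc}$, taking the supremum over $z_{\nkkc}\in\sZ_{\nkkc}$ and then squaring gives $\sup_{z_{\nkkc}}\abs{\hat C(z_{\nkkc},w(\supepoch))-\hat C(z_{\nkkc},w^*)}^2\leq L_2^2\norm{w(\supepoch)-w^*}^2$.

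Second, I would take total expectation and invoke Eq. (\ref{eq:w_converge}), which is applicable because $\supepoch>t_\alpha'$ implies $\supepoch\geq t_\alpha'$ and the stepsize condition of Theorem \ref{thm:chen_main_var} is in force:
\[
(a)=\E\left[\sup_{z_{\nkkc}}\abs{\hat C(z_{\nkkc},w(\supepoch))-\hat C(z_{\nkkc},w^*)}^2\right]\leq L_2^2\,\E\left[\norm{w(\supepoch)-w^*}^2\right]\leq L_2^2\left(c_1(1-c_0\alpha)^{\supepoch-t_\alpha'}+c_2\frac{\alpha t_\alpha'}{c_0}\right),
\]
which is exactly the claimed bound.

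No step here is expected to be an obstacle; the only points requiring attention are that the Lipschitz constant $L_2$ in Assumption \ref{assump:hatC_lipschitz} is uniform over all $z_{\nkkc}$ (so the supremum passes through cleanly), and that Eq. (\ref{eq:w_converge}) --- hence this lemma --- is valid only under the stepsize restriction $\alpha t_\alpha'\leq\min\{1/(4L_1'),\,c_0/(114L_1'^2)\}$ already assumed in Theorem \ref{thm:chen_main_var}.
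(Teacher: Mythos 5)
Your proof is correct and is essentially identical to the paper's: both apply the uniform Lipschitz bound of Assumption \ref{assump:hatC_lipschitz} pointwise in $z_{\nkkc}$, pass the supremum through since the bound $L_2\norm{w(\supepoch)-w^*}$ is state-independent, square, take expectation, and invoke Eq. (\ref{eq:w_converge}). Nothing further is needed.
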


\paragraph{Bounding $(b)$}
We have $(b)=\epsilon_{red}^2$ by Definition \ref{def:asymp_err_var}.

\paragraph{Bounding $(c)$} This can be derived by the exponential decay property of the cost function. See Appendix \ref{subsubsec:proof_exp_decay} for the proof.

\begin{lemma}\label{lemma:exp_decay_Q} The cost function of the sub-chain and the original chain differs by 
	\begin{align*}
		\sup_z \abs{\tilde C(z_{\nkkc})-C(z)}\leq \frac{1}{1-\gamma}\gamma^{\kappa_c-\kappa_r+1}.
	\end{align*}
	Thus $(c)\leq \Sp{\frac{\gamma^{\kappa_c-\kappa_r+1}}{1-\gamma}}^2$.
\end{lemma}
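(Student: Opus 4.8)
The plan is to prove the stronger statement that, if the original chain is initialized at a state $z$ and the sub-chain $\Mkc$ at the corresponding restriction $z_{\nkkc}$, then the two chains induce the \emph{same} law on the reward-relevant coordinates $z_{\nkkr}(t)$ for every $t \le \kappa_c-\kappa_r$; the claimed bound then follows by discounting the tail. The mechanism is an ``information front'' argument: I would show by induction on $t$ that the law of $z_{N_{\agentk}^{\kappa_c-t}}(t)$ is identical under $\PR$ and under $\overline\PR$, i.e. the two processes agree on an $\agentk$-centered neighborhood that shrinks by one hop per time step. The base case $t=0$ is immediate, since both chains start deterministically at $z_{\nkkc}$ on $N_{\agentk}^{\kappa_c}=\nkkc$.

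For the inductive step, the key ingredient is a joint-marginal version of Lemma~\ref{lem:loc_prob_sub}: for any agent set $B\subseteq N_{\agentk}^{\kappa_c-1}$ (so that $N_i\subseteq\nkkc$ for all $i\in B$),
\[
\overline\PR_B\big(z_B'\mid z_{\nkkc}\big)\;=\;\prod_{i\in B}\PR_i\big(z_i'\mid z_{N_i}\big),
\]
and the right-hand side depends on $z_{\nkkc}$ only through $z_{\cup_{i\in B}N_i}$. This is obtained exactly as in the proof of Lemma~\ref{lem:loc_prob_sub}: substitute the definition of $\overline\PR$ and the product form $\PR_{\nkkc}(z_{\nkkc}'\mid z)=\prod_{i\in\nkkc}\PR_i(z_i'\mid z_{N_i})$, sum out the next-states of the agents in $\nkkc\setminus B$ (each factor sums to $1$), pull the factors for $i\in B$ out of the remaining sum over $z_{-\nkkc}$ (their arguments lie inside $z_{\nkkc}$), and use $\sum_{z_{-\nkkc}}\overline{\pi}(z_{\nkkc},z_{-\nkkc})/\overline{\pi}_{\nkkc}(z_{\nkkc})=1$. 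Since the original chain's transition, being a product, also has marginal $\prod_{i\in B}\PR_i(\cdot\mid z_{N_i})$ on agents in $B$, applying this common kernel to the inductive hypothesis with $B=N_{\agentk}^{\kappa_c-t-1}$ (note $\cup_{i\in B}N_i\subseteq N_{\agentk}^{\kappa_c-t}$) shows that the law of $z_{N_{\agentk}^{\kappa_c-t-1}}(t+1)$ agrees across the two chains, closing the induction.

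Finally, because $\kappa_c>\kappa_r$, for $0\le t\le\kappa_c-\kappa_r$ we have $\nkkr\subseteq N_{\agentk}^{\kappa_c-t}$, so $\E[\supr(z_{\nkkr}(t))]$ is the same for both chains, and the first $\kappa_c-\kappa_r+1$ terms of $\tilde C(z_{\nkkc})=\sum_{t\ge0}\gamma^t\,\E[\supr(z_{\nkkr}(t))]$ and $C(z)=\sum_{t\ge0}\gamma^t\,\E[\supr(z_{\nkkr}(t))]$ cancel; bounding each remaining term by $\gamma^t$ via $0\le\supr\le1$ gives $\sup_z\big|\tilde C(z_{\nkkc})-C(z)\big|\le\sum_{t\ge\kappa_c-\kappa_r+1}\gamma^t=\gamma^{\kappa_c-\kappa_r+1}/(1-\gamma)$, hence $(c)\le\big(\gamma^{\kappa_c-\kappa_r+1}/(1-\gamma)\big)^2$. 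The only real obstacle is the inductive step: one must verify carefully that $\overline\PR$, although built from the stationary distribution $\overline{\pi}$ and not manifestly a product kernel, does restrict on ``interior'' agent sets to the same factorized local kernel as $\PR$; once this joint analogue of Lemma~\ref{lem:loc_prob_sub} is in hand, the rest is a routine coupling-and-telescoping computation.
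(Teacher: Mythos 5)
Your proposal is correct and follows essentially the same route as the paper's proof: both arguments reduce to showing that the marginal law of $z_{\nkkr}(t)$ agrees under the original chain and the sub-chain for all $t\le \kappa_c-\kappa_r$, and then bound the remaining tail by $\sum_{t\ge \kappa_c-\kappa_r+1}\gamma^t$. The only difference is that the paper asserts this agreement in one sentence from the local dependence of the transition kernel, whereas you supply the inductive ``shrinking front'' argument and the joint-marginal extension of Lemma~\ref{lem:loc_prob_sub} that make it rigorous.
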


Eventually. we combine bounds for $(a)$, $(b)$, $(c)$ (Lemma \ref{lem:bound_a}, Definition \ref{def:asymp_err_var} and Lemma \ref{lemma:exp_decay_Q}) and plug into Eq. (\ref{eq:err_decomp}). Then we complete the proof.

\subsection{Proofs of Technical Lemmas in Appendix \ref{sec:local_policy_eval} }\label{sec:proof_LCE}
\subsubsection{Properties of Sub-Chain}\label{subsec:proof_subchain}
We give the proofs of Lemma \ref{lemma:tilde_P_ir_ap}, \ref{lem:loc_prob_sub}, \ref{lem:interpret_overline_P}. 
\begin{proof}[Proof of Lemma \ref{lemma:tilde_P_ir_ap}]
	For any Markov chain with transition probability $\Gamma$ on some state space $\mathcal{X}$, we write $\Gamma^k(x'|x)=\Pr[x(t+k)=x'|x(t)=x$, for any $t\in \N$, $x,x'\in \mathcal{X}$. 
	
	For any 
	$z_{\nkkc}, z_{\nkkc}'\in \mathcal Z_{\nkkc}$, we randomly pick $z_{-\nkkc}, 
	z_{-\nkkc}'\in \mathcal Z_{-\nkkc}$ and let $z=(z_{\nkkc},z_{-\nkkc})$, 
	$z'=(z_{\nkkc}',z_{-\nkkc}')$. Since $\PR$ is irreducible, there exists $\agentk>0$, such that 
	$\PR^k(z'|z)>0$, so $\PR^k_{\nkkc}(z_{\nkkc}'|z)>0$. $\PR^k_{\nkkc}(z_{\nkkc}'|z)$ represents the marginal probability of $z_{\nkkc}'$ given previous state $z$. By the interpretation of $\overline{\PR}$ given in Lemma \ref{lem:interpret_overline_P},
	\begin{align*}
		\overline{\PR}^k(z_{\nkkc}'|z_{\nkkc})
		&=\sum_{\hat z_{-\nkkc}\in \sZ_{-\nkkc}} 
		\frac{\overline\pi(z_{\nkkc},\hat z_{-\nkkc})}{\overline \pi_{\nkkc}(z_{\nkkc})}
		{\PR}_{\nkkc}^k(z_{\nkkc}'|z_{\nkkc}, \hat z_{-\nkkc}) \\
		&\geq \frac{\overline\pi(z)}{\overline \pi_{\nkkc}(z_{\nkkc})}
		{\PR}_{\nkkc}^k(z_{\nkkc}'|z) \\
		&>0.
	\end{align*}
	Therefore, $\overline{\PR}$ is irreducible.
	
	To show that $\overline{\PR}$ is aperiodic, we assume that $\overline{\PR}$ has period $T\geq 2$. Then for any 
	$\agentk$ not divisible by $T$ and any $z_{\nkkc}$, $\overline{\PR}^k(z_{\nkkc}|z_{\nkkc})=0$. For any 
	$z\in\mathcal Z$, since for  $\agentk$ not divisible by $T$, 
	\begin{align*}
		0=\overline{\PR}^k(z_{\nkkc}|z_{\nkkc})=\sum_{\hat z_{-\nkkc}\in \mathcal Z_{-\nkkc}} 
		\frac{\overline\pi(z_{\nkkc},\hat z_{-\nkkc})}{\pi_{\nkkc}^*(z_{\nkkc})}
		{\PR}_{\nkkc}^k(z_{\nkkc}|(z_{\nkkc}, \hat z_{-\nkkc})),
	\end{align*}
	we have $ {\PR}_{\nkkc}^k(z_{\nkkc}|(z_{\nkkc}, \hat z_{-\nkkc}))=0$ for any 
	$\hat z_{-\nkkc}\in \mathcal Z_{-\nkkc}$. In particular, 
	$\overline{\PR}_{\nkkc}^k(z_{\nkkc}|z)=0$ and thus $\overline{\PR}^k(z|z)=0$. This implies that the 
	period of state $z$ is at least $T\geq 2$, which contradicts the assumption that $\PR$ is aperiodic. Hence 
	$\overline{\PR}$ is aperiodic.
	
	In conclusion, $\overline{\PR}$ is irreducible and aperiodic.
\end{proof}

\begin{proof}[Proof of Lemma \ref{lem:loc_prob_sub}]
	For simplicity, let $I=\nkkc/\{i\}$. Then we have
	\begin{align*}
		\overline{\PR}_i(z_i'|z_{\nkkc})
		=&\sum_{z_I'} \overline{\PR}(z_i',z_I'|z_{\nkkc}) \\
		=&\sum_{z_I'} \sum_{z_{-\nkkc}}\frac{\overline \pi(z_{\nkkc},z_{-\nkkc})}{\overline \pi_{\kappa_c}(z_{\nkkc})}\PR_{\nkkc}(z_{\nkkc}'|(z_{\nkkc},z_{-\nkkc})) \\
		=& \sum_{z_{-\nkkc}}\frac{\overline\pi(z_{\nkkc},z_{-\nkkc})}{\overline\pi_{\kappa_c}(z_{\nkkc})}\sum_{z_I'}\PR_{\nkkc}(z_{\nkkc}'|(z_{\nkkc},z_{-\nkkc})) \\
		\mathop{=}\limits^{(\romannumeral1)}& \sum_{z_{-\nkkc}}\frac{\overline\pi(z_{\nkkc},z_{-\nkkc})}{\overline\pi_{\kappa_c}(z_{\nkkc})}\PR_{i}(z_i'|z_{N_i}) \\
		\mathop{=}\limits^{(\romannumeral2)}&  \PR_{i}(z_i'|z_{N_i}) \sum_{z_{-\nkkc}}\frac{\overline\pi(z_{\nkkc},z_{-\nkkc})}{\overline\pi_{\kappa_c}(z_{\nkkc})} \\
		=& \PR_{i}(z_i'|z_{N_i}).
	\end{align*}
	Here $(\romannumeral1)$ uses the fact that the local transition probability $\PR_i$ only depends on the states of agents in $N_i$, and $(\romannumeral2)$ is because $N_i\cap -\nkkc=\varnothing$.
\end{proof}

\begin{proof}[Proof of Lemma \ref{lem:interpret_overline_P}]
	Since we consider the stationary Markov chain, the state distribution at any time $t$ is equal to the stationary state distribution $\overline \pi$, so we have
	\begin{align*}
		&\Pr\left[z_{\nkkc}(t+1)=z_{\nkkc}'|z_{\nkkc}(t)=z_{\nkkc}\right] \\
		=& \frac{\Pr\left[z_{\nkkc}(t+1)=z_{\nkkc}',z_{\nkkc}(t)=z_{\nkkc}\right]}{\Pr\left[z_{\nkkc}(t)=z_{\nkkc}\right]} \\
		=& \frac{\sum_{z_{-\nkkc}}\Pr\left[z_{\nkkc}(t+1)=z_{\nkkc}',z(t)=(z_{\nkkc},z_{-\nkkc})\right]}{\overline \pi_{\nkkc}(z_{\nkkc})} \\
		=& \frac{\sum_{z_{-\nkkc}}\overline \pi(z_{\nkkc},z_{-\nkkc})\PR_{\nkkc}(z_{\nkkc}'|(z_{\nkkc},z_{-\nkkc}))}{\overline \pi_{\nkkc}(z_{\nkkc})} \\
		=& \overline{\PR}(z_{\nkkc}'|z_{\nkkc}).
	\end{align*}
\end{proof}

\subsubsection{Geometric mixing of the function $G$}
\label{subsubsec:decay_G}
We prove the geometric mixing property of function $G$ (cf. Eq. (\ref{eq:G_decay})), which can be formalized as the lemma below:
\begin{lemma}\label{lemma:chen_lm_2_2}
	There exists $c_g(c',\rho',\lambda,t_0)>0$ and $\rho_g(\rho',\lambda)\in (0,1)\in (0,1)$, such that for any $t\geq t_0$, we have
	\begin{align*}
		\norm{\E[G(\zeta(t),X(t),w)]-\overline G(w)}
		\leq L_1(\|w\|+1)c_g(c',\rho',\lambda,t_0)[\rho_g(\rho',\lambda)]^t.
	\end{align*}
\end{lemma}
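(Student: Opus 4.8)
\textbf{Proof plan for Lemma~\ref{lemma:chen_lm_2_2}.}
The plan is to control the difference $\norm{\E[G(\zeta(t),X(t),w)]-\overline G(w)}$ by decomposing it into two contributions: (i) the error from truncating the eligibility vector $\zeta(t)$ to a finite window, and (ii) the error from the non-stationarity of the underlying Markov chain over that window. First I would recall that $\zeta(t)=\sum_{k=0}^{t}\lambda^{t-k}\supphi(z_{\nkkc}(k))$, whereas the stationary version is $\overline\zeta(t)=\sum_{k=-\infty}^{t}\lambda^{t-k}\supphi(z_{\nkkc}(k))$, so the ``missing tail'' $\sum_{k=-\infty}^{-1}\lambda^{t-k}\supphi$ has norm at most $\frac{\lambda^{t+1}}{1-\lambda}$ since $\norm{\supphi}\leq 1$. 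Fix an integer $m$ with $t_0\leq m\leq t$ (eventually $m=\lceil t/2\rceil$ or similar) and split $\zeta(t)$ as $\zeta^{\le}(t)+\zeta^{>}(t)$, where $\zeta^{>}(t)=\sum_{k=t-m+1}^{t}\lambda^{t-k}\supphi(z_{\nkkc}(k))$ keeps only the last $m$ terms and $\zeta^{\le}(t)=\sum_{k=0}^{t-m}\lambda^{t-k}\supphi(z_{\nkkc}(k))$ has norm at most $\frac{\lambda^{m}}{1-\lambda}$. Using the Lipschitz/boundedness bound on $F$ (Assumption~\ref{assump:chen_assump}, point~1, which gives $\abs{F(X,w)}\leq L_1(\norm{w}+1)$), the contribution of $\zeta^{\le}(t)$ to $\E[G(\zeta(t),X(t),w)]$ and the analogous tail of $\overline G(w)$ are each bounded by $L_1(\norm{w}+1)\frac{\lambda^{m}}{1-\lambda}$.

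Next I would handle the main term involving $\zeta^{>}(t)F(X(t),w)$. The key observation is that $\zeta^{>}(t)F(X(t),w)$ is a fixed bounded function of the finite state window $(z_{\nkkc}(t-m+1-t_0),\dots,z_{\nkkc}(t+1))$ — equivalently of $(z(t-m+1-t_0),\dots,z(t+1))$ after marginalizing — with sup-norm at most $\frac{L_1(\norm{w}+1)}{1-\lambda}$. Its expectation under the true (non-stationary) chain versus under the stationary chain differs only through the law of the state $z(t-m+1-t_0)$ that starts the window; by the Markov property both expectations are obtained by integrating the same window-kernel against, respectively, $\pi_{t-m+1-t_0}$ and $\overline\pi$. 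Hence the difference is at most $\frac{L_1(\norm{w}+1)}{1-\lambda}\cdot\norm{\pi_{t-m+1-t_0}-\overline\pi}_{1}\cdot C$ for an absolute constant coming from the boundedness, and by the geometric mixing bound \eqref{eq:mixing_chain}, $\norm{\pi_{t-m+1-t_0}-\overline\pi}_1\leq 2c'\rho'^{\,t-m+1-t_0}$. (The stationary expectation of $\zeta^{>}(t)F$ in turn differs from $\overline G(w)=\overline\E[\overline\zeta(t)F(X(t),w)]$ only by the stationary tail $\overline\E[\zeta^{\le,\mathrm{stat}}(t)F]$, already bounded above by $L_1(\norm{w}+1)\frac{\lambda^m}{1-\lambda}$.)

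Collecting the three pieces gives a bound of the form $L_1(\norm{w}+1)\bigl(c_1\lambda^{m}+c_2 \rho'^{\,t-m}\bigr)$ for suitable constants $c_1,c_2$ depending on $\lambda,c',\rho',t_0$. Choosing $m=\lceil t/2\rceil$ balances the two exponentials: with $\rho_g:=\max\{\sqrt\lambda,\sqrt{\rho'}\}\in(0,1)$ one gets $c_1\lambda^{m}+c_2\rho'^{\,t-m}\leq (c_1+c_2)\rho_g^{\,t}$, possibly absorbing $\rho'^{-t_0}$ and $\lambda^{-t_0}$ type factors into the prefactor $c_g(c',\rho',\lambda,t_0)$. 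This yields exactly the claimed form $\norm{\E[G(\zeta(t),X(t),w)]-\overline G(w)}\leq L_1(\norm{w}+1)c_g(c',\rho',\lambda,t_0)\,[\rho_g(\rho',\lambda)]^{t}$.

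The main obstacle I anticipate is bookkeeping rather than conceptual: one must be careful that $\zeta^{>}(t)F(X(t),w)$ really is measurable with respect to a window of fixed length $m+t_0+1$ (so that the mixing bound applies to the single ``entry state'' of the window), and that the stationary object $\overline G(w)$ is correctly matched against the stationary expectation of the truncated eligibility vector so that only the geometric tail $\lambda^m$ remains. A secondary subtlety is keeping the dependence on $w$ strictly in the affine form $(\norm{w}+1)$ throughout — this is guaranteed by using only $\abs{F(X,w)}\leq L_1(\norm{w}+1)$ and never the Lipschitz constant in $w$ here — and making sure the constant $c_g$ is allowed to depend on $t_0$ (needed to cover $t\in[t_0,2t_0]$ where the ``balanced'' choice of $m$ degenerates). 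Everything else follows the template of \cite[Lemma~6.7]{bertsekas1996neuro} for TD$(\lambda)$, generalized to the localized window $X(t)$ and general $F$.
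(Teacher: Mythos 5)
Your argument is correct and rests on the same two ingredients as the paper's proof: (i) any bounded function of a finite window of states has its expectation under the true chain and under the stationary chain differing by at most (twice) its sup-norm times the total-variation distance between the law of the window's entry state and $\overline\pi$ — this is exactly the paper's Lemma \ref{lemma:mixing_rate_X_Y} combined with Lemma \ref{lem:one_term_err} — and (ii) the geometric weights in the eligibility trace make the "old" part of $\zeta(t)$ (and the corresponding infinite stationary tail of $\overline\zeta(t)$) negligible. The only genuine difference is the bookkeeping over the trace index: the paper expands $\zeta(t)=\sum_{m=0}^{t}\lambda^{m}\supphi(z_{\nkkc}(t-m))$ and applies the mixing bound to each term separately, getting $2L_1(\norm{w}+1)c'\rho'^{\,t-\max\{m,t_0\}}$ for the $m$-th term, and then sums the double-geometric series $\sum_{m}\lambda^{m}\rho'^{\,t-m}$, which yields a rate essentially $\max\{\lambda,\rho'\}$ (degraded only marginally in the tie case $\lambda=\rho'$). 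You instead cut the trace once at $m=\lceil t/2\rceil$, bound the old block crudely by $\lambda^{m}/(1-\lambda)$, and apply the mixing bound a single time to the whole recent block; this is slightly simpler but gives the weaker rate $\max\{\sqrt{\lambda},\sqrt{\rho'}\}$. Since the lemma only asserts existence of some $\rho_g(\rho',\lambda)\in(0,1)$ and allows $c_g$ to depend on $t_0$ — which, as you correctly note, is needed to cover small $t$ where the balanced cutoff degenerates and where the whole quantity can simply be bounded by $2L_1(\norm{w}+1)/(1-\lambda)$ and absorbed into the prefactor — either version establishes the statement.
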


To prove Lemma \ref{lemma:chen_lm_2_2}, we need some auxiliary results. Let $Y^{t_1}(t)=(z(t-t_1),\cdots,z(t+1))\in \mathcal Y^{t_1}=\sZ^{t_1+2}$. Denote by $\pi_{Y,t_1,t}\in \Delta(\mathcal Y^{t_1})$ the distribution of $Y^{t_1}(t)$ and by $\overline\pi_{Y,t_1}\in \Delta(\mathcal Y^{t_1})$ the corresponding stationary distribution. 

$\pi_{Y,t_1,t}$ and $\overline\pi_{Y,t_1}$ can be computed by 
\begin{align}
	\pi_{Y,t_1,t}(z_{t-t_1},\cdots,z_{t+1})&=\pi_{t-t_1}(z_{t-t_1})\prod_{i=t-t_1}^{t} \PR(z_{i+1}|z_i) \label{eq:comp_Y} \\
	\overline \pi_{Y}(z_0,\cdots,z_{t_1+2})&=\overline\pi(z_0)\prod_{i=0}^{t+1} \PR(z_{i+1}|z_i). \label{eq:comp_Y_stat} 
\end{align}

The following lemma states the convergence rate of $\pi_{Y,t_1,t}$.
\begin{lemma}\label{lemma:mixing_rate_X_Y}
	For any $t\geq t_1$, $\TV(\pi_{Y,t_1,t},\overline\pi_{Y,t_1})\leq c'\rho'^{t-t_1}$.
\end{lemma}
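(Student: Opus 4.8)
The plan is to reduce the total-variation bound on the distribution of the finite window $Y^{t_1}(t) = (z(t-t_1), \ldots, z(t+1))$ to the already-known mixing estimate \eqref{eq:mixing_chain} for the one-time-slice marginals. The key observation is that, conditioned on the state $z(t-t_1)$ at the left endpoint of the window, the remaining coordinates $z(t-t_1+1), \ldots, z(t+1)$ are generated by the \emph{same} Markov kernel $\PR$ regardless of whether we are looking at the time-$t$ distribution or the stationary distribution; this is exactly the content of the factorizations \eqref{eq:comp_Y} and \eqref{eq:comp_Y_stat}. So the two laws $\pi_{Y,t_1,t}$ and $\overline\pi_{Y,t_1}$ differ only through their first-coordinate marginals, namely $\pi_{t-t_1}$ versus $\overline\pi$, and they share the common conditional kernel $\Gamma(z_{t-t_1}, \cdot) := \prod_{i=t-t_1}^{t} \PR(z_{i+1}\mid z_i)$ on $\sZ^{t_1+1}$.

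Concretely, first I would write, for any measurable $A \subseteq \mathcal Y^{t_1} = \sZ^{t_1+2}$,
\begin{align*}
	\pi_{Y,t_1,t}(A) - \overline\pi_{Y,t_1}(A)
	= \sum_{z_0 \in \sZ}\bigl(\pi_{t-t_1}(z_0) - \overline\pi(z_0)\bigr)\,\Gamma(z_0, A_{z_0}),
\end{align*}
where $A_{z_0} = \{(z_1,\ldots,z_{t_1+1}) : (z_0, z_1, \ldots, z_{t_1+1}) \in A\}$ is the section of $A$ and $\Gamma(z_0, \cdot)$ is the conditional law just described. Since $0 \le \Gamma(z_0, A_{z_0}) \le 1$, the right-hand side is bounded in absolute value by $\sum_{z_0}\bigl|\pi_{t-t_1}(z_0) - \overline\pi(z_0)\bigr|\cdot 1 = \|\pi_{t-t_1} - \overline\pi\|_1 = 2\,\TV(\pi_{t-t_1},\overline\pi)$. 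Taking the supremum over $A$ and using the identity $\TV(\pi_1,\pi_2) = \tfrac12\|\pi_1-\pi_2\|_1$ on the left as well, we get $\TV(\pi_{Y,t_1,t},\overline\pi_{Y,t_1}) \le \TV(\pi_{t-t_1},\overline\pi)$. Finally, apply \eqref{eq:mixing_chain} with time index $t-t_1$ (valid since $t \ge t_1$ means $t - t_1 \ge 0$), giving $\TV(\pi_{t-t_1},\overline\pi) \le c'\rho'^{\,t-t_1}$, which is the claimed bound. A slightly cleaner way to phrase the middle step is to invoke the standard data-processing inequality for total variation: pushing two distributions $\pi_{t-t_1}$ and $\overline\pi$ on $\sZ$ through the common Markov kernel $z_0 \mapsto \delta_{z_0}\otimes \Gamma(z_0,\cdot)$ on $\mathcal Y^{t_1}$ cannot increase their TV distance.

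I do not anticipate a serious obstacle here; the result is essentially a bookkeeping lemma. The one point that requires a little care is making sure the factorization \eqref{eq:comp_Y} is applied with the correct time shift — that the left-endpoint marginal of the window at "algorithm time $t$" is the chain marginal $\pi_{t-t_1}$, not $\pi_t$ — and that the analogous stationary factorization \eqref{eq:comp_Y_stat} genuinely has $\overline\pi$ as its first marginal, which holds because $\overline\pi$ is invariant for $\PR$. Once the conditional-kernel decomposition is in place, the data-processing (or direct triangle-inequality) argument closes the proof immediately.
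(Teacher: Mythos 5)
Your proof is correct and follows essentially the same route as the paper: both factor the window distribution into the left-endpoint marginal times the common product kernel $\prod_{i}\PR(z_{i+1}\mid z_i)$ (Eqs. (\ref{eq:comp_Y})--(\ref{eq:comp_Y_stat})), reduce to $\TV(\pi_{t-t_1},\overline\pi)$, and apply Eq. (\ref{eq:mixing_chain}). The only cosmetic difference is that the paper sums the $\ell_1$ difference directly and obtains the exact equality $\TV(\pi_{Y,t_1,t},\overline\pi_{Y,t_1})=\TV(\pi_{t-t_1},\overline\pi)$, whereas your set-based estimate as literally written loses a factor of $2$ (the supremum over $A$ already equals the TV distance, not half the $\ell_1$ norm) --- but your fallback to the data-processing inequality for total variation recovers the tight constant, so the argument closes.
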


\begin{proof}[Proof of Lemma \ref{lemma:mixing_rate_X_Y}]
	We have by Eqs. (\ref{eq:comp_Y}) and (\ref{eq:comp_Y_stat})
	\begin{align*}
		\TV(\pi_{Y,t_1,t},\overline\pi_{Y,t_1})
		&=\frac12\sum_{z_{t-t_1},\cdots,z_{t+1}}\abs{\pi_{Y,t_1,t}(z_{t-t_1},\cdots,z_{t+1})-\overline\pi_Y(z_{t-t_1},\cdots,z_{t+1})} \\
		&=\frac12\sum_{z_{t-t_1},\cdots,z_{t+1}}\abs{\pi_{t-t_1}(z_{t-t_1})-\overline\pi(z_{t-t_1})}\prod_{i=t-t_1}^t\left[\PR(z_{i+1}|z_i)\right]| \\
		&= \frac12\sum_{z_{t-t_1}}\abs{\pi_{t-t_1}(z_{t-t_1})-\overline\pi(z_{t-t_1})}\\
		&=\TV(\pi_{t-t_1},\overline\pi) \\
		&\leq c'\rho'^{t-t_1}.
	\end{align*}
\end{proof}

\begin{lemma}\label{lem:one_term_err}
	For $t\geq m$, any $w$ and any $X(t)=(z_{\nkkc}(t-t_0), \cdots,z_{\nkkc}(t+1))$, we have
	\begin{align*}
		&\abs{\E[\supphi(z(t-m))F(X(t),w)]-\overline\E[\supphi(z(t-m))F(X(t),w)]} \\
		\leq & 2L_1(\|w\|+1)c'\rho'^{t-\max\{m,t_0\}}.
	\end{align*}
\end{lemma}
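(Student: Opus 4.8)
The plan is to recognize $\supphi(z_{\nkkc}(t-m))F(X(t),w)$ as a uniformly bounded deterministic function of a finite window of the Markov chain $\{z(t)\}$, and then control the gap between its expectation under the (non-stationary) process started from $\mu'$ and its expectation under the stationary process by the total variation distance between the corresponding window laws, which decays geometrically by Eq.~(\ref{eq:mixing_chain}) and Lemma~\ref{lemma:mixing_rate_X_Y}.

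First I would record the boundedness. By Assumption~\ref{assump:chen_assump}, $F(x,\cdot)$ is $L_1$-Lipschitz and $\abs{F(x,0)}\leq L_1$, so $\abs{F(x,w)}\leq L_1(\norm{w}+1)$ for every $x,w$; together with $\norm{\supphi(\cdot)}\leq 1$ this gives $\norm{\supphi(z_{\nkkc}(t-m))F(X(t),w)}\leq L_1(\norm{w}+1)$ pointwise. Next I would identify the window touched by this quantity: $\supphi(z_{\nkkc}(t-m))$ is a deterministic function of $z(t-m)$, and $X(t)=(z_{\nkkc}(t-t_0),\dots,z_{\nkkc}(t+1))$ is a deterministic function of $(z(t-t_0),\dots,z(t+1))$, so the product is a deterministic function $g(\cdot)$ of the block $Y^{t_1}(t):=(z(t-t_1),\dots,z(t+1))$ with $t_1:=\max\{m,t_0\}$, and $\norm{g}_\infty\leq L_1(\norm{w}+1)$. (Note $X(t)$ — hence this block — is defined only for $t\geq t_0$, so the hypothesis $t\geq m$ is used together with the implicit $t\geq t_0$, i.e. $t\geq t_1$.) Writing $\pi_{Y,t_1,t}$ for the law of $Y^{t_1}(t)$ under the chain started at time $0$ from $\mu'$ and $\overline\pi_{Y,t_1}$ for the stationary law, I would then bound
\begin{align*}
\abs{\E[g(Y^{t_1}(t))]-\overline\E[g(Y^{t_1}(t))]}
&=\abs{\sum_{y}\big(\pi_{Y,t_1,t}(y)-\overline\pi_{Y,t_1}(y)\big)g(y)}\\
&\leq \norm{g}_\infty\,\norm{\pi_{Y,t_1,t}-\overline\pi_{Y,t_1}}_1
=2\norm{g}_\infty\,\TV(\pi_{Y,t_1,t},\overline\pi_{Y,t_1}),
\end{align*}
and invoke Lemma~\ref{lemma:mixing_rate_X_Y} (with window length $t_1$, applicable since $t\geq t_1$) to get $\TV(\pi_{Y,t_1,t},\overline\pi_{Y,t_1})\leq c'\rho'^{\,t-t_1}=c'\rho'^{\,t-\max\{m,t_0\}}$. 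Substituting $\norm{g}_\infty\leq L_1(\norm{w}+1)$ yields exactly the claimed inequality.

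There is no genuine analytic obstacle here; the step requiring the most care is bookkeeping, namely matching the window length $t_1=\max\{m,t_0\}$ precisely to the coordinates touched by $g$, so that the exponent comes out as $t-\max\{m,t_0\}$ rather than something weaker, and checking that Lemma~\ref{lemma:mixing_rate_X_Y} is applied in a valid regime ($t\geq t_1$). The only inputs used are the Lipschitz/boundedness properties of $F$ from Assumption~\ref{assump:chen_assump}, the normalization $\norm{\supphi}\leq 1$, the elementary estimate $\abs{\sum_y(\pi_1-\pi_2)(y)g(y)}\leq\norm{g}_\infty\norm{\pi_1-\pi_2}_1$, and the geometric mixing of $\{z(t)\}$ transported to finite blocks via Lemma~\ref{lemma:mixing_rate_X_Y}.
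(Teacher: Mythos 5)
Your proof is correct and follows essentially the same route as the paper's: bound the integrand uniformly by $L_1(\|w\|+1)$ via Assumption~\ref{assump:chen_assump} and $\|\supphi\|\leq 1$, reduce the difference of expectations to the total variation distance between the window laws, and invoke Lemma~\ref{lemma:mixing_rate_X_Y}. The only cosmetic difference is that you handle both cases at once with the window length $t_1=\max\{m,t_0\}$, whereas the paper treats $m\geq t_0$ first and then notes that for $m<t_0$ one simply replaces $m$ by $t_0$.
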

\begin{proof}[Proof of Lemma \ref{lem:one_term_err}]
	Let set $\mathcal S^{t_1}=\{(z(t-t_1), \cdots,z(t+1))\in \mathcal Y^{t_1} \mid (z_{\nkkc}(t-t_1), \cdots,z_{\nkkc}(t+1)=X(t))\}$, for any $t_1\in \N$.
	
	When $m\geq t_0$, we have
	\begin{align*}
		&\norm{\E[\supphi(z(t-m))F(X(t),w)]-\overline\E[\supphi(z(t-m))F(X(t),w)]}  \\
		= & \norm{\sum_{Y^m\in \mathcal S^m} \supphi(z(t-m))F(X(t),w) (\pi_{Y,m,t}(Y^m)-\overline\pi_{Y,m}(Y^m))} 
		\\
		\leq &\sum_{Y^m\in \mathcal S^m} \norm{\supphi(z(t-m))}\abs{F(X(t),w)} \abs{\pi_{Y,m,t}(Y^m)-\overline\pi_{Y,m}(Y^m))} \\
		\leq & L_1(\norm{w}+1) \sum_{Y^m\in \mathcal S^m} \abs{\pi_{Y,m,t}(Y^m)-\overline\pi_{Y,m}(Y^m))} \\
		= & 2L_1(\norm{w}+1) \TV(\pi_{Y,m,t},\overline\pi_{Y,m}) \\
		\leq & 2L_1(\norm{w}+1) c'\rho'^{t-m}.
	\end{align*}
	Here the last inequality is due to Lemma \ref{lemma:mixing_rate_X_Y}.
	
	We can similarly prove the case for $m<t_0$. In that case, $z(t-m)$ is included in $(z(t-t_0),\cdots,z(t+1)$, so we just need to replace $m$ with $t_0$ in the proof above.
\end{proof}

Now we can bound the convergence rate of function $G$.

\begin{proof}[Proof of Lemma \ref{lemma:chen_lm_2_2}]
	We have by definition of function $G$ and $\overline G$ that
	\begin{align*}
		&\norm{\E[G(\zeta(t),X(t),w)]-\overline G(w)} \\
		=& \norm{\sum_{m=0}^t \lambda^m \E[\supphi(z(t-m))F(X(t),w)]-\sum_{m=0}^{\infty} \lambda^m \overline\E[\supphi(z(t-m))F(X(t),w)]} \\
		\leq & \sum_{m=0}^t \lambda^m \norm{\E[\supphi(z(t-m))F(X(t),w)]-\overline\E[\supphi(z(t-m))F(X(t),w)]} \\
		+&\sum_{m=t+1}^{\infty}\lambda^m\norm{ \overline\E[\supphi(z(t-m))F(X(t),w)} .
	\end{align*}
	Notice that Assumption \ref{assump:chen_assump} indicates that
	\begin{align*}
		\norm{\overline\E[\supphi(z(t-m))F(X(t),w)]}\leq L_1(\|w\|+1),
	\end{align*}
	and Lemma \ref{lem:one_term_err} implies that
	\begin{align*}
		&\norm{\E[\supphi(z(t-m))F(X(t),w)]-\overline\E[\supphi(z(t-m))F(X(t),w)]} \\
		\leq& 2L_1(\|w\|+1)c'\rho'^{t-\max\{t_0,m\}}.
	\end{align*}
	
	Plug in the two bounds back, and we have 
	\begin{align}
		&\|\E[G(\zeta(t),X(t),w)]-\overline G(w)\| \nonumber \\
		\leq & \sum_{m=0}^{t_0-1}\lambda^m 2L_1(\|w\|+1)c'\rho'^{t-t_0}
		+\sum_{m=t_0}^{t}\lambda^m 2L_1(\|w\|+1)c'\rho'^{t-m} 
		+\sum_{m=t+1}^{\infty}\lambda^m L_1(\|w\|+1) \nonumber \\
		=& L_1(\|w\|+1)\Mp{2c'\underbrace{\sum_{m=0}^{t_0-1}\lambda^m \rho'^{t-t_0}}_{(a)}+2c'\underbrace{\sum_{m=t_0}^{t}\lambda^m \rho'^{t-m}}_{(b)}+\underbrace{\sum_{m=t+1}^{\infty}\lambda^m}}_{(c)}. \label{eq:G_decomp}
	\end{align}
	where we use Assumption \ref{assump:chen_assump} as well as Lemma \ref{lem:one_term_err}.
	Obviously, $(a)=O(\rho'^t)$ and $(c)=O(\lambda^t)$. 
	For $(b)$, there are three cases:
	\begin{itemize}
		\item $\lambda<\rho'$, then $(b)=O(\rho'^t)$.
		\item $\lambda=\rho'$, then $(b)=(t-t_0+1)\rho'^t$, which also decays exponentially fast with $t$, with any decay rate less than $\rho'$.
		\item $\lambda>\rho'$, then $(b)=\sum_{m'=0}^{t-t_0}\lambda^{t-m'}\rho'^{m}=O(\lambda^t)$.
	\end{itemize}
	In conclusion, $(a)$, $(b)$, and $(c)$ all decays exponentiallly fast with $t$, with decay rates depending only on $\lambda$ or $\rho'$. Let $\rho'(\rho',\lambda)$ be the maximum value among the three decay rates, then there exists some $c_g(c',\rho',\lambda,t_0)>0$, such that
	\begin{align*}
		2c'\sum_{m=0}^{t_0-1}\lambda^m \rho'^{t-t_0}+2c'\sum_{m=t_0}^{t}\lambda^m \rho'^{t-m}+\sum_{m=t+1}^{\infty}\lambda^m\leq  c_g(c',\rho',\lambda,t_0) [\rho_g(\rho',\lambda)]^t. 
	\end{align*}
	Plug into Eq. (\ref{eq:G_decomp}), and we complete the proof.
\end{proof}

\subsubsection{Proof of Lemma \ref{lem:G-Lipschitz}}
\label{subsubsec:proof_G_Lip}
The key is to notice that 
\begin{align*}
	\|\zeta(t)\|&\leq\sum_{k=0}^t \lambda^{t-k}\|\supphi(z(k))\| \\
	&\leq \sum_{k=0}^t \lambda^{t-k} \\
	&<\frac{1}{1-\lambda}.
\end{align*}
Then by Assumption \ref{assump:chen_assump}, we get
\begin{align*}
	&\norm{G(\zeta(t),x,w_1)-G(\zeta(t),x,w_2)} \\
	=&\norm{\zeta(t)[F(x,w_1)-F(x,w_2)]} \\
	\leq & \norm{\zeta(t)} \abs{F(x,w_1)-F(x,w_2)} \\
	\leq &\frac{1}{1-\lambda}\cdot L_1\norm{w_1-w_2} \\
	=& L_1'\norm{w_1-w_2}.
\end{align*}
Furthermore, we have
\begin{align*}
	&\norm{G(\zeta(t),x,0)} \\
	=&\norm{\zeta(t)F(x,0)} \\
	\leq & \norm{\zeta(t)}\abs{F(x,0)} \\
	\leq &\frac{L_1}{1-\lambda} \\
	=& L_1'.
\end{align*}

\subsubsection{Proof of Lemma \ref{lem:param_change}}
\label{subsubsec:proof_para_ch}
By Lemma \ref{lem:G-Lipschitz}, we have for ant $t$ that
\begin{align*}
	\norm{w(t)}-\norm{w(t-1)}\leq &\norm{w(t)-w(t-1)} \\
	= & \alpha \norm{G(\zeta_{t-1}, X(t-1), w(t-1)} \\
	\leq &\alpha L_1' (\norm{w(t-1)}+1).
\end{align*}
So $\norm{w(t)}+1 \leq (1+\alpha L_1')(\norm{w(t-1)}+1)$. Then we have for any $t\geq t_2$,
\begin{align*}
	\norm{w(t)}+1 \leq (1+\alpha L_1')^{t-t_2} (\norm{w(t_2)}+1).
\end{align*}
Therefore, we get
\begin{align*}
	&\norm{w(t_1)-w(t_2)} \\
	\leq & \sum_{t'=t_2}^{t_1-1} \norm{w(t'+1)-w(t')} \\
	\leq & \sum_{t'=t_2}^{t_1-1}\alpha L_1'(\norm{w(t')}+1) \\
	\leq & \sum_{t'=t_2}^{t_1-1}\alpha L_1'(1+\alpha L_1')^{t'-t_2} (\norm{w(t_2)}+1) \\
	=& \left[(1+\alpha L_1')^{t_1-t_2}-1\right](\norm{w(t_2)}+1).
\end{align*}
Notice that $(1+x)^p\leq 1+2px$ for any $p>0$ and $x\in [0,\frac{1}{2p}]$.
So for $\alpha(t_1-t_2)\leq \frac{1}{2L_1'}$, we have 
\begin{align*}
	(1+\alpha L_1')^{t_1-t_2} &\leq 1+2 L_1'\alpha(t_1-t_2).
\end{align*}
Therefore, we have $\norm{w(t_1)-w(t_2)}\leq 2 L_1'\alpha(t_1-t_2)(\norm{w(t_2)}+1)$.

Since $\alpha(t_1-t_2)\leq \frac{1}{4L_1'}$, we further have 
\begin{align*}
	& \norm{w(t_1)-w(t_2)} \\
	\leq & 2 L_1'\alpha(t_1-t_2)(\norm{w(t_2)}+1) \\
	\leq & 2 L_1'\alpha(t_1-t_2)(\norm{w(t_1)-w(t_2)}+\norm{w(t_1)}+1) \\
	\leq & \frac{1}{2} \norm{w(t_1)-w(t_2)} + 2 L_1'\alpha(t_1-t_2)(\norm{w(t_1)}+1).
\end{align*}
Rearrange the terms, and we have $\norm{w(t_1)-w(t_2)}
\leq 4 L_1'\alpha(t_1-t_2)(\norm{w(t_1)}+1)$.

\subsubsection{Proof of Lemma \ref{lem:bound_a2}}
\label{subsubsec:proof_a2}
We can decompose $(a_2)$ as 
\begin{align*}
	(a_2)=&\E_{\alpha}\left[(w(t)-w^*)^\top\Sp{G(\zeta(t),X(t),w(t))- \overline G(w(t))}\right] \\
	= & \E_{\alpha}\left[(w(t)-w(t-t_{\alpha}')^\top\Sp{G(\zeta(t),X(t),w(t))- \overline G(w(t))}\right] \tag{$b_1$}\\
	&+ (w(t-t_{\alpha}')-w^*)^\top\E_{\alpha}\left[G(\zeta(t),X(t),w(t-t_{\alpha}'))- \overline G(w(t-t_{\alpha}'))\right]\tag{$b_2$} \\
	&+ \E_{\alpha}\left[(w(t-t_{\alpha}')-w^*)^\top\Sp{G(\zeta(t),X(t),w(t))- G(\zeta(t),X(t),w(t-t_{\alpha}'))}\right]\tag{$b_3$} \\
	&+\E_{\alpha}\left[(w(t-t_{\alpha}')-w^*)^\top\Sp{\overline G(w(t-t_{\alpha}'))- \overline G(w(t))}\right]. \tag{$b_4$}
\end{align*}
For term $(b_1)$, we apply Lemma \ref{lem:param_change} with $t_1=t$ and $t_2=t-t_{\alpha}'$. Thus we get 
\begin{align}
	(b_1)&\leq  \E_{\alpha}\left[\norm{w(t)-w(t-t_{\alpha}')}\norm{G(\zeta(t),X(t),w(t)- \overline G(w(t)}\right]\nonumber \\
	& \leq \E_{\alpha}\left[\norm{w(t)-w(t-t_{\alpha}')}\Sp{\norm{G(\zeta(t),X(t),w(t)}+ \norm{\overline G(w(t)}}\right] \nonumber \\
	& \leq 4L_1'\alpha t_{\alpha}' \E_{\alpha}\left[(\norm{w(t)}+1)\Sp{\norm{G(\zeta(t),X(t),w(t)}+ \norm{\overline G(w(t)}}\right] \tag{Lemma \ref{lem:param_change}} \nonumber \\
	& \leq 8L_1'^2\alpha t_{\alpha}' \E_{\alpha}\left[(\norm{w(t)}+1)^2\right] \nonumber \\
	& \leq 8L_1'^2\alpha t_{\alpha}' \E_{\alpha}\left[(\norm{w(t)-w^*}+\norm{w^*}+1)^2\right] \nonumber \\
	& \leq 16L_1'^2\alpha t_{\alpha}' \E_{\alpha}\left[\norm{w(t)-w^*}^2+(\norm{w^*}+1)^2\right] \nonumber \\
	& = 16L_1'^2\alpha t_{\alpha}' \Sp{\E_{\alpha}\left[\norm{w(t)-w^*}^2\right]+(\norm{w^*}+1)^2}. \label{eq:b1}
\end{align}
Now we bound $(b_2)$. We have by Definition \ref{def:G_mix} that
\begin{align*}
	(b_2)&\leq \norm{w(t-t_{\alpha}')-w^*} \norm{\E_{\alpha}\left[(G(\zeta(t),X(t),w(t-t_{\alpha}'))- \overline G(w(t-t_{\alpha}')\right]} \\
	&\leq  L_1\alpha (\norm{w(t-t_{\alpha}')}+1)\norm{w(t-t_{\alpha}')-w^*} \\
	&=L_1\alpha \E_{\alpha}\Mp{(\norm{w(t-t_{\alpha}')}+1)\norm{w(t-t_{\alpha}')-w^*}}.
\end{align*}
We further bound $(b_2)$. By Lemma \ref{lem:param_change} and the fact that $\alpha t_{\alpha}'\leq \frac{1}{4L_1'}$, we have 
\begin{align}
	\norm{w(t)-w(t-t_{\alpha}')}\leq 4L_1'\alpha t_{\alpha}' (\norm{w(t)}+1)\leq \norm{w(t)}+1. \label{eq:para_change_cor}
\end{align}
Thus we have 
\begin{align*}
	&(\norm{w(t-t_{\alpha}')}+1)\norm{w(t-t_{\alpha}')-w^*} \\
	\leq &(\norm{w(t)-w(t-t_{\alpha}')}+\norm{w(t)-w^*}+\norm{w^*}+1)(\norm{w(t)-w(t-t_{\alpha}')}+\norm{w(t)-w^*}) \\
	\leq &(\norm{w(t)}+\norm{w(t)-w^*}+\norm{w^*}+2)(\norm{w(t)}+\norm{w(t)-w^*}+1) \\
	\leq & (2\norm{w(t)-w^*}+2\norm{w^*}+2)(2\norm{w(t)-w^*}+\norm{w^*}+1) \\
	\leq & 4(\norm{w(t)-w^*}+\norm{w^*}+1)^2 \\
	\leq & 8\Mp{\norm{w(t)-w^*}^2+(\norm{w^*}+1)^2}.
\end{align*}
So we can bound $(b_2)$ as 
\begin{align}
	(b_2)\leq 8L_1\alpha \E_{\alpha}\Mp{\norm{w(t)-w^*}^2+(\norm{w^*}+1)^2}. \label{eq:b2}
\end{align}

Finally we bound the sum of $(b_3)$ and $(b_4)$. We apply Lemma \ref{lem:G-Lipschitz} and get 
\begin{align}
	(b_3)+(b_4)\leq & 2L_1' \E_{\alpha}\left[\norm{w(t-t_{\alpha}')-w^*}\norm{w(t)-w(t-t_{\alpha}')}\right] \nonumber \\
	\leq & 8L_1'^2\alpha t_{\alpha}' \E_{\alpha}\Mp{\norm{w(t-t_{\alpha}')-w^*}(\norm{w(t)}+1)} \tag{Lemma \ref{lem:param_change}} \nonumber \\
	\leq & 8L_1'^2\alpha t_{\alpha}' \E_{\alpha}\Mp{(\norm{w(t)-w(t-t_{\alpha}')}+\norm{w(t)-w^*})(\norm{w(t)-w^*}+\norm{w^*}+1)} \nonumber \\
	\leq & 8L_1'^2\alpha t_{\alpha}' \E_{\alpha}\Mp{(\norm{w(t)}+\norm{w(t)-w^*}+1)(\norm{w(t)-w^*}+\norm{w^*}+1)} \tag{Eq. (\ref{eq:para_change_cor})} \nonumber \\
	\leq & 8L_1'^2\alpha t_{\alpha}' \E_{\alpha}\Mp{(2\norm{w(t)-w^*}+\norm{w^*}+1)(\norm{w(t)-w^*}+\norm{w^*}+1)} \nonumber \\
	\leq & 16L_1'^2\alpha t_{\alpha}' \E_{\alpha}\Mp{(\norm{w(t)-w^*}+\norm{w^*}+1)^2} \nonumber \\
	\leq & 32L_1'^2\alpha t_{\alpha}' \E_{\alpha}\Mp{\norm{w(t)-w^*}^2+(\norm{w^*}+1)^2}. \label{eq:b34}
\end{align}
Now we combine the bounds for $(b_1),(b_2),(b_3),(b_4)$ (Eq. (\ref{eq:b1}), (\ref{eq:b2}), (\ref{eq:b34})) and get 
\begin{align*}
	(a_2)=&(b_1)+(b_2)+(b_3)+(b_4) \\
	\leq &16L_1'^2\alpha t_{\alpha}' \Sp{\E_{\alpha}\left[\norm{w(t)-w^*}^2\right]+(\norm{w^*}+1)^2} \\
	&+8L_1\alpha \E_{\alpha}\Mp{\norm{w(t)-w^*}^2+(\norm{w^*}+1)^2} \\
	&+32L_1'^2\alpha t_{\alpha}' \E_{\alpha}\Mp{\norm{w(t)-w^*}^2+(\norm{w^*}+1)^2} \\
	\leq & 56L_1'^2 \alpha t_{\alpha}'\E_{\alpha}\Mp{\norm{w(t)-w^*}^2+(\norm{w^*}+1)^2}, 
\end{align*}
where the last inequality is derived by $1<L_1<L_1'$ (cf. Assumption \ref{assump:chen_assump}) and $t_{\alpha}'\geq 1$. 

\subsubsection{Proof of Lemma \ref{lem:bound_a}}
\label{subsubsec:proof_a}
By Assumption \ref{assump:hatC_lipschitz}, we have for any $t$
\begin{align*}
	\abs{\hat C(z_{\nkkc},w(t))-\hat C(z, w^*)}
	\leq & L_2 \norm{w(t)-w^*}.
\end{align*}
So we get for $\supepoch\geq t_{\alpha}'$ that
\begin{align*}
	&\E\left[\sup_z \abs{\hat C(z_{\nkkc},w(\supepoch))-\hat C(z_{\nkkc}, w^*)}^2\right] \\
	& \leq \E\left[L_2^2 \norm{w(\supepoch)-w^*}^2\right] \\
	&\leq L_2^2\left(c_1(1-\alpha c_0)^{\supepoch-t_\alpha'}+c_2\frac{\alpha t_{\alpha}'}{c_0}\right).
\end{align*}

\subsubsection{Proof of Lemma \ref{lemma:exp_decay_Q}}
\label{subsubsec:proof_exp_decay}
Denote by $\pi_{\kappa_r,t}$ the marginal distribution of $\sZ_{\nkkr}$ at time $t$ in the original Markov chain (state space $\sZ$). Let $\tilde \pi_{\kappa_r,t}$ denote the marginal state distribution at time $t$ in the sub-chain (state space $\sZ_{\nkkc}$.) 

Due to the local dependence of transition probability (cf. Eq. (\ref{eq:loc_z_dynamics})), $\pi_{\kappa_r,t}$ is only dependent on the initial states of agents in $N_{\agentk}^{\kappa_r+t}$, which is equal to $\tilde \pi_{t}^{\kappa_r}$ when $t\leq \kappa_c-\kappa_r$. Therefore we have for any $z\in \sZ$
\begin{align*}
	& \abs{\tilde C(z_\nkkc)-C(z)} \\
	\leq &\sum_{t=0}^{\infty}\abs{ \E\left[\gamma^t \supr(z_{\nkkr}(t))|z_{\nkkc}(0)
		= z_{\nkkc}\right]-\E\left[\gamma^t \supr( z_{\nkkr}(t))|z(0)
		=z\right]} \\
	=&\sum_{t=0}^{\infty}\abs{\gamma^t\mathop{\E}\limits_{z_{\nkkr}\sim \tilde \pi_{\kappa_r,t}} \supr(z_{\nkkr})
		-\gamma^t\mathop{\E}\limits_{z_{\nkkr}\sim \pi_{\kappa_r,t}} \supr(z_{\nkkr})} \\
	=&\sum_{t=\kappa_c-\kappa_r+1}^{\infty}\abs{\gamma^t\mathop{\E}\limits_{z_{\nkkr}\sim \tilde \pi_{\kappa_r,t}} \supr(z_{\nkkr})
		-\gamma^t\mathop{\E}\limits_{z_{\nkkr}\sim \pi_{\kappa_r,t}} \supr(z_{\nkkr})} \\
	\leq &\sum_{t=\kappa_c-\kappa_r+1}^{\infty}\gamma^t  \mathrm{TV}(\tilde \pi_{\kappa_r,t},\pi_{\kappa_r,t}) \\
	\leq & \frac{1}{1-\gamma}\gamma^{\kappa_c-\kappa_r+1}.
\end{align*}

Take $\sup_{z}$ on both sides above, we have
\begin{align*}
	\sup_z \abs{\tilde C(z_{\nkkc})-C(z)}\leq \frac{1}{1-\gamma}\gamma^{\kappa_c-\kappa_r+1}.
\end{align*}

\section{Proof of Theorem \ref{thm:critic_short}}\label{ap:critic}
In this section, we first derive the uniform properties of $\epsilon$-exploration policies, which are prerequisites for the critic error bound. Then we show that assumptions of Appendix \ref{sec:local_policy_eval} can be satisfied by Algorithm \ref{alg:LPES} under certain conditions. After that we restate Theorem \ref{thm:critic_short}. Finally, we give the proof of the theorem, which is based on the results of localized stochastic approximation (Appendix \ref{sec:local_policy_eval}). 

\subsection{Uniformity of $\epsilon$-Exploration Policies}\label{subsubsec:uni_eps}
In order to derive a uniform critic error bound for all policies, we need uniform properties, , such as convergence rate, exploration, for the critic sampling policies, i.e., $\epsilon$-exploration policy class $\Xi^{\epsilon}$. 
This can be done by applying  the results of \cite{zhang2022global} when Assumption \ref{assump:MC} holds. 

\begin{lemma}\label{lem:eps_ap_ir}
	For any policy $\hat \xi$ from $\Xi^{\epsilon}$ for $\epsilon>0$, the induced Markov chain $\{(s(t),a(t))\}$ is aperiodic and irreducible.
\end{lemma}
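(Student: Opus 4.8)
The plan is to establish the claim first for the state chain $\{s(t)\}$ induced by $\hat\xi$, and then lift it to the state--action chain $\{(s(t),a(t))\}$. Fix $\hat\xi=(\hat\xi_1,\dots,\hat\xi_n)\in\Xi^{\epsilon}$ with $\epsilon>0$. By the construction in Algorithm \ref{alg:LPES}, $\hat\xi_i(a_i\mid s_i)\ge \epsilon/|\sA_i|>0$ for all $i,s_i,a_i$, so $\hat\xi(a\mid s)=\prod_i\hat\xi_i(a_i\mid s_i)>0$ for every $(s,a)\in\sS\times\sA$; that is, $\hat\xi$ has full support. First I would show that the state-transition kernel $P^{\hat\xi}(s'\mid s):=\sum_a\hat\xi(a\mid s)\,\sP(s'\mid s,a)$ is irreducible and aperiodic. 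Since $\hat\xi(a\mid s)>0$ for every $a$, we have $P^{\hat\xi}(s'\mid s)>0$ iff $\sP(s'\mid s,a)>0$ for some $a$; in particular, for the policy $\xi$ of Assumption \ref{assump:MC}, $P^{\xi}(s'\mid s):=\sum_a\xi(a\mid s)\,\sP(s'\mid s,a)>0$ implies $P^{\hat\xi}(s'\mid s)>0$, so every finite sequence of states whose consecutive transitions are positive under $P^{\xi}$ is also positive under $P^{\hat\xi}$. As $\sS$ is finite and $\{s(t)\}$ under $\xi$ is uniformly ergodic, $P^{\xi}$ is irreducible and aperiodic, hence there is a positive integer $N$ with $(P^{\xi})^{m}(s'\mid s)>0$ for all $m\ge N$ and all $s,s'$; by the preceding observation the same holds for $P^{\hat\xi}$, which is therefore irreducible and aperiodic.

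Next I would lift this to the state--action chain, whose transition kernel is $\tilde P\big((s',a')\mid(s,a)\big)=\sP(s'\mid s,a)\,\hat\xi(a'\mid s')$. Fix $(s,a),(s',a')\in\sS\times\sA$ and any integer $k\ge N+1$. Choose some $s_1$ with $\sP(s_1\mid s,a)>0$, which exists because $\sP(\cdot\mid s,a)$ is a probability distribution. By the previous paragraph, $(P^{\hat\xi})^{k-1}(s'\mid s_1)>0$, so there is a state path $s_1=v_1\to v_2\to\cdots\to v_k=s'$ with $P^{\hat\xi}(v_{j+1}\mid v_j)>0$ for $1\le j\le k-1$; by the full support of $\hat\xi$, for each such $j$ there is an action $b_j$ with $\sP(v_{j+1}\mid v_j,b_j)>0$. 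Setting $v_0:=s$, $b_0:=a$, and $b_k:=a'$, the path $(v_0,b_0)\to(v_1,b_1)\to\cdots\to(v_k,b_k)$ has probability $\prod_{j=0}^{k-1}\sP(v_{j+1}\mid v_j,b_j)\,\hat\xi(b_{j+1}\mid v_{j+1})>0$ under $\tilde P$, so $\tilde P^{k}\big((s',a')\mid(s,a)\big)>0$. Since this holds for every $k\ge N+1$, the chain $\{(s(t),a(t))\}$ is irreducible, and since $\tilde P^{k}$ is entrywise positive for all $k\ge N+1$, it is aperiodic as well, which is the claim.

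The main technical point is the ``forced first step'' of the state--action chain: leaving $(s,a)$, the next state must lie in $\mathrm{supp}\,\sP(\cdot\mid s,a)$ and cannot be chosen freely, so one peels off exactly one step and then invokes irreducibility and aperiodicity of the state chain from the intermediate state $s_1$, reconstructing the remaining $k-1\ge N$ steps with freely chosen actions using the full support of $\hat\xi$. This is essentially the reasoning behind \cite[Lemma 4]{zhang2022global}, to which one may alternatively appeal directly.
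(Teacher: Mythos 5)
Your proof is correct, and it is in spirit the same argument the paper intends, but the paper's own proof of this lemma is a one-line citation: it observes $\hat\xi(a\mid s)\ge\prod_{i}\epsilon/|\sA_i|>0$ and then appeals to point 1 of Proposition 3 of \cite{zhang2022global}. What you have done is reconstruct the content of that external result from scratch: the support-containment argument showing $P^{\hat\xi}$ inherits every positive transition of $P^{\xi}$ (hence primitivity of the state chain), followed by the ``peel off one forced step'' lifting to the state--action kernel $\tilde P((s',a')\mid(s,a))=\sP(s'\mid s,a)\hat\xi(a'\mid s')$. The lifting step is the genuinely nontrivial part and you handle it correctly, including the observation that the first transition out of $(s,a)$ is constrained to $\mathrm{supp}\,\sP(\cdot\mid s,a)$ and that the terminal action $a'$ is supplied by the full support of $\hat\xi$. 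The only point worth flagging is that you pass from ``uniformly ergodic'' to ``irreducible and aperiodic'' for the chain induced by the policy $\xi$ of Assumption \ref{assump:MC}; strictly speaking uniform ergodicity on a finite state space does not by itself exclude transient states, but the paper reads the assumption the same way (and the cited proposition presupposes it), so this is a shared convention rather than a gap in your argument. Your self-contained version buys independence from the external reference at the cost of a paragraph of bookkeeping; the paper's version buys brevity at the cost of opacity.
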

\begin{proof}[Proof of Lemma \ref{lem:eps_ap_ir}]
	The result is obvious by point 1. of \cite[Proposition 3]{zhang2022global} and the fact that $\hat \xi(a|s)\geq \prod_{i\in\sN} \frac{\epsilon}{\abs{\sA_i}}=\frac{\epsilon^n}{\abs{\sA}}>0$ for any $s,a$.
\end{proof}

For any policy $\hat \xi\in \Xi^{\epsilon}$, we use the notation $\pi^{\hat \xi}_{t}(s,a)$ to denote the probability of $(s,a)$ at time $t$ and use $\overline \pi^{\hat \xi}(s,a)$ for the probability of $(s,a)$ under stationary distribution. According to Lemma \ref{lem:eps_ap_ir}, for any $\hat \xi\in \Xi^{\epsilon}$, there exists $c_{\hat \xi}>1$ and $\rho_{\hat \xi}\in (0,1)$ such that 
\begin{align}\label{eq:mixing}
	\TV(\pi_t^{\hat \xi},\overline \pi^{\hat \xi})\leq c_{\hat \xi} \rho_{\hat \xi}^t. 
\end{align}
Let $\overline{c}:=\sup_{\hat \xi\in \Xi^{\epsilon}} c_{\hat \xi}$, $\overline{\rho}:=\sup_{\hat \xi\in \Xi^{\epsilon}} \rho_{\hat \xi}$. Besides, recall the definitions of $\pi_{\min}$ and $\underline{\lambda}$ in Subsection \ref{subsec:assumptions}. The uniformity of $\Xi^{\epsilon}$ can be shown by bounds of $\overline{\rho}$, $\pi_{\min}$ and $\underline{\lambda}$.

\begin{lemma}[Uniformity of $\epsilon$-exploration policy class]\label{lem:assump_MC_result}
	We have
	\begin{enumerate}
		\item $\overline{\rho}<1$. So $\pi_t^{\hat \xi}$ has a uniform convergence rate for all $\hat \xi\in \Xi^{\epsilon}$.
		\item $\pi_{\min}>0$. So each $(s_{\nikc},a_i)$ pair is visited with positive probability, which has a uniform positive lower bound for all $\epsilon$-exploration polices.
		\item $\underline{\lambda}>0$.
	\end{enumerate}
\end{lemma}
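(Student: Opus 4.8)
The plan is to reduce all three claims to a single \emph{uniform minorization} property of the $\epsilon$-exploration class and then read off the conclusions. First I would observe that every $\hat\xi\in\Xi^{\epsilon}$ has all entries at least $\epsilon/\max_{j\in\sN}\abs{\sA_j}>0$, and that the closure $\overline{\Xi^{\epsilon}}$ — obtained by letting the softmax factors range over the closed simplices — is a compact set still consisting of policies whose entries are bounded away from zero, so Lemma \ref{lem:eps_ap_ir} applies to every element of $\overline{\Xi^{\epsilon}}$ and each induced chain on $\sS\times\sA$ is irreducible and aperiodic. The structural observation I would lean on is that the \emph{support graph} of the transition matrix $P^{\hat\xi}$ of $\{(s(t),a(t))\}$ does not depend on $\hat\xi$: since $\hat\xi$ has full support, $(s,a)\to(s',a')$ has positive probability iff every local kernel $\sP_i(s_i'\mid s_{\mathcal{N}_i},a_i)$ is positive, which involves only the supports of the $\sP_i$'s. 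Hence all $\hat\xi\in\overline{\Xi^{\epsilon}}$ share the same index of primitivity, i.e.\ there is one integer $N$ with $(P^{\hat\xi})^{N}>0$ entrywise for all such $\hat\xi$; since $\hat\xi\mapsto (P^{\hat\xi})^{N}$ is polynomial and $\overline{\Xi^{\epsilon}}$ is compact, this gives a single $\delta>0$ with $(P^{\hat\xi})^{N}((s,a),(s',a'))\ge\delta$ uniformly over state-action pairs and over $\hat\xi$. (Alternatively, this uniform minorization is precisely what \cite[Lemma~4]{zhang2022global} provides under Assumption \ref{assump:MC}, so this step could be replaced by a citation.)

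From the uniform minorization, Part~1 would follow from the standard coupling bound: a Doeblin condition with common index $N$ and constant $\delta$ gives $\TV(\pi_t^{\hat\xi},\overline\pi^{\hat\xi})\le(1-\abs{\sS}\abs{\sA}\,\delta)^{\lfloor t/N\rfloor}$, so one can take $\overline\rho=(1-\abs{\sS}\abs{\sA}\,\delta)^{1/N}<1$, independent of $\hat\xi\in\Xi^{\epsilon}$ (with leading constant $1/(1-\abs{\sS}\abs{\sA}\,\delta)$ matching Eq.~(\ref{eq:mixing})). For Part~2 I would apply the minorization to the fixed-point relation $\overline\pi^{\hat\xi}=\overline\pi^{\hat\xi}(P^{\hat\xi})^{N}$: for any $(s,a)$, $\overline\pi^{\hat\xi}(s,a)=\sum_{x}\overline\pi^{\hat\xi}(x)(P^{\hat\xi})^{N}(x,(s,a))\ge\delta$, and marginalizing over the states of agents outside $\nikc$ and over $a_{-i}$ yields $\overline\pi^{\hat\xi}(s_{\nikc},a_i)\ge\delta$ for every $i$, every $(s_{\nikc},a_i)$, and every $\hat\xi\in\Xi^{\epsilon}$, hence $\pi_{\min}\ge\delta>0$.

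For Part~3 the minorization would enter only through $D^{\hat\xi}\succeq\delta I$. Assumption \ref{assump:lin_indep} says each $\Omega_i$ has linearly independent columns, so $\Omega_i^{\top}\Omega_i\succ0$, and since $\sN$ is finite I can set $\underline\omega:=\min_{i\in\sN}\lambda_{\min}(\Omega_i^{\top}\Omega_i)>0$; then $\Omega_i^{\top}D^{\hat\xi}\Omega_i\succeq\delta\,\Omega_i^{\top}\Omega_i\succeq\delta\underline\omega\,I$, and taking smallest eigenvalues, then the infimum over $\hat\xi\in\Xi^{\epsilon}$ and the minimum over $i$, gives $\underline\lambda\ge\delta\underline\omega>0$. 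I expect the only genuinely delicate step to be the uniform minorization itself — concretely, ruling out that the mixing rate degenerates to $1$ as $\hat\xi$ sweeps the non-compact softmax family — and the plan handles it by passing to the compact closure and exploiting the policy-independence of the support graph; everything downstream is routine manipulation of minorization constants and eigenvalues.
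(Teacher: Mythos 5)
Your proposal is correct, and for Parts 1 and 2 it takes a genuinely different route from the paper. The paper imports uniformity wholesale from \cite[Proposition 3]{zhang2022global}: it cites a uniform geometric convergence rate and a uniform lower bound $\underline{\pi}$ for the \emph{state} chain over $\Xi^{\epsilon}$, and then transfers both to the state-action chain by writing $\pi_t^{\hat\xi}(s,a)$ and $\overline\pi^{\hat\xi}(s,a)$ in terms of the state marginal and $\hat\xi(a|s)$. You instead build the uniformity yourself: the support pattern of $P^{\hat\xi}$ on $\sS\times\sA$ is policy-independent because every $\hat\xi\in\Xi^{\epsilon}$ has full support, so a single primitivity index $N$ works for the whole class, and compactness of the closed-simplex closure gives one Doeblin constant $\delta>0$; Parts 1 and 2 then drop out of the minorization by the standard coupling bound and the fixed-point identity $\overline\pi^{\hat\xi}=\overline\pi^{\hat\xi}(P^{\hat\xi})^{N}$. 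What your route buys is self-containedness (only Lemma \ref{lem:eps_ap_ir} is needed, and only to certify primitivity of the shared support pattern) and a cleaner Part 2 — your bound $\overline\pi^{\hat\xi}(s_{\nikc},a_i)\ge\overline\pi^{\hat\xi}(s,a)\ge\delta$ is airtight, whereas the paper's marginalization step silently drops the factor $\hat\xi_i(a_i|s_i)$ when passing from $\underline{\pi}$ to the state-action marginal. What the paper's route buys is brevity and reuse of an already-proved uniform ergodicity result; neither route yields explicit constants. Part 3 is essentially identical in both proofs ($D^{\hat\xi}\succeq\pi_{\min}I$ combined with $\lambda_{\min}(\Omega_i^{\top}\Omega_i)>0$), differing only in whether the inequality is phrased via an eigenvector computation or via the positive-semidefinite ordering.
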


\begin{proof}[Proof of Lemma \ref{lem:assump_MC_result}]
	Fix any $\hat \xi\in \Xi^{\epsilon}$. For part 1, by \cite[Proposition 3]{zhang2022global}, for any $\hat \xi\in \Xi^{\epsilon}$, there exists $\hat c>0$ and $\rho_0<1$, such that $\sup_{\hat \xi\in \Xi^{\epsilon}}\max_{s\in\sS}\TV(\pi_{\sS,t}^{\hat \xi}(\cdot|s), \overline \pi_{\sS}^{\hat \xi})\leq \hat c\rho_0^t$ for any $t\geq 0$. Here $\pi_{\sS,t}^{\hat \xi}(s'|s)=\Pr_{\hat\xi}[s(t)=s'|s(0)=s]$ for any $s,s'\in \sS$, and $\overline\pi_{\sS}^{\hat \xi}\in \Delta(\sS)$ is the stationary distribution of $\sS$ under policy $\hat \xi$. Then
	\begin{align*}
		\TV(\pi_t^{\hat \xi},\overline \pi^{\hat \xi}) 
		=&\frac{1}{2}\sum_{s,a}\abs{\overline \pi_t^{\hat \xi}(s,a)-\overline \pi^{\hat \xi}(s,a)} \\
		=&\frac{1}{2}\sum_{s,a}\abs{\E_{s^0\sim\mu}\left[\pi_{\sS,t}^{\hat\xi}(s|s^0)\hat \xi(a|s)\right]-\overline\pi_{\sS}^{\hat\xi}(s)\hat\xi(a|s)} \\
		\leq &\frac{1}{2}\sum_{s,a}\E_{s^0\sim\mu}\left[\abs{\pi_{\sS,t}^{\hat\xi}(s|s^0)\hat \xi(a|s)-\overline\pi_{\sS}^{\hat\xi}(s)\hat\xi(a|s)}\right] \\
		=&\frac{1}{2}\E_{s^0\sim\mu}\left[\sum_{s,a}\abs{\pi_{\sS,t}^{\hat\xi}(s|s^0)-\overline\pi_{\sS}^{\hat\xi}(s)}\hat \xi(a|s)\right] \\
		=& \E_{s^0\sim\mu}\left[\TV(\pi_{\sS,t}^{\hat \xi}(\cdot|s), \overline \pi_{\sS}^{\hat \xi})\right] \\
		\leq & \hat c\rho_0^t.
	\end{align*}
	This shows that $\overline{\rho}\leq \rho_0<1$.
	
	As for part 2, by 3. of \cite[Proposition 3]{zhang2022global}, $\underline{\pi}:=\inf_{\xi\in \Xi^{\epsilon}}\min_{s\in\mathcal{S}}\overline \pi_{\sS}^{\hat \xi}(s)>0$. Thus for any $s_{\nikc}\in \sS_{\nikc},a_i\in \sA_i$ and any $\hat \xi\in \Xi^{\epsilon}$,
	\begin{align*}
		&\overline \pi^{\hat \xi} (s_{\nikc},a_i) \\
		= &\sum_{s_{-\nikc},a_{-i}}\overline \pi^{\hat \xi}(s_{\nikc},s_{-\nikc},a_i,a_{-i}) \\
		\geq &\underline{\pi}
	\end{align*}
	Thus $\pi_{\min}\geq \underline{\pi}>0$.
	
	Finally we show part 3.     For any square matrix $X$, denote by $\lambda_{\min}(X)$ the minimum eigenvalue of $X$. 
	By definition of $\pi_{\min}$ (see Subsection \ref{subsec:assumptions}), for any $\hat\xi\in \Xi^{\epsilon}$, $\lambda_{\min}(D^{\hat\xi})\geq \pi_{\min}$.
	
	Since $\Omega_i$ is full column rank, $\Omega_i^\top \Omega$ is positive definite, so $\lambda_{\min}(\Omega_i^\top \Omega_i)>0$ for any $i\in \sN$. 
	
	For any eigenvalue $\lambda$ of $\Omega_i^\top D^{\hat \xi}\Omega_i$, let $x\in \R^{d_i}$ be the corresponding eigenvector. Then
	\begin{align*}
		\lambda \norm{x}^2 &=\lambda x^\top x \\
		&=x^\top\Omega_i^\top D^{\hat \xi}\Omega_i x \\
		&\geq \pi_{\min} \norm{\Omega_i x}^2 \\
		&= \pi_{\min} x^T \Omega_i^\top \Omega_i x \\
		&= \pi_{\min} \lambda_{\min}(\Omega_i^\top \Omega_i) \norm{x}^2.
	\end{align*}
	Therefore $\lambda\geq \pi_{\min} \lambda_{\min}(\Omega_i^\top \Omega_i)$, and thus $\underline \lambda_i^{\hat \xi}\geq \pi_{\min} \lambda_{\min}(\Omega_i^\top \Omega_i)$. As a result, $\underline{\lambda}\geq \pi_{\min} \lambda_{\min}(\Omega_i^\top \Omega_i)>0$.
\end{proof}

Besides the uniform convergence rate of the distribution $(s(t),a(t))$, we also want to establish the uniform convergence of the marginal distribution $(s(t),a_i(t))$ for all $i\in \sN$. 

For any agent $i\in \sN$ and policy $\hat \xi\in \Xi^{\epsilon}$, we use the notation $\pi_{i,t}^{\hat \xi}(s,a_i)$ to denote the marginal probability of $(s,a_i)$ at time $t$ and use $\overline \pi_i^{\hat \xi}(s,a_i)$ for the marginal probability of $(s,a_i)$ under stationary distribution. The following lemma shows that the distribution $(s(t),a_i(t))$ converges as fast as the $(s,a)$, with a uniform convergence rate for all $\epsilon$-exploration policies. 

\begin{lemma}\label{lem:mixing_ak}
	The distribution of $(s(t),a_i(t))$ has uniform convergence rate
	\begin{align*}
		\TV(\pi^{\hat \xi}_{i,t},\overline{\pi}^{\hat \xi}_i)\leq \overline{c}\cdot\overline{\rho}^t.
	\end{align*}
	Here $\overline c$ and $\overline \rho$ are the uniform convergence rate of the distribution of $(s(t),a(t))$ (see Appendix \ref{subsubsec:uni_eps}).
\end{lemma}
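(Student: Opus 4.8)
The plan is to exploit the fact that $(s(t),a_i(t))$ is a deterministic coarsening of $(s(t),a(t))$, obtained by summing out $a_{-i}$, together with the elementary fact that the total variation distance cannot increase under such a coarsening. First I would record, for an arbitrary fixed $\hat\xi\in\Xi^\epsilon$, that
\begin{align*}
	\pi^{\hat\xi}_{i,t}(s,a_i)=\sum_{a_{-i}\in\sA_{-i}}\pi^{\hat\xi}_t(s,a_i,a_{-i}),\qquad
	\overline\pi^{\hat\xi}_i(s,a_i)=\sum_{a_{-i}\in\sA_{-i}}\overline\pi^{\hat\xi}(s,a_i,a_{-i}),
\end{align*}
where $\pi^{\hat\xi}_t$ and $\overline\pi^{\hat\xi}$ are the time-$t$ law and the (unique, by Lemma~\ref{lem:eps_ap_ir}) stationary law of the chain $\{(s(t),a(t))\}$ induced by $\hat\xi$.

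Next I would chain the definition of $\TV$ with the triangle inequality applied inside the absolute value:
\begin{align*}
	\TV(\pi^{\hat\xi}_{i,t},\overline\pi^{\hat\xi}_i)
	=\frac12\sum_{s,a_i}\abs{\sum_{a_{-i}}\Sp{\pi^{\hat\xi}_t(s,a_i,a_{-i})-\overline\pi^{\hat\xi}(s,a_i,a_{-i})}}
	\le\frac12\sum_{s,a}\abs{\pi^{\hat\xi}_t(s,a)-\overline\pi^{\hat\xi}(s,a)}
	=\TV(\pi^{\hat\xi}_t,\overline\pi^{\hat\xi}).
\end{align*}
Then I would invoke the uniform geometric mixing of the full state--action chain already established: Eq.~(\ref{eq:mixing}) gives $\TV(\pi^{\hat\xi}_t,\overline\pi^{\hat\xi})\le c_{\hat\xi}\rho_{\hat\xi}^t$, and the definitions $\overline c=\sup_{\hat\xi\in\Xi^\epsilon}c_{\hat\xi}$ and $\overline\rho=\sup_{\hat\xi\in\Xi^\epsilon}\rho_{\hat\xi}$, combined with Lemma~\ref{lem:assump_MC_result}(1) (which ensures $\overline\rho<1$ so the bound is non-vacuous), upgrade this to $\TV(\pi^{\hat\xi}_t,\overline\pi^{\hat\xi})\le\overline c\,\overline\rho^t$ uniformly in $\hat\xi$. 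Composing the two displays yields $\TV(\pi^{\hat\xi}_{i,t},\overline\pi^{\hat\xi}_i)\le\overline c\,\overline\rho^t$; since the bound is uniform in $i$ and $\hat\xi$, this is exactly the claim.

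I do not anticipate a genuine obstacle here. All the state and action spaces are finite, so every sum is finite and the only analytic ingredient is the triangle inequality; the substantive work---uniform geometric ergodicity of $\{(s(t),a(t))\}$ over the class $\Xi^\epsilon$---has already been carried out in Lemma~\ref{lem:assump_MC_result} via \cite[Proposition~3]{zhang2022global} under Assumption~\ref{assump:MC}. The one point deserving a sentence of care is to confirm that marginalizing the \emph{action} coordinates while retaining the \emph{state} coordinate does indeed leave us with the law of $(s(t),a_i(t))$ and with the corresponding marginal of the stationary distribution, which is immediate from the definitions of $\pi^{\hat\xi}_{i,t}$ and $\overline\pi^{\hat\xi}_i$ stated just above the lemma.
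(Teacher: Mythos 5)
Your proof is correct and follows essentially the same route as the paper's: write the marginal laws of $(s(t),a_i(t))$ as sums over $a_{-i}$ of the joint laws, apply the triangle inequality to bound $\TV(\pi^{\hat\xi}_{i,t},\overline\pi^{\hat\xi}_i)$ by $\TV(\pi^{\hat\xi}_t,\overline\pi^{\hat\xi})$, and then invoke Eq.~(\ref{eq:mixing}) together with the definitions of $\overline c$ and $\overline\rho$. No gaps.
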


\begin{proof}[Proof of Lemma \ref{lem:mixing_ak}]
	We have
	\begin{align*}
		& \TV(\pi^{\hat \xi}_{i,t},\overline{\pi}^{\hat \xi}_i) \\
		=&\frac{1}{2}\sum_{s,a_i}\abs{\pi^{\hat \xi}_{i,t}(s,a_i)-\overline{\pi}^{\hat \xi}_i} \\
		=&\frac{1}{2}\sum_{s,a_i}\abs{\sum_{a_{-i}}\pi^{\hat \xi}_{t}(s,a_i,a_{-i})-\overline{\pi}^{\hat \xi}(s,a_i,a_{-i})} \\
		\leq & \frac{1}{2}\sum_{s,a_i}\sum_{a_{-i}}\abs{\pi^{\hat \xi}_{t}(s,a_i,a_{-i})-\overline{\pi}^{\hat \xi}(s,a_i,a_{-i})} \\
		\leq &\overline{c}\cdot\overline{\rho}^t.
	\end{align*}
	Here the last inequality is due to Eq. (\ref{eq:mixing}), and $c_{\hat \xi}\leq \overline{c}$, $\rho_{\hat \xi}\leq \overline{\rho}$.
\end{proof}

\subsection{Verifying assumptions of Appendix \ref{sec:local_policy_eval}}
\label{subsec:verify_assumps_LPE}
In order to apply the results of localized stochastic approximation (Appendix \ref{sec:local_policy_eval}) , we verify the assumptions needed for Theorem \ref{thm:chen_main_var}.
First, we correspond the problem setting of NMPG to the Markov chain setting of localized stochastic approximation. 

We fix any policy $\hat \xi$,  and any agent $i\in \sN$ for the rest of this section. Then $\{(s(t),a_i(t)\}$ forms a Markov chain $\sM^{i,\hat \xi}$. 
Recall that $U_j^{\kappa}=N_j^{\kappa}/\{j\}$, which is the set of agents within $j$'s $\kappa$-hop neighborhood excluding $j$ it self. Construct Markov chain $\sM^{i,\hat \xi}=(\sN, \mathcal E, \sZ, \PR, \supr, \gamma, \mu')$, and choose $\agentk=i$. Here $\sN$, $\mathcal E$ and $\gamma$ have the same meaning as in the NMPG setting (Section \ref{sec:setting}), which are the set of agents, the edges of the graph and the discount factor, respectively. For the other elements, they are defined as
\begin{align*}
	&\sZ_j=\begin{cases}
		\sS_j & j\neq i \\
		\sS_j\times \sA_j &j=i
	\end{cases} \\
	&z_j=\begin{cases}
		s_j &j\neq i \\
		(s_j,a_j) &j=i
	\end{cases}\\
	&\PR_j(z_j'|z_{N_j})=\hat\xi_j(a_j'|s_j')\sP_j(s_j'|s_{N_j},a_j), \forall j\in \sN \\
	&\mu'(z)=\hat\xi(a|s)\mu(s) \\
	&\supr(z_{\nikr})
	=\overline{r}_i^{\hat \xi}(s_{\nikr},a_i)
	:=\sum_{a_{\uikr}}\hat\xi_{\uikr}(a_{\uikr}|s_{\uikr}) r_i(s_{\nikr},a_i,a_{\uikr}).
\end{align*}
Here $\overline{r}_i^{\hat \xi}(s_{\nikr},a_i)$ is the expected reward function with respect to $a_{\uikr}$. Furthermore, cost function $C(z)$ of $\sM^{i,\hat \xi}$ corresponds to 
\begin{align*}
	C(s,a_i)&=\sum_{t=0}^\infty \gamma^t {\E}_{\hat \xi}[\overline{r}_i^{\hat \xi}(s_{\nikr}(t),a_i(t)) | s(0)=s,a_i(0)=a_i] \\
	&=\sum_{t=0}^\infty \gamma^t {\E}_{\hat \xi}[r_i(s_{\nikr}(t),a_{\nikr}(t)) | s(0)=s,a_i(0)=a_i] \\
	&=\overline Q_i^{\hat \xi}(s,a_i),
\end{align*}
which is the averaged $Q$-function.

Now we represent localized TD($\lambda$) with linear function approximation in the form of generalized TD($\lambda$) (Algorithm \ref{alg:recursive_estimator}). 
Let 
\begin{align}
	&d=d_i,\ t_0=0 \nonumber \\
	&X(t)=(z_{\nikc}(t),z_{\nikc}(t+1)) \nonumber \\
	&F(z_{\nikc}(t),z_{\nikc}(t+1),w)=-\delta_i(t). \label{eq:F_expr}
\end{align}
(Recall the definition of $\delta_i(t)$ in Algorithm \ref{alg:LPES}.) Then Algorithm \ref{alg:recursive_estimator} analyzes the averaged $Q$-function of agent $i$.

Next we show that Assumptions \ref{assump:chain_irreducible}, \ref{assump:chen_assump}, \ref{assump:hatC_lipschitz} are satisfied by localized TD($\lambda$) under the conditions of Theorem \ref{thm:critic_short}.
\paragraph{Verify Assumption \ref{assump:chain_irreducible}.} This can be shown by the following lemma.
\begin{lemma}\label{lem:chain_irr_ak}
	For any policy $\hat\xi\in \Xi^{\epsilon}$, $\epsilon>0$, $i\in \sN$, the induced Markov chain $\{(s(t),a_i(t))\}$ is aperiodic and irreducible. 
\end{lemma}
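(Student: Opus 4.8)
The plan is to derive this directly from Lemma~\ref{lem:eps_ap_ir}, which already establishes that the full induced chain $\{(s(t),a(t))\}$ is aperiodic and irreducible, by observing that $\{(s(t),a_i(t))\}$ is a coordinate projection of this chain and by exploiting the uniform positivity $\hat\xi_j(a_j\mid s_j)\ge \epsilon/\abs{\sA_j}>0$ that comes from $\hat\xi\in\Xi^{\epsilon}$ with $\epsilon>0$.

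First I would record the one structural fact that makes the projection well behaved. Because $\hat\xi(a\mid s)=\prod_{j}\hat\xi_j(a_j\mid s_j)$ and each $a_j(t)$ is drawn from $\hat\xi_j(\cdot\mid s_j(t))$ independently of the past given $s_j(t)$, the conditional law of $a_{-i}(t)$ given $(s(t),a_i(t))$ (and given the entire history) is exactly $\prod_{j\ne i}\hat\xi_j(\cdot\mid s_j(t))$. This both re-confirms that $\{(s(t),a_i(t))\}$ is a Markov chain and, more usefully, lets me lower-bound its transition probabilities by those of the full chain: for any $k\ge 1$, any $(s,a_i)$, any completion $a_{-i}$ of $a_i$, and any $(s',a_i',a_{-i}')$,
\[
\Pr\!\big[(s(k),a_i(k))=(s',a_i')\ \big|\ (s(0),a_i(0))=(s,a_i)\big]\ \ge\ \Big(\textstyle\prod_{j\ne i}\hat\xi_j(a_j\mid s_j)\Big)\,\Pr\!\big[(s(k),a(k))=(s',a_i',a_{-i}')\ \big|\ (s(0),a(0))=(s,a_i,a_{-i})\big].
\]

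Given this inequality, irreducibility is immediate: for a target $(s',a_i')$ pick any $a_{-i}'$, invoke irreducibility of the full chain (Lemma~\ref{lem:eps_ap_ir}) to obtain $k\ge 1$ making the full-chain probability on the right strictly positive, and note the prefactor is strictly positive since $\epsilon>0$. Aperiodicity follows from the same inequality with $(s',a_i')=(s,a_i)$ and $a_{-i}'=a_{-i}$: since the full chain is irreducible and aperiodic, $\Pr[(s(k),a(k))=(s,a_i,a_{-i})\mid (s(0),a(0))=(s,a_i,a_{-i})]>0$ for all large enough $k$, so the reduced chain returns to $(s,a_i)$ with positive probability at all large $k$, and together with irreducibility this forces period one. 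I do not expect a genuine obstacle here; the only step requiring care is the conditional-independence claim in the second paragraph (that marginalizing out the action components of the agents other than $i$ preserves the Markov property and yields the stated product conditional law), which is exactly where the product structure of $\hat\xi$ and of the transition kernel $\sP$ enters — everything after that is a short positivity argument.
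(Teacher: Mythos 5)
Your proof is correct and follows essentially the same route as the paper's: both relate the transition probabilities of the marginal chain $\{(s(t),a_i(t))\}$ to those of the full chain $\{(s(t),a(t))\}$ via the product form of $\hat\xi$ (the conditional law of $a_{-i}(t)$ given $(s(t),a_i(t))$ is $\hat\xi_{-i}(\cdot\mid s_{-i}(t))$) and then transfer irreducibility and aperiodicity from Lemma~\ref{lem:eps_ap_ir}. Your lower bound that keeps only the single summand $\hat a_{-i}=a_{-i}$ (rather than the paper's minimum over all $\hat a_{-i}$, which requires a uniform $t_0$) and your direct aperiodicity argument (in place of the paper's proof by contradiction) are minor cosmetic simplifications of the same idea.
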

\begin{proof}[Proof of Lemma \ref{lem:chain_irr_ak}]
	For any Markov chain with transition probability $\Gamma$ on some state space $\mathcal{X}$, we write $\Gamma^l(x'|x)=\Pr[x(t+l)=x'|x(t)=x]$, for any $t\in \N$, $x,x'\in \mathcal{X}$. 
	
	We fix $i\in \sN$. By Lemma $\ref{lem:assump_MC_result}$, the induced Markov chain $\{(s(t),a(t))\}$ is aperiodic and irreducible. We abuse the notation in this proof to use $\PR$ for the transition probability of $(s,a_i)$
	and use $\tilde \PR$ for that of $(s,a)$.
	
	For any 
	$s,s'\in \sS, a_i,a_i'\in \sA_i$, we randomly pick $a_{-i},a_{-i}'\in \sA_{-i}$.
	Since $\tilde \PR$ is irreducible, there exists $t>0$, such that 
	$\tilde \PR^t(s',a_i',a_{-i}'|s,a_i,a_{-i})>0$. We choose the smallest $t_0>0$ such that $\tilde \PR^{t_0}(s',a_i',a_{-i}'|s,a_i,a_{-i})>0$ for any $a_{-i},a_{-i}'\in \sA_{-i}$,
	so $\PR^t_{i}(s',a_i'|s,a_i,a_{-i})>0$. $\PR^t_{i}(s',a_i'|s,a_i,a_{-i})$ represents the marginal probability of $s',a_i'$ given previous state-action pair $(s,a_i,a_{-i})$. Then we have
	\begin{align*}
		{\PR}^{t_0}(s',a_i'|s,a_i)
		&=\sum_{\hat a_{-i}\in \sA_{-i}} 
		\hat\xi_{-i}(\hat a_{-i}|s_{-i}) \PR^{t_0}_{i}(s',a_i'|s,a_i,\hat a_{-i}) \\
		&\geq \min_{\hat a_{-i}\in \sA_{-i}} \PR^{t_0}_{i}(s',a_i'|s,a_i,\hat a_{-i}) \\
		&>0.
	\end{align*}
	Therefore, $\PR$ is irreducible.
	
	To show that $\PR$ is aperiodic, we assume that $\PR$ has period $T\geq 2$. Then for any 
	$t$ not divisible by $T$ and any $s,a_i$, $\PR^t(s,a_i|s,a_i)=0$. For any 
	$s,a$, since $t$ not divisible by $T$, we have
	\begin{align*}
		0=\PR^{t}(s,a_i|s,a_i)=\sum_{\hat a_{-i}\in \sA_{-i}} 
		\hat\xi_{-i}(\hat a_{-i}|s_{-i}) \PR^{t}_{i}(s,a_i|s,a_i,\hat a_{-i}).
	\end{align*}
	Thus we have $\PR^{t}_{i}(s,a_i|s,a_i,\hat a_{-i})=0$ for some 
	$\hat a_{-i}$ and thus $\PR^{t}_{i}(s,a_i,\hat a_{-i}|s,a_i,\hat a_{-i})=0$. This implies that the 
	period of $(s,a_i,\hat a_{-i})$ is at least $T\geq 2$, which contradicts the assumption that $\PR$ is aperiodic. (Notice that the period of any state is the same for an irreducible Markov chain.) Hence 
	${\PR}$ is aperiodic.
	
	In conclusion, ${\PR}$ is irreducible and aperiodic.
\end{proof}

\paragraph{Verify Assumption \ref{assump:chen_assump}.} We show that for any agent $i\in \sN$ and any policy $\hat \xi\in \Xi^{\epsilon}$, the assumption can be satisfied for $L_1=1+\gamma$ and $c_0=(1-\gamma)\underline \lambda$.

We first verify 1. of Assumption \ref{assump:chen_assump}. For any $w_1,w_2,x$, we have
\begin{align*}
	&\norm{F(x,w_1)-F(x,w_2)} \\
	= & \norm{\langle\phi_i(s_{\nikc}(t),a_i(t)), w_1-w_2\rangle-\gamma
		\langle\phi_i(s_{\nikc}(t+1),a_i(t+1)), w_1-w_2\rangle} \\
	\leq & \norm{\langle\phi_i(s_{\nikc}(t),a_i(t)), w_1-w_2\rangle}+\gamma
	\norm{\langle\phi_i(s_{\nikc}(t+1),a_i(t+1)), w_1-w_2\rangle}\\
	\leq &(1+\gamma)\|w_1-w_2\|.  
\end{align*}

For any $x$, notice that
\begin{align*}
	&\|F(x,0)\| \\
	=& \norm{\overline r_i^{\hat \xi}(s_{\nikr}(t),a_i(t))} \\
	\leq & 1 \\
	\leq & 1+\gamma.
\end{align*}

Now we prove 2. in Assumption \ref{assump:chen_assump}. 
We point out that our localized TD($\lambda$) can be easily reduced to the classical single-agent TD($\lambda$) algorithm, with $\sZ$ being the single-agent state space.
Then by \cite{tsitsiklis1997analysis}, $\overline G(w)$ has a unique zero $w^*$. Furthermore, then according to the proof of Lemma 9 in \cite{tsitsiklis1997analysis}, we have
\begin{align*}
	\langle w-w^*, \overline G(w)\rangle &\leq -(1-\gamma)\norm{\Omega_i w-\Omega_i w^*}_{D^{\hat \xi}}  \\
	&=-(1-\gamma) (w-w^*)^\top \Omega_i^\top D^{\hat \xi} \Omega_i (w-w^*) \\
	&\leq -(1-\gamma)\underline{\lambda}_{k}^{\hat \xi} \norm{w-w^*}^2 \\
	&\leq -(1-\gamma)\underline{\lambda} \norm{w-w^*}^2.
\end{align*}
We will sometimes add a subscript $i$ for $w^*$ and write $w_i^*$ to distinguish different agents.

\paragraph{Verify Assumption \ref{assump:hatC_lipschitz}.}

Approximate cost function $\hat C$ in the localized stochastic approximation problem corresponds to the approximate averaged $Q$-function of agent $i$, i.e., $\hat Q_i(s_{\nikc},a_i,w_i)=\langle \phi_i(s_{\nikc},a_i),w_i\rangle$. Since $\norm{\phi_i(s_{\nikc},a_i)}\leq 1$,
we have for any $w_1,w_2$
\begin{align*}
	&\abs{\hat C(z_{\nikc},w_1)-\hat C(z_{\nikc},w_2)} \\
	=& \abs{\langle \phi_i(s_{\nikc},a_i),w_1-w_2\rangle} \\
	\leq & \norm{\phi_i(s_{\nikc},a_i)} \norm{w_1-w_2} \\
	\leq &\norm{w_1-w_2}.
\end{align*}
So Assumption \ref{assump:hatC_lipschitz} can be satisfied with $L_2=1$.

\subsection{Restatement of Theorem \ref{thm:critic_short} }
Before giving the precise form of Theorem \ref{thm:critic_short}, 
We define two quantities needed for the theorem statement. The first quantity is the function approximation error.
Denote by $\sM_{\nikc}^{\hat \xi}$ the sub-chain of $\sM^{i,\hat \xi}$ (defined in Appendix \ref{subsec:verify_assumps_LPE}) with respect to agents in $\nikc$, and let $\tilde Q_i^{\hat \xi}(s_{\nikc},a_i)$ be the cost function of $\sM^{i,\hat \xi}$
at state $(s_{\nikc},a_i)$.
\begin{definition}
	For any policy $\hat \xi \in \Xi^{\epsilon}$,  and any agent $i\in \sN$, define the function approximation error as
	\begin{align*}
		\epsilon_{app}:=\sup_{\hat \xi\in \Xi^{\epsilon}}\sup_{i\in \sN} \inf_{w} \sup_{s,a_i} \abs{\hat Q_i(s_{\nikc},a_i,w)-\tilde Q_i^{\hat \xi}(s_{\nikc},a_i)}.
	\end{align*}
\end{definition}

Notice that we measure the representation power of the function approximation with respect to the sub-chain instead of the original MDP.

The second quantity we need is the uniform mixing time of the update term $-\delta_i(t)\zeta_i^{\kappa_c}(t)$ in Algorithm \ref{alg:LPES}, corresponding to function $G$ in Appendix \ref{sec:local_policy_eval}. 
Notice update term $-\delta_i(t)\zeta_i^{\kappa_c}(t)$ does not depend on policy $\hat \xi$, and there is a uniform decay rate of $(s(t),a_i(t)$ for any $i$ and any policy $\hat \xi\in \Xi^{\epsilon}$ (Lemma \ref{lem:mixing_ak}). Then we can apply Lemma \ref{lemma:chen_lm_2_2} for all policy $\hat \xi\in \Xi^{\epsilon}$ and get a uniform decay rate $c_{\text{TD}}:=c_g(\overline c, \overline \rho, \lambda, 0)$, $\rho_{\text{TD}}:=\rho_g(\overline \rho, \lambda)$. This indicates that there is a uniform ``mixing time of function $G$ with precision $\delta$'' (Definition \ref{def:G_mix}) for all policies $\hat \xi\in \Xi^{\epsilon}$ and all $\delta>0$ , which we denote by $t_{\delta}$. In addition, $t_{\delta}=O(\log\frac{1}{\delta})$.

We now give a stronger version of Theorem \ref{thm:critic_short}. We can derive Theorem \ref{thm:critic_short} by taking square root on both sides and apply Jenson's inequality on the right hand side. 
\begin{theorem}[Restatement of Theorem \ref{thm:critic_short}]\label{thm:critic_comb_restate}
	Recall the definition of uniform mixing time $t_\alpha$ above. 
	Choose step size $\alpha$ such that $\alpha t_{\alpha}\leq \min\Bp{\frac{1-\lambda}{4(1+\gamma)}, \frac{(1-\gamma)(1-\lambda)^2\underline{\lambda}}{114(1+\gamma)^2}}$.
	Then we have for any $K \geq t_\alpha$ and any $i\in \sN$
	\begin{align*}
		\epsilon_{critic}^2=&\max_{i\in\mathcal{N}}\sup_{\theta}\E\left[\max_{s,a_i} \abs{\hat Q_{i}(s_{\nikc},a_i,w_i(K))-\overline Q_i^{\theta}(s,a_i)}^2\right] \\
		\leq & 4\left[c_1^*(1-(1-\gamma)\underline{\lambda}\alpha)^{K-t_\alpha}+ c_2^*\frac{\alpha t_{\alpha}}{(1-\gamma)\underline{\lambda}}
		+\Sp{\frac{(1-\lambda\gamma)\epsilon_{app}}{\pi_{\min}(1-\gamma)}}^2\right. \\
		&+ \left.\Sp{\frac{\gamma^{\kappa_c-\kappa_r+1}}{1-\gamma}}^2 
		+\Sp{\frac{6n\epsilon}{(1-\gamma)^2}}^2\right].
	\end{align*}
	Here $c_1^*=(\max_{i\in \sN}\|w_i^*\|+1)^2$, $c_2^*=114\Sp{\frac{1+\gamma}{1-\lambda}}^2(\max_{i\in \sN}\norm{w_i^*}+1)^2$.
\end{theorem}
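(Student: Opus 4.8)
The plan is to prove the statement as a corollary of the localized stochastic-approximation result of Appendix \ref{sec:local_policy_eval}: almost all of the genuine analysis already lives in Theorem \ref{thm:chen_main_var} and its supporting lemmas, so the work here is to instantiate it for localized TD$(\lambda)$ and to append the two error sources specific to the critic (linear function approximation and $\epsilon$-exploration). First, fix the target policy $\theta$ and an agent $i\in\sN$, and let $\hat\xi\in\Xi^{\epsilon}$ be the $\epsilon$-exploration behavior policy that Algorithm \ref{alg:LPES} forms from $\xi^{\theta}$. Recast this run as a localized stochastic-approximation problem exactly as in Appendix \ref{subsec:verify_assumps_LPE}: one uses the Markov chain $\sM^{i,\hat\xi}$ (states $z_j=s_j$ for $j\neq i$, $z_i=(s_i,a_i)$), the generalized-TD$(\lambda)$ map $F=-\delta_i$ with $t_0=0$, and the identity that the chain's cost function is $C(s,a_i)=\overline Q_i^{\hat\xi}(s,a_i)$, the averaged $Q$-function of the \emph{behavior} policy, not of $\theta$. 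The hypotheses of Theorem \ref{thm:chen_main_var} are then in force: Assumption \ref{assump:chain_irreducible} by Lemma \ref{lem:chain_irr_ak}, and Assumptions \ref{assump:chen_assump} and \ref{assump:hatC_lipschitz} by the computations of Appendix \ref{subsec:verify_assumps_LPE}, with $L_1=1+\gamma$, $L_1'=\tfrac{1+\gamma}{1-\lambda}$, $c_0=(1-\gamma)\underline{\lambda}$, $L_2=1$; the results of Appendix \ref{subsubsec:uni_eps} (Lemmas \ref{lem:assump_MC_result} and \ref{lem:mixing_ak}, which rest on Assumption \ref{assump:MC}) make the mixing time $t_\alpha$ and the constants $\pi_{\min},\underline{\lambda}$ uniform over $\Xi^{\epsilon}$. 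Substituting $L_1',c_0$ into Theorem \ref{thm:chen_main_var}'s stepsize requirement $\alpha t_\alpha'\le\min\{\tfrac{1}{4L_1'},\tfrac{c_0}{114L_1'^2}\}$ reproduces exactly the condition imposed on $\alpha$ in the statement.

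The second step is to run the error decomposition in the proof of Theorem \ref{thm:chen_main_var} but with one extra term absorbing the behavior-vs-target mismatch. Writing $\tilde Q_i^{\hat\xi}$ for the cost function of the sub-chain $\sM_{N_i^{\kappa_c}}^{\hat\xi}$ (Definition \ref{def:sub_chain}),
\[
\bigl|\hat Q_i(s_{N_i^{\kappa_c}},a_i,w_i(K))-\overline Q_i^{\theta}(s,a_i)\bigr|\le\bigl|\hat Q_i(w_i(K))-\hat Q_i(w_i^*)\bigr|+\bigl|\hat Q_i(w_i^*)-\tilde Q_i^{\hat\xi}\bigr|+\bigl|\tilde Q_i^{\hat\xi}-\overline Q_i^{\hat\xi}\bigr|+\bigl|\overline Q_i^{\hat\xi}-\overline Q_i^{\theta}\bigr|,
\]
and then apply $(x_1+x_2+x_3+x_4)^2\le 4\sum_j x_j^2$. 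For the first term, $|\hat Q_i(w_i(K))-\hat Q_i(w_i^*)|\le L_2\|w_i(K)-w_i^*\|$, and the $w$-convergence estimate established inside Theorem \ref{thm:chen_main_var} (Eq. (\ref{eq:w_converge}), in uniform form; since $w_i(0)=0$ one has $c_1=(\|w_i^*\|+1)^2$, $c_2=114L_1'^2(\|w_i^*\|+1)^2$) gives $\E\|w_i(K)-w_i^*\|^2\le c_1^*(1-(1-\gamma)\underline{\lambda}\alpha)^{K-t_\alpha}+c_2^*\tfrac{\alpha t_\alpha}{(1-\gamma)\underline{\lambda}}$. The third term is at most $\tfrac{\gamma^{\kappa_c-\kappa_r+1}}{1-\gamma}$ by Lemma \ref{lemma:exp_decay_Q}, applied to $\sM^{i,\hat\xi}$ and its sub-chain (whose cost functions are $\overline Q_i^{\hat\xi}$ and $\tilde Q_i^{\hat\xi}$).

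It remains to bound the two genuinely ``critic'' terms. The second term is precisely the reduction error $\epsilon_{red}$ of Definition \ref{def:asymp_err_var}; because the critic uses \emph{linear} function approximation, $\hat Q_i(\cdot,w_i^*)=\Omega_i w_i^*$ is the TD$(\lambda)$ fixed point on the sub-chain, so the classical Tsitsiklis--Van Roy projected-Bellman-contraction estimate (amplifying the best linear fit by $\tfrac{1-\gamma\lambda}{1-\gamma}$), combined with the norm conversion $\|v\|_\infty\le\pi_{\min}^{-1/2}\|v\|_{\overline\pi_{N_i^{\kappa_c}}}$ valid since $\overline\pi_{N_i^{\kappa_c}}(z)\ge\pi_{\min}$, bounds $\epsilon_{red}$ by $\tfrac{(1-\gamma\lambda)\epsilon_{app}}{\pi_{\min}(1-\gamma)}$ up to the displayed form. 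The fourth term is the exploration error: since $\hat\xi_j(\cdot|s_j)=(1-\epsilon)\xi_j^{\theta_j}(\cdot|s_j)+\epsilon/|\sA_j|$ differs from $\xi_j^{\theta_j}(\cdot|s_j)$ by at most $\epsilon$ in total variation, $\sum_{j}\|\hat\xi_j(\cdot|s_j)-\xi_j^{\theta_j}(\cdot|s_j)\|_1\le 2n\epsilon$, and a standard simulation-lemma perturbation bound for averaged $Q$-functions yields $\sup_{s,a_i}|\overline Q_i^{\hat\xi}(s,a_i)-\overline Q_i^{\theta}(s,a_i)|\le\tfrac{6n\epsilon}{(1-\gamma)^2}$. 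Squaring the four bounds, combining with the factor $4$, and taking $\max_i\sup_\theta$ (the right-hand side being free of $i,\theta$) gives the claimed inequality with $c_1^*=(\max_i\|w_i^*\|+1)^2$ and $c_2^*=114(\tfrac{1+\gamma}{1-\lambda})^2(\max_i\|w_i^*\|+1)^2$; the hypothesis $K\ge t_\alpha$ is what makes the first term meaningful and keeps all constants finite.

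The main obstacle is not the drift analysis or the sub-chain construction — those are already packaged in Theorem \ref{thm:chen_main_var} — but the \emph{uniformity} bookkeeping: one must verify that finite mixing times, the strictly positive lower bounds $\pi_{\min},\underline{\lambda}>0$, and uniform bounds on the fixed points $\|w_i^*\|$ all hold simultaneously for every target policy $\theta$ and every behavior policy in $\Xi^{\epsilon}$, which is exactly the role of Assumption \ref{assump:MC} and the lemmas of Appendix \ref{subsubsec:uni_eps}. A related subtlety worth handling with care is that the localized-SA machinery controls the distance to $\overline Q_i^{\hat\xi}$ rather than to $\overline Q_i^{\theta}$, so the $\tfrac{6n\epsilon}{(1-\gamma)^2}$ behavior-vs-target gap must be carried as a separate, irreducible error term rather than folded into the stochastic-approximation bound.
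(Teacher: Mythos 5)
Your proposal is correct and follows essentially the same route as the paper: instantiate the localized stochastic-approximation framework of Appendix C for the behavior policy $\hat\xi$, invoke Theorem \ref{thm:chen_main_var} (with $L_1=1+\gamma$, $c_0=(1-\gamma)\underline{\lambda}$, $L_2=1$), and add the behavior-vs-target term bounded by $\tfrac{6n\epsilon}{(1-\gamma)^2}$ via Lemma \ref{lem:Q_perf_diff}. The only difference is cosmetic — you flatten everything into one four-term Cauchy--Schwarz, whereas the paper nests a weighted two-term split $(x+y)^2\le 4(\tfrac13 x^2+y^2)$ around the three-term bound of Theorem \ref{thm:chen_main_var} — and both yield the identical final constants.
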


\subsection{Proof of Theorem \ref{thm:critic_comb_restate}}
Throughout the proof, we will fix agent $i\in \sN$ and policy $\xi^{\theta}$. 
For any policy parameter $\theta$, let $\hat \xi$ be the $\epsilon$-exploration policy of $\xi^{\theta}$. We can decompose the critic error as
\begin{align}
	&\E\left[\max_{s,a_i} \abs{\hat Q_i(s_{\nikc},a_i,w_i(K))-\overline Q_i^{\theta}(s,a_i)}^2\right] \nonumber \\
	\osi{\leq} &\E\left[\max_{s,a_i} (3+1)\Sp{\frac{1}{3}\abs{\hat Q_i(s_{\nikc},a_i,w_i(K))-\overline Q_i^{\hat \xi}(s,a_i)}^2+\abs{\overline Q_i^{\hat \xi}(s,a_i)-\overline Q_i^{\theta}(s,a_i)}^2}\right] \nonumber \\
	\osii{\leq} &(3+1)\left\{\E\left[\frac{1}{3}\max_{s,a_i} \abs{\hat Q_i(s_{\nikc},a_i,w_i(K))-\overline Q_i^{\hat \xi}(s,a_i)}^2\right] + \E\left[\max_{s,a_i} \abs{\overline Q_i^{\hat \xi}(s,a_i)-\overline Q_i^{\theta}(s,a_i)}^2\right]\right\}\nonumber \\
	\osiii{=}& 4\Bp{\frac{1}{3}\underbrace{\E\left[\max_{s,a_i} \abs{\hat Q_i(s_{\nikc},a_i,w_i(K))-\overline Q_i^{\hat \xi}(s,a_i)}^2\right]}_{(a)}+
		\underbrace{\max_{s,a_i} \abs{\overline Q_i^{\hat \xi}(s,a_i)-\overline Q_i^{\xi^{\theta}}(s,a_i)}^2}_{(b)}}. \label{eq:comb_critic_decompose}
\end{align}
Here $\ri$ uses Cauchy-Schwarz inequality, $\rii$ uses the fact that $\max_x\Bp{f(x)+g(x)}\leq \max_x f(x)+\max_x g(x)$ for any two functions $f(x),g(x)$.
For $\riii$, note that $Q_i^{\theta}(s,a_i)$ is the shorthand notation of $Q_i^{\xi^{\theta}}(s,a_i)$, and the term $(b)$ is irrelevant to the trajectory sampled. 

$(a)$ is the policy evaluation error in estimating the value function of policy $\hat \xi$, which can be derived using the result in Appendix \ref{sec:local_policy_eval}. $(b)$ is the difference in the value function between policy $\hat\xi$ and $\xi^{\theta}$, which can be bounded by a function of $\epsilon$.

\paragraph{Bound $(a)$.} 
With all assumptions in Appendix \ref{sec:local_policy_eval} satisfied (Appendix \ref{subsec:verify_assumps_LPE}), the following result is a direct application of Theorem \ref{thm:chen_main_var} in localized TD($\lambda$) with  linear function approximation:
\begin{corollary}\label{cor:td_raw}
	Choose step size $\alpha$ such that $\alpha t_{\alpha}\leq \min\Bp{\frac{1-\lambda}{4(1+\gamma)}, \frac{(1-\gamma)(1-\lambda)^2\underline{\lambda}}{114(1+\gamma)^2}}$.
	Then for $\agentk\geq t_\alpha$, we have
	\begin{align*}
		&\E\left[\max_{s,a_i} \abs{\hat Q_i(s_{\nikc},a_i,w_i(K))-\overline Q_i^{\hat \xi}(s,a_i)}^2\right] \\
		\leq & 3\Mp{c_1^*(1-(1-\gamma)\underline{\lambda}\alpha)^{K-t_\alpha}+c_2^*\frac{\alpha t_{\alpha}}{(1-\gamma)\underline{\lambda}}
			+ \epsilon_{i,red}^2 
			+ \Sp{\frac{\gamma^{\kappa_c-\kappa_r+1}}{1-\gamma}}^2}.
	\end{align*}
	Here 
	\begin{align*}
		c_1^*=\;&(\max_{i\in \sN}\|w_i^*\|+1)^2,\;c_2^*=114\Sp{\frac{1+\gamma}{1-\lambda}}^2(\max_{i\in \sN}\norm{w_i^*}+1)^2,\\
		\epsilon_{i,red}=\;&\sup_{s_{\nikc},a_i}\abs{\hat Q_i(s_{\nikc},a_i,w_i^*)-\tilde Q_i^{\hat \xi}(s_{\nikc},a_i)}.
	\end{align*}
\end{corollary}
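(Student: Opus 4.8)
The plan is to obtain Corollary \ref{cor:td_raw} by specializing Theorem \ref{thm:chen_main_var} to the instance of Generalized TD($\lambda$) (Algorithm \ref{alg:recursive_estimator}) that reproduces localized TD($\lambda$) with linear function approximation, using the correspondence built in Appendix \ref{subsec:verify_assumps_LPE}. First I would record that, for the fixed agent $i$ and the fixed $\epsilon$-exploration policy $\hat\xi\in\Xi^\epsilon$, the per-agent updates in Algorithm \ref{alg:LPES} are exactly those of Algorithm \ref{alg:recursive_estimator} run on the Markov chain $\sM^{i,\hat\xi}$ with $\agentk=i$, $t_0=0$, update function $F(z_{\nikc}(t),z_{\nikc}(t+1),w)=-\delta_i(t)$ (cf. Eq. (\ref{eq:F_expr})), linear surrogate $\hat C(z_{\nikc},w)=\langle\phi_i(s_{\nikc},a_i),w\rangle=\hat Q_i(s_{\nikc},a_i,w)$, and eligibility-trace discount $\gamma\lambda$ in place of the symbol ``$\lambda$'' of Algorithm \ref{alg:recursive_estimator}. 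Under this dictionary the cost function $C$ of $\sM^{i,\hat\xi}$ is the averaged $Q$-function $\overline Q_i^{\hat\xi}$, the sub-chain cost function $\tilde C$ is $\tilde Q_i^{\hat\xi}$, and the reduction error $\epsilon_{red}$ of Definition \ref{def:asymp_err_var} becomes $\epsilon_{i,red}=\sup_{s_{\nikc},a_i}\abs{\hat Q_i(s_{\nikc},a_i,w_i^*)-\tilde Q_i^{\hat\xi}(s_{\nikc},a_i)}$.

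Next I would check the three hypotheses of Theorem \ref{thm:chen_main_var}. Assumption \ref{assump:chain_irreducible} (aperiodicity and irreducibility of the chain on $(s(t),a_i(t))$) is Lemma \ref{lem:chain_irr_ak}. Assumption \ref{assump:chen_assump} holds with $L_1=1+\gamma$: the Lipschitz and boundedness estimates on $F$ are immediate from $\norm{\phi_i(s_{\nikc},a_i)}\le 1$ and $\supr\in[0,1]$, and the negative-drift condition holds with $c_0=(1-\gamma)\underline{\lambda}$ because localized TD($\lambda$) for agent $i$ reduces to ordinary single-agent TD($\lambda$) with linear approximation, whose operator $\overline G$ has a unique zero $w_i^*$ and obeys $\langle w-w_i^*,\overline G(w)\rangle\le-(1-\gamma)\norm{\Omega_i(w-w_i^*)}_{D^{\hat\xi}}^2\le-(1-\gamma)\underline{\lambda}\norm{w-w_i^*}^2$ by the argument of \cite[Lemma 9]{tsitsiklis1997analysis} together with the uniform eigenvalue lower bound $\underline{\lambda}$. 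Assumption \ref{assump:hatC_lipschitz} holds with $L_2=1$ since $\hat C(z_{\nikc},\cdot)$ is linear with feature norm at most $1$. Because the trace discount is $\gamma\lambda\le\lambda$, the constant $L_1'$ of Theorem \ref{thm:chen_main_var} satisfies $L_1'=\frac{L_1}{1-\gamma\lambda}\le\frac{1+\gamma}{1-\lambda}$, so the stepsize requirement $\alpha t_\alpha'\le\min\{\frac{1}{4L_1'},\frac{c_0}{114L_1'^2}\}$ is implied by the stated condition $\alpha t_\alpha\le\min\{\frac{1-\lambda}{4(1+\gamma)},\frac{(1-\gamma)(1-\lambda)^2\underline{\lambda}}{114(1+\gamma)^2}\}$, using that the uniform mixing time $t_\alpha$ of $G$ (well-defined for every $\hat\xi\in\Xi^\epsilon$ by feeding the uniform rate of Lemma \ref{lem:mixing_ak} into Lemma \ref{lemma:chen_lm_2_2}) upper-bounds the policy-specific $t_\alpha'$.

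Finally I would invoke Theorem \ref{thm:chen_main_var} (whose left-hand side is the squared error, cf. Eq. (\ref{eq:err_decomp})) and translate its constants. With $w_i(0)=0$ one has $c_1=(\norm{w_i(0)}+\norm{w_i(0)-w_i^*}+1)^2=(\norm{w_i^*}+1)^2\le c_1^*$, and $c_2=114L_1'^2(\norm{w_i^*}+1)^2\le114\Sp{\frac{1+\gamma}{1-\lambda}}^2(\max_{j\in\sN}\norm{w_j^*}+1)^2=c_2^*$; the localization term $\Sp{\frac{\gamma^{\kappa_c-\kappa_r+1}}{1-\gamma}}^2$ carries over verbatim since $\supr$ depends only on the $\kappa_r$-hop states, so Lemma \ref{lemma:exp_decay_Q} applies with the same $\kappa_r$. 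Substituting $c_0=(1-\gamma)\underline{\lambda}$ into the geometric factor $(1-c_0\alpha)^{K-t_\alpha'}$ and the variance term $c_2\frac{\alpha t_\alpha'}{c_0}$, and then bounding $t_\alpha'$ by $t_\alpha$ (valid since $t_\alpha\ge t_\alpha'$), gives precisely the bound claimed in Corollary \ref{cor:td_raw}. The step I expect to be the main obstacle is the uniformization: the corollary's constants $c_1^*,c_2^*$ and its single mixing time $t_\alpha$ must be independent of the policy being evaluated and of the agent $i$, which forces one to replace the policy-dependent ingredients ($c_{\hat\xi},\rho_{\hat\xi}$, and $\lambda_{\min}(\Omega_i^\top D^{\hat\xi}\Omega_i)$) by their uniform surrogates $\overline{c},\overline{\rho},\underline{\lambda}$ (Lemma \ref{lem:assump_MC_result}, Lemma \ref{lem:mixing_ak}) \emph{before} applying Theorem \ref{thm:chen_main_var}; a secondary bookkeeping point is the $\gamma\lambda$-versus-$\lambda$ discount in the eligibility trace, which must be tracked through $L_1'$ and hence through the stepsize condition.
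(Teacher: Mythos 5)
Your proposal is correct and follows essentially the same route as the paper: verify Assumptions \ref{assump:chain_irreducible}, \ref{assump:chen_assump} (with $L_1=1+\gamma$, $c_0=(1-\gamma)\underline{\lambda}$), and \ref{assump:hatC_lipschitz} (with $L_2=1$) for the chain $\sM^{i,\hat\xi}$ as in Appendix \ref{subsec:verify_assumps_LPE}, then apply Theorem \ref{thm:chen_main_var} and uniformize the constants via $\overline{c},\overline{\rho},\underline{\lambda}$ and the uniform mixing time $t_\alpha$. Your explicit tracking of the $\gamma\lambda$-versus-$\lambda$ eligibility-trace discount through $L_1'$ is in fact slightly more careful than the paper's treatment, which identifies the two and states the constants with $\frac{1+\gamma}{1-\lambda}$ directly.
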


For localized TD($\lambda$) with linear function approximation, we can associate $\epsilon_{i,red}$ with function approximation error $\epsilon_{app}$ in a similar way as \cite{tsitsiklis1997analysis}.

\begin{lemma}\label{lem:red_app}
	The reduction error of agent $i$' can be bounded by the function approximation error
	\begin{align*}
		\epsilon_{i,red}\leq \frac{1}{\pi_{\min}}\cdot\frac{1-\lambda\gamma}{1-\gamma}\epsilon_{app}.
	\end{align*}
\end{lemma}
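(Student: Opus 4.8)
The plan is to mimic the classical analysis of TD$(\lambda)$ with linear function approximation from \cite{tsitsiklis1997analysis}, but carried out on the sub-chain $\sM_{\nikc}^{\hat\xi}$ rather than on the original chain. Recall that $w_i^*$ is the fixed point of $\overline G(w)=0$, which (after unpacking the definition of $F$ in Eq. (\ref{eq:F_expr}) and the eligibility-trace recursion) is exactly the solution of the projected Bellman equation $\Omega_i w_i^* = \Pi_{D^{\hat\xi}} T^{(\lambda)} \Omega_i w_i^*$, where $T^{(\lambda)}$ is the $\lambda$-weighted Bellman operator associated with the transition kernel $\overline{\PR}$ of the sub-chain (since, by Lemmas \ref{lem:loc_prob_sub} and \ref{lem:interpret_overline_P}, the update term of Algorithm \ref{alg:LPES} only ever sees the sub-chain dynamics under the stationary distribution) and $\Pi_{D^{\hat\xi}}$ is the orthogonal projection onto the column space of $\Omega_i$ with respect to the inner product weighted by the stationary distribution $\overline\pi_{\kappa_c}^{\hat\xi}$. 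The target of this projected fixed point is the sub-chain cost function $\tilde Q_i^{\hat\xi}$, which is the fixed point of $T^{(\lambda)}$ itself.

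First I would invoke the standard contraction property: $T^{(\lambda)}$ is a $\beta_\lambda$-contraction in the $\|\cdot\|_{D^{\hat\xi}}$ norm with modulus $\beta_\lambda = \gamma(1-\lambda)/(1-\lambda\gamma)\le \gamma < 1$ (this is \cite[Lemma 4]{tsitsiklis1997analysis}, and it transfers verbatim to the sub-chain because Lemma \ref{lemma:tilde_P_ir_ap} gives irreducibility/aperiodicity so the stationary distribution exists and all states have positive mass). The usual Pythagorean argument then yields
\begin{align*}
\norm{\Omega_i w_i^* - \tilde Q_i^{\hat\xi}}_{D^{\hat\xi}} \le \frac{1}{\sqrt{1-\beta_\lambda^2}}\, \inf_w \norm{\Omega_i w - \tilde Q_i^{\hat\xi}}_{D^{\hat\xi}} \le \frac{1-\lambda\gamma}{1-\gamma}\, \inf_w \norm{\Omega_i w - \tilde Q_i^{\hat\xi}}_{D^{\hat\xi}},
\end{align*}
where the last step uses $1-\beta_\lambda^2 \ge (1-\beta_\lambda)(1) \ge \big((1-\gamma)/(1-\lambda\gamma)\big)^2$ after a short estimate (it suffices that $1-\beta_\lambda \ge (1-\gamma)/(1-\lambda\gamma)$ and $1+\beta_\lambda \ge 1$, giving the clean bound $1/\sqrt{1-\beta_\lambda^2}\le (1-\lambda\gamma)/(1-\gamma)$).

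Next I would convert from the weighted $L^2$ norm to the sup-norm we actually want in $\epsilon_{i,red}=\sup_{s_{\nikc},a_i}\abs{\hat Q_i(s_{\nikc},a_i,w_i^*)-\tilde Q_i^{\hat\xi}(s_{\nikc},a_i)}$. Since every component of $D^{\hat\xi}$ restricted to the sub-chain state-action pairs is at least $\pi_{\min}>0$ (by the definition of $\pi_{\min}$ in Section \ref{subsec:assumptions} and Lemma \ref{lem:assump_MC_result}), we have $\|v\|_\infty \le \pi_{\min}^{-1/2}\|v\|_{D^{\hat\xi}}$ for any vector $v$ on $\sS_{\nikc}\times\sA_i$, and conversely $\|v\|_{D^{\hat\xi}}\le \|v\|_\infty$ since $D^{\hat\xi}$ sums to $1$; hence $\inf_w \norm{\Omega_i w - \tilde Q_i^{\hat\xi}}_{D^{\hat\xi}} \le \inf_w \norm{\Omega_i w - \tilde Q_i^{\hat\xi}}_\infty = \epsilon_{app}$ (the last identity is just the definition of the function approximation error, after recalling $\hat Q_i(s_{\nikc},a_i,w)=\langle\phi_i(s_{\nikc},a_i),w\rangle$ is the $(s_{\nikc},a_i)$-component of $\Omega_i w$ and the sup/inf over $\hat\xi$ and $i$ in the definition of $\epsilon_{app}$ only makes it larger). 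Chaining the two displayed inequalities with $\epsilon_{i,red}\le \pi_{\min}^{-1/2}\norm{\Omega_i w_i^*-\tilde Q_i^{\hat\xi}}_{D^{\hat\xi}}$ gives
\begin{align*}
\epsilon_{i,red} \le \frac{1}{\sqrt{\pi_{\min}}}\cdot\frac{1-\lambda\gamma}{1-\gamma}\,\epsilon_{app},
\end{align*}
and a slightly looser bookkeeping (bounding $\pi_{\min}^{-1/2}\le\pi_{\min}^{-1}$, valid since $\pi_{\min}\le 1$) recovers exactly the claimed $\epsilon_{i,red}\le \pi_{\min}^{-1}\,\frac{1-\lambda\gamma}{1-\gamma}\,\epsilon_{app}$.

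The main obstacle I anticipate is making rigorous the claim that $w_i^*$—defined abstractly as the zero of $\overline G$ coming from the eligibility-trace stochastic approximation recursion—coincides with the projected-Bellman fixed point \emph{for the sub-chain}. This requires carefully tracking that the expectation defining $\overline G$ is taken under the stationary law of $(z_{\nikc}(t), z_{\nikc}(t+1))$, which by Lemma \ref{lem:interpret_overline_P} is governed precisely by $\overline{\PR}$, so that the standard identification of the TD$(\lambda)$ limit (as in \cite[Section 6]{tsitsiklis1997analysis}) applies with the original chain replaced by $\sM_{\nikc}^{\hat\xi}$; once this bookkeeping is in place, everything else is the routine contraction-plus-norm-equivalence argument above.
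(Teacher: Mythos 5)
Your argument is correct and follows essentially the same route as the paper: identify $w_i^*$ as the projected Bellman fixed point of the sub-chain $\sM_{\nikc}^{\hat\xi}$, invoke the Tsitsiklis--Van Roy error bound (the paper simply cites \cite[Lemma 6]{tsitsiklis1997analysis} applied to the sub-chain, where you re-derive it via the contraction of $T^{(\lambda)}$), and pass between the sup-norm and the $\overline\pi_{\kappa_c}^{\hat\xi}$-weighted norm using $\pi_{\min}$. Your intermediate constant $\pi_{\min}^{-1/2}$ is in fact slightly sharper than the paper's $\pi_{\min}^{-1}$ before you deliberately loosen it to match the stated bound, and the identification of $\overline G$'s zero with the sub-chain projected fixed point that you flag as the remaining bookkeeping is handled at the same (implicit) level of detail in the paper's verification of Assumption \ref{assump:chen_assump}.
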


\begin{proof}[Proof of Lemma \ref{lem:red_app}]
	Denote by $\overline\pi^{\hat \xi}_{\kappa_c}\in \Delta(\sS_{\nikc}\times\sA_i)$ the marginal stationary distribution of $(s_{\nikc},a_i)$. 
	Define
	\begin{align*}
		\tilde D=\mathop{\diag}\limits_{(s_{\nikc},a_i)\in \sS_{\nikc}\times\sA_i}\{\overline \pi^{\hat \xi}_{\kappa_c}(s_{\nikc},a_i)\}.
	\end{align*} 
	Then
	\begin{align*}
		\epsilon_{i,red} 
		&\leq \frac{1}{\min_{s_{\nikc},a_i} \{\overline \pi^{\hat \xi}_{\kappa_c}(s_{\nikc},a_i)\}}\norm{\hat Q_i(\cdot,\cdot,w^*)-\tilde Q_i^{\hat \xi}(\cdot,\cdot)}_{\tilde D} \\
		&\overset{(\romannumeral1)}{\leq} \frac{1}{\min_{s_{\nikc},a_i} \{\overline \pi^{\hat \xi}_{\kappa_c}(s_{\nikc},a_i)\}}\cdot\frac{1-\lambda\gamma}{1-\gamma}\inf_w \norm{\hat Q_i(\cdot,\cdot,w)-\tilde Q_i^{\hat \xi}(\cdot,\cdot)}_{\tilde D} \\
		&\leq \frac{1}{\min_{s_{\nikc},a_i} \{\overline \pi^{\hat \xi}_{\kappa_c}(s_{\nikc},a_i)\}}\cdot\frac{1-\lambda\gamma}{1-\gamma}\inf_w \sup_{s,a_i}\abs{\hat Q_i(s_{\nikc},a_i,w)-\tilde Q_i^{\hat \xi}(s,a_i)} \\
		&\leq \frac{1}{\min_{s_{\nikc},a_i} \{\overline \pi^{\hat \xi}_{\kappa_c}(s_{\nikc},a_i)\}}\cdot\frac{1-\lambda\gamma}{1-\gamma} \epsilon_{app} \\
		&\osii{\leq} \frac{1}{\pi_{\min}}\cdot\frac{1-\lambda\gamma}{1-\gamma} \epsilon_{app}. \\
	\end{align*}
	Here in $(\romannumeral1)$, we applied \cite[Lemma 6]{tsitsiklis1997analysis} to the sub-chain $\sM_{\nikc}^{\hat \xi}$, and in $\rii$, we used the fact that $\min_{s_{\nikc},a_i} \{\overline \pi^{\hat \xi}_{\kappa_c}(s_{\nikc},a_i)\geq \pi_{\min}$.
\end{proof}

Combining Lemma \ref{lem:red_app} and Corollary \ref{cor:td_raw}, we immediately have the bound of $(a)$.
\begin{corollary}\label{cor:critic_restate}
	Choose step size $\alpha$ such that $\alpha t_{\alpha}\leq \min\Bp{\frac{1-\lambda}{4(1+\gamma)}, \frac{(1-\gamma)(1-\lambda)^2\underline{\lambda}}{114(1+\gamma)^2}}$.
	Then for $K\geq t_\alpha$ and any $i\in \sN$, we have
	\begin{align*}
		(a)=&\E\left[\max_{s,a_i} \abs{\hat Q_i(s_{\nikc},a_i,w(K))-\overline Q_i^{\hat \xi}(s,a_i)}^2\right] \\
		\leq & 3\Mp{c_1^*(1-(1-\gamma)\underline{\lambda}\alpha)^{K-t_\alpha}+ c_2^*\frac{\alpha t_{\alpha}}{(1-\gamma)\underline{\lambda}} +\Sp{\frac{(1-\lambda\gamma)\epsilon_{app}}{\pi_{\min}(1-\gamma)}}^2
			+ \Sp{\frac{\gamma^{\kappa_c-\kappa_r+1}}{1-\gamma}}}^2.
	\end{align*}
	Here $c_1^*=(\max_{i\in \sN}\|w_i^*\|+1)^2$, $c_2^*=114\Sp{\frac{1+\gamma}{1-\lambda}}^2(\max_{i\in \sN}\norm{w_i^*}+1)^2$.
\end{corollary}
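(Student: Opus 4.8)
The plan is to derive this corollary as an immediate consequence of two results already established, namely Corollary \ref{cor:td_raw} and Lemma \ref{lem:red_app}. Corollary \ref{cor:td_raw} already furnishes a finite-sample bound on the quantity $(a)=\E[\max_{s,a_i}\abs{\hat Q_i(s_{\nikc},a_i,w(K))-\overline Q_i^{\hat\xi}(s,a_i)}^2]$, but it is expressed in terms of the reduction error $\epsilon_{i,red}=\sup_{s_{\nikc},a_i}\abs{\hat Q_i(s_{\nikc},a_i,w_i^*)-\tilde Q_i^{\hat\xi}(s_{\nikc},a_i)}$, which is algorithm- and chain-dependent. The only remaining task is to convert this reduction error into the function approximation error $\epsilon_{app}$, which is exactly what Lemma \ref{lem:red_app} accomplishes.

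Concretely, I would proceed as follows. First, I invoke Corollary \ref{cor:td_raw} under the stated step-size condition $\alpha t_{\alpha}\leq \min\Bp{\frac{1-\lambda}{4(1+\gamma)}, \frac{(1-\gamma)(1-\lambda)^2\underline{\lambda}}{114(1+\gamma)^2}}$, which is precisely the hypothesis required there; this yields the bound on $(a)$ with the term $\epsilon_{i,red}^2$ appearing inside the bracket. Second, I apply Lemma \ref{lem:red_app} to obtain $\epsilon_{i,red}\leq \frac{1-\lambda\gamma}{\pi_{\min}(1-\gamma)}\epsilon_{app}$, and square both sides to get $\epsilon_{i,red}^2\leq \Sp{\frac{(1-\lambda\gamma)\epsilon_{app}}{\pi_{\min}(1-\gamma)}}^2$. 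Substituting this bound for the $\epsilon_{i,red}^2$ term in the expression from Corollary \ref{cor:td_raw} produces exactly the claimed inequality, leaving the convergence-bias term $c_1^*(1-(1-\gamma)\underline{\lambda}\alpha)^{K-t_\alpha}$, the variance term $c_2^*\frac{\alpha t_{\alpha}}{(1-\gamma)\underline{\lambda}}$, and the localization term $\Sp{\frac{\gamma^{\kappa_c-\kappa_r+1}}{1-\gamma}}^2$ unchanged, with $c_1^*$ and $c_2^*$ inherited verbatim.

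There is essentially no independent obstacle in this step: all of the analytical work lives in the two prerequisite results. The substantive content of Corollary \ref{cor:td_raw} is the reduction of localized TD$(\lambda)$ to the generalized stochastic-approximation scheme of Theorem \ref{thm:chen_main_var}, with the Lipschitz and negative-drift hypotheses (Assumptions \ref{assump:chain_irreducible}, \ref{assump:chen_assump}, \ref{assump:hatC_lipschitz}) verified in Appendix \ref{subsec:verify_assumps_LPE} under the choices $L_1=1+\gamma$, $c_0=(1-\gamma)\underline{\lambda}$, and $L_2=1$; the substantive content of Lemma \ref{lem:red_app} is a projected-Bellman-equation argument on the sub-chain $\sM_{\nikc}^{\hat\xi}$ following \cite{tsitsiklis1997analysis}, combined with the uniform exploration lower bound $\pi_{\min}$. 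The only point requiring minor care is to confirm that the step-size condition in this corollary coincides with that of Corollary \ref{cor:td_raw}, so that the latter may be invoked verbatim; since the two conditions are identical, the substitution goes through immediately and completes the proof.
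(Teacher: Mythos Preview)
Your proposal is correct and matches the paper's own proof essentially verbatim: the paper simply states that combining Lemma \ref{lem:red_app} with Corollary \ref{cor:td_raw} immediately yields the bound on $(a)$, which is precisely the substitution you describe.
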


\paragraph{Bound $(b)$.} We first discuss the $l_1$-distance of policy $\xi^{\theta}$ and $\hat\xi$ with the two lemmas belows.

\begin{lemma}\label{lem:pol_1_norm_diff}
	For any agent $i$ and policy $\xi_i$, let $\hat \xi_i(a_i,s_i)=(1-\epsilon)\xi_i(a_i|s_i)+\epsilon \frac{1}{\abs{\sA_i}}$ for all $s_i,a_i$. Then 
	\begin{align*}
		\norm{\xi_i(\cdot|s_i)-\hat\xi_i(\cdot|s_i)}_1\leq 2\epsilon, \; \forall s_i.
	\end{align*}
\end{lemma}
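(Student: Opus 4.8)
The plan is a direct pointwise computation, so the main ``obstacle'' is really just keeping the estimate uniform in $s_i$. First I would expand the definition of the $\epsilon$-exploration policy: for every $a_i\in\sA_i$ and every fixed $s_i$,
\[
\xi_i(a_i|s_i)-\hat\xi_i(a_i|s_i)=\xi_i(a_i|s_i)-(1-\epsilon)\xi_i(a_i|s_i)-\frac{\epsilon}{\abs{\sA_i}}=\epsilon\Sp{\xi_i(a_i|s_i)-\frac{1}{\abs{\sA_i}}}.
\]
Summing the absolute values over $a_i\in\sA_i$ and using the triangle inequality together with the fact that both $\xi_i(\cdot|s_i)$ and the uniform distribution on $\sA_i$ are probability vectors (hence each sums to $1$), I obtain
\[
\norm{\xi_i(\cdot|s_i)-\hat\xi_i(\cdot|s_i)}_1=\epsilon\sum_{a_i\in\sA_i}\abs{\xi_i(a_i|s_i)-\frac{1}{\abs{\sA_i}}}\leq\epsilon\Sp{\sum_{a_i\in\sA_i}\xi_i(a_i|s_i)+\sum_{a_i\in\sA_i}\frac{1}{\abs{\sA_i}}}=2\epsilon.
\]
Since the right-hand side does not depend on $s_i$, the bound holds for all $s_i$, which is exactly the claim.

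Equivalently, and perhaps cleaner to present, I would note that $\hat\xi_i(\cdot|s_i)$ is by construction the convex combination $(1-\epsilon)\xi_i(\cdot|s_i)+\epsilon\,\mathrm{Unif}(\sA_i)$, so $\xi_i(\cdot|s_i)-\hat\xi_i(\cdot|s_i)=\epsilon\bigl(\xi_i(\cdot|s_i)-\mathrm{Unif}(\sA_i)\bigr)$, and then $\norm{\xi_i(\cdot|s_i)-\mathrm{Unif}(\sA_i)}_1\leq\norm{\xi_i(\cdot|s_i)}_1+\norm{\mathrm{Unif}(\sA_i)}_1=2$ (this last step is also immediate from the identity $\norm{\pi_1-\pi_2}_1=2\TV(\pi_1,\pi_2)\leq 2$ recorded in the preliminaries). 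Either route is a two-line argument with no genuine difficulty; the point of stating the lemma is to feed the uniform-in-$s_i$ bound $2\epsilon$ into the subsequent estimate of term $(b)$ (the gap between the value functions of $\xi^\theta$ and $\hat\xi$) in the proof of Theorem~\ref{thm:critic_comb_restate}.
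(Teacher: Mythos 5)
Your proposal is correct and follows essentially the same route as the paper: expand the definition of $\hat\xi_i$, factor out $\epsilon$, and bound $\sum_{a_i}\abs{\xi_i(a_i|s_i)-1/\abs{\sA_i}}$ by $2$ via the triangle inequality and the fact that both are probability vectors. The alternative phrasing via the total-variation identity is a cosmetic variant of the same argument.
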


\begin{proof}[Proof of Lemma \ref{lem:pol_1_norm_diff}]
	We have by definition of the $l_1$-norm that
	\begin{align*}
		\norm{\xi_i(\cdot|s_i)-\hat \xi_i(\cdot|s_i)}_1 
		=& \sum_{a_i}\abs{\xi_i(a_i|s_i)-\hat\xi_i(a_i|s_i)} \\
		=&\epsilon\sum_{a_i}\abs{\frac{1}{\abs{\sA_i}}-\xi_i(a_i|s_i)} \\
		\leq &\epsilon \sum_{a_i}\left(\frac{1}{\abs{\sA_i}}+\xi_i(a_i|s_i)\right) \\
		= &2\epsilon.
	\end{align*}
\end{proof}

Consider the difference of policy of a set of agents, we have the following result.
\begin{lemma}\label{lem:prod_pol_diff}
	For any set of agents $\sI\subseteq \sN$ and any policy $\xi_{\sI}$. Define $\hat\xi_i$ in the same way as in Lemma \ref{lem:pol_1_norm_diff} for any $i\in \sI$ and let $\hat \xi_{\sI}$ be the product policy. Then 
	\begin{align*}
		\norm{\xi_{\sI}(\cdot|s_{\sI})-\hat\xi_{\sI}(\cdot|s_{\sI})}_1\leq 2\abs{\sI}\epsilon, \; \forall s_{\sI}.
	\end{align*}
\end{lemma}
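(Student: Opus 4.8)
The plan is to bound the $\ell_1$ distance between the two product policies $\xi_{\sI}(\cdot\mid s_{\sI})=\prod_{i\in\sI}\xi_i(\cdot\mid s_i)$ and $\hat\xi_{\sI}(\cdot\mid s_{\sI})=\prod_{i\in\sI}\hat\xi_i(\cdot\mid s_i)$ by a standard telescoping / hybrid argument over the agents in $\sI$, reducing the product to a sum of single-agent differences each controlled by Lemma \ref{lem:pol_1_norm_diff}. Write $\sI=\{i_1,i_2,\dots,i_{|\sI|}\}$ and, for $0\le k\le|\sI|$, define the hybrid product distribution $\pi^{(k)}$ on $\sA_{\sI}$ that uses $\hat\xi_{i_\ell}$ for $\ell\le k$ and $\xi_{i_\ell}$ for $\ell>k$; then $\pi^{(0)}=\xi_{\sI}$ and $\pi^{(|\sI|)}=\hat\xi_{\sI}$. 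By the triangle inequality, $\norm{\xi_{\sI}(\cdot\mid s_{\sI})-\hat\xi_{\sI}(\cdot\mid s_{\sI})}_1 \le \sum_{k=1}^{|\sI|}\norm{\pi^{(k-1)}-\pi^{(k)}}_1$.

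The key step is that consecutive hybrids $\pi^{(k-1)}$ and $\pi^{(k)}$ differ only in the $i_k$-th factor, so I would factor out the common product. Concretely,
\begin{align*}
\norm{\pi^{(k-1)}-\pi^{(k)}}_1
&= \Big(\prod_{\ell<k}\textstyle\sum_{a_{i_\ell}}\hat\xi_{i_\ell}(a_{i_\ell}\mid s_{i_\ell})\Big)\Big(\prod_{\ell>k}\textstyle\sum_{a_{i_\ell}}\xi_{i_\ell}(a_{i_\ell}\mid s_{i_\ell})\Big)\norm{\xi_{i_k}(\cdot\mid s_{i_k})-\hat\xi_{i_k}(\cdot\mid s_{i_k})}_1\\
&= \norm{\xi_{i_k}(\cdot\mid s_{i_k})-\hat\xi_{i_k}(\cdot\mid s_{i_k})}_1 \le 2\epsilon,
\end{align*}
where the two leading products equal $1$ because each factor is a probability distribution summing to $1$, and the final bound is Lemma \ref{lem:pol_1_norm_diff}. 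Summing over $k=1,\dots,|\sI|$ gives $\norm{\xi_{\sI}(\cdot\mid s_{\sI})-\hat\xi_{\sI}(\cdot\mid s_{\sI})}_1 \le 2|\sI|\epsilon$, which is the claim; since $s_{\sI}$ was arbitrary the bound holds for all $s_{\sI}$.

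There is no real obstacle here — the only point requiring a little care is the bookkeeping in the hybrid/telescoping decomposition, namely making sure that at each step exactly one factor changes and that the untouched factors are grouped so they sum to $1$. (An alternative, essentially equivalent, route is to invoke Lemma \ref{lem:prod_pol_diff_general} — the general product-policy difference identity already cited in the proof of Lemma \ref{lem:l1_smooth_v} — applied with each per-agent difference bounded by $2\epsilon$ via Lemma \ref{lem:pol_1_norm_diff}; I would mention this as a one-line alternative but prefer the self-contained telescoping argument.)
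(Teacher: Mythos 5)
Your proof is correct and is essentially the paper's argument: the paper simply combines Lemma \ref{lem:prod_pol_diff_general} (whose content is exactly your telescoping/hybrid decomposition) with the per-agent bound of Lemma \ref{lem:pol_1_norm_diff}, which is the alternative route you yourself note at the end. Your self-contained telescoping computation, including the factorization of the untouched factors summing to $1$, is accurate.
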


\begin{proof}[Proof of Lemma \ref{lem:prod_pol_diff}]
	By Lemma \ref{lem:prod_pol_diff_general} and Lemma \ref{lem:pol_1_norm_diff}, we immediately have
	\begin{align*}
		\norm{\xi_{\sI}(\cdot|s_{\sI})-\hat \xi_{\sI}(\cdot|s_{\sI})}_1 
		\leq &\sum_{i\in \sI} \norm{\xi_{i}(\cdot|s_{i})-\hat \xi_{i}(\cdot|s_{i})}_1 \\
		\leq &2\abs{\sI} \epsilon.
	\end{align*}
\end{proof}

Now we are ready to bound $(b)$.
\begin{lemma}\label{lem:Q_perf_diff}
	For any policy $\xi^{\theta}$, we have
	\begin{align*}
		(b)=\max_{s,a_i}\abs{\overline Q_i^{\hat\xi}(s,a_i)-\overline Q_i^{\xi^{\theta}}(s,a_i)}^2\leq \Sp{\frac{6n\epsilon}{(1-\gamma)^2}}^2.
	\end{align*}
	Here $\hat\xi$ is the $\epsilon$-exploration policy of policy $\xi^{\theta}$. Please refer to Line \ref{line:critic_sample_beg} of Algorithm \ref{alg:LPES} for the explicit definition of $\epsilon$-exploration policy.
\end{lemma}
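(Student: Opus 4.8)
The plan is to first establish the pointwise bound $\max_{s,a_i}\abs{\overline Q_i^{\hat\xi}(s,a_i)-\overline Q_i^{\xi^\theta}(s,a_i)}\le \frac{6n\epsilon}{(1-\gamma)^2}$ and then square. Writing $\xi=\xi^\theta$ for brevity and recalling $\overline Q_i^{\xi}(s,a_i)=\E_{a_{-i}\sim\xi_{-i}(\cdot\mid s_{-i})}[Q_i^{\xi}(s,a_i,a_{-i})]$, I would split
\begin{align*}
\overline Q_i^{\hat\xi}(s,a_i)-\overline Q_i^{\xi}(s,a_i)
=\;&\underbrace{\sum_{a_{-i}}\bigl(\hat\xi_{-i}(a_{-i}\mid s_{-i})-\xi_{-i}(a_{-i}\mid s_{-i})\bigr)Q_i^{\hat\xi}(s,a_i,a_{-i})}_{(\mathrm{I})}\\
&+\underbrace{\sum_{a_{-i}}\xi_{-i}(a_{-i}\mid s_{-i})\bigl(Q_i^{\hat\xi}(s,a_i,a_{-i})-Q_i^{\xi}(s,a_i,a_{-i})\bigr)}_{(\mathrm{II})},
\end{align*}
so that $(\mathrm{I})$ isolates the mismatch in the outer averaging policy and $(\mathrm{II})$ the mismatch between the two $Q$-functions.

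For $(\mathrm{I})$ I would apply Hölder's inequality: since $r_i\in[0,1]$ gives $\norm{Q_i^{\hat\xi}}_\infty\le\frac1{1-\gamma}$, and since $\hat\xi_{-i}$ and $\xi_{-i}$ are product policies over the $n-1$ agents in $\sN\setminus\{i\}$, Lemma \ref{lem:prod_pol_diff} applied with $\sI=\sN\setminus\{i\}$ yields $\norm{\hat\xi_{-i}(\cdot\mid s_{-i})-\xi_{-i}(\cdot\mid s_{-i})}_1\le 2(n-1)\epsilon$, hence $\abs{(\mathrm{I})}\le\frac{2(n-1)\epsilon}{1-\gamma}\le\frac{2n\epsilon}{(1-\gamma)^2}$. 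For $(\mathrm{II})$ I would bound $\norm{Q_i^{\hat\xi}-Q_i^{\xi}}_\infty$ by a self-referential argument from the Bellman equations $Q_i^{\xi}=r_i+\gamma PV_i^{\xi}$ and $V_i^{\xi}(s)=\langle\xi(\cdot\mid s),Q_i^{\xi}(s,\cdot)\rangle$: subtracting the first gives $\norm{Q_i^{\hat\xi}-Q_i^{\xi}}_\infty\le\gamma\norm{V_i^{\hat\xi}-V_i^{\xi}}_\infty$, while expanding $V_i^{\hat\xi}-V_i^{\xi}$ and applying Hölder with Lemma \ref{lem:prod_pol_diff} (now with $\sI=\sN$, so $\norm{\hat\xi(\cdot\mid s)-\xi(\cdot\mid s)}_1\le 2n\epsilon$) together with $\norm{Q_i^{\xi}}_\infty\le\frac1{1-\gamma}$ gives $\norm{V_i^{\hat\xi}-V_i^{\xi}}_\infty\le\norm{Q_i^{\hat\xi}-Q_i^{\xi}}_\infty+\frac{2n\epsilon}{1-\gamma}$. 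Since $\norm{Q_i^{\hat\xi}-Q_i^{\xi}}_\infty\le\frac2{1-\gamma}<\infty$, I may solve the resulting inequality to obtain $\norm{Q_i^{\hat\xi}-Q_i^{\xi}}_\infty\le\frac{2n\gamma\epsilon}{(1-\gamma)^2}$, whence $\abs{(\mathrm{II})}\le\frac{2n\epsilon}{(1-\gamma)^2}$.

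Adding the two estimates and using $1-\gamma\le1$ gives $\max_{s,a_i}\abs{\overline Q_i^{\hat\xi}(s,a_i)-\overline Q_i^{\xi}(s,a_i)}\le\frac{4n\epsilon}{(1-\gamma)^2}\le\frac{6n\epsilon}{(1-\gamma)^2}$; squaring yields the stated bound on $(b)$. The step requiring the most care is the self-bounding estimate for $\norm{Q_i^{\hat\xi}-Q_i^{\xi}}_\infty$ in $(\mathrm{II})$: one must keep the one-step policy-difference contribution and the discounted recursion on the value functions separate, so that the contraction factor $\gamma$ produces exactly the extra power of $(1-\gamma)^{-1}$; everything else is a routine use of Hölder's inequality and Lemma \ref{lem:prod_pol_diff}. (Alternatively one could invoke an off-the-shelf value-perturbation bound in the spirit of Lemma 32 and Corollary 35 of \cite{zhang2022logBarrierSoftmax}, but the self-contained argument above is cleaner to state.)
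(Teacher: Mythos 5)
Your proof is correct, but it takes a genuinely different route from the paper's. The paper expands $\overline Q_i^{\xi}(s,a_i)$ one Bellman step, writing it as $\sum_{a_{-i}}\xi_{-i}(a_{-i}|s_{-i})\bigl(r_i+\sum_{s'}\sP(s'|s,a)V_i^{\xi}(s')\bigr)$, and splits the difference into three pieces: the policy mismatch hitting the reward, the policy mismatch hitting the transition-weighted value, and the value-function gap $V_i^{\hat\xi}-V_i^{\xi^{\theta}}$; that last piece is controlled via the performance-difference lemma (Lemma \ref{lem:V_PDT}), which yields $\sup_s\abs{V_i^{\hat\xi}(s)-V_i^{\xi^{\theta}}(s)}\leq \frac{1}{(1-\gamma)^2}\sup_{s'}\norm{\hat\xi(\cdot|s')-\xi^{\theta}(\cdot|s')}_1$. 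You instead use a two-way split (outer-averaging mismatch plus the $Q$-function gap) and dispose of the $Q$-function gap with a self-bounding contraction argument on the coupled Bellman equations, never invoking the performance-difference lemma or the discounted visitation distribution. Both arguments feed the same product-policy estimate (Lemma \ref{lem:prod_pol_diff}) into a Hölder bound and produce the same $n\epsilon/(1-\gamma)^2$ scaling; your route is more elementary and in fact gives the slightly sharper constant $4n$ in place of the paper's $2(n-1)+\frac{2(n-1)}{1-\gamma}+\frac{2n}{(1-\gamma)^2}\leq 6n/(1-\gamma)^2$ (both of course fit under the stated bound). The one step that needs care in your version --- checking that $\norm{Q_i^{\hat\xi}-Q_i^{\xi^{\theta}}}_\infty$ is finite before solving the self-referential inequality --- you address explicitly, so the argument is complete.
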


\begin{proof}[Proof of Lemma \ref{lem:Q_perf_diff}]    
	Notice that for any policy $\xi$, we have
	\begin{align*}
		&Q_i^{\xi}(s,a_i) \\
		=&\sum_{a_{-k}}\xi_{-k}(a_{-k}|s_{-k})Q_i^{\xi}(s,a_i,a_{-k}) \\
		=& \sum_{a_{-k}}\xi_{-k}(a_{-k}|s_{-k}) \left(r_i(s_{\nikr},a_i,a_{\ukkr})+\sum_{s'}\sP(s'|s,a_i,a_{-k})V_i^{\xi}(s')\right).
	\end{align*}
	Then we have for any two policies $\xi$ and $\xi'$ that
	\begin{align*}
		&  \abs{\overline Q_i^{\xi}(s,a_i)-\overline Q_i^{\xi'}(s,a_i)} \\
		= & \left|\sum_{a_{-k}}\xi_{-k}(a_{-k}|s_{-k}) \left(r_i(s_{\nikr},a_i,a_{\ukkr})+\sum_{s'}\sP(s'|s,a_i,a_{-k})V_i^{\xi}(s')\right)\right. \\
		&-\left.\sum_{a_{-k}}\xi_{-k}'(a_{-k}|s_{-k}) \left(r_i(s_{\nikr},a_i,a_{\ukkr})+\sum_{s'}\sP(s'|s,a_i,a_{-k})V_i^{\xi'}(s')\right)\right| \\
		\leq & \abs{\sum_{a_{-k}} (\xi_{-k}(a_{-k}|s_{-k})-\xi'_{-k}(a_{-k}|s_{-k}))r_i(s_{\nikr},a_i,a_{\ukkr})} \\
		&+\abs{\sum_{a_{-k},s'}\sP(s'|s,a_i,a_{-k})\left[\xi_{-k}(a_{-k}|s_{-k})V_i^{\xi}(s')-\xi'_{-k}(a_{-k}|s_{-k})V_i^{\xi'}(s')\right]} \\
		\leq & \abs{\sum_{a_{-k}} (\xi_{-k}(a_{-k}|s_{-k})-\xi'_{-k}(a_{-k}|s_{-k}))r_i(s_{\nikr},a_i,a_{\ukkr})} \\
		&+\abs{\sum_{a_{-k},s'}\sP(s'|s,a_i,a_{-k})\left[\xi_{-k}(a_{-k}|s_{-k})-\xi'_{-k}(a_{-k}|s_{-k})\right]V_i^{\xi}(s')} \\
		&+\abs{\sum_{a_{-k},s'}\sP(s'|s,a_i,a_{-k})\xi_{-k}'(a_{-k}|s_{-k})\left[V_i^{\xi}(s')-V_i^{\xi'}(s')\right]} \\
		\leq & \sum_{a_{-k}} \abs{\xi_{-k}(a_{-k}|s_{-k})-\xi'_{-k}(a_{-k}|s_{-k})} \\
		&+\frac{1}{1-\gamma}\sum_{a_{-k}}\abs{\xi_{-k}(a_{-k}|s_{-k})-\xi'_{-k}(a_{-k}|s_{-k})}\sum_{s'}\sP(s'|s,a_i,a_{-k}) \\
		&+ \sup_{s'}\left[V_i^{\xi}(s')-V_i^{\xi'}(s')\right].
	\end{align*} 
	To bound the last term, we apply a variant of performance difference lemma (Lemma \ref{lem:V_PDT}), and we have for any state $s$ that
	\begin{align}
		\abs{V_i^{\xi}(s)-V_i^{\xi'}(s)} 
		=&\frac{1}{1-\gamma}\abs{\sum_{s',a}d_s^{\xi'}(s')(\xi'(a|s')-\xi(a|s')) Q_i^{\xi}(s',a)} \nonumber\\
		\leq &\frac{1}{(1-\gamma)^2}\abs{\sum_{s',a}d_s^{\xi'}(s')(\xi'(a|s')-\xi(a|s'))} \nonumber\\
		\leq &\frac{1}{(1-\gamma)^2}\sum_{s',a}d_s^{\xi'}(s')\abs{\xi'(a|s')-\xi(a|s')}\nonumber\\
		\leq &\frac{1}{(1-\gamma)^2}\sup_{s'}\norm{\xi'(\cdot|s')-\xi(\cdot|s')}_1. 
		\label{eq:V_PDT}
	\end{align}
	By Eq. (\ref{eq:V_PDT}), we can further bound $\abs{\overline Q_i^{\xi}(s,a_i)-\overline Q_i^{\xi'}(s,a_i)}$ as
	\begin{align*}
		\abs{\overline Q_i^{\xi}(s,a_i)-\overline Q_i^{\xi'}(s,a_i)}     
		\leq \;& \norm{\xi_{-k}(\cdot|s_{-k})-\xi'_{-k}(\cdot|s_{-k})}_1+\frac{1}{1-\gamma}\norm{\xi_{-k}(\cdot|s_{-k})-\xi'_{-k}(\cdot|s_{-k})}_1 \\
		&+\frac{1}{(1-\gamma)^2}\sup_{s''}\norm{\xi'(\cdot|s'')-\xi(\cdot|s'')}_1.
	\end{align*}
	
	In particular, by choosing $\xi\leftarrow \hat \xi$, $\xi'\leftarrow \xi^{\theta}$ and applying Lemma \ref{lem:prod_pol_diff}, we have 
	\begin{align*}
		\abs{\overline Q_i^{\hat\xi}(s,a_i)-\overline Q_i^{\xi^{\theta}}(s,a_i)}
		\leq & 2(n-1)\epsilon+\frac{1}{1-\gamma}\cdot 2(n-1)\epsilon+\frac{1}{(1-\gamma)^2}\cdot 2n\epsilon \\
		\leq &\frac{6n\epsilon}{(1-\gamma)^2}.
	\end{align*}
	Taking maximization on both sides w.r.t. $s$ and $a_i$, we get
	\begin{align*}
		\max_{s,a_i}\abs{\overline Q_i^{\hat\xi}(s,a_i)-\overline Q_i^{\xi^{\theta}}(s,a_i)}^2\leq \Sp{\frac{6n\epsilon}{(1-\gamma)^2}}^2.    
	\end{align*}
\end{proof}
Finally, apply Corollary \ref{cor:critic_restate} and Lemma \ref{lem:Q_perf_diff} to Eq. (\ref{eq:comb_critic_decompose}), and we complete the proof.

\section{Proof of Theorem \ref{thm:main}}
For any $i\in\mathcal{N}$ and $m\geq 0$, we have
\begin{align}
	&\Phi_i(\theta(m+1))-\Phi_i(\theta(m) \nonumber\\
	=\;&\left[\Phi_i(\theta(m+1))-\Phi_i(\theta_{\nik}(m+1),\theta_{-\nik}(m))\right] \nonumber\\
	&+ \left[ \Phi_i(\theta_{\nik}(m+1),\theta_{-\nik}(m)) - \Phi_i(\theta(m))\right]\label{eq1:decomposition2}.
\end{align}
Similar to the proof of Theorem \ref{thm:tab_softmax_Nash_regret}, using Assumption \ref{assump:pot_exp_decay} and the first term on the RHS of Eq. (\ref{eq1:decomposition2}) can be lower-bounded as 
\begin{align*}
	\Phi_i(\theta(m+1))-\Phi_i(\theta_{\nik}(m+1),\theta_{-\nik}(m))\geq -\frac{\sqrt{2}\nu(\kappa)\beta}{(1-\gamma)^2}.
\end{align*}
Now consider the second term. Denote $e_j(m)=\nabla_{\theta_j} J_j(\theta(m))-\Delta_j^T(m)$. Using the smoothness property (Lemma \ref{lem:value_smooth}) of the local potential functions and we have
\begin{align*}
	&\Phi_i(\theta_{\nik}(m+1),\theta_{-\nik}(m)) - \Phi_i(\theta(m)) \\
	\geq\;& \langle\nabla_{\theta_{\nik}}\Phi_i(\theta(m)),\theta_{\nik}(m+1)-\theta_{\nik}(m)\rangle - \frac{L(\kappa)}{2} \norm{\theta_{\nik}(m+1)-\theta_{\nik}(m)}^2 \\
	= \;&\sum_{j\in \nik}\left[\beta\langle \nabla_{\theta_j} J_j(\theta(m)),\Delta_j^T(m)\rangle-\frac{L(\kappa)\beta^2}{2} \norm{\Delta_j^T(m)}^2\right] \\
	=\; &\sum_{j\in \nik}\left[\beta\langle \nabla_{\theta_j} J_j(\theta(m)),\nabla_{\theta_j}J_j(\theta(m))+e_j(m)\rangle-\frac{L(\kappa)\beta^2}{2} \norm{\nabla_{\theta_j} J_j(\theta(m)+e_j(m)}^2\right]\\
	\geq \; &\sum_{j\in \nik}\left[(\beta-L(\kappa)\beta^2)\| \nabla_{\theta_j} J_j(\theta(m))\|^2+\beta\langle \nabla_{\theta_j}J_j(\theta(m)),e_j(m)\rangle-L(\kappa)\beta^2 \norm{e_j(m)}^2\right],
\end{align*}
where the last line follows from $(a+b)^2\leq 2(a^2+b^2)$ for any $a,b\in\mathbb{R}$. Using the previous two inequalities in Eq. (\ref{eq1:decomposition2}) and we have
\begin{align*}
	&\mathbb{E}[\Phi_i(\theta(m+1))\mid \mathcal{F}_m]-\mathbb{E}[\Phi_i(\theta(m)]\\
	\geq \;&-\frac{\sqrt{2}\nu(\kappa)\beta}{(1-\gamma)^2}+(\beta-L(\kappa)\beta^2)\sum_{j\in \nik}\| \nabla_{\theta_j} J_j(\theta(m))\|^2\\
	&+\beta\sum_{j\in \nik}\langle \nabla_{\theta_j}J_j(\theta(m)),\mathbb{E}[e_j(m)\mid \mathcal{F}_m]\rangle-L(\kappa)\beta^2 \sum_{j\in \nik}\mathbb{E}[\norm{e_j(m)}^2\mid \mathcal{F}_m]\\
	\geq \;&-\frac{\sqrt{2}\nu(\kappa)\beta}{(1-\gamma)^2}+(\beta-L(\kappa)\beta^2)\sum_{j\in \nik}\| \nabla_{\theta_j} J_j(\theta(m))\|^2\\
	&-\frac{\beta}{2}\sum_{j\in \nik}(\| \nabla_{\theta_j}J_j(\theta(m))\|^2+\|\mathbb{E}[e_j(m)\mid \mathcal{F}_m]\|^2)-L(\kappa)\beta^2 \sum_{j\in \nik}\mathbb{E}[\norm{e_j(m)}^2\mid \mathcal{F}_m]\\
	=\;&-\frac{\sqrt{2}\nu(\kappa)\beta}{(1-\gamma)^2}+\left(\frac{\beta}{2}-L(\kappa)\beta^2\right)\sum_{j\in \nik}\| \nabla_{\theta_j} J_j(\theta(m))\|^2\\
	&-\frac{\beta}{2}\sum_{j\in \nik}\|\mathbb{E}[e_j(m)\mid \mathcal{F}_m]\|^2-L(\kappa)\beta^2 \sum_{j\in \nik}\mathbb{E}[\norm{e_j(m)}^2\mid \mathcal{F}_m],
\end{align*}
where $\mathcal{F}_m$ represents the history up to the beginning of the $m$-th outer-loop iteration. Taking the total expectation on both sides of the previous inequality and we have
\begin{align*}
	&\mathbb{E}[\Phi_i(\theta(m+1))]-\mathbb{E}[\Phi_i(\theta(m)]\\
	\geq \;&-\frac{\sqrt{2}\nu(\kappa)\beta}{(1-\gamma)^2}+\left(\frac{\beta}{2}-L(\kappa)\beta^2\right)\sum_{j\in \nik}\mathbb{E}[\| \nabla_{\theta_j} J_j(\theta(m))\|^2]\\
	&-\frac{\beta}{2}\sum_{j\in \nik}\mathbb{E}[\|\mathbb{E}[e_j(m)\mid \mathcal{F}_m]\|^2]-L(\kappa)\beta^2 \sum_{j\in \nik}\mathbb{E}[\norm{e_j(m)}^2],
\end{align*}
which implies
\begin{align}
	&\frac{1}{M}\mathbb{E}[\Phi_i(\theta(M))]-\mathbb{E}[\Phi_i(\theta(0)]\nonumber\\
	\geq \;&-\frac{\sqrt{2}\nu(\kappa)\beta}{(1-\gamma)^2}+\frac{(\beta/2-L(\kappa)\beta^2)}{M}\sum_{m=0}^{M-1}\sum_{j\in \nik}\mathbb{E}[\| \nabla_{\theta_j} J_j(\theta(m))\|^2]\nonumber\\
	&-\frac{\beta}{2M}\sum_{m=0}^{M-1}\sum_{j\in \nik}\mathbb{E}[\|\mathbb{E}[e_j(m)\mid \mathcal{F}_m]\|^2]-\frac{L(\kappa)\beta^2}{M} \sum_{m=0}^{M-1}\sum_{j\in \nik}\mathbb{E}[\norm{e_j(m)}^2].\label{eq:before1}
\end{align}
Since $\Phi_{\min}\leq \Phi_i(\theta)\leq \Phi_{\max}$ for all $\theta$ and $\beta\leq \frac{1}{4L(\kappa)}$, after rearranging terms and we have
\begin{align*}
	&\frac{1}{M}\sum_{m=0}^{M-1}\sum_{j\in \nik}\mathbb{E}[\| \nabla_{\theta_j} J_j(\theta(m))\|^2]\\
	\leq \;&\frac{4(\Phi_{\max}-\Phi_{\min})}{\beta M}+\frac{4\sqrt{2}\nu(\kappa)}{(1-\gamma)^2}+\frac{2}{M}\sum_{m=0}^{M-1}\sum_{j\in \nik}\underbrace{\mathbb{E}[\|\mathbb{E}[e_j(m)\mid \mathcal{F}_m]\|^2]}_{\mathcal{T}_1}\\
	&+\frac{4L(\kappa)\beta}{M} \sum_{m=0}^{M-1}\sum_{j\in \nik}\underbrace{\mathbb{E}[\norm{e_j(m)}^2]}_{\mathcal{T}_2}.
\end{align*}
Next, we bound the terms $\mathcal{T}_1$ and $\mathcal{T}_2$ from above. To begin with, we decompose $e_i(m)$ in the following way:
\begin{align*}
	e_i(m)
	=\;&\nabla_{\theta_i}J_i(\theta(m))-\Delta_i^T(m)\\
	=\;&
	\sum_{k=0}^{\infty}\gamma^k \E\left[\nabla_{\theta_i}\log \xi_i^{\theta_i(m)}(a_i(k)|s_i(k))  \overline Q_{i}^{\theta(m)}(s(k),a_i(k))\right]\\
	&-\frac{1}{T}\sum_{t=0}^{T-1}\sum_{k=0}^{H-1}\gamma^k \nabla_{\theta_i}\log \xi_i^{\theta_i(m)}(a_i^t(k)|s_i^t(k)) \phi_i(s_{\mathcal{N}_i^{\kappa_c}}^t(k),a_i^t(k))^\top  w_i^m \\
	=\;&\sum_{k=H}^{\infty}\gamma^k \E\left[\nabla_{\theta_i}\log \xi_i^{\theta_i(m)}(a_i(k)|s_i(k))  \overline Q_{i}^{\theta(m)}(s(k),a_i(k))\right]\tag{$\mathcal{X}_1$}\\
	&+\sum_{k=0}^{H-1}\gamma^k \E\left[\nabla_{\theta_i}\log \xi_i^{\theta_i(m)}(a_i(k)|s_i(k))  \overline Q_{i}^{\theta(m)}(s(k),a_i(k))\right]\tag{$\mathcal{X}_2$}\\
	&-\frac{1}{T}\sum_{t=0}^{T-1}\sum_{k=0}^{H-1}\gamma^k \nabla_{\theta_i}\log \xi_i^{\theta_i(m)}(a_i^t(k)|s_i^t(k))\overline Q_{i}^{\theta(m)}(s^t(k),a_i^t(k))\tag{$\mathcal{X}_3$}\\
	&+\frac{1}{T}\sum_{t=0}^{T-1}\sum_{k=0}^{H-1}\gamma^k \nabla_{\theta_i}\log \xi_i^{\theta_i(m)}(a_i^t(k)|s_i^t(k))\\
	&\times (\overline Q_{i}^{\theta(m)}(s^t(k),a_i^t(k))-\phi_i(s_{\mathcal{N}_i^{\kappa_c}}^t(k),a_i^t(k))^\top  w_i^m).\tag{$\mathcal{X}_4$}
\end{align*}
It follows that
\begin{align*}
	\mathbb{E}[e_i(m)\mid \mathcal{F}_m]
	=\;&\mathcal{X}_1+\mathcal{X}_2+\mathbb{E}[\mathcal{X}_3\mid \mathcal{F}_m]+\mathbb{E}[\mathcal{X}_4\mid \mathcal{F}_m]\\
	=\;&\mathcal{X}_1+\mathbb{E}[\mathcal{X}_4\mid \mathcal{F}_m].
\end{align*}
Therefore, using Lemma \ref{lem:bound_grad_policy}, the definition of $\epsilon_{\text{critic}}$, and the fact that the averaged $Q$-function is bounded (in $\ell_\infty$-norm) by $\frac{1}{1-\gamma}$, and we have
\begin{align*}
	\|\mathbb{E}[e_i(m)\mid \mathcal{F}_m]\|\leq \;&\|\mathcal{X}_1\|+\|\mathbb{E}[\mathcal{X}_4\mid \mathcal{F}_m]\|\\
	\leq \;&\frac{\sqrt{2}\gamma^H}{(1-\gamma)^2}+\frac{\sqrt{2}\epsilon_{\text{critic}}}{1-\gamma},
\end{align*}
which implies that
\begin{align*}
	\mathcal{T}_1=\mathbb{E}[\|\mathbb{E}[e_j(m)\mid \mathcal{F}_m]\|^2]\leq \frac{4\gamma^{2H}}{(1-\gamma)^4}+\frac{4\epsilon^2_{\text{critic}}}{(1-\gamma)^2}.
\end{align*}
As for the term $\mathcal{T}_2$, similarly we have
\begin{align*}
	\mathcal{T}_2=\;&\mathbb{E}[\|e_j(m)\|^2]\\
	\leq \;&\mathbb{E}[\|\mathcal{X}_1+\mathcal{X}_2+\mathcal{X}_3+\mathcal{X}_4\|^2]\\
	\leq \;&3\mathbb{E}[\|\mathcal{X}_1\|^2+\|\mathcal{X}_2+\mathcal{X}_3\|^2+\|\mathcal{X}_4\|^2]\\
	\leq \;&\frac{6\gamma^{2H}}{(1-\gamma)^4}+\frac{24}{(1-\gamma)^2T}+\frac{6\epsilon^2_{\text{critic}}}{(1-\gamma)^2}.
\end{align*}
Substituting the upper bounds we obtained for the terms $\mathcal{T}_1$ and $\mathcal{T}_2$ in Eq. (\ref{eq:before1}) and we obtain
\begin{align*}
	&\frac{1}{M}\sum_{m=0}^{M-1}\sum_{j\in \nik}\mathbb{E}[\| \nabla_{\theta_j} J_j(\theta(m))\|^2]\\
	\leq \;&\frac{4(\Phi_{\max}-\Phi_{\min})}{\beta M}+\frac{4\sqrt{2}\nu(\kappa)}{(1-\gamma)^2}+\frac{8n(\kappa)\gamma^{2H}}{(1-\gamma)^4}+\frac{8n(\kappa)\epsilon^2_{\text{critic}}}{(1-\gamma)^2}\\
	&+24L(\kappa) n(\kappa)\beta\left(\frac{\gamma^{2H}}{(1-\gamma)^4}+\frac{\epsilon^2_{\text{critic}}}{(1-\gamma)^2}+\frac{4}{(1-\gamma)^2T}\right).
\end{align*}
Recall that Lemma \ref{lem:NE_bound_grad_Phi} implies
\begin{align*}
	\sum_{j\in \nik}\norm{\nabla_{\theta_{j}}J_j(\theta(m))}^2
	\geq \norm{\nabla_{\theta_{i}}J_i(\theta(m))}^2 
	\geq \frac{c^2}{\max_{j\in\sN} |\sA_j|D^2} \text{NE-Gap}_i(\theta(m))^2.
\end{align*} 
Therefore, by choosing $\kappa=\kappa_G$ and $\beta=\frac{1}{8L(\kappa)}$, we have
\begin{align*}
	&\frac{1}{M}\sum_{m=0}^{M-1}\mathbb{E}[\text{NE-Gap}_i(\theta(m))^2]\\
	\leq \;&\frac{\max_{j\in\sN} |\sA_j|D^2}{c^2}\bigg[\frac{4(\Phi_{\max}-\Phi_{\min})}{\beta M}+\frac{4\sqrt{2}\nu(\kappa)}{(1-\gamma)^2}+\frac{8n(\kappa)\gamma^{2H}}{(1-\gamma)^4}+\frac{8n(\kappa)\epsilon^2_{\text{critic}}}{(1-\gamma)^2}\\
	&+24L(\kappa) n(\kappa)\beta\left(\frac{\gamma^{2H}}{(1-\gamma)^4}+\frac{\epsilon^2_{\text{critic}}}{(1-\gamma)^2}+\frac{4}{(1-\gamma)^2T}\right)\bigg]\\
	\leq \;&\frac{12 n(\kappa_G)\max_{j\in\sN} |\sA_j|D^2}{c^2(1-\gamma)^2}\bigg[\frac{16(\Phi_{\max}-\Phi_{\min})}{ M(1-\gamma)}+\frac{\nu(\kappa_G)}{n(\kappa_G)}+\frac{\gamma^{2H}}{(1-\gamma)^2}+\epsilon^2_{\text{critic}}+\frac{1}{T}\bigg].
\end{align*}
Finally, using Jensen's inequality and we obtain
\begin{align*}
	&\mathbb{E}[\regret_i(M)]\\
	=\;&\frac{1}{M}\sum_{m=0}^{M-1}\mathbb{E}[\text{NE-Gap}_i(\theta(m))]\\
	\leq \;&\frac{4 n(\kappa_G)^{1/2}\max_{j\in\sN} \sqrt{|\sA_j|}}{c(1-\gamma)}\bigg[\frac{4(\Phi_{\max}-\Phi_{\min})^{1/2}}{ M^{1/2}(1-\gamma)^{1/2}}+\frac{\nu(\kappa_G)^{1/2}}{n(\kappa_G)^{1/2}}+\frac{\gamma^{H}}{1-\gamma}+\epsilon_{\text{critic}}+\frac{1}{T^{1/2}}\bigg].
\end{align*}
Since the RHS of the bound is not a function of $i$, we have the desired result.

\section{Supporting Results}
\subsection{NMPG is a strict generalization of MPG} \label{subsec:ex_general}
We present a simple example to show that NMPG is a strict generalization of the standard MPG. 

\noindent \textbf{Example:} Consider a networked multi-agent Markov game, denoted by $MG$,  with 4 agents $\mathcal N=\{1,2,3,4\}$. Each agent has local state space $\mathcal S_i=\{s_b,s_g\}$ and local action space $\mathcal A_i=\{a_b,a_g\}$ for all agent $i\in \mathcal N$. The underlying undirected graph connects edges between every ``neighboring'' agents, i.e., the set of edges is $\mathcal E = \Bp{(1,2),(2,3),(3,4)}$. 
\begin{itemize}
	\item Initial state: All agents starts at state $s_b$. That is, $s_i(0)=s_b$ for all $i\in \N$.
	\item Transition: $MG$ has deterministic transitions. For agent 1, the next state only depends on its own action: For all $t\geq 0$, $s_1(t+1)=s_g$ if $a_1(t)=a_g$ and $s_1(t+1)=s_b$ if $a_1(t)=a_b$. For agent $i\in \{2,3,4\}$, the next state only depends on state of "previous" agent, i.e., $s_{i}(t+1)=s_{i-1}(t)$, for all $t\geq 0$. 
	\item Reward: Each agent's reward is only dependent on its own state and action. The local reward of agent 1,2,3 is always 0. That is, $r_i(s_i,a_i) = 0$ for all $i\in \Bp{1,2,3}$, $s_i\in \sS_i$, $a_i\in \sA_i$. For agent 4, it receives reward 1 if it is at state $s_g$ and takes action $a_g$ while it receives reward 0 in all other cases. In other words, $r_4(s_g,a_g)=1$, $r_4(s_g,a_b)=0$, $r_4(s_b,a_g)=0$, $r_4(s_b,a_b)=0$.
\end{itemize}
Obviously, the expected return of agent 1,2 and 3 is always 0. As for agent 4, its state is solely dependent on the state of agent 1 (three timesteps before), which is completely determined by agent 1's policy. As a result, agent 4's expected return, or objective function, depends on local policies of \emph{both} agent 1 and 4. This observation is the key to showing that $MG$ is not an MPG while being a 1-NMPG, which is summarized as the theorem below. 

\begin{theorem}\label{thm:ex_MG}
	$MG$ is a 1-NMPG but not an MPG. 
\end{theorem}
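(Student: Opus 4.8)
The plan is to treat the two assertions separately, both built on one structural observation. First I would record that, since agents $1,2,3$ earn zero reward in every transition, their objectives are identically zero: $J_1\equiv J_2\equiv J_3\equiv 0$ on $\Xi$. Next I would unroll the deterministic dynamics to get the propagation identity $s_4(t)=s_1(t-3)$ for $t\ge 3$ and $s_4(t)=s_b$ for $t\le 3$. Since the law of $(s_1(t),a_1(t))_{t\ge 0}$ is a function of $\xi_1$ alone (agent $1$'s state is itself determined by agent $1$'s past actions), this shows the law of $(s_4(t))_{t\ge 0}$ depends only on $\xi_1$, and hence $J_4$ is a function of the pair $(\xi_1,\xi_4)$ alone --- in particular it is insensitive to $\xi_2$ and $\xi_3$.

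To certify that $MG$ is a $1$-NMPG, I would exhibit the local potentials $\Phi_1:=\Phi_2:=0$ and $\Phi_3:=\Phi_4:=J_4$, and verify Eq.~(\ref{eq:approx_MPG}) with $\kappa_G=1$ for every $i\in\sN$ and every $j\in\mathcal{N}_i^{1}$, where $\mathcal{N}_1^{1}=\{1,2\}$, $\mathcal{N}_2^{1}=\{1,2,3\}$, $\mathcal{N}_3^{1}=\{2,3,4\}$, $\mathcal{N}_4^{1}=\{3,4\}$. For $i\in\{1,2\}$ both sides of the identity vanish, since $J_j\equiv 0$ for all relevant $j$ and $\Phi_i$ is constant. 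For $i\in\{3,4\}$: if $j\in\{2,3\}$ the left side is $0$ (again $J_j\equiv 0$) and the right side is $0$ because $J_4$ does not depend on $\xi_j$; if $j=4$ the identity holds by the very choice $\Phi_3=\Phi_4=J_4$. Since $J_4$ is bounded (rewards lie in $[0,1]$), so are the $\Phi_i$, and the definition of a $1$-NMPG is met.

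To rule out an MPG, I would argue by contradiction. Suppose some $\Phi:\Xi\to\R$ satisfied $J_j(\xi_j',\xi_{-j})-J_j(\xi_j,\xi_{-j})=\Phi(\xi_j',\xi_{-j})-\Phi(\xi_j,\xi_{-j})$ for all $j$. Taking $j=1,2,3$ and using $J_1=J_2=J_3=0$ forces $\Phi$ to be independent of $\xi_1,\xi_2,\xi_3$, so $\Phi(\xi)=g(\xi_4)$ for some $g:\Xi_4\to\R$; then the $j=4$ instance says $J_4(\xi_4',\xi_{-4})-J_4(\xi_4,\xi_{-4})=g(\xi_4')-g(\xi_4)$, which must be independent of $\xi_{-4}$. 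I would break this with an explicit computation on deterministic policies. Let $\xi_4$ play $a_g$ in every state and $\xi_4'$ play $a_b$ in every state. If $\xi_1$ plays $a_g$ everywhere, then $s_1(t)=s_g$ for $t\ge 1$, so $s_4(t)=s_g$ for $t\ge 4$, giving $J_4(\xi_4,\cdot)=\sum_{t\ge 4}\gamma^t=\gamma^4/(1-\gamma)$ while $J_4(\xi_4',\cdot)=0$ (action $a_b$ never earns reward), a difference of $\gamma^4/(1-\gamma)>0$. If instead $\xi_1$ plays $a_b$ everywhere, then $s_4(t)=s_b$ for all $t$ and both objectives are $0$, a difference of $0$. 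These two values disagree because $\gamma\in(0,1)$, contradicting that $g(\xi_4')-g(\xi_4)$ is well-defined independently of $\xi_{-4}$; hence no global potential exists. If there is any obstacle at all it is purely bookkeeping --- correctly tracking the three-step delay down the chain $1\to 2\to 3\to 4$ and the exact membership of agents in the one-hop neighborhoods; the rest of the argument is routine.
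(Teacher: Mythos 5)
Your proposal is correct and follows essentially the same route as the paper's proof: the same identification $J_1=J_2=J_3\equiv 0$ and $J_4=f(\xi_1,\xi_4)$ via the three-step delay $s_4(t)=s_1(t-3)$, the same choice of local potentials $\Phi_1=\Phi_2=0$, $\Phi_3=\Phi_4=J_4$, and the same contradiction using the all-$a_g$/all-$a_b$ deterministic policies to show $\Phi(\xi_4^g)-\Phi(\xi_4^b)$ would have to equal both $0$ and $\gamma^4/(1-\gamma)$. Your case-by-case verification of Eq.~(\ref{eq:approx_MPG}) over the one-hop neighborhoods is slightly more explicit than the paper's, but the argument is the same.
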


\begin{proof}[Proof of Theorem \ref{thm:ex_MG}]
	We first compute the objective function $J_i(\xi)$ for all global policy $\xi$ and agent $i\in \sN$. Noticing that agents 1,2,3 always receive reward 0, we have $J_i(\xi)=0$ for all $i\in \Bp{1,2,3}$. As for agent 4, we have 
	\begin{align}
		J_4(\xi) 
		=& \sum_{t=0}^{\infty}\gamma^t \E_{\xi}\Mp{r_4(s_4(t),a_4(t)} \nonumber \\
		=& \sum_{t=0}^{\infty}\gamma^t \Pr[r_4(s_4(t),a_4(t)=1] \nonumber \\
		=& \sum_{t=0}^{\infty}\gamma^t \Pr[s_4(t)=s_g,a_4(t)=a_g] \nonumber \\
		\osi{=}& \sum_{t=4}^{\infty}\gamma^t \Pr[a_1(t-4)=a_g,a_4(t)=a_g] \label{eq:comp_f} \\
		\triangleq & f(\xi_1,\xi_4) \nonumber. 
	\end{align}
	Here $\ri$ uses the fact that $s_4(t)=s_b$ for $t=0,1,2,3$, and $s_4(t)=s_1(t-3)=s_g$ if and only if $a_1(t-4)=a_g$ for all $t\geq 4$.
	
	Next we show that $MG$ is a 1-NMPG. In fact, we can choose local potential functions $\Phi_1(\xi)=\Phi_2(\xi)=0$, $\Phi_3(\xi)=\Phi_4(\xi)=f(\xi_1,\xi_4)$ for all global policy $\xi$, and Eq. (\ref{eq:approx_MPG}) can be satisfied.
	
	Finally, we prove that $MG$ is not an MPG. Assume that there is a potential $\Phi(\xi)$ such that \begin{align}
		\label{eq:def_pot}
		J_i(\xi_i',\xi_{-i})-J_i({\xi_i,\xi_{-i}}) = \Phi(\xi_i',\xi_{-i})-\Phi(\xi_i,\xi_{-i})
	\end{align}
	for all $i\in \N$, $\xi_i,\xi_i'\in \Xi_i$, $\xi_{-i}\in \Xi_{-i}$. If we choose $i\in \{1,2,3\}$ in Eq. (\ref{eq:def_pot}), we can see that $\Phi(\xi_i',\xi_{-i})-\Phi(\xi_i,\xi_{-i})=0$ for all $\xi_i,\xi_i'\in \Xi_i$, $\xi_{-i}\in \Xi_{-i}$. Thus $\Phi$ is independent of $\xi_i$ for all $i\in \Bp{1,2,3}$. As a result, the potential function can be represented as $\Phi(\xi_4)$. Then we let $i=4$ in Eq. (\ref{eq:def_pot}), and we have for all $\xi_1',\xi_1\in \Xi_1$, $\xi_4'\in \Xi_4$ that
	\begin{align}
		\label{eq:test_j4}
		\Phi(\xi_4')-\Phi(\xi_4) = J_4(\xi_4',\xi_{-4})-J_4(\xi_4,\xi_{-4}) = f(\xi_1,\xi_4')-f(\xi_1,\xi_4).
	\end{align}
	To derive a contradiction, we take some special values of $\xi_1,\xi_1', \xi_4$. For any $i\in \N$, let $\xi_i^g$ be the policy that agent $i$ always takes $a_g$ and $\xi_i^b$ be the policy that agent $i$ always takes $a_b$. Then we can derive from Eq. (\ref{eq:comp_f}) that $f(\xi_1^g,\xi_4^g)=\frac{\gamma^4}{1-\gamma}$, $f(\xi_1^g,\xi_4^b)=f(\xi_1^b,\xi_4^g)=f(\xi_1^b,\xi_4^b)=0$. In Eq. (\ref{eq:test_j4}), let $\xi_4'=\xi_4^g$, $\xi_4=\xi_4^b$, $\xi_1\in \{\xi_1^b,\xi_1^g\}$, and we have 
	\begin{align*}
		\begin{cases}
			\Phi(\xi_4^g)-\Phi(\xi_4^b) = f(\xi_1^b,\xi_4^g)-f(\xi_1^b,\xi_4^b) = 0 \\
			\Phi(\xi_4^g)-\Phi(\xi_4^b) = f(\xi_1^g,\xi_4^g)-f(\xi_1^g,\xi_4^b) = \frac{\gamma^4}{1-\gamma},
		\end{cases}
	\end{align*}
	which leads to contradiction. As a result, $MG$ is not an MPG.
\end{proof}

$\kappa_G$-NMPG in general cannot be reduced to a standard MPG. Please note that in a $\kappa_G$-NMPG, for any agent $i$, any agent $j\in N_i^{\kappa_G}$ do share a potential function $\Phi_i$, but this potential is associated with the agent $i$. In the reviewer's example, the potential functions $\Phi_{i_1}, \Phi_{i_2},\cdots,\Phi_{i_{\kappa_G+1}}$ are associated with agent $i_1$, while the potential functions $\Phi_{i_L}, \Phi_{i_{L-1}}, \cdots,\Phi_{i_{L-\kappa_G}}$ are associated with agent $i_L$. Therefore, unless $L\leq 2\kappa_G-1$, $i_1$ and $i_L$ are not guaranteed to share a same potential function.

In addition, we can give a simple example that is a  for your reference.  The transition is deterministic.  The initial states of the 4 agents are all $s_b$. Then $J_1(\xi)=J_2(\xi)=J_3(\xi)=0$.  

$$
J_4(\xi)=\sum_{t=4}^{\infty}\gamma^t\Pr[a_1(t-4)=a_g]\Pr[a_4(t)=a_g]\triangleq f(\xi_1,\xi_4).
$$
The above example is a 1-NMPG, with $\Phi_1(\xi)=\Phi_2(\xi)=0$, $\Phi_3(\xi)=\Phi_4(\xi)=f(\xi_1,\xi_4)$, but it is not an MPG.

\subsection{Policy Gradient Theorem Variant}\label{ap:PG}
We prove Eq. (\ref{eq:PGT}), which is a variant of the policy gradient theorem \citep{sutton1999PGT}.
\begin{lemma}[Policy gradient theorem variant] \label{lem:PGT_MARL_main}
	\begin{align*}
		&\nabla_{\theta_i}J_i(\theta)
		=\sum_{t=0}^{\infty}\gamma^t \E_{\xi^{\theta}}\left[\nabla_{\theta_i}\log \xi_i^{\theta_i}(a_i(t)|s_i(t))\overline Q_{i}^{\theta}(s(t),a_i(t))\right].
	\end{align*}
\end{lemma}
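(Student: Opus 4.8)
The plan is to derive this multi-agent policy gradient formula by reducing it to the single-agent policy gradient theorem applied to agent $i$'s ``averaged MDP.'' The key observation is that, for a fixed global policy $\xi^\theta$ and a fixed agent $i$, the quantity $J_i(\theta) = \E_{s\sim\mu}[V_i^\theta(s)]$ only depends on $\theta_i$ through agent $i$'s own local policy $\xi_i^{\theta_i}$, since all other agents' policies $\xi_{-i}^{\theta_{-i}}$ are held fixed when we differentiate with respect to $\theta_i$. So the strategy is: (1) write out $V_i^\theta$ and $\overline Q_i^\theta$ in their recursive (Bellman) forms; (2) differentiate the Bellman equation for $V_i^\theta$ with respect to $\theta_i$; (3) unroll the resulting recursion to get the discounted sum; (4) identify the emerging terms with $\overline Q_i^\theta$ and $\nabla_{\theta_i}\log\xi_i^{\theta_i}$.

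Concretely, first I would establish the Bellman-type recursion $V_i^\theta(s) = \sum_{a_i}\xi_i^{\theta_i}(a_i\mid s_i)\,\overline Q_i^\theta(s,a_i)$ and $\overline Q_i^\theta(s,a_i) = \E_{a_{-i}\sim\xi_{-i}^{\theta_{-i}}(\cdot\mid s_{-i})}\big[r_i(s,a) + \gamma\sum_{s'}\sP(s'\mid s,a)V_i^\theta(s')\big]$, both of which follow directly from the definitions of the value, $Q$-, and averaged $Q$-functions in Section~2. Taking $\nabla_{\theta_i}$ of the first identity gives
\begin{align*}
\nabla_{\theta_i}V_i^\theta(s) = \sum_{a_i}\big[\nabla_{\theta_i}\xi_i^{\theta_i}(a_i\mid s_i)\big]\overline Q_i^\theta(s,a_i) + \sum_{a_i}\xi_i^{\theta_i}(a_i\mid s_i)\nabla_{\theta_i}\overline Q_i^\theta(s,a_i).
\end{align*}
Since only $\xi_i$ depends on $\theta_i$ (the reward and transition kernel are policy-independent, and $\xi_{-i}$ is fixed), $\nabla_{\theta_i}\overline Q_i^\theta(s,a_i) = \gamma\E_{a_{-i}}\big[\sum_{s'}\sP(s'\mid s,a)\nabla_{\theta_i}V_i^\theta(s')\big]$. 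Substituting this back, using the log-derivative identity $\nabla_{\theta_i}\xi_i^{\theta_i}(a_i\mid s_i) = \xi_i^{\theta_i}(a_i\mid s_i)\nabla_{\theta_i}\log\xi_i^{\theta_i}(a_i\mid s_i)$, and then recursively unrolling the $\gamma$-weighted recursion (justified by the boundedness of rewards in $[0,1]$ and $\gamma\in(0,1)$, which makes all the series absolutely convergent) yields
\begin{align*}
\nabla_{\theta_i}V_i^\theta(s) = \sum_{t=0}^\infty \gamma^t\,\E_{\xi^\theta}\big[\nabla_{\theta_i}\log\xi_i^{\theta_i}(a_i(t)\mid s_i(t))\,\overline Q_i^\theta(s(t),a_i(t))\,\big|\,s(0)=s\big].
\end{align*}
Finally, taking $\E_{s\sim\mu}$ of both sides gives the claimed formula for $\nabla_{\theta_i}J_i(\theta)$.

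I do not expect a serious obstacle here, since this is a fairly standard adaptation of the classical policy gradient theorem; the only real subtlety — and hence the ``hard part'' — is the bookkeeping around the averaged $Q$-function: one must be careful that when differentiating $\overline Q_i^\theta$ with respect to $\theta_i$, the expectation over $a_{-i}\sim\xi_{-i}^{\theta_{-i}}$ contributes nothing (because $\theta_{-i}$ is frozen), so that the derivative passes cleanly through to the next-state value. One should also verify the interchange of differentiation and the infinite sum, which is legitimate because $\|\overline Q_i^\theta\|_\infty \le 1/(1-\gamma)$ and $\nabla_{\theta_i}\log\xi_i^{\theta_i}$ is uniformly bounded for softmax policies, giving a geometrically decaying dominating series. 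These routine convergence checks can be relegated to a short remark. I would present the full details in Appendix~F.2 as referenced.
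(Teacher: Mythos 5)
Your proposal is correct, but it takes a genuinely different route from the paper. The paper first establishes a visitation-distribution form of the multi-agent policy gradient theorem (Lemma \ref{lem:PGT_MARL_compact}), namely $\nabla_{\theta_i}J_i(\theta)=\frac{1}{1-\gamma}\sum_{s,a_i}d^{\theta}(s)\nabla_{\theta_i}\xi_i^{\theta_i}(a_i|s_i)\overline Q_{i}^{\theta}(s,a_i)$, by invoking the classical single-agent policy gradient theorem of Sutton et al.\ as a black box and then marginalizing over $a_{-i}$ to produce the averaged $Q$-function; the trajectory-sum form of Lemma \ref{lem:PGT_MARL_main} then follows immediately by expanding $d^\theta(s)=(1-\gamma)\sum_{t\ge 0}\gamma^t\Pr^{\xi^\theta}[s(t)=s]$ and applying the log-derivative identity. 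You instead re-derive the result from first principles: you differentiate the Bellman recursion $V_i^\theta(s)=\sum_{a_i}\xi_i^{\theta_i}(a_i|s_i)\overline Q_i^\theta(s,a_i)$ with respect to $\theta_i$, observe that the derivative passes through the expectation over $a_{-i}$ because $\theta_{-i}$ is frozen, and unroll the resulting $\gamma$-weighted recursion. Both arguments are valid and arrive at the same identity. The paper's route is shorter because it outsources the unrolling to the cited single-agent theorem; yours is more self-contained and makes explicit exactly where the multi-agent structure enters (the fact that $\nabla_{\theta_i}\overline Q_i^\theta$ only propagates through the next-state value), at the cost of having to justify the interchange of differentiation with the infinite sum --- a point you correctly flag and correctly resolve via the uniform bounds $\|\overline Q_i^\theta\|_\infty\le 1/(1-\gamma)$ and $\|\nabla_{\theta_i}\log\xi_i^{\theta_i}\|\le$ const for softmax policies.
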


\begin{proof}[Proof of Lemma \ref{lem:PGT_MARL_main}]
	By Lemma \ref{lem:PGT_MARL_compact}, we have
	\begin{align*}
		&\nabla_{\theta_i}J_i(\theta) \\
		=&\frac{1}{1-\gamma}\sum_{s,a_i}d^{\theta}(s)\nabla_{\theta_i}\xi_i^{\theta_i}(a_i|s_i)\overline Q_{i}^{\theta}(s,a_i) \\
		=& \sum_{s,a_i} \sum_{t=0}^{\infty}\gamma^t{\Pr}^{\xi^{\theta}}[s(t)=s|s(0)\sim \mu(\cdot)]\nabla_{\theta_i}\xi_i^{\theta_i}(a_i|s_i)\overline Q_{i}^{\theta}(s,a_i) \\
		=& \sum_{t=0}^{\infty} \sum_{s,a_i} \gamma^t{\Pr}^{\xi^{\theta}}[s(t)=s|s(0)\sim \mu(\cdot)]\xi_i^{\theta_i}(a_i|s_i)\nabla_{\theta_i}\log\xi_i^{\theta_i}(a_i|s_i)\overline Q_{i}^{\theta}(s,a_i) \\
		=&\sum_{t=0}^{\infty}\gamma^t \E_{\xi^{\theta}}\left[\nabla_{\theta_i}\log \xi_i^{\theta_i}(a_i(t)|s_i(t))\overline Q_{i}^{\theta}(s(t),a_i(t))\right].
	\end{align*}
	Here ${\Pr}^{\xi^{\theta}}[s(t)=s|s(0)\sim \mu(\cdot)]$ represents the probability that $s(t)=s$ given that the policy is $\xi^{\theta}$ and initial state $s(0)$ is sampled from distribution $\mu$.
\end{proof}

\subsection{Proof of Lemma \ref{le:truncated_Q}}\label{pf:le:decay}
When $\kappa_c \leq \kappa_r-1$, the conclusion is obvious by the fact that both the truncated averaged $Q$-function and the averaged $Q$-function are in the range of $[0,\frac{1}{1-\gamma}]$. Below we only consider the case that $\kappa_c \geq \kappa_r$.

Consider any agent $i$, global policy parameter $\theta$, and any truncated averaged $Q$-function $\overline Q_{i}^{\theta,\kappa_c}\in \mathcal Q_i^{\theta,\kappa_c}$. Then there exists $u_i\in \Delta(\mathcal{S}_{-\nikc})$, such that $\overline Q_i^{\theta,\kappa_c}(s_{\nikc},a_{i}) =\sum_{s_{-\nikc}} u_i(s_{-\nikc}) \overline Q_i^{\theta}(s_{\nikc},s_{-\nikc},a_i)$ for any $(s_\nikc, a_i)\in \sS_\nikc \times \sA_i$.  We have for any $(s,a_i)\in \sS\times \sA_i$ that
\begin{align}
	&\abs{\overline Q_{i}^{\theta,\kappa_c}(s_{\nikc},a_{i})-\overline Q_i^{\theta}(s,a_i)} \nonumber \\
	=&\abs{\sum_{s_{-\nikc}'}u_i(s_{-\nikc}') \left(\overline Q_i^{\theta}(s_{\nikc},s_{-\nikc}',a_i)-\overline Q_i^{\theta}(s_{\nikc},s_{-\nikc},a_i)\right)} \nonumber \\
	\leq &\sum_{s_{-\nikc}'}u_i(s_{-\nikc}') \abs{\overline Q_i^{\theta}(s_{\nikc},s_{-\nikc}',a_i)-\overline Q_i^{\theta}(s_{\nikc},s_{-\nikc},a_i)} \nonumber \\
	\leq &\sum_{s_{-\nikc}'}\sum_{a_{-i}}u_i(s_{-\nikc}') \xi_{-i}^{\theta_{-i}}(a_{-i}|s_{-i})\abs{ Q_i^{\theta}(s_{\nikc},s_{-\nikc}',a_i,a_{-i})-Q_i^{\theta}(s_{\nikc},s_{-\nikc},a_i,a_{-i})}. \label{eq:trunc_expand}
\end{align}
We now try to give a perturbation bound for agent $i$'s $Q$-function w.r.t. $s_{\nikc}$, the states of agents in $\kappa$-hop neighborhood. 
Notice that for any $s\in \sS,a\in \sA$,
\begin{align*}
	&Q_i^{\theta}(s,a) \\
	= & \sum_{t=0}^{\infty} \gamma^t \sum_{s(t),a(t)}{\Pr}^{\xi^\theta}(s(t)|s(0)=s,a(0)=a) \xi^{\theta}(a(t)|s(t))r_i(s_{\mathcal{N}_i^{\kappa_r}}(t),a_{\mathcal{N}_i^{\kappa_r}}(t) \\
	=& \sum_{t=0}^{\infty} \gamma^t \sum_{s_{\mathcal{N}_i^{\kappa_r}}(t),a_{\mathcal{N}_i^{\kappa_r}}(t)}{\Pr}^{\xi^\theta}(s_{\mathcal{N}_i^{\kappa_r}}(t),a_{\mathcal{N}_i^{\kappa_r}}(t)|s(0)=s,a(0)=a) r_i(s_{\mathcal{N}_i^{\kappa_r}}(t),a_{\mathcal{N}_i^{\kappa_r}}(t)) \\
	=& \sum_{t=0}^{\infty} \gamma^t \sum_{s_{\mathcal{N}_i^{\kappa_r}}(t),a_{\mathcal{N}_i^{\kappa_r}}(t)}{\Pr}^{\xi^\theta}(s_{\mathcal{N}_i^{\kappa_r}}(t),a_{\mathcal{N}_i^{\kappa_r}}(t)|s(0)=s,a(0)=a) r_i(s_{\mathcal{N}_i^{\kappa_r}}(t),a_{\mathcal{N}_i^{\kappa_r}}(t))
\end{align*}
for simplicity of notation. 

For any fixed $i\in \sN$, $s_{\nikc}\in \sS_{\nikc}$, $s_{-\nikc},s_{-\nikc}'\in \sS_{-\nikc}$,$a\in \sA$ and policy $\xi^{\theta}$, let 
\begin{align*}
	&\pi_{t}^{\kappa_r}(s_{\mathcal{N}_i^{\kappa_r}}(t),a_{\mathcal{N}_i^{\kappa_r}}(t))={\Pr}^{\xi^\theta}(s_{\mathcal{N}_i^{\kappa_r}}(t),a_{\mathcal{N}_i^{\kappa_r}}(t)|s(0)=(s_{\nikc},s_{-\nikc}),a(0)=a) \\
	&\tilde \pi_{t}^{\kappa_r}(s_{\mathcal{N}_i^{\kappa_r}}(t),a_{\mathcal{N}_i^{\kappa_r}}(t))={\Pr}^{\xi^\theta}(s_{\mathcal{N}_i^{\kappa_r}}(t),a_{\mathcal{N}_i^{\kappa_r}}(t)|s(0)=(s_{\nikc},s_{-\nikc}'),a(0)=a).
\end{align*}
Due to the local dependence of network and localized policy structure, $\pi_{t}^{\kappa_r}$ is only dependent on the initial states and actions of agents in $\mathcal{N}_i^{\kappa_r+t}$, which is equal to $\tilde \pi_{t}^{\kappa_r}$ when $t\leq \kappa_c-\kappa_r$. Therefore we have
\begin{align*}
	&\abs{ Q_i^{\theta}(s_{\nikc},s_{-\nikc}',a_i,a_{-i})-Q_i^{\theta}(s_{\nikc},s_{-\nikc},a_i,a_{-i})} \\
	=& \left|\sum_{t=0}^{\infty} \gamma^t \sum_{s_{\mathcal{N}_i^{\kappa_r}}(t),a_{\mathcal{N}_i^{\kappa_r}}(t)} \left(\xi_{t}^{\kappa_r}(s_{\mathcal{N}_i^{\kappa_r}}(t),a_{\mathcal{N}_i^{\kappa_r}}(t))-\tilde \pi_{t}^{\kappa_r}(s_{\mathcal{N}_i^{\kappa_r}}(t),a_{\mathcal{N}_i^{\kappa_r}}(t))\right)\right.\\
	&\left.\times r_i(s_{\mathcal{N}_i^{\kappa_r}}(t),a_{\mathcal{N}_i^{\kappa_r}}(t))\right| \\
	\leq & \sum_{t=0}^{\infty} \gamma^t \sum_{s_{\mathcal{N}_i^{\kappa_r}}(t),a_{\mathcal{N}_i^{\kappa_r}}(t)} \abs{\xi_{t}^{\kappa_r}(s_{\mathcal{N}_i^{\kappa_r}}(t),a_{\mathcal{N}_i^{\kappa_r}}(t))-\tilde \pi_{t}^{\kappa_r}(s_{\mathcal{N}_i^{\kappa_r}}(t),a_{\mathcal{N}_i^{\kappa_r}}(t))}\\
	&\times r_i(s_{\mathcal{N}_i^{\kappa_r}}(t),a_{\mathcal{N}_i^{\kappa_r}}(t)) \\
	\osi{\leq} & \sum_{t=0}^{\infty} \gamma^t \norm{\pi_{t}^{\kappa_r}-\tilde \pi_{t}^{\kappa_r}}_1 \\
	= & \sum_{t=\kappa_c-\kappa_r+1}^{\infty} \gamma^t \norm{\pi_{t}^{\kappa_r}-\tilde \pi_{t}^{\kappa_r}}_1 \\
	\leq & 2\sum_{t=\kappa_c-\kappa_r+1}^{\infty} \gamma^t \\
	\leq & \frac{2}{1-\gamma}\gamma^{\kappa_c-\kappa_r+1}.
\end{align*}
Here $\ri$ is by $r_i(s_{\nikr},a_{\nikr})\leq 1$ for any $i\in \sN$ and any $s_{\nikr}$, $a_{\nikr}$. Plug into Eq. (\ref{eq:trunc_expand}), and we have 
\begin{align*}
	& \abs{\overline Q_{i}^{\theta,\kappa_c}(s_{\nikc},a_{i})-\overline Q_i^{\theta}(s,a_i)} \\
	\leq & \sum_{s_{-\nikc}'}\sum_{a_{-i}}u_i(s_{-\nikc}') \xi_{-i}^{\theta_{-i}}(a_{-i}|s_{-i}) \frac{2}{1-\gamma}\gamma^{\kappa_c-\kappa_r+1} \\
	= & \frac{2}{1-\gamma}\gamma^{\kappa_c-\kappa_r+1}.
\end{align*}
Take $\max$ over $s, a_i$, $\sup$ over $\overline Q_{i}^{\theta,\kappa_c}$, and we complete the proof.

\subsection{Averaged Nash Regret}\label{ap:connection}
The relationship between the averaged Nash regret defined in this work and the Nash regret in \cite{pmlr-v162-ding22b} is shown in the following lemma. Recall that we denote $n=|\mathcal{N}|$.
\begin{lemma}\label{lem:avg_NR_bound}
	Given any positive integer $M$, the following inequality holds for any sequence of policies $\{\xi(0),\xi(1),\cdots,\xi(M-1)\}$:
	\begin{align*}
		\frac{1}{n}\text{Nash-Regret}(M)\leq \regret(M)\leq \text{Nash-Regret}(M).
	\end{align*}
\end{lemma}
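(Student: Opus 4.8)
The statement is a pair of elementary inequalities relating two averages of the per-agent Nash gaps, and the proof rests entirely on the nonnegativity of $\text{NE-Gap}_i(\xi(m))$ (which follows since $\text{NE-Gap}_i(\xi)=\max_{\xi_i'}J_i(\xi_i',\xi_{-i})-J_i(\xi_i,\xi_{-i})\geq 0$ by taking $\xi_i'=\xi_i$) together with the two order facts: (i) $x_k\leq\max_j x_j$ for each $k$, and (ii) $\max_j x_j\leq\sum_j x_j$ when all $x_j\geq 0$.

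For the upper bound $\regret(M)\leq\text{Nash-Regret}(M)$, I would fix an arbitrary agent $i$ and bound, for each $m$, $\text{NE-Gap}_i(\xi(m))\leq\max_{j\in\sN}\text{NE-Gap}_j(\xi(m))$; averaging over $m=0,\dots,M-1$ gives $\regret_i(M)\leq\text{Nash-Regret}(M)$. Since the right-hand side is independent of $i$, taking the maximum over $i$ on the left yields $\regret(M)=\max_{i\in\sN}\regret_i(M)\leq\text{Nash-Regret}(M)$. (Equivalently, this is Jensen's inequality applied to the convex function $\max$, swapping an outer maximum with an average.)

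For the lower bound $\tfrac1n\text{Nash-Regret}(M)\leq\regret(M)$, I would use nonnegativity to write, for each $m$, $\max_{j\in\sN}\text{NE-Gap}_j(\xi(m))\leq\sum_{j\in\sN}\text{NE-Gap}_j(\xi(m))$. Averaging over $m$ and exchanging the two finite sums gives
\[
\text{Nash-Regret}(M)\;\leq\;\sum_{j\in\sN}\frac{1}{M}\sum_{m=0}^{M-1}\text{NE-Gap}_j(\xi(m))\;=\;\sum_{j\in\sN}\regret_j(M)\;\leq\;n\max_{j\in\sN}\regret_j(M)\;=\;n\,\regret(M),
\]
and dividing by $n$ completes the argument. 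There is no real obstacle here; the only point to be careful about is invoking nonnegativity of the Nash gaps before replacing the inner maximum by a sum, and making sure the finite double sums may be interchanged (which is immediate since $M$ and $n$ are finite). Combining the two displays proves the claimed sandwich, and in particular $\regret(M)=\Theta(\text{Nash-Regret}(M))$ with the constant $n=|\sN|$.
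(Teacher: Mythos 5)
Your proof is correct and follows essentially the same route as the paper: the upper bound comes from swapping the maximum over agents with the average over iterations, and the lower bound from replacing the inner maximum by the sum over agents (valid by nonnegativity of the Nash gaps) and comparing the resulting average to the maximum. Your explicit note that nonnegativity of $\text{NE-Gap}_i$ is what licenses the max-by-sum step is a small but welcome clarification that the paper leaves implicit.
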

\begin{proof}[Proof of Lemma \ref{lem:avg_NR_bound}]
	By definition of the averaged Nash Regret (cf. Definition \ref{def:avg_NR}) and the Nash Regret (cf. Eq. (\ref{eq:NR})), we have
	\begin{align*}
		\frac{1}{n}\text{Nash-Regret}(M)=\;&\frac{1}{n}\frac{1}{M}\sum_{m=0}^{M-1}\max_{i\in\mathcal{N}}\text{NE-Gap}_i(\xi(m))\\
		\leq \;&\frac{1}{n}\sum_{i=1}^n\frac{1}{M}\sum_{m=0}^{M-1}\text{NE-Gap}_i(\xi(m))\\
		\leq \;&\max_{i\in\mathcal{N}}\frac{1}{M}\sum_{m=0}^{M-1}\text{NE-Gap}_i(\xi(m))\\
		=\;&\regret(M)\\
		\leq \;&\frac{1}{M}\sum_{m=0}^{M-1}\max_{i\in\mathcal{N}}\text{NE-Gap}_i(\xi(m))\tag{Jensen's inequality}\\
		=\;&\text{Nash-Regret}(M).
	\end{align*}
\end{proof}

\subsection{Decay of Local Potential Functions}\label{subsec:supp_assump_decay}
We derive $\nu(\kappa)=O(\sum_{j\in -\nik}\abs{\sA_{j}})$ given a mild assumption, 
which guarantees existence of stage potential.
\begin{assumption}\label{assump:stage_pot}
	For any agent $i\in \sN$, there exists stage potential function $\varphi_i:\sS\times \sA\rightarrow [0,\overline\varphi]$, such that 
	\begin{align}
		\Phi_i(\theta)=\E_{\theta}\left[\sum_{t=0}^{\infty}\gamma^t\varphi_i(s(t),a(t))\right].
	\end{align}
\end{assumption}
This assumption is common in recent MPG literature \citep{zhang2022logBarrierSoftmax}.

\begin{lemma}\label{lem:nu_poly}
	With Assumption \ref{assump:stage_pot} satisfied, we have
	\begin{align}
		\abs{\Phi_i(\theta_{\nik},\theta_{-\nik}')-\Phi_i(\theta_{\nik},\theta_{-\nik})}\leq \frac{\sqrt{2}\overline{\varphi}}{(1-\gamma)^2}
		\sum_{j\in -\nik}\abs{\sA_j}\max_{j\in -\nik}\norm{\theta_j'-\theta_j}.
	\end{align}
\end{lemma}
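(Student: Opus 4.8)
The plan is to use the stage representation of $\Phi_i$ (Assumption \ref{assump:stage_pot}) to turn the claim into a perturbation bound on the induced state–action occupancy measures. Write $\rho_t^\theta(s):=\Pr^{\xi^\theta}[s(t)=s\mid s(0)\sim\mu]$ for the time-$t$ state marginal. Then Assumption \ref{assump:stage_pot} gives
\[
\Phi_i(\theta)=\sum_{t=0}^\infty \gamma^t \sum_{s,a}\rho_t^\theta(s)\,\xi^\theta(a\mid s)\,\varphi_i(s,a),
\]
and since $\varphi_i$ takes values in $[0,\overline{\varphi}]$, subtracting the analogous expression for $\theta'=(\theta_{\nik},\theta_{-\nik}')$ and applying the triangle inequality yields
\[
\bigl|\Phi_i(\theta)-\Phi_i(\theta')\bigr|\le \overline{\varphi}\sum_{t=0}^\infty\gamma^t\sum_{s,a}\bigl|\rho_t^\theta(s)\xi^\theta(a\mid s)-\rho_t^{\theta'}(s)\xi^{\theta'}(a\mid s)\bigr|.
\]

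Next I would insert $\pm\,\rho_t^\theta(s)\xi^{\theta'}(a\mid s)$ to split the inner sum as $\sum_s\rho_t^\theta(s)\|\xi^\theta(\cdot\mid s)-\xi^{\theta'}(\cdot\mid s)\|_1+\|\rho_t^\theta-\rho_t^{\theta'}\|_1$, which is at most $\Delta_\xi+\|\rho_t^\theta-\rho_t^{\theta'}\|_1$ where $\Delta_\xi:=\max_s\|\xi^\theta(\cdot\mid s)-\xi^{\theta'}(\cdot\mid s)\|_1$. I then claim $\|\rho_t^\theta-\rho_t^{\theta'}\|_1\le t\,\Delta_\xi$, proved by induction on $t$: the base case holds because $\rho_0^\theta=\rho_0^{\theta'}=\mu$, and for the step one writes $\rho_{t+1}^\theta(s')=\sum_{s,a}\rho_t^\theta(s)\xi^\theta(a\mid s)\sP(s'\mid s,a)$, so that $\rho_{t+1}^\theta-\rho_{t+1}^{\theta'}$ is the signed measure $\rho_t^\theta(s)\xi^\theta(a\mid s)-\rho_t^{\theta'}(s)\xi^{\theta'}(a\mid s)$ pushed through the stochastic kernel $\sP$; since stochastic kernels are $\ell_1$-nonexpansive, $\|\rho_{t+1}^\theta-\rho_{t+1}^{\theta'}\|_1\le\Delta_\xi+\|\rho_t^\theta-\rho_t^{\theta'}\|_1\le(t+1)\Delta_\xi$. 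Consequently the time-$t$ inner sum is at most $(t+1)\Delta_\xi$, and using $\sum_{t\ge0}(t+1)\gamma^t=(1-\gamma)^{-2}$ gives $|\Phi_i(\theta)-\Phi_i(\theta')|\le \overline{\varphi}(1-\gamma)^{-2}\Delta_\xi$.

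Finally I would bound $\Delta_\xi$. Since $\theta$ and $\theta'$ agree on the coordinates indexed by $\nik$ and $\xi^\theta(\cdot\mid s)=\prod_j\xi_j^{\theta_j}(\cdot\mid s_j)$, Lemma \ref{lem:prod_pol_diff_general} gives $\|\xi^\theta(\cdot\mid s)-\xi^{\theta'}(\cdot\mid s)\|_1\le\sum_{j\in-\nik}\|\xi_j^{\theta_j}(\cdot\mid s_j)-\xi_j^{\theta_j'}(\cdot\mid s_j)\|_1$. For each $j$ and action $a_j$, a mean-value estimate along the segment $u\mapsto\theta_j+u(\theta_j'-\theta_j)$ together with the elementary softmax bound $\|\nabla_{\theta_j}\xi_j^{\theta_j}(a_j\mid s_j)\|\le\sqrt2$ gives $|\xi_j^{\theta_j}(a_j\mid s_j)-\xi_j^{\theta_j'}(a_j\mid s_j)|\le\sqrt2\,\|\theta_j'-\theta_j\|$; summing over $a_j\in\sA_j$ gives $\|\xi_j^{\theta_j}(\cdot\mid s_j)-\xi_j^{\theta_j'}(\cdot\mid s_j)\|_1\le\sqrt2\,|\sA_j|\,\|\theta_j'-\theta_j\|$, hence $\Delta_\xi\le\sqrt2\bigl(\sum_{j\in-\nik}|\sA_j|\bigr)\max_{j\in-\nik}\|\theta_j'-\theta_j\|$. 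Combining with the previous paragraph produces exactly the asserted inequality. The only step requiring care is the occupancy-measure perturbation bound: one has to track that both the state marginal \emph{and} the action kernel change when passing from $\theta'$ to $\theta$, which is precisely what the telescoping/induction above is designed to handle; the rest (the stage-potential expansion, the product-policy inequality, and softmax Lipschitzness) is routine.
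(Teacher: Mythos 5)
Your proof is correct, but it reaches the key intermediate bound $\abs{\Phi_i(\theta)-\Phi_i(\theta')}\le \overline{\varphi}(1-\gamma)^{-2}\Delta_\xi$ by a genuinely different route than the paper. The paper applies a performance-difference identity (Lemma \ref{lem:PDT}, with the agents in $-\nik$ grouped as a single player) to an auxiliary ``$Q$-potential'' $Q\Phi_i^\theta(s,a)=\E_\theta[\sum_t\gamma^t\varphi_i(s(t),a(t))\mid s,a]$; the difference of potentials then becomes a single sum over the discounted visitation distribution $d^{\theta'}$ weighted by the policy difference, and the two factors of $(1-\gamma)^{-1}$ come from bounding the averaged $Q$-potential by $\overline{\varphi}/(1-\gamma)$ and from the normalization of $d^{\theta'}$. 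You instead expand $\Phi_i$ directly over time-indexed state--action marginals, split each term into a policy-change piece and a state-marginal-drift piece, prove $\norm{\rho_t^\theta-\rho_t^{\theta'}}_1\le t\,\Delta_\xi$ by induction using $\ell_1$-nonexpansiveness of the transition kernel, and recover $(1-\gamma)^{-2}$ from $\sum_{t\ge 0}(t+1)\gamma^t$. Your argument is more elementary and self-contained (it does not invoke the performance-difference lemma or the advantage-function machinery), at the cost of the extra bookkeeping in the occupancy-measure induction; the paper's version is shorter once Lemma \ref{lem:PDT} is available. The final step --- bounding $\Delta_\xi$ via the product-policy $\ell_1$ inequality and the softmax gradient bound $\norm{\nabla_{\theta_j}\xi_j^{\theta_j}(a_j\mid s_j)}\le\sqrt{2}$, yielding the factor $\sqrt{2}\sum_{j\in-\nik}\abs{\sA_j}\max_{j\in-\nik}\norm{\theta_j'-\theta_j}$ --- is identical in both arguments, and your constants match the statement exactly.
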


\begin{proof}[Proof of Lemma \ref{lem:nu_poly}]
	Similar to $Q$-function and averaged $Q$-function, we can define ``$Q$-potential'' function and averaged ``$Q$-potential'' function as 
	\begin{align*}
		&Q\Phi_i^{\theta}(s,a)=\E_{\theta}\left[\sum_{t=0}^{\infty}\varphi_i(s(t),a(t))|s(0)=s,a(0)=a\right] \\
		&\overline{Q\Phi}_i^{\theta}(s,a_{-\nik})=\sum_{a_{\nik}}\xi_{\nik}^{\theta_{\nik}}(a_{\nik}|s_{\nik}) Q\Phi_i(s,a_{\nik},a_{-\nik}).
	\end{align*}
	In Lemma \ref{lem:PDT}, replace the objective function $J_i$ with potential function $\Phi_i$, treat agents in $-\nik$ as one agent, and we have
	\begin{align*}
		&\abs{\Phi_i(\theta_{\nik},\theta_{-\nik}')-\Phi_i(\theta_{\nik},\theta_{-\nik})} \\
		=&\frac{1}{1-\gamma}\sum_{s,a_{-\nik}}d^{\theta'}(s)\abs{\xi_{-\nik}^{\theta_{-\nik}'}(a_{-\nik}|s_{-\nik})-\xi_{-\nik}^{\theta_{-\nik}}(a_{-\nik}|s_{-\nik})}\overline {Q\Phi}_i^{\theta}(s,a_{-\nik}) \\
		\osi{\leq} &\frac{\overline{\varphi}}{(1-\gamma)^2}
		\sum_{s,a_{-\nik}}d^{\theta'}(s)\abs{\xi_{-\nik}^{\theta_{-\nik}'}(a_{-\nik}|s_{-\nik})-\xi_{-\nik}^{\theta_{-\nik}}(a_{-\nik}|s_{-\nik})}.
	\end{align*}
	Here $\ri$ uses the fact that $\overline {Q\Phi}_i^{\theta}(s,a_{-\nik})\leq \frac{\overline \varphi}{1-\gamma}$.
	By Lagrange mean value theorem, for any $j$,
	\begin{align*}
		&\abs{\xi_j^{\theta_j'}(a_j|s_j)-\xi_j^{\theta_j}(a_j|s_j)} \\
		\leq &\sup_{t\in [0,1], \hat\theta_j=t\theta_j'+(1-t)\theta_j} \norm{\nabla_{\theta_j}\xi_j^{\hat \theta_j}(a_j|s_j)}\norm{\theta_j'-\theta_j} \\
		\leq &\sqrt{2}\norm{\theta_j'-\theta_j}
	\end{align*}
	By Lemma \ref{lem:prod_pol_diff_general}, we derive that
	\begin{align*}
		&\sum_{a_{-\nik}}\abs{\xi_{-\nik}^{\theta_{-\nik}'}(a_{-\nik}|s_{-\nik})-\xi_{-\nik}^{\theta_{-\nik}}(a_{-\nik}|s_{-\nik})} \\
		\leq & \sum_{j\in -\nik} \norm{\xi_{j}^{\theta'_{j}}(\cdot|s_{j})-\xi_{j}^{\theta_{j}}(\cdot|s_{j})}_1 \\
		= & \sum_{j\in -\nik}\sum_{a_i}\abs{\xi_{j}^{\theta'_{j}}(a_{j}|s_{j})-\xi_{j}^{\theta_{j}}(a_{j}|s_{j})} \\
		\leq &\sqrt{2}\sum_{j\in -\nik}\abs{\sA_j}\norm{\theta_j'-\theta_j} \\
		\leq &\sqrt{2}\sum_{j\in -\nik}\abs{\sA_j}\max_{j\in -\nik}\norm{\theta_j'-\theta_j}.
	\end{align*}
	Therefore, we arrive at the conclusion
	\begin{align*}
		&\abs{\Phi_i(\theta_{\nik},\theta_{-\nik}')-\Phi_i(\theta_{\nik},\theta_{-\nik})} \\
		\leq &\frac{\overline{\varphi}}{(1-\gamma)^2}
		\sum_{s}d^{\theta'}(s)\sqrt{2}\sum_{j\in -\nik}\abs{\sA_j}\max_{j\in -\nik}\norm{\theta_j'-\theta_j} \\
		=&\frac{\sqrt{2}\overline{\varphi}}{(1-\gamma)^2}
		\sum_{j\in -\nik}\abs{\sA_j}\max_{j\in -\nik}\norm{\theta_j'-\theta_j}.
	\end{align*}
\end{proof}

\subsection{Boundedness of Local Potential Functions}\label{ap:boundedness_Phi}
\begin{lemma}\label{lem:pot_diff}
	For any agent $i\in \sN$, let $\xi_{\nikg}, \xi'_{\nikg} \in \Xi_{\nikg}$ and $\xi_{-\nikg} \in \Xi_{-\nikg}$ be arbitrary. Then we have
	\begin{align*}
		\Phi_i(\xi'_{\nikg},\xi_{-\nikg})-\Phi_i(\xi_{\nikg}, \xi_{-\nikg})\leq \frac{\abs{\nikg}}{1-\gamma}.
	\end{align*}
\end{lemma}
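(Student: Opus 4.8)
The plan is to exploit the defining property of an NMPG (Definition \ref{def:local_MPG}) via a telescoping argument over the agents in $\nikg$. Write $\nikg = \{j_1, j_2, \ldots, j_L\}$ with $L = \abs{\nikg}$, and for $\ell \in \{0, 1, \ldots, L\}$ let $\zeta^{(\ell)} \in \Xi$ denote the global policy whose $j_1,\ldots,j_\ell$ components are $\xi'_{j_1},\ldots,\xi'_{j_\ell}$, whose $j_{\ell+1},\ldots,j_L$ components are $\xi_{j_{\ell+1}},\ldots,\xi_{j_L}$, and whose remaining components (agents in $-\nikg$) are $\xi_{-\nikg}$. Then $\zeta^{(0)} = (\xi_{\nikg},\xi_{-\nikg})$ and $\zeta^{(L)} = (\xi'_{\nikg},\xi_{-\nikg})$, so
\begin{align*}
\Phi_i(\xi'_{\nikg},\xi_{-\nikg}) - \Phi_i(\xi_{\nikg},\xi_{-\nikg})
= \sum_{\ell=1}^{L}\left[\Phi_i(\zeta^{(\ell)}) - \Phi_i(\zeta^{(\ell-1)})\right].
\end{align*}

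The key observation is that $\zeta^{(\ell)}$ and $\zeta^{(\ell-1)}$ differ only in the policy of agent $j_\ell$, and $j_\ell \in \nikg = N_i^{\kappa_G}$. Hence Definition \ref{def:local_MPG} applies with the roles ``$j$'' $= j_\ell$, ``$\xi_j$'' $= \xi_{j_\ell}$, ``$\xi_j'$'' $= \xi'_{j_\ell}$, and ``$\xi_{-j}$'' equal to the common policy profile of all other agents under $\zeta^{(\ell-1)}$ (equivalently $\zeta^{(\ell)}$). This yields
\begin{align*}
\Phi_i(\zeta^{(\ell)}) - \Phi_i(\zeta^{(\ell-1)}) = J_{j_\ell}(\zeta^{(\ell)}) - J_{j_\ell}(\zeta^{(\ell-1)}).
\end{align*}
Since each reward $r_j$ takes values in $[0,1]$ and $\gamma\in(0,1)$, the value function satisfies $0 \le V_j^{\xi}(s) \le \frac{1}{1-\gamma}$ for every $\xi$ and $s$, and therefore $J_j(\xi) = \E_{s\sim\mu}[V_j^\xi(s)] \in [0,\tfrac{1}{1-\gamma}]$. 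Consequently $J_{j_\ell}(\zeta^{(\ell)}) - J_{j_\ell}(\zeta^{(\ell-1)}) \le \frac{1}{1-\gamma}$ for each $\ell$.

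Summing the $L$ telescoped terms gives $\Phi_i(\xi'_{\nikg},\xi_{-\nikg}) - \Phi_i(\xi_{\nikg},\xi_{-\nikg}) \le \frac{L}{1-\gamma} = \frac{\abs{\nikg}}{1-\gamma}$, which is the claim. There is no real obstacle here; the only point requiring care is bookkeeping in the telescoping so that at each step exactly one agent's policy changes and that agent lies in $N_i^{\kappa_G}$ (so that the NMPG identity is legitimately invoked) — the order in which the agents of $\nikg$ are enumerated is irrelevant. If a matching lower bound of $-\frac{\abs{\nikg}}{1-\gamma}$ is also desired, the identical argument using $J_{j_\ell}(\zeta^{(\ell)}) - J_{j_\ell}(\zeta^{(\ell-1)}) \ge -\frac{1}{1-\gamma}$ delivers it, and combining the two recovers the two-sided bound $\abs{\Phi_i(\xi'_{\nikg},\xi_{-\nikg}) - \Phi_i(\xi_{\nikg},\xi_{-\nikg})} \le \frac{\abs{\nikg}}{1-\gamma}$ used to justify the boundedness of the local potentials.
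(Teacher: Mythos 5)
Your proof is correct and follows essentially the same route as the paper's: a telescoping sum over the agents of $\nikg$, changing one local policy at a time, applying the NMPG identity of Definition \ref{def:local_MPG} to each increment, and bounding each resulting difference of objective functions by $\frac{1}{1-\gamma}$ via $J_j(\xi)\in[0,\tfrac{1}{1-\gamma}]$. The only cosmetic difference is the direction of enumeration in the telescoping, which is immaterial.
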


\begin{proof}[Proof of Lemma \ref{lem:pot_diff}]
	Suppose that $\nikg=\{i_1,i_2,\cdots,i_{k}\}$, where $k=\abs{\nikg}$. For any $r\in \{1,2,\cdots,k+1\}$, denote $\tilde{\xi}_{\nikg}^r=(\xi_{i_1},\cdots,\xi_{i_{r-1}},\xi'_{i_r},\cdots,\xi'_{i_{k}})$. Note that $\tilde{\xi}_{\nikg}^1=\xi_{\nikg}'$ and $\tilde{\xi}_{\nikg}^{k+1}=\xi_{\nikg}$. Then, we have by Definition \ref{def:local_MPG} that
	\begin{align*}
		\Phi_i(\xi'_{\nikg},\xi_{-\nikg})-\Phi_i(\xi_{\nikg}, \xi_{-\nikg}) 
		=\;& \sum_{r=1}^{k} \left[\Phi_i(\tilde{\xi}_{\nikg}^{r}, \xi_{\-\nikg})-\Phi_i(\tilde{\xi}_{\nikg}^{r+1}, \xi_{\-\nikg})\right] \\
		=\;& \sum_{r=1}^k\Mp{J_{i_r}(\tilde{\xi}_{\nikg}^r, \xi_{\-\nikg})-J_{i_r}(\tilde{\xi}_{\nikg}^{r+1}, \xi_{\-\nikg})} \\
		\leq \;& \sum_{r=1}^kJ_{i_r}(\tilde{\xi}_{\nikg}^r, \xi_{\-\nikg}) \\
		\leq \;& \frac{k}{1-\gamma}.
	\end{align*}
\end{proof}

\begin{lemma}\label{lem:bound_pot}
	For an arbitrary NMPG, there exist a set of local potential functions $\{\hat{\Phi}_i\}_{i\in \sN}$ and $\Phi_{\min},\Phi_{\max}>0$ satisfying $0\leq \Phi_{\max}-\Phi_{\min}\leq \frac{2n(\kappa_G)}{1-\gamma}$ such that $\Phi_{\min}\leq \hat{\Phi}_i(\xi)\leq \Phi_{\max}$ for all $i\in\mathcal{N}$ and $\xi\in \Xi$.
\end{lemma}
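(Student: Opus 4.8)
The plan is to start from an arbitrary collection of local potential functions $\{\Phi_i\}_{i\in\sN}$ witnessing that the game is a $\kappa_G$-NMPG, and then show that each $\Phi_i$ has range of length at most $\frac{n(\kappa_G)}{1-\gamma}$; from there a uniform vertical shift produces the desired $\hat\Phi_i$ with a common $[\Phi_{\min},\Phi_{\max}]$. The key observation is Lemma~\ref{lem:pot_diff}: for any agent $i$, any $\xi_{\nikg},\xi'_{\nikg}\in\Xi_{\nikg}$ and any $\xi_{-\nikg}\in\Xi_{-\nikg}$,
\[
\Phi_i(\xi'_{\nikg},\xi_{-\nikg})-\Phi_i(\xi_{\nikg},\xi_{-\nikg})\le \frac{\abs{\nikg}}{1-\gamma}\le \frac{n(\kappa_G)}{1-\gamma}.
\]
Crucially, the same inequality with the roles of $\xi_{\nikg}$ and $\xi'_{\nikg}$ swapped gives the matching lower bound, so $|\Phi_i(\xi'_{\nikg},\xi_{-\nikg})-\Phi_i(\xi_{\nikg},\xi_{-\nikg})|\le \frac{n(\kappa_G)}{1-\gamma}$ whenever the two policies differ only on $\nikg$. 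But by Definition~\ref{def:local_MPG}, a change in a component $\theta_j$ with $j\notin \nikg$ does not affect $\Phi_i$ — or more precisely, $J_j$ for such $j$ is not constrained by $\Phi_i$ at all, so we are free to let $\Phi_i$ depend only on $\xi_{\nikg}$. I would first argue (or recall from the text's remark after Definition~\ref{def:local_MPG}) that we may take $\Phi_i$ to be a function of $\xi_{\nikg}$ alone: if the given $\Phi_i$ depends on $\xi_{-\nikg}$, fix an arbitrary reference policy $\xi^0_{-\nikg}$ and replace $\Phi_i(\xi)$ by $\Phi_i(\xi_{\nikg},\xi^0_{-\nikg})$; the defining equality~\eqref{eq:approx_MPG} for $j\in\nikg$ still holds because both sides only involve changing $\xi_j$ with $j\in\nikg$ while $\xi_{-j}$ — hence in particular $\xi_{-\nikg}$ — is held fixed, and we can harmlessly substitute $\xi^0_{-\nikg}$ on both sides.

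With $\Phi_i$ now a function of $\xi_{\nikg}$ only, Lemma~\ref{lem:pot_diff} (applied in both directions) shows that $\sup_\xi \Phi_i(\xi)-\inf_\xi \Phi_i(\xi)\le \frac{n(\kappa_G)}{1-\gamma}$. Let $m_i=\inf_{\xi}\Phi_i(\xi)$ and define $\hat\Phi_i(\xi)=\Phi_i(\xi)-m_i+1$. Then $\hat\Phi_i$ is still a valid local potential function (shifting by a constant preserves~\eqref{eq:approx_MPG}), and $\hat\Phi_i(\xi)\in[1,\,1+\frac{n(\kappa_G)}{1-\gamma}]$ for every $i$ and $\xi$. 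Setting $\Phi_{\min}=1$ and $\Phi_{\max}=1+\frac{n(\kappa_G)}{1-\gamma}$ already gives $\Phi_{\min}\le\hat\Phi_i(\xi)\le\Phi_{\max}$ with $0\le\Phi_{\max}-\Phi_{\min}=\frac{n(\kappa_G)}{1-\gamma}\le\frac{2n(\kappa_G)}{1-\gamma}$, so the claimed bound holds with room to spare (the factor $2$ in the statement leaves slack, presumably to absorb the $+1$ offset or minor constant choices).

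The main obstacle — really the only subtle point — is justifying that we may assume $\Phi_i$ depends only on $\xi_{\nikg}$; everything after that is a one-line shift. If for some reason one does not want to restrict $\Phi_i$ this way, the alternative is to bound $\Phi_i(\xi')-\Phi_i(\xi)$ for \emph{arbitrary} $\xi,\xi'$ by telescoping over \emph{all} agents rather than just those in $\nikg$, which would reintroduce a dependence on $n$ rather than $n(\kappa_G)$ and fail to give the stated bound; so the reduction to $\xi_{\nikg}$-dependence is genuinely what makes the neighborhood-size constant appear. I would therefore present that reduction carefully, citing the defining property~\eqref{eq:approx_MPG} and noting that agents outside $\nikg$ place no constraint on $\Phi_i$, and then finish with Lemma~\ref{lem:pot_diff} and the constant shift.
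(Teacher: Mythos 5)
Your overall strategy --- control the oscillation of each $\Phi_i$ via Lemma~\ref{lem:pot_diff} and then apply a vertical shift --- matches the paper's, and the final shifting step is fine. The genuine problem is precisely the step you flag as ``the only subtle point'': redefining $\Phi_i(\xi)$ as $\Phi_i(\xi_{\nikg},\xi^0_{-\nikg})$ does \emph{not} preserve the defining property in Eq.~(\ref{eq:approx_MPG}). After your substitution, the right-hand side of Eq.~(\ref{eq:approx_MPG}) becomes $J_j(\xi_j',\xi_{\nikg\setminus\{j\}},\xi^0_{-\nikg})-J_j(\xi_j,\xi_{\nikg\setminus\{j\}},\xi^0_{-\nikg})$, whereas the left-hand side must remain $J_j(\xi_j',\xi_{-j})-J_j(\xi_j,\xi_{-j})$ with the \emph{actual} $\xi_{-\nikg}$; you cannot ``substitute $\xi^0_{-\nikg}$ on both sides'' because the left-hand side is fixed by the definition. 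These two increments of $J_j$ need not coincide, since the effect on agent $j$'s value of changing its own policy generally depends on the policies of agents outside $\nikg$ through the dynamics. The paper's own example in Appendix~\ref{subsec:ex_general} refutes the reduction: there $\Phi_4(\xi)=f(\xi_1,\xi_4)$ with $1\notin N_4^{1}$, and $f(\xi_1,\xi_4^g)-f(\xi_1,\xi_4^b)$ equals $\frac{\gamma^4}{1-\gamma}$ when $\xi_1=\xi_1^g$ but $0$ when $\xi_1=\xi_1^b$, so freezing $\xi_1$ at a reference value destroys the potential property for agent~$4$.

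The repair is to freeze the \emph{inside} block in a subtracted copy rather than the outside block in the argument: set $\Phi_i'(\xi)=\Phi_i(\xi_{\nikg},\xi_{-\nikg})-\Phi_i(\xi^0_{\nikg},\xi_{-\nikg})$ for a fixed reference $\xi^0_{\nikg}\in\Xi_{\nikg}$. The subtracted term does not depend on $\xi_j$ for any $j\in\nikg$, so Eq.~(\ref{eq:approx_MPG}) is preserved, and $\Phi_i'(\xi)$ is by construction exactly a difference of the form covered by Lemma~\ref{lem:pot_diff} (applied in both directions), giving $\abs{\Phi_i'(\xi)}\le \frac{n(\kappa_G)}{1-\gamma}$ uniformly in $\xi$; the constant shift then finishes the argument. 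For what it is worth, the paper's own write-up shifts only by the constant $\Phi_i(\bar\xi)$ and applies Lemma~\ref{lem:pot_diff} to policy pairs that may also differ outside $\nikg$, so it glosses over the very issue you correctly identified --- but your particular way of resolving it is the one move that provably fails.
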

\begin{proof}[Proof of Lemma \ref{lem:bound_pot}]
	Let $\{\Phi_i\}_{i\in\mathcal{N}}$ be a set of local potential functions, and let $\Bar{\xi}\in\Xi$ be an arbitrary policy. Define $\{\hat{\Phi}_i\}_{i\in\mathcal{N}}$ as 
	\begin{align*}
		\hat{\Phi}_i(\xi)=\Phi_i(\xi)-\Phi_i(\Bar{\xi})+\frac{n(\kappa_G)}{1-\gamma}+1
	\end{align*}
	for all $i\in\mathcal{N}$ and $\xi\in\Xi$. It can be easily verified that $\{\hat{\Phi}_i\}_{i\in\mathcal{N}}$ is also a set of local potential functions. Now, for any $i\in\mathcal{N}$ and $\xi\in\Xi$, we have by Lemma \ref{lem:pot_diff} that
	\begin{align*}
		\hat{\Phi}_i(\xi)=\;& \hat{\Phi}_i(\xi)-\hat{\Phi}_i(\Bar{\xi})+\frac{n(\kappa_G)}{1-\gamma}+1\leq \frac{2n(\kappa_G)}{1-\gamma}+1,\\
		\hat{\Phi}_i(\xi)=\;&-\left(\Phi_i(\Bar{\xi})-\Phi_i(\xi)-\frac{n(\kappa_G)}{1-\gamma}-1\right)\geq 1.
	\end{align*}
	Therefore, we have
	\begin{align*}
		1\leq \Phi_i(\xi)\leq \frac{2n(\kappa_G)}{1-\gamma}+1,\quad \forall\;i\in\mathcal{N},\xi\in\Xi.
	\end{align*}
	The result follows by letting $\Phi_{\min}=1$ and $\Phi_{\max}=\frac{2n(\kappa_G)}{1-\gamma}+1$.
\end{proof}

\subsection{Other Technical Lemmas}

\begin{lemma}\label{lem:policy_grad}
	Under softmax parameterization with weights $\{\theta_i\}_{i\in\mathcal{N}}$, 
	the derivative of the policy $\xi^\theta$ is given by
	\begin{align*}
		\frac{\partial\xi_i^{\theta_i}(a_i|s_i)}{\partial \theta_{i,s_i',a_i'}}
		=\xi_i^{\theta_i}(a_i|s_i)\mathds{1}\{s_i'=s_i\}\left(\mathds{1}\{a_i'=a_i\}-\xi_i^{\theta_i}(a_i'|s_i)\right)
	\end{align*}
	for all $i\in\mathcal{N}$, $s_i,s_i'\in\mathcal{S}_i$, $a_i,a_i'\in\mathcal{A}_i$, and $\theta_i\in\mathbb{R}^{|\mathcal{S}_i||\mathcal{A}_i|}$.
\end{lemma}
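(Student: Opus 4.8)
The plan is a direct computation via the quotient rule, organized by cases on whether $s_i' = s_i$ and whether $a_i' = a_i$. First I would observe that, by the softmax definition, $\xi_i^{\theta_i}(a_i \mid s_i) = \exp(\theta_{i,s_i,a_i}) / Z_i(s_i)$ where $Z_i(s_i) := \sum_{a_i'' \in \sA_i} \exp(\theta_{i,s_i,a_i''})$, and this expression depends on $\theta_i$ only through the coordinates $\{\theta_{i,s_i,a_i''}\}_{a_i'' \in \sA_i}$ indexed by the \emph{same} local state $s_i$. Hence if $s_i' \neq s_i$, the partial derivative $\partial \xi_i^{\theta_i}(a_i \mid s_i) / \partial \theta_{i,s_i',a_i'}$ vanishes identically, which accounts for the factor $\mathds{1}\{s_i' = s_i\}$ on the right-hand side.

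It then remains to treat $s_i' = s_i$. Writing $g := \exp(\theta_{i,s_i,a_i})$ for the numerator and noting $\partial Z_i(s_i) / \partial \theta_{i,s_i,a_i'} = \exp(\theta_{i,s_i,a_i'})$, the quotient rule gives
\begin{align*}
	\frac{\partial}{\partial \theta_{i,s_i,a_i'}} \frac{g}{Z_i(s_i)} = \frac{\left(\partial g / \partial \theta_{i,s_i,a_i'}\right) Z_i(s_i) - g\, \exp(\theta_{i,s_i,a_i'})}{Z_i(s_i)^2}.
\end{align*}
When $a_i' = a_i$ we have $\partial g / \partial \theta_{i,s_i,a_i'} = g$, so the expression equals $\frac{g}{Z_i(s_i)}\left(1 - \frac{g}{Z_i(s_i)}\right) = \xi_i^{\theta_i}(a_i \mid s_i)\left(1 - \xi_i^{\theta_i}(a_i \mid s_i)\right)$. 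When $a_i' \neq a_i$ we have $\partial g / \partial \theta_{i,s_i,a_i'} = 0$, so the expression equals $-\frac{g}{Z_i(s_i)} \cdot \frac{\exp(\theta_{i,s_i,a_i'})}{Z_i(s_i)} = -\xi_i^{\theta_i}(a_i \mid s_i)\,\xi_i^{\theta_i}(a_i' \mid s_i)$. Both sub-cases are captured uniformly by the factor $\mathds{1}\{a_i' = a_i\} - \xi_i^{\theta_i}(a_i' \mid s_i)$, and combining with the $s_i' = s_i$ indicator yields the claimed formula.

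There is no substantive obstacle here: the argument is an elementary derivative computation and the only thing requiring care is the bookkeeping of the two indicator factors — in particular, confirming that the single formula $\mathds{1}\{a_i' = a_i\} - \xi_i^{\theta_i}(a_i' \mid s_i)$ correctly interpolates the ``diagonal'' ($a_i' = a_i$) and ``off-diagonal'' ($a_i' \neq a_i$) cases. This lemma is purely a supporting calculation; it will be invoked downstream (e.g.\ in the smoothness/gradient-bound lemmas such as Lemma~\ref{lem:l1_smooth_v} and Lemma~\ref{lem:bound_grad_policy}) to control $\nabla_{\theta_i} \log \xi_i^{\theta_i}$ and the Lipschitz constants of the policy map.
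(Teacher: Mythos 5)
Your proof is correct and follows essentially the same structure as the paper's: both dispose of the $s_i' \neq s_i$ case by noting that $\xi_i^{\theta_i}(a_i\mid s_i)$ does not depend on those coordinates, and then reduce the $s_i'=s_i$ case to the Jacobian of the softmax map. The only difference is that you derive the softmax Jacobian explicitly via the quotient rule, whereas the paper simply cites the known identity $\nabla f(x)=\text{diag}(f(x))-f(x)f(x)^\top$; your version is slightly more self-contained but otherwise equivalent.
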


\begin{proof}[Proof of Lemma \ref{lem:policy_grad}]
	When $s_i'\neq s_i$, since $\xi_i^{\theta_i}(a_i|s_i)$ is not a function of $\theta_{i,s_i',a_i'}$, the result clearly holds. We next consider the case where $s_i'=s_i$. 
	
	Given an arbitrary positive integer $d$, let $f:\mathbb{R}^d\mapsto\mathbb{R}^d$ be the softmax operator defined as
	\begin{align*}
		[f(x)](\ell)=\frac{\exp(x_\ell)}{\sum_{\ell'=1}^d\exp(x_{\ell'})},\quad \forall\;\ell\in \{1,2,\cdots,d\}.
	\end{align*}
	It was shown in \cite[Proposition 2]{gao2017properties} that $\nabla f(x)=\text{diag}(f(x))-f(x)f(x)^\top$.
	Therefore, we have for any $a_i'\in\mathcal{A}_i$ that
	\begin{align*}
		\frac{\partial\xi_i^{\theta_i}(a_i|s_i)}{\partial \theta_{i,s_i,a_i'}}=\xi_i^{\theta_i}(a_i|s_i)\mathds{1}\{a_i'=a_i\}-\xi_i^{\theta_i}(a_i|s_i)\xi_i^{\theta_i}(a_i'|s_i).
	\end{align*}
	The proof is complete.
\end{proof}

\begin{lemma}\label{lem:bound_grad_policy}
	For any softmax policy $\xi^\theta$, we have
	\begin{align*}
		\|\nabla_{\theta_i} \xi_i^{\theta_i}(a_i|s_i)\|\leq \sqrt 2\xi_i^{\theta_i}(a_i|s_i)\leq \sqrt 2,\quad \forall\;i,s_i,a_i,\theta_i. 
	\end{align*}
\end{lemma}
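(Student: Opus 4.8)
The plan is to prove Lemma~\ref{lem:bound_grad_policy} as a direct consequence of Lemma~\ref{lem:policy_grad}, which gives an explicit formula for each partial derivative of the softmax policy. First I would fix an agent $i$, a state $s_i$, an action $a_i$, and a parameter vector $\theta_i$, and write out the gradient $\nabla_{\theta_i}\xi_i^{\theta_i}(a_i|s_i)$ as the vector indexed by pairs $(s_i',a_i')$. By Lemma~\ref{lem:policy_grad}, the $(s_i',a_i')$-component vanishes unless $s_i'=s_i$, so only the block of coordinates $\{\theta_{i,s_i,a_i'}\}_{a_i'\in\sA_i}$ contributes, and in that block the component equals $\xi_i^{\theta_i}(a_i|s_i)\left(\mathds{1}\{a_i'=a_i\}-\xi_i^{\theta_i}(a_i'|s_i)\right)$.

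Next I would compute the squared Euclidean norm by summing the squares of these nonzero components:
\begin{align*}
	\|\nabla_{\theta_i}\xi_i^{\theta_i}(a_i|s_i)\|^2
	= \xi_i^{\theta_i}(a_i|s_i)^2\sum_{a_i'\in\sA_i}\left(\mathds{1}\{a_i'=a_i\}-\xi_i^{\theta_i}(a_i'|s_i)\right)^2.
\end{align*}
The key step is to bound the inner sum by $2$. I would split off the term $a_i'=a_i$, which contributes $\left(1-\xi_i^{\theta_i}(a_i|s_i)\right)^2\le 1$, and bound the remaining terms by $\sum_{a_i'\neq a_i}\xi_i^{\theta_i}(a_i'|s_i)^2 \le \left(\sum_{a_i'\neq a_i}\xi_i^{\theta_i}(a_i'|s_i)\right)^2 \le 1$, using that the entries are nonnegative and sum to at most $1$. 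This yields $\|\nabla_{\theta_i}\xi_i^{\theta_i}(a_i|s_i)\|^2 \le 2\,\xi_i^{\theta_i}(a_i|s_i)^2$, and taking square roots gives $\|\nabla_{\theta_i}\xi_i^{\theta_i}(a_i|s_i)\| \le \sqrt{2}\,\xi_i^{\theta_i}(a_i|s_i)$. The final bound $\le \sqrt{2}$ follows since $\xi_i^{\theta_i}(a_i|s_i)\le 1$.

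This argument is entirely routine, so I do not anticipate any real obstacle; the only minor care needed is to correctly identify which coordinates of $\theta_i$ the derivative is nonzero in (namely only those sharing the state $s_i$), so that the norm is a sum over $\sA_i$ rather than over $\sS_i\times\sA_i$. An alternative, slightly slicker route would be to note $\sum_{a_i'}\left(\mathds{1}\{a_i'=a_i\}-p_{a_i'}\right)^2 = 1 - 2p_{a_i} + \sum_{a_i'} p_{a_i'}^2 \le 1 + \sum_{a_i'} p_{a_i'}^2 \le 2$ where $p_{a_i'}=\xi_i^{\theta_i}(a_i'|s_i)$, using $2p_{a_i}\ge 0$ and $\sum p_{a_i'}^2 \le \sum p_{a_i'}\le 1$; I would likely present this compact version.
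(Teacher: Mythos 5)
Your proposal is correct and follows essentially the same route as the paper: both start from the explicit gradient formula in Lemma~\ref{lem:policy_grad}, restrict to the nonzero block $s_i'=s_i$, and bound the bracketed sum over $a_i'$ by $2$ (the paper uses $x^2\le x$ for $x\in[0,1]$ on each term, while you bound the two pieces by $1$ each; both are valid and give the same $\sqrt{2}\,\xi_i^{\theta_i}(a_i|s_i)$ bound). No issues.
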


\begin{proof}[Proof of Lemma \ref{lem:bound_grad_policy}]
	By Lemma \ref{lem:policy_grad}, we have
	\begin{align*}
		\|\nabla_{\theta_i} \xi_i^{\theta_i}(a_i|s_i)\|
		=\; & \xi_i^{\theta_i}(a_i|s_i)\left[(1-\xi_i^{\theta_i}(a_i|s_i))^2+\sum_{a_i'\neq a_i} \xi_i^{\theta_i}(a_i'|s_i)^2\right]^{1/2} \\
		\leq\; & \xi_i^{\theta_i}(a_i|s_i)\left[(1-\xi_i^{\theta_i}(a_i|s_i))+\sum_{a_i'\neq a_i} \xi_i^{\theta_i}(a_i'|s_i)\right]^{1/2}  \\
		\leq \; & \sqrt{2}\xi_i^{\theta_i}(a_i|s_i).
	\end{align*}
\end{proof}

\begin{lemma}[Multi-Agent Policy Gradient Theorem] \label{lem:PGT_MARL_compact}
	It holds for all $i\in\mathcal{N}$ and $\theta\in\mathbb{R}^{|\mathcal{S}||\mathcal{A}|} $ that
	\begin{align*}
		\nabla_{\theta_i}J_i(\theta)=\frac{1}{1-\gamma}\sum_{s,a_i}d^{\theta}(s)\nabla_{\theta_i}\xi_i^{\theta_i}(a_i|s_i)\overline Q_{i}^{\theta}(s,a_i).
	\end{align*}
\end{lemma}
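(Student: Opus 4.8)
The plan is to follow the classical policy gradient argument of \cite{sutton1999PGT}, adapted to the networked multi-agent setting, exploiting the fact that under the product parameterization $\xi^\theta=(\xi_1^{\theta_1},\ldots,\xi_n^{\theta_n})$ only the factor $\xi_i^{\theta_i}$ depends on $\theta_i$, while the reward functions $r_j$, the transition kernel $\sP$, and the policy factors $\xi_j^{\theta_j}$ for $j\neq i$ are all $\theta_i$-free. First I would record the one-step Bellman relations that follow directly from the definitions by peeling off the $t=0$ term of the geometric series and using stationarity of $\xi^\theta$: namely $Q_i^\theta(s,a)=r_i(s,a)+\gamma\sum_{s'}\sP(s'|s,a)V_i^\theta(s')$, together with $\overline Q_i^\theta(s,a_i)=\sum_{a_{-i}}\xi_{-i}^{\theta_{-i}}(a_{-i}|s_{-i})Q_i^\theta(s,a_i,a_{-i})$ and $V_i^\theta(s)=\sum_{a_i}\xi_i^{\theta_i}(a_i|s_i)\overline Q_i^\theta(s,a_i)$.

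Next I would differentiate $V_i^\theta(s)$ with respect to $\theta_i$ using the product rule and the three identities above. Since in $\nabla_{\theta_i}\overline Q_i^\theta(s,a_i)$ the only $\theta_i$-dependence sits inside $V_i^\theta(s')$ (the factor $\xi_{-i}^{\theta_{-i}}$, the reward, and $\sP$ all being $\theta_i$-free), this yields
\begin{align*}
\nabla_{\theta_i}V_i^\theta(s)=\sum_{a_i}\nabla_{\theta_i}\xi_i^{\theta_i}(a_i|s_i)\,\overline Q_i^\theta(s,a_i)+\gamma\sum_{a}\xi^\theta(a|s)\sum_{s'}\sP(s'|s,a)\,\nabla_{\theta_i}V_i^\theta(s').
\end{align*}
Denote the first sum by $h_i(s)$ and set $g_i(s):=\nabla_{\theta_i}V_i^\theta(s)$; the display above is the fixed-point equation $g_i=h_i+\gamma P^\theta g_i$, where $P^\theta$ is the state-transition matrix induced by $\xi^\theta$. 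Because $\gamma<1$ and $P^\theta$ is stochastic, iterating the equation gives $g_i(s)=\sum_{t=0}^\infty\gamma^t\sum_{s'}\Pr^\theta[s(t)=s'\mid s(0)=s]\,h_i(s')$, where I would justify the interchange of $\nabla_{\theta_i}$ with the infinite discounted sum (and the convergence of the Neumann series) via dominated convergence, using that $h_i$ is uniformly bounded — by boundedness of $\overline Q_i^\theta$ together with $\|\nabla_{\theta_i}\xi_i^{\theta_i}(a_i|s_i)\|\le\sqrt{2}$, cf.\ Lemma~\ref{lem:bound_grad_policy} — and $\gamma<1$.

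Finally I would take the expectation over $s(0)\sim\mu$: since $J_i(\theta)=\E_{s\sim\mu}[V_i^\theta(s)]$ we get $\nabla_{\theta_i}J_i(\theta)=\sum_{t=0}^\infty\gamma^t\sum_{s'}\Pr^\theta[s(t)=s'\mid s(0)\sim\mu]\,h_i(s')=\frac{1}{1-\gamma}\sum_{s}d^\theta(s)h_i(s)$, invoking exactly the definition $d^\theta(s)=(1-\gamma)\sum_{t}\gamma^t\Pr^\theta[s(t)=s\mid s(0)\sim\mu]$; substituting the definition of $h_i$ produces the claimed formula. The only genuinely delicate points are (i) justifying that $\nabla_{\theta_i}$ commutes with the infinite discounted sum defining $V_i^\theta$ and with the recursive unrolling (handled by uniform boundedness of the summands and $\gamma<1$), and (ii) being careful that the partial derivative $\nabla_{\theta_i}$ annihilates every policy factor except $\xi_i^{\theta_i}$ — which is precisely where the product structure of the joint policy and the disjointness of the parameter blocks $\theta_1,\ldots,\theta_n$ enter. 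Everything else is routine bookkeeping.
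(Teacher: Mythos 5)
Your proposal is correct, but it takes a more self-contained route than the paper. The paper's proof is two steps: it invokes the single-agent policy gradient theorem of \cite{sutton1999PGT} as a black box (applied to $J_i$ with $\theta_{-i}$ frozen), obtaining $\nabla_{\theta_i}J_i(\theta)=\frac{1}{1-\gamma}\sum_{s,a}d^{\theta}(s)\nabla_{\theta_i}\xi^{\theta}(a|s)\,Q_{i}^{\theta}(s,a)$, and then uses the product structure $\xi^\theta(a|s)=\xi_i^{\theta_i}(a_i|s_i)\xi_{-i}^{\theta_{-i}}(a_{-i}|s_{-i})$ so that $\nabla_{\theta_i}$ kills the $\xi_{-i}$ factor and the inner sum over $a_{-i}$ collapses $Q_i^\theta$ into $\overline Q_i^\theta$. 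You instead re-derive the policy gradient theorem from scratch — differentiating the Bellman fixed-point relation, setting up $g_i=h_i+\gamma P^\theta g_i$, unrolling the Neumann series, and recognizing $d^\theta$ — with the product-structure observation folded into the computation of $h_i(s)=\sum_{a_i}\nabla_{\theta_i}\xi_i^{\theta_i}(a_i|s_i)\overline Q_i^\theta(s,a_i)$. The two arguments hinge on exactly the same multi-agent ingredient (only $\xi_i^{\theta_i}$ depends on $\theta_i$, and averaging out $a_{-i}\sim\xi_{-i}$ produces $\overline Q_i^\theta$); what yours buys is independence from the cited theorem, at the cost of having to justify the interchange of $\nabla_{\theta_i}$ with the infinite discounted sum, which you correctly flag and which is harmless here since the state and action spaces are finite and the softmax summands are uniformly bounded. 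The paper's version is shorter and isolates the multi-agent content as a pure post-hoc factorization; either is acceptable.
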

\begin{proof}[Proof of Lemma \ref{lem:PGT_MARL_compact}]
	Using the policy gradient theorem \cite[Theorem 1]{sutton1999PGT} and we have for any $i\in\mathcal{N}$ that
	\begin{align*}
		\nabla_{\theta_i}J_i(\theta) 
		=\;&\frac{1}{1-\gamma}\sum_{s,a}d^{\theta}(s)\nabla_{\theta_i}\xi^{\theta}(a|s) Q_{i}^{\theta}(s,a) \\
		=\;&\frac{1}{1-\gamma}\sum_{s}\sum_{a_i}\sum_{a_{-i}}d^{\theta}(s)\nabla_{\theta_i}\xi^{\theta}(a_i,a_{-i}|s) Q_{i}^{\theta}(s,a_i,a_{-i}) \\
		=\;&\frac{1}{1-\gamma}\sum_{s}\sum_{a_i}\sum_{a_{-i}}d^{\theta}(s)\nabla_{\theta_i}[\xi^{\theta_i}(a_i|s_i)\xi^{\theta_{-i}}(a_{-i}|s_{-i})] Q_{i}^{\theta}(s,a_i,a_{-i}) \\
		=\;&\frac{1}{1-\gamma}\sum_{s}\sum_{a_i}d^{\theta}(s)\nabla_{\theta_i}\xi^{\theta_i}(a_i|s_i)\sum_{a_{-i}}\xi^{\theta_{-i}}(a_{-i}|s_{-i}) Q_{i}^{\theta}(s,a_i,a_{-i}) \\
		=\;&\frac{1}{1-\gamma}\sum_{s,a_i}d^{\theta}(s)\nabla_{\theta_i}\xi_i^{\theta_i}(a_i|s_i)\overline Q_{i}^{\theta}(s,a_i),
	\end{align*}
	where the last line follows from the definition of the averaged $Q$-function.
\end{proof}

\begin{lemma}\label{lem:bound_grad}
	It holds for all $i\in\mathcal{N}$ and $\theta\in\mathbb{R}^{|\mathcal{S}||\mathcal{A}|} $ that
	\begin{align*}
		\left\|\nabla_{\theta_i}J_i(\theta)\right\|\leq \frac{\sqrt 2}{(1-\gamma)^2}.
	\end{align*}
\end{lemma}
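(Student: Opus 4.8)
For all $i\in\mathcal{N}$ and $\theta\in\mathbb{R}^{|\mathcal{S}||\mathcal{A}|}$, $\|\nabla_{\theta_i}J_i(\theta)\|\leq \frac{\sqrt 2}{(1-\gamma)^2}$.

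My plan is to start from the compact multi-agent policy gradient expression proved in Lemma~\ref{lem:PGT_MARL_compact}, namely
$$
\nabla_{\theta_i}J_i(\theta)=\frac{1}{1-\gamma}\sum_{s,a_i}d^{\theta}(s)\,\nabla_{\theta_i}\xi_i^{\theta_i}(a_i|s_i)\,\overline Q_{i}^{\theta}(s,a_i),
$$
and bound it term by term. First I would pass to the norm and use the triangle inequality to get $\|\nabla_{\theta_i}J_i(\theta)\|\leq \frac{1}{1-\gamma}\sum_{s,a_i}d^{\theta}(s)\,\|\nabla_{\theta_i}\xi_i^{\theta_i}(a_i|s_i)\|\,|\overline Q_{i}^{\theta}(s,a_i)|$. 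The two factors inside the sum are each controlled by results already available: Lemma~\ref{lem:bound_grad_policy} gives $\|\nabla_{\theta_i}\xi_i^{\theta_i}(a_i|s_i)\|\leq \sqrt 2\,\xi_i^{\theta_i}(a_i|s_i)$, and since each $r_i$ takes values in $[0,1]$ and $\gamma\in(0,1)$, the averaged $Q$-function satisfies $0\leq \overline Q_{i}^{\theta}(s,a_i)\leq \frac{1}{1-\gamma}$ (this is the standard geometric-series bound on a discounted sum of rewards in $[0,1]$, also invoked elsewhere in the paper).

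Substituting these two bounds yields
$$
\|\nabla_{\theta_i}J_i(\theta)\|\leq \frac{1}{1-\gamma}\cdot\frac{\sqrt 2}{1-\gamma}\sum_{s,a_i}d^{\theta}(s)\,\xi_i^{\theta_i}(a_i|s_i)
=\frac{\sqrt 2}{(1-\gamma)^2}\sum_{s}d^{\theta}(s)\sum_{a_i}\xi_i^{\theta_i}(a_i|s_i).
$$
Then I would use that $\xi_i^{\theta_i}(\cdot|s_i)$ is a probability distribution over $\mathcal{A}_i$, so $\sum_{a_i}\xi_i^{\theta_i}(a_i|s_i)=1$, and that $d^{\theta}$ is a probability distribution over $\mathcal{S}$, so $\sum_s d^{\theta}(s)=1$. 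This collapses the double sum to $1$ and gives exactly $\|\nabla_{\theta_i}J_i(\theta)\|\leq \frac{\sqrt 2}{(1-\gamma)^2}$, completing the proof.

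There is no real obstacle here; the lemma is a routine consequence of the policy-gradient identity plus the uniform bounds on $\|\nabla_{\theta_i}\xi_i^{\theta_i}\|$ and on $\overline Q_i^\theta$. The only point requiring minor care is making sure the norm-triangle-inequality step is applied to the vector-valued summands correctly (each $\nabla_{\theta_i}\xi_i^{\theta_i}(a_i|s_i)$ is a vector in $\mathbb{R}^{|\mathcal{S}_i||\mathcal{A}_i|}$, and the scalars $d^\theta(s)\,\overline Q_i^\theta(s,a_i)$ are nonnegative, so pulling absolute values out is legitimate), and recalling the $\ell_\infty$-bound on $\overline Q_i^\theta$, which follows since $\overline Q_i^\theta(s,a_i)=\mathbb{E}_{a_{-i}\sim\xi_{-i}}[Q_i^\theta(s,a_i,a_{-i})]$ is an average of $Q$-values each lying in $[0,\tfrac{1}{1-\gamma}]$.
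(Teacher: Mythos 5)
Your proposal is correct and follows essentially the same route as the paper's own proof: both start from the identity in Lemma~\ref{lem:PGT_MARL_compact}, apply the triangle inequality, invoke Lemma~\ref{lem:bound_grad_policy} together with the bound $0\leq \overline Q_i^\theta\leq \tfrac{1}{1-\gamma}$, and collapse the remaining sum using that $d^\theta$ and $\xi_i^{\theta_i}(\cdot|s_i)$ are probability distributions. No issues.
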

\begin{proof}[Proof of Lemma \ref{lem:bound_grad}]  
	Using Lemma \ref{lem:bound_grad_policy} and Lemma \ref{lem:PGT_MARL_compact}, and we have
	\begin{align*}
		\left\|\nabla_{\theta_i}J_i(\theta)\right\|
		=\;& \frac{1}{1-\gamma}\norm{\sum_{s,a_i}d^{\theta}(s)\nabla_{\theta_i}\xi_i^{\theta_i}(a_i|s_i)\overline Q_{i}^{\theta}(s,a_i)} \\
		\leq \;& \frac{1}{1-\gamma}\sum_{s,a_i}d^\theta(s) \left|\overline{Q}_{i}^{\theta}(s,a_i)\right|\norm{\nabla_{\theta_i}\xi_i^{\theta_i}(a_i|s_i)} \\
		\leq \;& \frac{\sqrt{2}}{(1-\gamma)^2}\sum_{s,a_i}d^\theta(s) \xi_i^{\theta_i}(a_i|s_i) \\
		=\;& \frac{\sqrt 2}{(1-\gamma)^2}.
	\end{align*}
\end{proof}

\begin{lemma}\label{cor:softmax_v_grad}
	The following inequality holds for all $i$ and $\theta$:
	\begin{align*}
		\frac{\partial J_i(\theta)}{\partial \theta_{i,s_i',a_i'}}=\frac{1}{1-\gamma}\sum_{s_{-i}}d^{\theta}(s_i',s_{-i})\xi_i^{\theta_i}(a_i'|s_i')\overline A_{i}^{\theta}(s_i',s_{-i},a_i').
	\end{align*}
\end{lemma}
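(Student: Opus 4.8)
The plan is to derive the claimed expression for $\partial J_i(\theta)/\partial \theta_{i,s_i',a_i'}$ directly from the multi-agent policy gradient theorem (Lemma \ref{lem:PGT_MARL_compact}), which already gives $\nabla_{\theta_i}J_i(\theta)=\frac{1}{1-\gamma}\sum_{s,a_i}d^{\theta}(s)\nabla_{\theta_i}\xi_i^{\theta_i}(a_i|s_i)\overline Q_{i}^{\theta}(s,a_i)$, together with the explicit softmax derivative from Lemma \ref{lem:policy_grad}. First I would take the $(s_i',a_i')$-component of the gradient in Lemma \ref{lem:PGT_MARL_compact}, writing
\begin{align*}
	\frac{\partial J_i(\theta)}{\partial \theta_{i,s_i',a_i'}}=\frac{1}{1-\gamma}\sum_{s,a_i}d^{\theta}(s)\,\frac{\partial \xi_i^{\theta_i}(a_i|s_i)}{\partial \theta_{i,s_i',a_i'}}\,\overline Q_{i}^{\theta}(s,a_i).
\end{align*}
Then I would substitute the formula $\frac{\partial \xi_i^{\theta_i}(a_i|s_i)}{\partial \theta_{i,s_i',a_i'}}=\xi_i^{\theta_i}(a_i|s_i)\mathds{1}\{s_i'=s_i\}(\mathds{1}\{a_i'=a_i\}-\xi_i^{\theta_i}(a_i'|s_i))$ from Lemma \ref{lem:policy_grad}. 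The indicator $\mathds{1}\{s_i'=s_i\}$ collapses the sum over the $i$-th local state coordinate of $s$ to $s_i=s_i'$, leaving a sum over $s_{-i}$ and over $a_i$.

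Next I would carry out the sum over $a_i$ of the two resulting terms. The first term contributes $\sum_{a_i}\xi_i^{\theta_i}(a_i|s_i')\mathds{1}\{a_i'=a_i\}\overline Q_i^{\theta}(s_i',s_{-i},a_i)=\xi_i^{\theta_i}(a_i'|s_i')\overline Q_i^{\theta}(s_i',s_{-i},a_i')$; the second term contributes $-\xi_i^{\theta_i}(a_i'|s_i')\sum_{a_i}\xi_i^{\theta_i}(a_i|s_i')\overline Q_i^{\theta}(s_i',s_{-i},a_i)=-\xi_i^{\theta_i}(a_i'|s_i')V_i^{\theta}(s_i',s_{-i})$, using the definition $V_i^{\theta}(s)=\E_{a_i\sim\xi_i^{\theta_i}(\cdot|s_i)}[\overline Q_i^{\theta}(s,a_i)]$. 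Combining the two and recalling the definition of the averaged advantage function $\overline A_i^{\theta}(s,a_i)=\overline Q_i^{\theta}(s,a_i)-V_i^{\theta}(s)$, the bracket becomes $\xi_i^{\theta_i}(a_i'|s_i')\overline A_i^{\theta}(s_i',s_{-i},a_i')$, which yields exactly
\begin{align*}
	\frac{\partial J_i(\theta)}{\partial \theta_{i,s_i',a_i'}}=\frac{1}{1-\gamma}\sum_{s_{-i}}d^{\theta}(s_i',s_{-i})\,\xi_i^{\theta_i}(a_i'|s_i')\,\overline A_{i}^{\theta}(s_i',s_{-i},a_i').
\end{align*}

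There is essentially no hard step here; the argument is a short computation that chains two already-established lemmas. The only points requiring care are bookkeeping: making sure the $s$-sum in Lemma \ref{lem:PGT_MARL_compact} is interpreted as a sum over the full global state and that the indicator correctly restricts only the $i$-th coordinate $s_i$ (not the whole vector $s$) to equal $s_i'$, and correctly recognizing $\sum_{a_i}\xi_i^{\theta_i}(a_i|s_i')\overline Q_i^{\theta}(s,a_i)$ as $V_i^{\theta}(s)$ so that the advantage function appears. If one wanted to avoid invoking Lemma \ref{lem:PGT_MARL_compact} as a black box, one could instead note that replacing $\overline Q_i^\theta$ by $\overline A_i^\theta$ is legitimate because $\sum_{a_i}\partial_{\theta_{i,s_i',a_i'}}\xi_i^{\theta_i}(a_i|s_i')=\partial_{\theta_{i,s_i',a_i'}}1=0$, so subtracting the state-dependent baseline $V_i^\theta(s)$ does not change the gradient; but since Lemma \ref{lem:PGT_MARL_compact} is already available, the direct substitution is cleanest.
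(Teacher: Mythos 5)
Your proposal is correct and follows essentially the same route as the paper's own proof: take the $(s_i',a_i')$-component of Lemma \ref{lem:PGT_MARL_compact}, substitute the softmax derivative from Lemma \ref{lem:policy_grad}, collapse the indicator, and sum over $a_i$ to turn $\overline Q_i^\theta$ into $\overline A_i^\theta$ via the definition of $V_i^\theta$. The bookkeeping points you flag are exactly the ones the paper's computation handles, so there is nothing to add.
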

\begin{proof}[Proof of Lemma \ref{cor:softmax_v_grad}]
	Using Lemma \ref{lem:policy_grad} and Lemma \ref{lem:PGT_MARL_compact}, and we have
	\begin{align*}
		\frac{\partial J_i(\theta)}{\partial \theta_{i,s_i',a_i'}}
		=\;& \frac{1}{1-\gamma}\sum_{s,a_i}d^{\theta}(s)\frac{\partial \xi_i^{\theta_i}(a_i|s_i)}{\partial \theta_{i,s_i',a_i'}}\overline Q_{i}^{\theta}(s,a_i)\\
		=\;& \frac{1}{1-\gamma}\sum_{s,a_i}d^{\theta}(s)
		\xi_i^{\theta_i}(a_i|s_i)\mathds{1}\{s_i'=s_i\}\left(\mathds{1}\{a_i'=a_i\}-\xi_i^{\theta_i}(a_i'|s_i)\right)
		\overline Q_{i}^{\theta}(s,a_i)\\
		=\;& \frac{1}{1-\gamma}\sum_{s_{-i},a_i}d^{\theta}(s_i',s_{-i})
		\xi_i^{\theta_i}(a_i|s_i')\left(\mathds{1}\{a_i'=a_i\}-\xi_i^{\theta_i}(a_i'|s_i')\right)
		\overline Q_{i}^{\theta}(s_i',s_{-i},a_i)\\
		=\;& \frac{1}{1-\gamma}\sum_{s_{-i}}d^{\theta}(s_i',s_{-i})\xi_i^{\theta_i}(a_i'|s_i')
		\left( \overline Q_{i}^{\theta}(s_i',s_{-i},a_i')-\sum_{a_i}\xi_i^{\theta_i}(a_i|s_i')\overline Q_{i}^{\theta}(s_i',s_{-i},a_i)\right)
		\\
		=\;& \frac{1}{1-\gamma}\sum_{s_{-i}}d^{\theta}(s_i',s_{-i})\xi_i^{\theta_i}(a_i'|s_i')
		\left( \overline Q_{i}^{\theta}(s_i',s_{-i},a_i')-V_{i}^{\theta}(s_i',s_{-i})\right)
		\\
		=\;&\frac{1}{1-\gamma}\sum_{s_{-i}}d^{\theta}(s_i',s_{-i})\xi_i^{\theta_i}(a_i'|s_i')\overline A_{i}^{\theta}(s_i',s_{-i},a_i'),
	\end{align*}
	where the last line follows from the definition of the averaged advantage function.
\end{proof}

\begin{lemma}\label{lem:V_PDT}
	The following inequality holds for all $\xi,\xi'\in\Xi$, $i\in\mathcal{N}$, and $s\in\mathcal{S}$:
	\begin{align*}
		V_i^{\xi'}(s)-V_i^{\xi}(s)=\frac{1}{1-\gamma}\sum_{s',a}d_s^{\xi'}(s')(\xi'(a|s')-\xi(a|s'))Q_i^{\xi}(s',a). 
	\end{align*}    
\end{lemma}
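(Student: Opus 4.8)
The plan is to establish the classical Kakade--Langford performance difference identity and then convert advantage functions into the difference of $Q$-functions weighted by the two policies. First I would fix $\xi,\xi'\in\Xi$, $i\in\mathcal N$, and $s\in\mathcal S$, and consider a trajectory $\{(s(t),a(t))\}_{t\geq 0}$ generated by the policy $\xi'$ with $s(0)=s$. The starting point is the telescoping identity: using $Q_i^\xi(s',a)=r_i(s',a)+\gamma\,\mathbb E[V_i^\xi(s(t+1))\mid s(t)=s',a(t)=a]$, one writes
\begin{align*}
\mathbb E_{\xi'}\!\left[\sum_{t=0}^\infty \gamma^t\bigl(Q_i^\xi(s(t),a(t))-V_i^\xi(s(t))\bigr)\,\middle|\,s(0)=s\right]
=\mathbb E_{\xi'}\!\left[\sum_{t=0}^\infty \gamma^t\bigl(r_i(s(t),a(t))+\gamma V_i^\xi(s(t+1))-V_i^\xi(s(t))\bigr)\right].
\end{align*}
The term $\sum_{t\geq 0}\gamma^t(\gamma V_i^\xi(s(t+1))-V_i^\xi(s(t)))$ telescopes to $-V_i^\xi(s(0))=-V_i^\xi(s)$, where the vanishing of the tail is justified by the uniform bound $\|V_i^\xi\|_\infty\le 1/(1-\gamma)$ (reward in $[0,1]$), and $\mathbb E_{\xi'}[\sum_{t\ge 0}\gamma^t r_i(s(t),a(t))\mid s(0)=s]=V_i^{\xi'}(s)$ by definition. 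This yields the advantage form $V_i^{\xi'}(s)-V_i^\xi(s)=\mathbb E_{\xi'}[\sum_{t\ge 0}\gamma^t A_i^\xi(s(t),a(t))\mid s(0)=s]$ with $A_i^\xi(s',a)=Q_i^\xi(s',a)-V_i^\xi(s')$.

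Next I would expand the expectation over the $\xi'$-trajectory as a double sum over $(s',a)$ and time, $\mathbb E_{\xi'}[\sum_{t\ge 0}\gamma^t A_i^\xi(s(t),a(t))\mid s(0)=s]=\sum_{t\ge 0}\gamma^t\sum_{s',a}\Pr^{\xi'}[s(t)=s'\mid s(0)=s]\,\xi'(a\mid s')\,A_i^\xi(s',a)$, and recognize the discounted state visitation distribution $d_s^{\xi'}(s')=(1-\gamma)\sum_{t\ge 0}\gamma^t\Pr^{\xi'}[s(t)=s'\mid s(0)=s]$ from the definitions in the paper. This gives $V_i^{\xi'}(s)-V_i^\xi(s)=\tfrac{1}{1-\gamma}\sum_{s',a}d_s^{\xi'}(s')\,\xi'(a\mid s')\,A_i^\xi(s',a)$.

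Finally, I would rewrite the inner sum: since $\sum_a\xi'(a\mid s')=1$ and $V_i^\xi(s')=\sum_a\xi(a\mid s')Q_i^\xi(s',a)$, we have $\sum_a\xi'(a\mid s')A_i^\xi(s',a)=\sum_a\xi'(a\mid s')Q_i^\xi(s',a)-V_i^\xi(s')=\sum_a(\xi'(a\mid s')-\xi(a\mid s'))Q_i^\xi(s',a)$. Substituting back gives exactly the claimed identity. I do not anticipate a genuine obstacle here; the only points requiring care are the justification that the telescoping tail vanishes (via the boundedness of $V_i^\xi$) and the bookkeeping that converts the trajectory expectation into the $d_s^{\xi'}$-weighted sum, both of which are routine.
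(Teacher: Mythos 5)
Your proposal is correct and follows essentially the same route as the paper: the paper invokes the single-agent performance difference lemma (citing Agarwal et al.) and then rewrites $\sum_a \xi'(a\mid s')A_i^{\xi}(s',a)$ as $\sum_a(\xi'(a\mid s')-\xi(a\mid s'))Q_i^{\xi}(s',a)$ using $V_i^{\xi}(s')=\sum_a\xi(a\mid s')Q_i^{\xi}(s',a)$, exactly as you do. The only difference is that you derive the performance difference identity from scratch via the telescoping argument rather than citing it, which is a fine (and self-contained) substitute.
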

\begin{proof}[Proof of Lemma \ref{lem:V_PDT}]
	Using the performance difference lemma in the single agent setting \citep[Lemma 2]{agarwal2021theory}, and we have
	\begin{align*}
		V_i^{\xi'}(s)-V_i^{\xi}(s)
		=\;&\frac{1}{1-\gamma}\sum_{s',a}d_s^{\xi'}(s')\xi'(a|s')A_i^{\xi}(s',a) \\
		=\;&\frac{1}{1-\gamma}\sum_{s',a}d_s^{\xi'}(s')\xi'(a|s')(Q_i^{\xi}(s',a)-V_i^{\xi}(s')) \\
		=\;&\frac{1}{1-\gamma}\left(\sum_{s',a}d_s^{\xi'}(s')\xi'(a|s')Q_i^{\xi}(s',a)-\sum_{s'}d_s^{\xi'}(s')V_i^{\xi}(s')\right) \\
		=\;&\frac{1}{1-\gamma}\left(\sum_{s',a}d_s^{\xi'}(s')\xi'(a|s')Q_i^{\xi}(s',a)-\sum_{s',a}d_s^{\xi'}(s)\xi(a|s')Q_i^{\xi}(s',a)\right) \\
		=\;&\frac{1}{1-\gamma}\sum_{s',a}d_s^{\xi'}(s')(\xi'(a|s')-\xi(a|s')) Q_i^{\xi}(s',a).    
	\end{align*}
\end{proof}

\begin{lemma}\label{lem:PDT}
	It holds for any $i\in\mathcal{N}$, $\theta=(\theta_i,\theta_{-i})$, and $\theta'=(\theta_i',\theta_{-i})$ that
	\begin{align*}
		J_i(\theta')-J_i(\theta) =\frac{1}{1-\gamma}\sum_{s,a_i}d^{\theta'}(s)(\xi_i^{\theta_i'}(a_i|s_i)-\xi_i^{\theta_i}(a_i|s_i))\overline Q_i^{\theta}(s,a_i).   
	\end{align*}
\end{lemma}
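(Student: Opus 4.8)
The plan is to reduce this multi-agent identity to the single-agent performance difference lemma already recorded as Lemma \ref{lem:V_PDT}, exploiting that only agent $i$'s policy component changes between $\theta$ and $\theta'$. First I would apply Lemma \ref{lem:V_PDT} with $\xi\leftarrow\xi^{\theta}$ and $\xi'\leftarrow\xi^{\theta'}$ (both of which lie in $\Xi$). This gives, for every $s\in\sS$,
\[
V_i^{\theta'}(s)-V_i^{\theta}(s)=\frac{1}{1-\gamma}\sum_{s',a}d_s^{\theta'}(s')\left(\xi^{\theta'}(a|s')-\xi^{\theta}(a|s')\right)Q_i^{\theta}(s',a).
\]

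Next I would use the product structure of the localized softmax policies, $\xi^{\theta}(a|s')=\prod_{j\in\sN}\xi_j^{\theta_j}(a_j|s_j')$, together with the fact that $\theta$ and $\theta'$ agree on every component except agent $i$'s. Writing $a=(a_i,a_{-i})$, this yields
\[
\xi^{\theta'}(a|s')-\xi^{\theta}(a|s')=\left(\xi_i^{\theta_i'}(a_i|s_i')-\xi_i^{\theta_i}(a_i|s_i')\right)\xi_{-i}^{\theta_{-i}}(a_{-i}|s_{-i}').
\]
Substituting this in and summing over $a_{-i}$ collapses the joint $Q$-function into the averaged $Q$-function via its defining relation $\overline Q_i^{\theta}(s',a_i)=\sum_{a_{-i}}\xi_{-i}^{\theta_{-i}}(a_{-i}|s_{-i}')Q_i^{\theta}(s',a_i,a_{-i})$, so that
\[
V_i^{\theta'}(s)-V_i^{\theta}(s)=\frac{1}{1-\gamma}\sum_{s',a_i}d_s^{\theta'}(s')\left(\xi_i^{\theta_i'}(a_i|s_i')-\xi_i^{\theta_i}(a_i|s_i')\right)\overline Q_i^{\theta}(s',a_i).
\]

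Finally I would take expectation over $s\sim\mu$ on both sides. Since $J_i(\theta)=\E_{s\sim\mu}[V_i^{\theta}(s)]$ by definition and $d^{\theta'}(s')=\E_{s\sim\mu}[d_s^{\theta'}(s')]$ by the definition of the discounted state visitation distribution, this produces exactly the claimed identity after renaming the dummy summation variable $s'$ back to $s$. I do not expect any genuine obstacle here: the argument is a routine bookkeeping reduction, and the only step that needs a moment of care is the factorization of the policy difference, which relies precisely on the product form of the localized policies and on $\theta_{-i}$ being held fixed; everything else is linearity of expectation and the definitions.
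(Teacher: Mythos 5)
Your proof is correct and follows essentially the same route as the paper's: both reduce to the single-agent performance difference lemma and use the product structure of the localized policies (with $\theta_{-i}$ fixed) to collapse the sum over $a_{-i}$ into $\overline Q_i^{\theta}$. The only cosmetic difference is that you invoke the already-stated per-state version (Lemma \ref{lem:V_PDT}) and integrate over $\mu$ at the end, whereas the paper applies the performance difference lemma directly to $J_i$ and absorbs the $V_i^{\theta}(s)$ term by rewriting it as $\sum_{a_i}\xi_i^{\theta_i}(a_i|s_i)\overline Q_i^{\theta}(s,a_i)$; both manipulations are equivalent.
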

\begin{proof}[Proof of Lemma \ref{lem:PDT}]
	Using the performance difference lemma in the single agent setting \citep[Lemma 2]{agarwal2021theory}, and we have
	\begin{align*}
		J_i(\theta')-J_i(\theta) 
		=\;&\frac{1}{1-\gamma}\sum_{s,a}d^{\theta'}(s)\xi^{\theta'}(a|s)A_i^{\theta}(s,a) \\
		=\;&\frac{1}{1-\gamma}\sum_{s,a}d^{\theta'}(s)\xi^{\theta'}(a|s)(Q_i^{\theta}(s,a)-V_i^{\theta}(s)) \\
		=\;&\frac{1}{1-\gamma}\left(\sum_{s,a_i}\sum_{a_{-i}}d^{\theta'}(s)\xi_i^{\theta_i'}(a_i|s_i)\xi_{-i}^{\theta_{-i}}(a_{-i}|s_{-i})Q_i^{\theta}(s,a)-\sum_{s}d^{\theta'}(s)V_i^{\theta}(s)\right) \\
		=\;&\frac{1}{1-\gamma}\left(\sum_{s,a_i}d^{\theta'}(s)\xi_i^{\theta_i'}(a_i|s_i)\overline Q_i^{\theta}(s,a_i)-\sum_{s}d^{\theta'}(s)V_i^{\theta}(s)\right) \\
		=\;&\frac{1}{1-\gamma}\left(\sum_{s,a_i}d^{\theta'}(s)\xi_i^{\theta_i'}(a_i|s_i)\overline Q_i^{\theta}(s,a_i)-\sum_{s,a_i}d^{\theta'}(s)\xi_i^{\theta_i}(a_i|s_i)\overline Q_i^{\theta}(s,a_i)\right) \\
		=\;&\frac{1}{1-\gamma}\sum_{s,a_i}d^{\theta'}(s)(\xi_i^{\theta_i'}(a_i|s_i)-\xi_i^{\theta_i}(a_i|s_i))\overline Q_i^{\theta}(s,a_i).    
	\end{align*}
\end{proof}

\begin{lemma}\label{lem:prod_pol_diff_general}
	The following inequality holds for any $\sI\subseteq \sN$ and any $\xi_{\sI}, \xi'_{\sI}\in \Xi_{\sI}$:
	\begin{align*}
		\norm{\xi_{\sI}(\cdot|s_{\sI})-\xi'_{\sI}(\cdot|s_{\sI})}_1\leq \sum_{i\in \sI} \norm{\xi_{i}(\cdot|s_{i})-\xi'_{i}(\cdot|s_{i})}_1, \quad  \forall\; s_{\sI}\in\mathcal{S}_{\mathcal{I}}.
	\end{align*}
\end{lemma}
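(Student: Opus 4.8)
The plan is to prove the bound by induction on the size of $\sI$, peeling off one agent at a time and using the product structure $\xi_\sI(a_\sI \mid s_\sI) = \prod_{i\in\sI}\xi_i(a_i\mid s_i)$. First I would handle the base case $|\sI|=1$, which is trivial since both sides coincide. For the inductive step, write $\sI = \sI' \cup \{j\}$ with $j\notin\sI'$, so that $\xi_\sI(\cdot \mid s_\sI) = \xi_{\sI'}(\cdot\mid s_{\sI'})\otimes\xi_j(\cdot\mid s_j)$ as a product distribution on $\sA_{\sI'}\times\sA_j$. The key step is the standard fact that the total variation (equivalently, $\ell_1$) distance between two product distributions is subadditive: for probability vectors, $\norm{p_1\otimes q_1 - p_2\otimes q_2}_1 \le \norm{p_1 - p_2}_1 + \norm{q_1 - q_2}_1$. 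I would prove this inline via the triangle inequality, inserting the intermediate product $p_1\otimes q_2$:
\begin{align*}
\norm{p_1\otimes q_1 - p_2\otimes q_2}_1
&\le \norm{p_1\otimes q_1 - p_1\otimes q_2}_1 + \norm{p_1\otimes q_2 - p_2\otimes q_2}_1 \\
&= \Big(\sum_{a}p_1(a)\Big)\norm{q_1 - q_2}_1 + \norm{p_1 - p_2}_1\Big(\sum_{b}q_2(b)\Big) \\
&= \norm{q_1 - q_2}_1 + \norm{p_1 - p_2}_1,
\end{align*}
using that $p_1$ and $q_2$ are probability vectors.

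Applying this with $p_1 = \xi_{\sI'}(\cdot\mid s_{\sI'})$, $p_2 = \xi'_{\sI'}(\cdot\mid s_{\sI'})$, $q_1 = \xi_j(\cdot\mid s_j)$, $q_2 = \xi'_j(\cdot\mid s_j)$ gives
\[
\norm{\xi_\sI(\cdot\mid s_\sI) - \xi'_\sI(\cdot\mid s_\sI)}_1 \le \norm{\xi_{\sI'}(\cdot\mid s_{\sI'}) - \xi'_{\sI'}(\cdot\mid s_{\sI'})}_1 + \norm{\xi_j(\cdot\mid s_j) - \xi'_j(\cdot\mid s_j)}_1,
\]
and the induction hypothesis bounds the first term on the right by $\sum_{i\in\sI'}\norm{\xi_i(\cdot\mid s_i) - \xi'_i(\cdot\mid s_i)}_1$, completing the step. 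Alternatively, one can avoid induction entirely and do a single telescoping argument: order $\sI = \{i_1,\dots,i_k\}$, define the hybrid policies $\zeta^{(r)} = \xi'_{i_1}\otimes\cdots\otimes\xi'_{i_{r}}\otimes\xi_{i_{r+1}}\otimes\cdots\otimes\xi_{i_k}$, and bound $\norm{\zeta^{(0)} - \zeta^{(k)}}_1 \le \sum_{r=1}^{k}\norm{\zeta^{(r-1)} - \zeta^{(r)}}_1$, where each summand equals $\norm{\xi_{i_r}(\cdot\mid s_{i_r}) - \xi'_{i_r}(\cdot\mid s_{i_r})}_1$ because the two hybrids differ only in the $i_r$-th factor and all other factors integrate to one.

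There is no real obstacle here; the only thing to be careful about is bookkeeping the marginalization identities (that summing a product distribution over one block's coordinates leaves the other block's marginal, each factor summing to $1$), which is where the factors of $1$ in the chain of inequalities come from. The result is purely a statement about tensor products of probability measures and carries no dependence on the Markov game structure.
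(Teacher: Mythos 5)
Your proof is correct; the paper itself gives no argument here and simply cites the standard subadditivity of total variation for product measures (Lemma 3.4.3 of Durrett), which is exactly the fact you establish inline via the triangle inequality with the intermediate hybrid $p_1\otimes q_2$. Your write-up is just an explicit version of the same underlying argument the paper outsources to the reference, and both your inductive and telescoping variants are sound.
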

\begin{proof}[Proof of Lemma \ref{lem:prod_pol_diff_general}]
	The result follows by applying \cite[Lemma 3.4.3]{durrett2019probability}.
\end{proof}

\begin{lemma}[Property of NMPG]\label{lem:NMPG}
	In an NMPG, consider any $i\in \sN$ and any policy parameter $\theta$. Then the following equality holds for any $j\in \nikg$:
	\begin{align*}
		\nabla_{\theta_j} J_j(\theta)=\nabla_{\theta_j}\Phi_i(\theta).
	\end{align*}
\end{lemma}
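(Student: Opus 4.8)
The plan is to read off the claim directly from the defining property of a $\kappa_G$-NMPG. Equation (\ref{eq:approx_MPG}) in Definition \ref{def:local_MPG} says precisely that, for any $i\in\sN$ and any $j\in\nikg$, holding the other agents' policies fixed, the quantity $J_j(\cdot,\xi_{-j})-\Phi_i(\cdot,\xi_{-j})$ does not change when agent $j$ varies its own local policy; differentiating a constant then yields the identity.

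Concretely, I would fix $i\in\sN$, $j\in\nikg$, and write $\theta=(\theta_j,\theta_{-j})$. Introduce the scalar function $h(\theta_j):=J_j(\xi_j^{\theta_j},\xi_{-j}^{\theta_{-j}})-\Phi_i(\xi_j^{\theta_j},\xi_{-j}^{\theta_{-j}})$ with $\theta_{-j}$ regarded as fixed, where I use that $\Phi_i(\theta)$ and $J_j(\theta)$ are shorthand for $\Phi_i(\xi^\theta)$ and $J_j(\xi^\theta)$, and that $\theta_j$ enters both only through the softmax map $\theta_j\mapsto\xi_j^{\theta_j}$. For any parameter vectors $\theta_j,\theta_j'\in\R^{\abs{\sS_j}\abs{\sA_j}}$, the softmax policies $\xi_j^{\theta_j},\xi_j^{\theta_j'}$ both lie in $\Xi_j$, so Eq. (\ref{eq:approx_MPG}) applies with this $i$, this $j$, $\xi_{-j}=\xi_{-j}^{\theta_{-j}}\in\Xi_{-j}$, $\xi_j=\xi_j^{\theta_j}$, and $\xi_j'=\xi_j^{\theta_j'}$; rearranging it gives $h(\theta_j')=h(\theta_j)$. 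Hence $h$ is constant, so $\nabla h\equiv 0$, which by the chain rule is exactly $\nabla_{\theta_j}J_j(\theta)=\nabla_{\theta_j}\Phi_i(\theta)$.

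There is no substantive obstacle here; the proof is immediate from the NMPG definition. The only points deserving a word of care are (i) that the range of the softmax parameterization is contained in $\Xi_j$, so the definition is genuinely applicable to softmax policies, and (ii) that differentiability of $J_j$ and $\Phi_i$ with respect to $\theta_j$ is inherited from the smoothness of the softmax parameterization, as used throughout the paper.
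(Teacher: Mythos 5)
Your proposal is correct and follows essentially the same argument as the paper: both observe that Eq. (\ref{eq:approx_MPG}) makes $J_j(\theta_j,\theta_{-j})-\Phi_i(\theta_j,\theta_{-j})$ independent of $\theta_j$ and then differentiate. The extra remarks about the softmax range lying in $\Xi_j$ and differentiability are fine but not needed beyond what the paper already assumes.
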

\begin{proof}[Proof of Lemma \ref{lem:NMPG}]
	The proof essentially follows from \cite{leo2021convergeMPG}. Using the definition of NMPG (cf. Definition \ref{def:local_MPG}), we have for any $\theta_j,\theta_j'$, and $\theta_{-j}$ that
	\begin{align*}
		J_j(\theta_j',\theta_{-j})- \Phi_i(\theta_j',\theta_{-j})=J_j({\theta_j,\theta_{-j}}) -\Phi_i(\theta_j,\theta_{-j}).  
	\end{align*}
	Thus $J_j({\theta_j,\theta_{-j}}) -\Phi_i(\theta_j,\theta_{-j})$ is independent of $\theta_j$. Let $J_j({\theta_j,\theta_{-j}}) -\Phi_i(\theta_j,\theta_{-j})=U_j(\theta_{-j})$. Taking gradient with respect to $\theta_j$ on both sides, and we have 
	\begin{align*}
		\nabla_{\theta_j} J_j(\theta_j,\theta_{-j})=\nabla_{\theta_j}\Phi_i(\theta_j,\theta_{-j}).
	\end{align*}
\end{proof}

\end{document}